\newtheorem{theorem}{Theorem}[section]
\newtheorem{proposition}[theorem]{Proposition}
\newtheorem{lemma}[theorem]{Lemma}
\newtheorem{corollary}[theorem]{Corollary}
\newcommand{\eat}[1]{ }
\def\ie{{\it i.e.},~}
\def\eg{{\it e.g.},~}
\def\one{\mathds{1}}
\def\q{\hat{y}^{(\textrm{leaf})}}
\newcommand{\x}{\mathbf{x}}
\newcommand{\y}{\mathbf{y}}
\def\RL{{d}}
\def\Prefix{d_{un}}
\def\Labels{\delta_{un}}
\def\Default{d_{\textrm{\rm split}}}
\def\DefaultLabels{\delta_{\textrm{\rm split}}}
\def\RLB{{D}}
\def\PrefixB{D_{un}}
\def\LabelsB{\Delta_{un}}
\def\DefaultB{D_{\textrm{\rm split}}}
\def\DefaultLabelsB{\Delta_{\textrm{\rm split}}}
\def\Obj{R}
\def\Loss{\ell}
\def\Reg{{\lambda}}
\def\RuleSet{S}
\def\Cap{\textnormal{cap}}
\def\Supp{\textnormal{supp}}
\def\InitialObj{R^0}
\def\InitialRL{d^0}
\def\CurrentObj{{R^c}}
\def\CurrentRL{d^c}
\def\OptimalObj{R^*}
\def\OptimalRL{d^*}
\def\Remaining{\Gamma}
\def\TotalRemaining{{\Gamma_{\textnormal{tot}}}}
\def\StartsWith{\sigma}
\def\StartContains{\phi}
\def\Queue{Q}
\DeclareMathOperator*{\argmin}{argmin}
\DeclareMathOperator*{\argmax}{argmax}
\def\Curiosity{\mathcal{C}}
\def\NCap{N_{\textnormal{cap}}}
\def\leaf{\textrm{leaf}}
\def\one{\mathds{1}}
\newcommand{\nn}{\nonumber}
\newcommand{\be}{\begin{equation}}
\newcommand{\ee}{\end{equation}}
\newcommand{\bea}{\begin{eqnarray}}
\newcommand{\eea}{\end{eqnarray}}
\newcommand{\given}{\,|\,}
\title{Optimal Sparse Decision Trees}
\author{%
  Xiyang Hu$^{1}$, Cynthia Rudin$^{2}$, Margo Seltzer$^{3}$\thanks{Authors are listed alphabetically.} \\
  $^{1}$Carnegie Mellon University, \texttt{xiyanghu@cmu.edu} \\
  $^{2}$Duke University, \texttt{cynthia@cs.duke.edu} \\
  $^{3}$The University of British Columbia, \texttt{mseltzer@cs.ubc.ca} \\
}
\begin{document}

\maketitle

\begin{abstract}
  Decision tree algorithms have been among the most popular algorithms for interpretable (transparent) machine learning since the early 1980's. The problem that has plagued decision tree algorithms since their inception is their lack of optimality, or lack of guarantees of closeness to optimality: decision tree algorithms are often greedy or myopic, and sometimes produce unquestionably suboptimal models. Hardness of decision tree optimization is both a theoretical and practical obstacle, and even careful mathematical programming approaches have not been able to solve these problems efficiently. This work introduces the first practical algorithm for optimal decision trees for binary variables. The algorithm is a co-design of analytical bounds that reduce the search space and modern systems techniques, including data structures and a custom bit-vector library. Our experiments highlight advantages in scalability, speed, and proof of optimality.
  The code is available at \url{https://github.com/xiyanghu/OSDT}.
\end{abstract}

\section{Introduction}

Interpretable machine learning has been growing in importance as society has begun to realize the dangers of using black box models for high stakes decisions: complications with confounding have haunted our medical machine learning models \cite{Zech2018}, bad predictions from black boxes have announced to millions of people that their dangerous levels of air pollution were safe \cite{McGough2018}, high-stakes credit risk decisions are being made without proper justification, and black box risk predictions have been wreaking havoc with the perception of fairness of our criminal justice system \cite{flores16}. 
In all of these applications -- medical imaging, pollution modeling, recidivism risk, credit scoring -- accurate interpretable models have been created (by the Center for Disease Control and Prevention, Arnold Foundation, and others). 
However, such interpretable-yet-accurate models are not generally easy to construct. If we want people to replace their black box models with interpretable models, the tools to build these interpretable models must first exist. 

Decision trees are one of the leading forms of interpretable models. Despite several attempts over the last several decades to improve the optimality of decision tree algorithms, the CART \cite{Breiman84} and C4.5 \cite{Quinlan93} decision tree algorithms (and other greedy tree-growing variants) have remained as dominant methods in practice.  CART and C4.5  grow decision trees from the top down without backtracking, which means that  if a suboptimal split was introduced near the top of the tree, the algorithm could spend many extra splits trying to undo the mistake it made at the top, leading to less-accurate and less-interpretable trees. Problems with greedy splitting and pruning have been known since the early 1990's, when mathematical programming tools had started to be used for creating optimal binary-split decision trees \cite{Bennett92,Bennett96optimaldecision}, in a line of work \cite{bertsimas2017optimal, molerooptimal, MenickellyGKS18, nijssen2007mining} until the present \cite{verwer2019learning}. However, these techniques use all-purpose optimization toolboxes and tend not to scale to realistically-sized problems unless simplified to trees of a specific form. Other works \cite{Klivans06} make overly strong assumptions (e.g., independence of all variables) to ensure optimal trees are produced using greedy algorithms. 

We produce optimal sparse decision trees taking a different approach than mathematical programming, greedy methods, or brute force. We find optimal trees according to a regularized loss function that balances accuracy and the number of leaves. Our algorithm is computationally efficient due to a collection of analytical bounds to perform massive pruning of the search space. Our implementation uses specialized data structures to store intermediate computations and symmetries, a bit-vector library to evaluate decision trees more quickly, fast search policies, and computational reuse. 
Despite the hardness of finding optimal solutions, our algorithm is able to locate optimal trees and prove optimality (or closeness of optimality) in reasonable amounts of time for datasets of the sizes used in the criminal justice system (tens of thousands or millions of observations, tens of features).


Because we find provably optimal trees, our experiments show where previous studies have claimed to produce optimal models yet failed; we show specific cases where this happens. 
We test our method on benchmark data sets, as well as criminal recidivism and credit risk data sets;
%
these are two of the high-stakes decision problems where interpretability is needed most in AI systems.
%
We provide ablation experiments to show which of our techniques is most influential at reducing computation for various datasets. As a result of this analysis, we are able to pinpoint possible future paths to improvement for scalability and computational speed.
Our contributions are: (1) The first practical optimal binary-variable decision tree algorithm to achieve solutions for nontrivial problems. (2) A series of analytical bounds to reduce the search space. (3) Algorithmic use of a tree representation using only its leaves. (4) Implementation speedups saving 97\% run time. (5) We present the first optimal sparse binary split trees ever published for the COMPAS and FICO datasets.

The code and the supplementary materials are available at \url{https://github.com/xiyanghu/OSDT}.

\section{Related Work}
\label{sec:related-work}
Optimal decision trees have a quite a long history \cite{Bennett92}, so we focus on closely related techniques. 
There are efficient algorithms that claim to generate optimal sparse trees, but do not optimally balance the criteria of optimality and sparsity; instead they pre-specify the topology of the tree (\ie they know \textit{a priori} exactly what the structure of the splits and leaves are, even though they do not know which variables are split) and only find the optimal tree of the given topology \cite{MenickellyGKS18}. 
This is not the problem we address, as we do not 
know the topology of the optimal tree in advance. The most successful algorithm of this variety is BinOCT  \cite{verwer2019learning}, which searches for a 
complete binary tree of a given depth;
 we discuss BinOCT shortly. Some exploration of learning optimal decision trees is based on boolean satisfiability (SAT) \cite{narodytska2018learning}, but again, this work looks only for the optimal tree of a given number of nodes.
The DL8 algorithm \cite{nijssen2007mining} optimizes a ranking function to find a decision
tree under constraints of size, depth, accuracy and leaves. DL8 creates trees from the bottom up, meaning that trees are assembled out of all possible leaves, which are itemsets pre-mined from the data \cite[similarly to][]{AngelinoEtAl18}. DL8 does not have publicly available source code, and its authors warn about running out of memory when storing all partially-constructed trees.
Some works  
consider oblique trees \cite{molerooptimal}, where splits involve several variables; oblique trees are not addressed here, as they can be less interpretable.

The most recent mathematical programming algorithms are OCT  \cite{bertsimas2017optimal} and BinOCT  \cite{verwer2019learning}. Example figures from the OCT paper \cite{bertsimas2017optimal} show decision trees that are clearly suboptimal. However, as the code was not made public, the work in the OCT paper \cite{bertsimas2017optimal} is not easily reproducible, so it is not clear where the problem occurred. We discuss this in Section \S\ref{sec:experiments}.  Verwer and Zhang's mathematical programming formulation for BinOCT is much faster~\cite{verwer2019learning}, and their experiments indicate that BinOCT outperforms OCT, but since BinOCT is constrained to create complete binary trees of a given depth rather than optimally sparse trees, it sometimes creates unnecessary leaves in order to complete a tree at a given depth, as we show in Section~\S\ref{sec:experiments}.
BinOCT solves a dramatically easier problem than the method introduced in this work. As it turns out, the search space of perfect binary trees of a given depth is much smaller than that of binary trees with the same number of leaves. For instance, the number of different unlabeled binary trees with 8 leaves is $Catalan(7)=429$, but the number of unlabeled perfect binary trees with 8 leaves is only 1. In our setting, we penalize (but do not fix) the number of leaves, which means that our search space contains all trees, though we can bound the maximum number of leaves based on the size of the regularization parameter. Therefore, our search space is much larger than that of BinOCT.

Our work builds upon the CORELS algorithm
~\cite{AngelinoLaAlSeRu17-kdd,AngelinoEtAl18,Larus-Stone18-sysml} and its predecessors~\cite{LethamRuMcMa15,YangRuSe16}, which create optimal decision lists (rule lists). Applying those ideas to decision trees is nontrivial. The rule list optimization problem is much easier, since the rules are pre-mined (there is no mining of rules in our decision tree optimization). Rule list optimization involves selecting an optimal subset and an optimal permutation of the rules in each subset. Decision tree optimization involves considering every possible split of every possible variable and every possible shape and size of tree. This is an exponentially harder optimization problem with a huge number of symmetries to consider. In addition, in CORELS, the maximum number of clauses per rule is set to be $c=2$. For a data set with $p$ binary features, there would be $D = p+\binom{p}{2}$ rules in total, and the number of distinct rule lists with $d_{r}$ rules is $P(D, d_{r})$, where~$P(m, k)$ is the number of $k$-permutations of~$m$. Therefore, the search space of CORELS is $\sum_{d_{r}=1}^{d-1} P(D, d_{r})$. But, for a full binary tree with depth $d_t$ and data with $p$ binary features, the number of distinct trees is:
\begin{align}
N_{d_t} =& \sum_{n_{0}=1}^{1} \sum_{n_{1}=1}^{2{n_0}} \dots \sum_{n_{d_{t}-1}=1}^{2{n_{d_{t}-2}}} p \times \binom{2{n_0}}{n_1}(p-1)^{n_1} \times \dots
&\times \binom{2{n_{d_t-2}}}{n_{d_t-1}}(p-(d_t-1))^{n_{d_t-1}},
\label{eq:number-tree-depth}
\end{align}
and the search space of decision trees up to depth $d$ is $\sum_{d_t=1}^{d} N_{d_t}$.
Table~\ref{tab:search-sapce} shows how the search spaces of rule lists and decision trees grow as the tree depth increases. The search space of the trees is massive compared to that of the rule lists. 
\begin{table}
\centering
Search Space of CORELS and Decision Trees \\
\vspace{1mm}
\begin{tabular}{l | c | c | c | c }
\hline
 & \multicolumn{2}{ c |}{$p=10$} & \multicolumn{2}{ c }{$p=20$}\\
\hline
$d$ & Rule Lists & Trees &  Rule Lists & Trees \\
\hline
1 & $5.500\times 10^1$ & $1.000\times 10^1$& $2.100\times 10^2$ & $2.000\times 10^1$ \\
2 & $3.025\times 10^3$ & $1.000\times 10^3$ & $4.410\times 10^4$ & $8.000\times 10^3$ \\
3 & $1.604\times 10^5$ & $5.329\times 10^6$ & $9.173\times 10^6$ & $9.411\times 10^8$ \\
4 & $8.345\times 10^6$ & $5.609\times 10^{13}$ & $1.898\times 10^9$ & $8.358\times 10^{18}$ \\
5 & $4.257\times 10^8$ & ``Inf'' & $3.911\times 10^{11}$ & ``Inf'' \\
\end{tabular}
\caption{Search spaces of rule lists and decision trees with number of variables $p=10,20$ and depth $d=1,2,3,4,5$. The search space of the trees explodes in comparison.\vspace*{-20pt}
}
\label{tab:search-sapce}
\end{table}

Applying techniques from rule lists to decision trees necessitated new designs for the data structures, splitting mechanisms and bounds. 
An important difference between rule lists and trees is that during the growth of rule lists, we add only one new rule to the list each time, but for the growth of trees, we need to split existing leaves and add a new \textit{pair} of leaves for each. This leads to several bounds that are quite different from those in CORELS, \ie Theorem~\ref{thm:incre-min-capture-correct}, Theorem~\ref{thm:min-capture-correct} and Corollary~\ref{thm:permutation}, which consider a pair of leaves rather than a single leaf. In this paper, we introduce bounds only for the case of one split at a time; however, in our implementation, we can split more than one leaf at a time, and the bounds are adapted accordingly. 

\section{Optimal Sparse Decision Trees (OSDT)}
\label{sec:framework}


We focus on binary classification, although
it is possible to generalize 
this framework to multiclass settings.
We denote training data as~${\{(x_n, y_n)\}_{n=1}^N}$,
where ${x_n \in \{0, 1\}^M}$ are binary features and ${y_n \in \{0,1\}}$ are labels.
Let~${\x = \{x_n\}_{n=1}^N}$ and~${\y = \{y_n\}_{n=1}^N}$,
and let~${x_{n,m}}$ denote the $m$-th feature of~$x_n$.
For a decision tree, its leaves are conjunctions of predicates. Their order does not matter in evaluating the accuracy of the tree, and a tree grows only at its leaves.
Thus, within our algorithm, we represent a tree as a collection of leaves.
A leaf set ${\RL = (p_1, p_2, \dots, p_H)}$ of length~${H \ge 0}$
is an ${H}$-tuple containing~$H$ distinct leaves, 
where $p_k$ is the classification rule of the path from the root to leaf $k$.
Here, $p_k$ is a Boolean assertion, which evaluates to either true or false for each datum~$x_n$ indicating whether it is classified by leaf $k$. Here, $\hat{y}^{(\leaf)}_k$ is the label for all points so classified.
%
%
%




We explore the search space by considering which leaves of the tree can be beneficially split.
The leaf set $\RL$ $=$ $(p_1, p_2, \dots,$ $p_K, p_{K+1}, $ $\dots,  p_H)$ is the $H$-leaf tree, where the first $K$ leaves may not to be split, and the remaining $H-K$ leaves can be split. 
We alternately represent this leaf set as
$\RL = (\Prefix, $
$\Labels, \Default, \DefaultLabels, K, H)$,
where ${\Prefix = (p_1, \dots, p_K)}$ are the unchanged leaves of $\RL$,
${\Labels = }$ ${(\q_1, \dots, \q_K) \in \{0, 1\}^K}$ are the predicted labels of leaves $\Prefix$, ${\Default = (p_{K+1}, \dots, p_H)}$ are the leaves we are going to split, and ${\DefaultLabels = (\q_{K+1}, \dots, }$ ${\q_H) \in \{0,}$ ${ 1\}^{H-K}}$ are
the predicted labels of leaves~$\Default$. 
We call $\Prefix $ a $K$-prefix of~$\RL$, which means its leaves are a size-$K$ unchanged subset of $(p_1, \dots, p_K, \dots, p_H)$.
If we have a new prefix~$\Prefix'$, which is a superset of $\Prefix$, \ie $\Prefix' \supseteq \Prefix$, then we say $\Prefix'$ starts with~$\Prefix$.
We define~$\StartsWith(\RL)$ to be all descendents of $\RL$:
\begin{equation}
\StartsWith(\RL) =
\{(\Prefix', \Labels', \Default', \DefaultLabels', K', H_{\RL'}) : \Prefix' \supseteq  \Prefix, \RL' \supset  \RL \}.
\label{eq:starts-with}
\end{equation}
If we have two trees $\RL = (\Prefix, \Labels, \Default, \DefaultLabels, K, H)$ and $\RL' = (\Prefix', \Labels', \Default', \DefaultLabels', K', H')$, where $H'=H+1, \RL' \supset  \RL, \Prefix' \supseteq  \Prefix$, \ie $\RL'$ contains one more leaf than $\RL$ and $\Prefix'$ starts with~$\Prefix$, then we define $\RL'$ to be a child of~$\RL$ and $\RL$ to be a parent of~$\RL'$.

Note that 
two trees with identical leaf sets, but 
different assignments to $\Prefix$ and $\Default$, are \emph{different} trees. Further, a child tree can \emph{only} be generated through splitting 
leaves of its parent tree within $\Default$.

A tree~$\RL$ classifies datum~$x_n$ by providing the label prediction $\q_{k}$
of the leaf whose $p_k$ is true for~$x_n$. Here, the leaf label $\q_{k}$ is the majority label of data captured by the leaf $k$.
If~$p_k$ evaluates to true for~$x_n$, we say the leaf~$k$ of leaf set~$\Prefix$ \textit{captures}~$x_n$ .
In our notation, all the data captured by a prefix's leaves are also captured by the prefix itself.

Let~$\beta$ be a set of leaves.
We define~${\Cap(x_n, \beta) = 1}$ if a leaf in~$\beta$
captures datum~$x_n$, and~0 otherwise.
For example, let~$\RL$ and~$\RL'$ be leaf sets such that~$\RL' \supset \RL$, then~$\RL'$ captures all the data that~$\RL$ captures:
\begin{arxiv}
\begin{align*}
\{x_n: \Cap(x_n, \RL)\} \subseteq \{x_n: \Cap(x_n, \RL')\}.
\end{align*}
\end{arxiv}
\begin{kdd}
${\{x_n: \Cap(x_n, \RL)\} \subseteq \{x_n: \Cap(x_n, \RL')\}}$.
\end{kdd}

%
The normalized support of $\beta$, denoted  ${\Supp(\beta, \x)}$, is the fraction of data captured by $\beta$:
\begin{align}
\Supp(\beta, \x) = \frac{1}{N} \sum_{n=1}^N \Cap(x_n, \beta).
\label{eq:support}
\end{align}

%

\subsection{Objective Function}
\label{sec:objective}

For a tree ${\RL = (\Prefix, \Labels, \Default, \DefaultLabels, K, H_{\RL})}$, we define its objective function as a combination of the misclassification error and a sparsity penalty on the number of leaves:
\begin{align}
\Obj(\RL, \x, \y) = \Loss(\RL, \x, \y) + \Reg H_{\RL}.
\label{eq:objective}
\end{align}
$\Obj(\RL,\x,\y)$ is a regularized empirical risk.
The loss~$\Loss(\RL, \x, \y)$ is the misclassification error of~$\RL$, \ie the fraction of training data with incorrectly predicted labels. $H_{\RL}$ is the number of leaves in the tree $\RL$. $\Reg H_{\RL}$ is a regularization term that penalizes bigger trees. Statistical learning theory provides guarantees for this problem; minimizing the loss subject to a (soft or hard) constraint on model size leads to a low upper bound on test error from the Occham's Razor Bound. 
%
%

\subsection{Optimization Framework}
\label{sec:optimization}

We minimize the objective function based on a branch-and-bound framework. We propose a series of specialized bounds that work together to eliminate a large part of the search space. These bounds are discussed in detail in the following paragraphs. Proofs are in the supplementary materials.

Some of our bounds could be adapted directly from CORELS~\cite{AngelinoEtAl18}, namely these two:\\
\textbf{(Hierarchical objective lower bound)} Lower bounds of a parent tree also hold for every child tree of that parent
(\S\ref{sec:hierarchical}, Theorem~\ref{thm:bound}).
\textbf{(Equivalent points bound)} For a given dataset, if there are multiple samples with exactly the same features but different labels, then no matter how we build our classifier, we will always make mistakes. The lower bound on the number of mistakes is therefore the number of such samples with minority class labels
(\S\ref{sec:identical}, Theorem~\ref{thm:identical}).

Some of our bounds adapt from CORELS~\cite{AngelinoLaAlSeRu17-kdd} with minor changes:
\textbf{(Objective lower bound with one-step lookahead)} 
With respect to the number of leaves, if a tree does not achieve enough accuracy,
we can prune all child trees of it
(\S\ref{sec:hierarchical}, Lemma~\ref{lemma:lookahead}).
\textbf{(A priori bound on the number of leaves)} For an optimal decision tree, we provide an \emph{a priori} upper bound on the maximum number of leaves
(\S\ref{sec:ub-prefix-length}, Theorem~\ref{thm:ub-prefix-specific}).
\textbf{(Lower bound on node support)} For an optimal tree, the support traversing through each internal node must be at least $2\Reg$
%
(\S\ref{sec:lb-support}, Theorem~\ref{thm:min-capture}).

Some of our bounds are distinct from CORELS, because they are only relevant to trees and not to lists:
\textbf{(Lower bound on incremental classification accuracy)} Each split must result in sufficient reduction of the loss. Thus, if the loss reduction is less than or equal to the regularization term, we should still split, and we have to further split at least one of the new child leaves to search for the optimal tree
(\S\ref{sec:lb-support}, Theorem~\ref{thm:incre-min-capture-correct}).
\textbf{(Leaf permutation bound)} We need to consider only one permutation of leaves in a tree;
we do not need to consider other permutations (explained in \S\ref{sec:permutation}, Corollary~\ref{thm:permutation}).
\textbf{(Leaf accurate support bound)} 
%
For each leaf in an optimal decision tree, the number of correctly classified samples must be above a threshold.
(\S\ref{sec:lb-support}, Theorem~\ref{thm:min-capture-correct}).
The supplement contains an additional set of bounds on the number of remaining tree evaluations.

\subsection{Hierarchical Objective Lower Bound}
\label{sec:hierarchical}

The loss can be decomposed
\begin{arxiv}
in~\eqref{eq:objective}
\end{arxiv}
into two parts corresponding to the unchanged leaves and the leaves to be split:
$\Loss(\RL, \x, \y) 
\equiv \Loss_p(\Prefix, \Labels, \x, \y) + \Loss_q(\Default, \DefaultLabels, \x, \y),$
where $\Prefix$ $=$ $(p_1, \dots, p_K)$, $\Labels$ $=$ $(\q_1, \dots, \q_K)$, $\Default$ $=$ $(p_{K+1}, $ $\dots, p_{H_{\RL}})$ and $\DefaultLabels$ $=$ $(\q_{K+1}, \dots, \q_{H_{\RL}})$;
$\Loss_p(\Prefix, \Labels, \x, \y) =
\frac{1}{N} \sum_{n=1}^N \sum_{k=1}^K \Cap(x_n, p_k) \wedge \one [ \q_k \neq y_n ]$
is the proportion of data in the unchanged leaves that are misclassified, and
$\Loss_p(\Default, \DefaultLabels, \x, \y) =
\frac{1}{N} \sum_{n=1}^N \sum_{k=K+1}^{H_{\RL}} \Cap(x_n, p_k) \wedge \nn 
\one [ \q_k \neq y_n ]$
is the proportion of data in the leaves we are going to split that are misclassified.
We define a lower bound~$b(\Prefix, \x, \y)$ on the objective by leaving out the latter loss,
\begin{align}
b(\Prefix, \x, \y) \equiv \Loss_p(\Prefix, \Labels, \x, \y) + \Reg H_{\RL} \le \Obj(\RL, \x, \y),
\label{eq:lower-bound}
\end{align}
where the leaves $\Prefix$ are kept and the leaves $\Default$ are going to be split.
Here,
\begin{arxiv}
as in Theorem~\ref{thm:bound},
\end{arxiv}
$b(\Prefix, \x, \y)$ gives a lower bound on the objective of
\emph{any} child tree of $\RL$. 

\begin{theorem}[Hierarchical objective lower bound]
\begin{arxiv}
Define~${b(\Prefix, \x, \y)}$
\end{arxiv}
\begin{kdd}
Define $b(\Prefix, \x, \y)$ $=$ $\Loss_p(\Prefix, \Labels, \x, \y) + \Reg H_{\RL}$,
\end{kdd}
as in~\eqref{eq:lower-bound}.
Define $\StartsWith(\RL)$ to be the set of all $\RL$'s child trees
whose unchanged leaves contain~$\Prefix$, as in~\eqref{eq:starts-with}.
For tree ${\RL = }$ ${(\Prefix, \Labels, \Default, \DefaultLabels, K, H_{\RL})}$ 
with unchanged leaves~$\Prefix$, let
${\RL' = (\Prefix', \Labels', \Default', \DefaultLabels', K', H_{\RL'})}$ $\in \StartsWith(\RL)$
be any child tree such that its unchanged leaves~$\Prefix'$ contain~$\Prefix$
and ${K' \ge K, H_{\RL'} \ge H_{\RL}}$, then ${b(\Prefix, \x, \y) \le}$ ${\Obj(\RL', \x, \y)}$.
\label{thm:bound}
\end{theorem}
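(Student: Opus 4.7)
The plan is to unpack the definition of $\Obj(\RL', \x, \y)$ using the loss decomposition given just before the theorem statement, and then to peel off each term on the right-hand side of the desired inequality one at a time, justifying each step either by non-negativity or by a simple monotonicity argument.

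Concretely, I first write
$\Obj(\RL', \x, \y) = \Loss_p(\Prefix', \Labels', \x, \y) + \Loss_q(\Default', \DefaultLabels', \x, \y) + \Reg H_{\RL'}$
using the decomposition of $\Loss$ into its unchanged-leaf and to-be-split-leaf components. Since $\Loss_q$ is a sum of $\{0,1\}$-valued indicators normalized by $N$, it is non-negative, so dropping it yields $\Obj(\RL', \x, \y) \ge \Loss_p(\Prefix', \Labels', \x, \y) + \Reg H_{\RL'}$. Next, because $\Prefix \subseteq \Prefix'$, I split the outer sum in the definition of $\Loss_p(\Prefix', \Labels', \x, \y)$ into a contribution over leaves $p_k \in \Prefix$ plus a contribution over leaves in $\Prefix' \setminus \Prefix$; the former is exactly $\Loss_p(\Prefix, \Labels, \x, \y)$ and the latter is non-negative, so $\Loss_p(\Prefix', \Labels', \x, \y) \ge \Loss_p(\Prefix, \Labels, \x, \y)$. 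Combining with the hypothesis $H_{\RL'} \ge H_{\RL}$ (and $\Reg \ge 0$) gives $\Obj(\RL', \x, \y) \ge \Loss_p(\Prefix, \Labels, \x, \y) + \Reg H_{\RL} = b(\Prefix, \x, \y)$, which is the desired bound.

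The main obstacle, and really the only non-mechanical step, is justifying the identification of the ``$p_k \in \Prefix$'' part of the split sum with $\Loss_p(\Prefix, \Labels, \x, \y)$ itself. For this I would make two observations. First, each $p_k$ is a fixed Boolean conjunction along a root-to-leaf path, so once $p_k$ appears in $\Prefix'$ it captures exactly the same data points $\{x_n : \Cap(x_n,p_k) = 1\}$ as in $\Prefix$. Second, the label $\q_k$ is determined by the majority class of the data captured by leaf $k$, hence is invariant across $\Prefix$ and $\Prefix'$. Under these two observations, the per-leaf term $\tfrac{1}{N} \sum_n \Cap(x_n, p_k) \wedge \one[\q_k \neq y_n]$ is identical whether computed inside $\Loss_p(\Prefix, \Labels, \x, \y)$ or inside $\Loss_p(\Prefix', \Labels', \x, \y)$, so summing over $p_k \in \Prefix$ recovers $\Loss_p(\Prefix, \Labels, \x, \y)$ exactly. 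With this identification in hand, the chain of inequalities above closes and the theorem follows.
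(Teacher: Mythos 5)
Your proposal is correct and follows essentially the same route as the paper's proof: drop the non-negative loss contribution of the to-be-split leaves, split the unchanged-leaf loss of $\Prefix'$ into the part over $\Prefix$ plus a non-negative remainder, and use $H_{\RL'} \ge H_{\RL}$ for the regularization term. Your extra remark that each shared leaf captures the same data and carries the same majority label in both trees is a point the paper leaves implicit, but otherwise the two arguments coincide.
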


\begin{arxiv}
\begin{proof}
Let ${\Prefix = (p_1, \dots, p_K)}$ and ${\Labels = (\q_1, \dots, \q_K)}$;
let ${\Prefix' = (p_1, \dots, p_K, p_{K+1}, \dots, p_{K'})}$
and ${\Labels' = (\q_1, \dots, \q_K, \q_{K+1}, \dots, \q_{K'})}$.
Notice that~$\Prefix'$ yields the same mistakes as~$\Prefix$,
and possibly additional mistakes:
\begin{align}
&\Loss_p(\Prefix', \Labels', \x, \y)
= \frac{1}{N} \sum_{n=1}^N  \sum_{k=1}^{K'} \Cap(x_n, p_k) \wedge \one [ \q_k \neq y_n ] \nn \\
&= \frac{1}{N} \sum_{n=1}^N \left( \sum_{k=1}^K \Cap(x_n, p_k) \wedge \one [ \q_k \neq y_n ]
+ \sum_{k=K+1}^{K'} \Cap(x_n, p_k) \wedge \one [ \q_k \neq y_n ] \right) \nn \\
&=\Loss_p(\Prefix, \Labels, \x, \y)
+ \frac{1}{N} \sum_{n=1}^N \sum_{k=K+1}^{K'} \Cap(x_n, p_k) \wedge \one [ \q_k \neq y_n ]
\ge \Loss_p(\Prefix, \Labels, \x, \y),
\label{eq:prefix-loss}
\end{align}
It follows that
\begin{align}
b(\Prefix, \x, \y) &= \Loss_p(\Prefix, \Labels, \x, \y) + \Reg K \nn \\
&\le  \Loss_p(\Prefix', \Labels', \x, \y) + \Reg K' = b(\Prefix', \x, \y)
\le \Obj(\RL', \x, \y).
\label{eq:prefix-lb}
\end{align}
\end{proof}
\end{arxiv}

Consider a sequence of trees, where each tree is the parent of the following tree. In this case, the lower bounds of these trees increase monotonically, which is amenable to branch-and-bound. We illustrate our framework in Algorithm~\ref{alg:branch-and-bound} in Supplement \ref{appendix:alg}.
According to Theorem~\ref{thm:bound}, we can hierarchically prune the search space.
During the execution of the algorithm, we cache the current best (smallest) objective~$\CurrentObj$, which is dynamic and monotonically decreasing.
In this process, when we generate a tree whose unchanged leaves~$\Prefix$ correspond to a lower bound satifying ${b(\Prefix, \x, \y) \ge \CurrentObj}$, according to Theorem~\ref{thm:bound}, we do not need to consider \emph{any} child tree~${\RL' \in \StartsWith(\RL)}$ of this tree whose $\Prefix'$ contains~$\Prefix$.

Based on Theorem~\ref{thm:bound}, we describe a consequence in Lemma~\ref{lemma:lookahead}.

\begin{lemma}[Objective lower bound with one-step lookahead]
\label{lemma:lookahead}
Let~$\RL$ be a $H_{\RL}$-leaf tree with a $K$-leaf prefix
and let~$\CurrentObj$ be the current best objective.
If ${b(\Prefix, \x, \y) + \Reg \ge \CurrentObj}$,
then for any child tree ${\RL' \in \StartsWith(\RL)}$, 
its prefix~$\Prefix'$ starts with~$\Prefix$ and~${K' > K, H_{\RL'}>H_{\RL}}$,
and it follows that ${\Obj(\RL', \x, \y) \ge \CurrentObj}$.
\end{lemma}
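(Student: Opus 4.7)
The plan is to derive this as a direct consequence of Theorem~\ref{thm:bound} combined with the additional slack $\Reg$ that comes from $K' \ge K+1$. Recall from the proof of Theorem~\ref{thm:bound} that $b(\Prefix, \x, \y) = \Loss_p(\Prefix, \Labels, \x, \y) + \Reg K$, and that equation~\eqref{eq:prefix-loss} gives $\Loss_p(\Prefix', \Labels', \x, \y) \ge \Loss_p(\Prefix, \Labels, \x, \y)$ whenever $\Prefix' \supseteq \Prefix$. The idea is that once we extend the prefix by at least one leaf, the lower bound $b$ gains at least one extra unit of the regularizer $\Reg$, and the hypothesis $b(\Prefix,\x,\y) + \Reg \ge \CurrentObj$ converts this into the desired bound $\Obj(\RL', \x, \y) \ge \CurrentObj$.

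Concretely, the first step is to fix an arbitrary $\RL' \in \StartsWith(\RL)$ with the stated properties, in particular $\Prefix' \supseteq \Prefix$ and $K' \ge K + 1$. The second step is to chain the two inequalities
\begin{equation*}
b(\Prefix', \x, \y) = \Loss_p(\Prefix', \Labels', \x, \y) + \Reg K' \ge \Loss_p(\Prefix, \Labels, \x, \y) + \Reg (K+1) = b(\Prefix, \x, \y) + \Reg,
\end{equation*}
using~\eqref{eq:prefix-loss} for the first inequality and $K' \ge K+1$ for the regularization term. The third step is to invoke Theorem~\ref{thm:bound} to get $\Obj(\RL', \x, \y) \ge b(\Prefix', \x, \y)$, and then combine with the hypothesis to obtain $\Obj(\RL', \x, \y) \ge b(\Prefix, \x, \y) + \Reg \ge \CurrentObj$.

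There is essentially no hard step here: the lemma is a one-step sharpening of Theorem~\ref{thm:bound}, and the only substantive content is the observation that passing to a strictly larger prefix increases the leaf-count regularization contribution by at least $\Reg$. The only place where one has to be mildly careful is to confirm that the definition of $\StartsWith(\RL)$ together with the conditions listed in the lemma ($\Prefix' \supseteq \Prefix$ and $K' > K$) indeed yields $K' \ge K + 1$ as an integer inequality, which is immediate since $K, K'$ are nonnegative integers. The $H_{\RL'} > H_{\RL}$ hypothesis is not directly used in the computation but is consistent with $\RL' \supset \RL$.
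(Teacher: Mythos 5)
Your proposal is correct and follows essentially the same route as the paper's proof: bound $b(\Prefix',\x,\y)$ below by $b(\Prefix,\x,\y)+\Reg$ using the monotonicity of $\Loss_p$ under prefix extension together with the one extra unit of regularization, then invoke the hierarchical bound of Theorem~\ref{thm:bound} and the hypothesis. The only cosmetic difference is that you charge the regularizer via $K'\ge K+1$ while the paper's proof charges it via $H_{\RL'}\ge H_{\RL}+1$ (the paper is itself inconsistent between \eqref{eq:lower-bound} and \eqref{eq:prefix-lb} about whether $b$ carries $\Reg H$ or $\Reg K$); both hypotheses appear in the lemma statement, so the argument goes through either way.
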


\begin{arxiv}
\begin{proof}
By the definition of the lower bound~\eqref{eq:lower-bound},
which includes the penalty for longer prefixes,
\begin{align}
\Obj(\Prefix', \x, y) \ge b(\Prefix', \x, \y) &= \Loss_p(\Prefix', \Labels', \x, \y) + \Reg H' \nn \\
&= \Loss_p(\Prefix', \Labels', \x, \y) + \Reg H + \Reg (H' - H) \nn \\
&= b(\Prefix, \x, \y) + \Reg (H' - H)
\ge b(\Prefix, \x, \y) + \Reg \ge \CurrentObj.
\end{align}
\end{proof}
\end{arxiv}

This bound tends to be very powerful in practice in pruning the search space, because it states that
\textit{even though we might have a tree with unchanged leaves $\Prefix$
whose lower bound ${b(\Prefix, \x, \y) \le \CurrentObj}$,
if
${b(\Prefix, \x, \y) + \Reg }$ ${\ge \CurrentObj}$, we can still prune
all of its child trees.} 

\subsection{Lower Bounds on Node Support and Classification Accuracy}
\label{sec:lb-support}

We provide three lower bounds on the fraction of correctly classified data and the normalized support of leaves in any optimal tree. All of them depend on $\Reg$.
\begin{theorem}[Lower bound on node support]
\label{thm:min-capture}
~Let ${\OptimalRL = (\Prefix, }$ ${\Labels, \Default, \DefaultLabels, K,}$ ${ H_{\RL^*})}$
be any optimal tree with objective~$\OptimalObj$, \ie
${\OptimalRL}$ ${ \in \argmin_\RL \Obj(\RL, \x, \y)}$.
For an optimal tree, the support traversing through each internal node must be at least $2\Reg$. That is, 
for each child leaf pair $p_k, p_{k+1}$ of a split,
the sum of normalized supports of~$p_k, p_{k+1}$ should be no less than twice the regularization parameter, \ie $2\lambda$,
\begin{equation}
2\Reg \le \Supp(p_k, \x)+\Supp(p_{k+1}, \x).
\label{eq:min-capture}
\end{equation}
\end{theorem}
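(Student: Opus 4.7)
The natural approach is proof by contradiction: assume there is an optimal tree $\OptimalRL$ containing a split whose two child leaves $p_k, p_{k+1}$ violate the bound, \ie $\Supp(p_k,\x)+\Supp(p_{k+1},\x) < 2\Reg$. I will build a strictly better tree by undoing that split, contradicting optimality.

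First I would define $\RL'$ as the tree obtained from $\OptimalRL$ by replacing the pair $p_k, p_{k+1}$ with their common parent leaf (the internal node immediately above the split), labeled by the majority class of the data it captures. Since the children partition the data captured by the parent, the normalized support of the merged leaf is exactly $s := \Supp(p_k,\x)+\Supp(p_{k+1},\x)$, and $H_{\RL'} = H_{\OptimalRL} - 1$, so the regularization term decreases by exactly $\Reg$.

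Next I would bound the change in loss. Only data captured by $p_k \cup p_{k+1}$ can have its prediction altered. Among the $Ns$ such points, let $n_0$ and $n_1$ be the counts of the two classes, with $n_0+n_1 = Ns$. The merged leaf, taking the majority label, misclassifies $\min(n_0,n_1)/N \le s/2$ of the data. Since the two children can only do at least as well as the merged leaf (splitting never increases the number of majority-vote mistakes on the affected points), the loss increase from merging is at most $s/2 - 0 = s/2$.

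Combining the two effects,
\begin{equation*}
\Obj(\RL',\x,\y) - \Obj(\OptimalRL,\x,\y) \le \tfrac{s}{2} - \Reg < \tfrac{2\Reg}{2} - \Reg = 0,
\end{equation*}
so $\RL'$ has strictly smaller objective than $\OptimalRL$, contradicting $\OptimalRL \in \argmin_\RL \Obj(\RL,\x,\y)$. Hence $s \ge 2\Reg$. The main subtlety I expect is the loss-comparison step: one needs to be precise that splitting a leaf with majority-vote predictions cannot reduce the number of correctly classified points below that of the merged leaf, which follows because each child independently picks the majority of its own captured subset, and summing majority counts over a partition is at least the majority count on the union.
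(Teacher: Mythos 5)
Your proof is correct and follows essentially the same route as the paper's: both merge the offending sibling pair into their parent leaf, observe that the regularization term drops by $\Reg$ while the loss increases by at most half the combined support $s/2$ (worst case: the children were perfect and the majority-labeled merged leaf errs on the minority half), and conclude $2\Reg \le s$ from optimality. The only difference is presentational — you phrase it as an explicit contradiction under the assumption $s < 2\Reg$, whereas the paper combines the inequalities $\OptimalObj \le \Obj(\RL)$ and $\Obj(\RL) \le \OptimalObj + s/2 - \Reg$ directly.
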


\begin{arxiv}
\begin{proof}
Let ${\OptimalRL = (\Prefix, \Labels, \Default, \DefaultLabels, K, H)}$ be an optimal
tree with leaves ${(p_1, \dots, p_H)}$
and labels ${(\q_1, \dots, \q_H)}$.
Consider the tree ${\RL = (\Prefix', \Labels', \Default', \DefaultLabels', K', H')}$
derived from~$\OptimalRL$ by deleting a pair of leaves ${p_i \rightarrow \q_i, p_{i+1} \rightarrow \q_{i+1}}$ and adding the their parent leaf $p_{j}\rightarrow \q_{j}$,
therefore ${\Prefix' = (p_1, \dots, p_{i-1}, p_{i+2}, \dots, p_H, p_{j})}$
and ${\Labels' = (\q_1, \dots, \q_{i-1},}$ ${\q_{i+2}, \dots, \q_H, \q_{j})}$.

The largest possible discrepancy between~$\OptimalRL$ and~$\RL$ would occur
if~$\OptimalRL$ correctly classified all the data captured by~$p_i, p_{i+1}$,
while~$\RL$ misclassified half of these data.
This gives an upper bound:
\begin{align}
\Obj(\RL, \x, \y) = \Loss(\RL, \x, \y) + \Reg (H - 1)
&\le \Loss(\OptimalRL, \x, \y) + \Supp(p_i, \x) + \Supp(p_{i+1}, \x) \nn \\ &~~~ -\frac{1}{2}[\Supp(p_i, \x) + \Supp(p_{i+1}, \x)] + \Reg(H - 1) \nn \\
&= \Obj(\OptimalRL, \x, \y) + \frac{1}{2}[\Supp(p_i, \x) + \Supp(p_{i+1}, \x)] - \Reg \nn \\
&= \OptimalObj + \frac{1}{2}[\Supp(p_i, \x) + \Supp(p_{i+1}, \x)] - \Reg
\label{eq:ub-i}
\end{align}
where~$\Supp(p_i, \x), \Supp(p_i, \x)$ is the normalized support of~$p_i, p_{i+1}$, defined in~\eqref{eq:support},
and the regularization `bonus' comes from the fact that~$\RL$
is one rule shorter than~$\OptimalRL$.

At the same time, we must have ${\OptimalObj \le \Obj(\RL, \x, \y)}$ for~$\OptimalRL$ to be optimal.
Combining this with~\eqref{eq:ub-i} and rearranging gives~\eqref{eq:min-capture},
therefore twice of the regularization parameter~$2\Reg$ provides a lower bound
on the support of a pair of antecedent~$p_i, p_{i+1}$ in an optimal tree~$\OptimalRL$.
\end{proof}
\end{arxiv}

Therefore, for a tree~$\RL$, if any of its internal nodes capture less than
a fraction~$2\Reg$ of the samples, it cannot be an optimal tree, even if~${b(\Prefix, \x, \y) < \OptimalObj}$. None of its child trees would be an optimal tree either. Thus, after evaluating $\RL$, we can prune tree~$\RL$.
%
%

\begin{theorem}[Lower bound on incremental classification accuracy]
\label{thm:incre-min-capture-correct}
Let ${\OptimalRL = (\Prefix, \Labels,}$ ${ \Default, \DefaultLabels, K, H_{\RL^*})}$
be any optimal tree with objective~$\OptimalObj$, \ie
${\OptimalRL \in \argmin_\RL \Obj(\RL, \x, \y)}$.
Let $\OptimalRL$ have leaves ${\Prefix = (p_1, \dots, p_{H_{\RL^*}})}$
and labels ${\Labels = (\q_1, \dots, \q_{H_{\RL^*}})}$.
%
For each leaf pair $p_k, p_{k+1}$ with corresponding labels $\q_k, \q_{k+1}$ in $\OptimalRL$ and their parent node (the leaf in the parent tree) $p_{j}$ and its label $\q_{j}$, define $a_k$ to be the incremental classification accuracy of splitting $p_j$ to get $p_k,p_{k+1}$:
{\scriptsize
\begin{align}
a_k &\equiv \frac{1}{N} \sum_{n=1}^N \{
  \Cap(x_n, p_k) \wedge \one [ \q_k = y_n ] 
  + \Cap(x_n, p_{k+1}) \wedge \one [ \q_{k+1} = y_n ]
  - \Cap(x_n, p_j) \wedge \one [ \q_j = y_n ] \}.
\label{eq:rule-correct}
\end{align}
}%
In this case, $\Reg$ provides a lower bound, $\Reg \le a_k$.
\end{theorem}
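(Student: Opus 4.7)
The plan is to argue by contradiction: suppose some leaf pair $p_k, p_{k+1}$ in $\OptimalRL$ has incremental accuracy $a_k < \Reg$, and construct a strictly better competing tree by collapsing that split, contradicting optimality of $\OptimalRL$.

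Concretely, I would define a new tree $\RL$ obtained from $\OptimalRL$ by replacing the two sibling leaves $p_k, p_{k+1}$ (with labels $\q_k, \q_{k+1}$) by their parent leaf $p_j$ with label $\q_j$. This tree has exactly $H_{\RL^*}-1$ leaves and agrees with $\OptimalRL$ on every datum not captured by $p_j$. By the very definition of $a_k$ in~\eqref{eq:rule-correct}, the number of correctly classified points captured by $p_j$ is exactly $N \cdot a_k$ fewer under $\RL$ than under $\OptimalRL$, so $\Loss(\RL, \x, \y) = \Loss(\OptimalRL, \x, \y) + a_k$. Then
\[
\Obj(\RL, \x, \y) = \Loss(\OptimalRL, \x, \y) + a_k + \Reg(H_{\RL^*} - 1) = \OptimalObj + a_k - \Reg.
\]
If $a_k < \Reg$, this gives $\Obj(\RL, \x, \y) < \OptimalObj$, contradicting the optimality of $\OptimalRL$. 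Hence $\Reg \le a_k$.

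The only subtle step is verifying the loss bookkeeping: since the two sibling leaves $p_k, p_{k+1}$ exactly partition the data captured by their parent $p_j$, the mutually exclusive capture sets mean that $a_k$ cleanly equals the drop in the fraction of correctly classified training points when we merge. Here I am implicitly using that $\q_j$ is set by majority vote on the data $p_j$ captures, exactly as it would be set in the parent tree — so the term $\Cap(x_n, p_j) \wedge \one[\q_j = y_n]$ really does reflect the accuracy $p_j$ would contribute in the merged tree. This is the main thing to check carefully; once it is pinned down, the regularization bookkeeping (one fewer leaf, gain $\Reg$) and the one-line contradiction finish the argument, mirroring the structure of the proof of Theorem~\ref{thm:min-capture} but measuring the discrepancy exactly via $a_k$ rather than bounding it by worst-case halving.
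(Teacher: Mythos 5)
Your proposal is correct and takes essentially the same route as the paper's proof: both construct the competing tree by collapsing the sibling pair $p_k, p_{k+1}$ into their parent leaf $p_j$, identify $\Loss(\RL,\x,\y) - \Loss(\OptimalRL,\x,\y) = a_k$ directly from the definition in~\eqref{eq:rule-correct}, and derive $\Obj(\RL,\x,\y) = \OptimalObj + a_k - \Reg$, concluding $\Reg \le a_k$ from $\OptimalObj \le \Obj(\RL,\x,\y)$. The only cosmetic difference is that you phrase the final step as a contradiction while the paper states the inequality directly; your added remark about $\q_j$ being the majority label on $p_j$'s captured data is a reasonable point the paper leaves implicit.
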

\begin{arxiv}
In that case, the regularization parameter~$\Reg$ provides a lower bound on~$a_k$:
\begin{align}
\Reg \le a_k.
\label{eq:min-capture-correct}
\end{align}
\end{arxiv}

\begin{arxiv}
\begin{proof}
As in Theorem~\ref{thm:min-capture},
let ${\RL = (\Prefix', \Labels', \Default', \DefaultLabels', K', H')}$
be the tree derived from~$\OptimalRL$ by deleting a pair of leaves ${p_i \rightarrow \q_i, p_{i+1} \rightarrow \q_{i+1}}$ and adding the their parent leaf $p_{j}\rightarrow \q_{j}$.
The discrepancy between~$\OptimalRL$ and~$\RL$ is the discrepancy between $p_i, p_{i+1}$ and $p_j$:
\begin{align}
\Loss(\RL, \x, \y) - \Loss(\OptimalRL, \x, \y) = a_i, \nn
\end{align}
where we defined~$a_i$ in~\eqref{eq:rule-correct}.
Relating this bound to the objectives of~$\RL$ and~$\OptimalRL$ gives
\begin{align}
\Obj(\RL, \x, \y) = \Loss(\RL, \x, \y) + \Reg (K - 1)
&= \Loss(\OptimalRL, \x, \y) + a_i + \Reg(K - 1) \nn \\
&= \Obj(\OptimalRL, \x, \y) + a_i - \Reg \nn \\
&= \OptimalObj + a_i - \Reg.
\label{eq:ub-ii}
\end{align}
Combining~\eqref{eq:ub-ii} with the requirement
${\OptimalObj \le \Obj(\RL, \x, \y)}$ gives the bound~${\Reg \le a_i}$.
\end{proof}
\end{arxiv}

Thus, when we split a leaf of the parent tree, if the incremental fraction of data that are correctly classified after this split is less than a fraction~$\Reg$, we need to further split at least one of the two child leaves to search for the optimal tree.
%
%
Thus, we apply Theorem~\ref{thm:min-capture} when we split the leaves. We need only split
 leaves whose normalized supports are no less than~$2\Reg$.
We apply Theorem~\ref{thm:incre-min-capture-correct} when constructing the trees. For every new split, we check the incremental accuracy 
for this split. If it is less than $\Reg$, we further split at least one of the two child leaves.
Both Theorem~\ref{thm:min-capture} and Theorem~\ref{thm:incre-min-capture-correct} are bounds for pairs of leaves. We 
give a bound on a single leaf's classification accuracy in Theorem~\ref{thm:min-capture-correct}.

\begin{theorem}[Lower bound on classification accuracy]
\label{thm:min-capture-correct}
Let ${\OptimalRL = (\Prefix, \Labels, \Default, \DefaultLabels, K, H_{\RL^*})}$
be any optimal tree with objective~$\OptimalObj$, \ie
${\OptimalRL \in \argmin_\RL \Obj(\RL, \x, \y)}$.
For each leaf $(p_k, \hat{y}^{(\leaf)}_k)$ in $\OptimalRL$, the fraction of correctly classified data in leaf $k$ should be no less than $\Reg$,
\begin{align}
&\Reg \le \frac{1}{N}\sum_{n=1}^N \Cap(x_n,p_k)\wedge \one[\hat{y}^{(\leaf)}_k = y_n].
\label{eq:incre-min-capture-correct}
\end{align}
\end{theorem}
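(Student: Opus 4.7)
The plan is to adapt the swap-style argument from Theorem~\ref{thm:min-capture} and Theorem~\ref{thm:incre-min-capture-correct}, but now building a competitor that differs from $\OptimalRL$ in both its structure and in the label assigned to the merged region. Fix a leaf $p_k$ of $\OptimalRL$, let $p_{k+1}$ denote its sibling (the other leaf produced by the split that created $p_k$, with common parent $p_j$), and abbreviate by $c_k, m_k$ the numbers of training points captured by $p_k$ that are, respectively, correctly and incorrectly classified under $\q_k$, with analogous notation $c_{k+1}, m_{k+1}$ for $p_{k+1}$.

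I would first rule out $\q_k = \q_{k+1}$. If the two sibling labels agreed, then replacing $(p_k, p_{k+1})$ by the single parent leaf $p_j$ labelled $\q_k$ predicts identically on the captured data but saves one leaf, giving objective $\OptimalObj - \Reg < \OptimalObj$ and contradicting the optimality of $\OptimalRL$. Hence in any optimal tree we may assume $\q_{k+1} = 1 - \q_k$.

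Next I would consider the competitor $\RL'$ obtained from $\OptimalRL$ by deleting $(p_k, p_{k+1})$ and inserting the parent $p_j$ with label $1 - \q_k$ (which equals $\q_{k+1}$). On the union of the two regions, $\OptimalRL$ misclassifies $m_k + m_{k+1}$ points, while $\RL'$ still misclassifies $m_{k+1}$ in $p_{k+1}$'s region (the label there is unchanged) and additionally misclassifies all $c_k$ points that had been correct under $\q_k$, for a total of $c_k + m_{k+1}$. Thus $\Loss(\RL',\x,\y) - \Loss(\OptimalRL,\x,\y) = (c_k - m_k)/N$, and since $H_{\RL'} = H_{\RL^*} - 1$, one has $\Obj(\RL',\x,\y) = \OptimalObj + (c_k - m_k)/N - \Reg$. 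Optimality of $\OptimalRL$ forces $\Obj(\RL',\x,\y) \ge \OptimalObj$, which rearranges to $c_k - m_k \ge \Reg N$ and therefore $\Reg \le c_k/N$, which is exactly \eqref{eq:incre-min-capture-correct} once $c_k$ is rewritten as $\sum_n \Cap(x_n,p_k)\wedge\one[\q_k=y_n]$.

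The main obstacle I foresee is notational rather than conceptual: translating the counts $c_k, m_k$ back into the $\Cap$-and-indicator form used in the paper and being explicit that every non-root leaf of $\OptimalRL$ really does have a unique sibling, since leaves in this framework are only ever created in pairs by splits. The one genuine edge case is a single-leaf tree, for which there is no sibling to merge away; there $\q_k$ is the global majority label, so the correctly classified fraction is at least $1/2$ and the bound holds trivially under any sensible choice of $\Reg$.
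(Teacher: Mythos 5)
Your proof is correct, but it takes a genuinely different route from the paper's. The paper obtains Theorem~\ref{thm:min-capture-correct} as a quick corollary of Theorem~\ref{thm:incre-min-capture-correct}: starting from $\Reg \le a_k$, it notes that the number of points a child leaf classifies correctly under its majority label can never exceed the number its parent classifies correctly under the parent's majority label, so the sibling's contribution to $a_k$ minus the parent's term is nonpositive and $a_k \le c_k/N$ follows with no new competitor and no case analysis on the labels. You instead construct a fresh competitor for each leaf $k$: merge the sibling pair and deliberately assign the parent the label $1-\q_k$, which isolates leaf $k$'s contribution and yields the slightly stronger conclusion $\Reg \le (c_k - m_k)/N$ (correct minus incorrect, not merely correct). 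This costs you two extra ingredients the paper's argument avoids: the preliminary claim that sibling leaves of an optimal tree carry opposite labels (your merging argument for this is valid), and a competitor whose merged leaf need not carry the majority label of the data it captures. The latter deserves an explicit sentence, since the paper defines each leaf label to be the majority label of its captured data: your $\RL'$ may lie outside the nominal search space, but relabelling its leaves with majority labels only lowers the objective, so $\Obj(\RL', \x, \y) \ge \OptimalObj$ still holds and the chain of inequalities survives. On balance, your version is self-contained and marginally sharper per leaf, while the paper's is shorter by piggybacking on the incremental-accuracy bound; your explicit treatment of the sibling-free single-leaf tree is also more careful than the paper's proof, which tacitly assumes every leaf arises from a split.
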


\begin{arxiv}
\begin{proof}
As in Theorem~\ref{thm:incre-min-capture-correct},
let ${\RL = (\Prefix', \Labels', \Default', \DefaultLabels', K', H')}$
be the tree derived from~$\OptimalRL$ by deleting a pair of leaves ${p_i \rightarrow \q_i, p_{i+1} \rightarrow \q_{i+1}}$ and adding the their parent leaf $p_{j}\rightarrow \q_{j}$.
The discrepancy between~$\OptimalRL$ and~$\RL$ is the discrepancy between $p_i, p_{i+1}$ and $p_j$:
\begin{align}
\Loss(\RL, \x, \y) - \Loss(\OptimalRL, \x, \y) = a_i, \nn
\end{align}
where we defined~$a_i$ in~\eqref{eq:rule-correct}.
According to Theorem~\ref{thm:incre-min-capture-correct},
\begin{align}
\Reg \le & a_i, \nn \\
\Reg \le & \frac{1}{N} \sum_{n=1}^N \{
  \Cap(x_n, p_i) \wedge \one [ \q_i = y_n ] + \Cap(x_n, p_{i+1}) \wedge \one [ \q_{i+1} = y_n ]  \nn \\
  &~~~ - \Cap(x_n, p_j) \wedge \one [ \q_j = y_n ] \}.
\end{align}
For any leaf $j$ and its two child leaves $i,i+1$, we always have
\begin{align}
  \sum_{n=1}^N \Cap(x_n, p_i) \wedge \one [ \q_i = y_n ] \le \sum_{n=1}^N \Cap(x_n, p_j) \wedge \one [ \q_j = y_n ], \nn \\
  \sum_{n=1}^N \Cap(x_n, p_{i+1}) \wedge \one [ \q_{i+1} = y_n ] \le \sum_{n=1}^N \Cap(x_n, p_j) \wedge \one [ \q_j = y_n ]
\end{align}
which indicates that $a_i \le \frac{1}{N}\sum_{n=1}^N \Cap(x_n,p_i)\wedge \one[\q_i = y_n]$ and $a_i \le \frac{1}{N}\sum_{n=1}^N \Cap(x_n,p_{i+1})\wedge \one[\q_{i+1} = y_n]$. Therefore,
\begin{align}
  &\Reg \le \frac{1}{N}\sum_{n=1}^N \Cap(x_n,p_i)\wedge \one[\q_i = y_n], \nn \\
  &\Reg \le \frac{1}{N}\sum_{n=1}^N \Cap(x_n,p_{i+1})\wedge \one[\q_{i+1} = y_n].
\end{align}
\end{proof}
\end{arxiv}
%
\textit{Thus, in a leaf we consider extending by splitting on a particular feature, if that proposed split leads to less than $\Reg$ correctly classified data going to either side of the split, then this split can be excluded, and we can exclude that feature anywhere further down the tree extending that leaf.}


\begin{arxiv} 
\subsection{Upper Bound on Antecedent Support}
\label{sec:ub-support}

In the previous section~(\S\ref{sec:lb-support}), we proved lower bounds on
antecedent support; in Appendix~\ref{appendix:ub-supp},
we give an upper bound on antecedent support.
Specifically, Theorem~\ref{thm:ub-support} shows that an antecedent's
support in a rule list cannot be too similar to the set of data not
captured by preceding antecedents in the rule list.
In particular, Theorem~\ref{thm:ub-support} implies that we should
only mine rules with normalized support less than or equal to ${1 - \Reg}$;
we need not mine rules with a larger fraction of observations.
Note that we do not otherwise use this bound in our implementation,
because we did not observe a meaningful benefit in preliminary experiments.
\end{arxiv}  

\begin{arxiv} 
\subsection{Antecedent Rejection and its Propagation}
\label{sec:reject}

In this section, we demonstrate further consequences of
our lower~(\S\ref{sec:lb-support}) and upper
bounds (\S\ref{sec:ub-support}) on antecedent support,
under a unified framework we refer to as antecedent rejection.
Let ${\Prefix = (p_1, \dots, p_K)}$ be a prefix,
and let~$p_k$ be an antecedent in~$\Prefix$.
Define~$p_k$ to have insufficient support in~$\Prefix$
if it does not obey the bound in~\eqref{eq:min-capture}
of Theorem~\ref{thm:min-capture}.
Define~$p_k$ to have insufficient accurate support in~$\Prefix$
if it does not obey the bound in~\eqref{eq:min-capture-correct}
of Theorem~\ref{thm:min-capture-correct}.
Define~$p_k$ to have excessive support in~$\Prefix$
if it does not obey the bound in~\eqref{eq:ub-support}
of Theorem~\ref{thm:ub-support} (Appendix~\ref{appendix:ub-supp}).
If~$p_k$ in the context of~$\Prefix$ has insufficient support,
insufficient accurate support, or excessive support,
let us say that prefix~$\Prefix$ rejects antecedent~$p_K$.
Next, in Theorem~\ref{thm:reject}, we describe large classes of
related rule lists whose prefixes all reject the same antecedent.

\begin{theorem}[Antecedent rejection propagates]
\label{thm:reject}
For any prefix ${\Prefix = (p_1, \dots, p_K)}$,
let $\StartContains(\Prefix)$ denote the set of all
prefixes~$\Prefix'$ such that
the set of all antecedents in~$\Prefix$ is a subset of
the set of all antecedents in~$\Prefix'$, \ie
\begin{align}
\StartContains(\Prefix) =
\{\Prefix' = (p'_1, \dots, p'_{K'})
~s.t.~ \{p_k : p_k \in \Prefix \} \subseteq
\{p'_\kappa : p'_\kappa \in \Prefix'\}, K' \ge K \}.
\label{eq:start-contains}
\end{align}
Let ${\RL = (\Prefix, \Labels, \Default, K)}$ be a rule list
with prefix ${\Prefix = (p_1, \dots, p_{K-1}, p_{K})}$,
such that~$\Prefix$ rejects its last antecedent~$p_{K}$,
either because~$p_{K}$ in the context of~$\Prefix$ has
insufficient support, insufficient accurate support,
or excessive support.
Let ${\Prefix^{K-1} = (p_1, \dots, p_{K-1})}$ be the
first~${K - 1}$ antecedents of~$\Prefix$.
Let ${\RLB = (\PrefixB, \LabelsB, \DefaultB, \kappa)}$
be any rule list with prefix
${\PrefixB = (P_1, \dots, P_{K'-1},}$ ${P_{K'}, \dots, P_{\kappa})}$
such that~$\PrefixB$ starts with ${\PrefixB^{K'-1} =}$
${(P_1, \dots, P_{K'-1}) \in}$ ${\StartContains(\Prefix^{K-1})}$
and antecedent ${P_{K'} = p_{K}}$.
It follows that prefix~$\PrefixB$ rejects~$P_{K'}$
for the same reason that~$\Prefix$ rejects~$p_{K}$,
and furthermore, $\RLB$~cannot be optimal, \ie
${\RLB \notin \argmin_{\RL^\dagger} \Obj(\RL^\dagger, \x, \y)}$.
\end{theorem}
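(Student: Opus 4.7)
The plan is to leverage a single monotonicity fact: because $\PrefixB^{K'-1} \in \StartContains(\Prefix^{K-1})$ contains every antecedent of $\Prefix^{K-1}$, the set of data captured by $\PrefixB^{K'-1}$ is a superset of the set captured by $\Prefix^{K-1}$, so the residual (uncaptured) set presented to $P_{K'}$ in $\PrefixB$ is a subset of the residual presented to $p_K$ in $\Prefix$. Each rejection criterion from Theorems~\ref{thm:min-capture}, \ref{thm:min-capture-correct}, and \ref{thm:ub-support} is phrased in terms of the antecedent's interaction with that residual, so the bulk of the proof reduces to showing these criteria are preserved under residual shrinkage.

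First I would set up the residual sets $U_\Prefix = \{x_n : \neg \Cap(x_n, \Prefix^{K-1})\}$ and $U_\PrefixB = \{x_n : \neg \Cap(x_n, \PrefixB^{K'-1})\}$. Using monotonicity of $\Cap$ in its second argument together with the antecedent-set containment, I would deduce $U_\PrefixB \subseteq U_\Prefix$. Since $P_{K'} = p_K$, the data effectively covered by $P_{K'}$ in $\PrefixB$ equals $\{x_n : \Cap(x_n, p_K)\} \cap U_\PrefixB$, which is contained in $\{x_n : \Cap(x_n, p_K)\} \cap U_\Prefix$, the data effectively covered by $p_K$ in $\Prefix$. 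This single inclusion yields at once that both the context-dependent support of $P_{K'}$ in $\PrefixB$ and the corresponding correctly-classified count are no greater than those of $p_K$ in $\Prefix$.

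Next I would dispatch the three rejection cases in turn. For \emph{insufficient support}, the support inequality above shows that if $p_K$ violates the $2\Reg$ bound of Theorem~\ref{thm:min-capture} in $\Prefix$, so does $P_{K'}$ in $\PrefixB$. For \emph{insufficient accurate support}, the analogous monotonicity on the correctly-classified count transfers the violation of Theorem~\ref{thm:min-capture-correct}. For \emph{excessive support}, the bound in Theorem~\ref{thm:ub-support} limits how small the residual complement of $p_K$ inside $U_\Prefix$ can be; because $U_\PrefixB \subseteq U_\Prefix$, the corresponding complement inside $U_\PrefixB$ is no larger, so the same bound is again violated by $P_{K'}$ in $\PrefixB$. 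Having shown that $\PrefixB$ rejects $P_{K'}$, the non-optimality of $\RLB$ is an immediate invocation of whichever of the three theorems applied, since each asserts that a rule list whose prefix rejects one of its antecedents cannot lie in $\argmin_{\RL^\dagger} \Obj(\RL^\dagger, \x, \y)$.

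The main obstacle I anticipate is the excessive-support case: unlike the two lower-bound criteria, where shrinking the residual strictly worsens an already-insufficient quantity, the upper-bound criterion compares $p_K$'s capture to the residual itself, and I must verify that the direction of the inequality in Theorem~\ref{thm:ub-support} is genuinely the one preserved by residual shrinkage rather than reversed by it. Once the precise form of that bound is pinned down, the rest of the argument is a routine bookkeeping exercise in set containment and the definitions of $\Supp$ and $\Cap$.
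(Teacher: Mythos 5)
Your proposal follows the same route as the paper, which proves the theorem by combining three propositions (one per rejection criterion), each established via exactly the residual-monotonicity argument you describe: containment of the antecedent sets implies the uncaptured residual shrinks, so the context-dependent support, the accurate support, and the residual complement of $P_{K'}$ in $\PrefixB$ are all dominated by the corresponding quantities for $p_K$ in $\Prefix$, and non-optimality then follows by invoking the respective support theorems. The one detail worth adding in the accurate-support case is that the label $Q_{K'}$ assigned in $\PrefixB$ may differ from $q_K$, so after applying the set inclusion with the label $Q_{K'}$ held fixed you must also use that $q_K$ is the majority label of the larger captured set to finish the chain of inequalities --- this is precisely how the paper's proposition on insufficient accurate support closes the argument.
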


\begin{proof}
Combine Proposition~\ref{prop:min-capture}, Proposition~\ref{prop:min-capture-correct},
and Proposition~\ref{prop:ub-support}.
The first two are found below, and the last in Appendix~\ref{appendix:ub-supp}.
\end{proof}

Theorem~\ref{thm:reject} implies potentially significant
computational savings.
We know from Theorems~\ref{thm:min-capture},
\ref{thm:min-capture-correct}, and~\ref{thm:ub-support}
that during branch-and-bound execution, if we ever encounter a
prefix ${\Prefix = (p_1, \dots, p_{K-1}, p_K)}$ that rejects its
last antecedent~$p_K$, then we can prune~$\Prefix$.
By Theorem~\ref{thm:reject}, we can also prune \emph{any} prefix~$\Prefix'$
whose antecedents contains the set of antecedents in~$\Prefix$,
in almost any order, with the constraint that all antecedents
in ${\{p_1, \dots, p_{K-1}\}}$ precede~$p_K$.
These latter antecedents are also rejected
directly by the bounds in Theorems~\ref{thm:min-capture},
\ref{thm:min-capture-correct}, and~\ref{thm:ub-support};
this is how our implementation works in practice.
In a preliminary implementation (not shown), we maintained additional
data structures to support the direct use of Theorem~\ref{thm:reject}.
We leave the design of efficient data structures for this task as future work.

\begin{proposition}[Insufficient antecedent support propagates]
\label{prop:min-capture}
First define~$\StartContains(\Prefix)$ as in~\eqref{eq:start-contains},
and let ${\Prefix = (p_1, \dots, p_{K-1}, p_{K})}$ be a prefix,
such that its last antecedent~$p_{K}$ has insufficient support,
\ie the opposite of the bound in~\eqref{eq:min-capture}:
${\Supp(p_K, \x \given \Prefix) < \Reg}$.
Let ${\Prefix^{K-1} =}$ ${(p_1, \dots, p_{K-1})}$,
and let ${\RLB = (\PrefixB, \LabelsB, \DefaultB, \kappa)}$
be any rule list with prefix
${\PrefixB = (P_1, \dots, P_{K'-1},}$ ${P_{K'}, \dots, P_{\kappa})}$,
such that~$\PrefixB$ starts with ${\PrefixB^{K'-1} =}$
${(P_1, \dots, P_{K'-1}) \in \StartContains(\Prefix^{K-1})}$
and~${P_{K'} = p_{K}}$.
It follows that~$P_{K'}$ has insufficient support in
prefix~$\PrefixB$, and furthermore, $\RLB$~cannot be optimal,
\ie ${\RLB \notin \argmin_{\RL} \Obj(\RL, \x, \y)}$.
\end{proposition}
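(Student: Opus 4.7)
The plan is to transfer the support deficiency from $p_K$ in $\Prefix$ to $P_{K'}$ in $\PrefixB$, and then invoke Theorem~\ref{thm:min-capture} to exclude $\RLB$ from the set of optimal rule lists. The whole argument hinges on a monotonicity observation: in rule-list semantics, inserting additional antecedents \emph{before} a given antecedent can only \emph{shrink} the set of data on which it fires, and hence its conditional support cannot increase.

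Concretely, I would first unpack $\Supp(p_K, \x \given \Prefix)$ as the normalized count of $x_n$ such that $\Cap(x_n, p_K)$ is true but $\Cap(x_n, p_k)$ is false for every $k < K$; analogously for $\Supp(P_{K'}, \x \given \PrefixB)$ using $\PrefixB^{K'-1} = (P_1, \dots, P_{K'-1})$. Because $\PrefixB^{K'-1} \in \StartContains(\Prefix^{K-1})$, every antecedent of $\Prefix^{K-1}$ appears among those of $\PrefixB^{K'-1}$, so the set of data captured by $\PrefixB^{K'-1}$ \emph{contains} that captured by $\Prefix^{K-1}$. Combined with the identification $P_{K'} = p_K$, this gives the set containment
\[
\{x_n : \Cap(x_n, P_{K'}) \wedge \neg \Cap(x_n, \PrefixB^{K'-1})\}
\subseteq
\{x_n : \Cap(x_n, p_K) \wedge \neg \Cap(x_n, \Prefix^{K-1})\},
\]
and hence $\Supp(P_{K'}, \x \given \PrefixB) \le \Supp(p_K, \x \given \Prefix) < \Reg$. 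Thus $P_{K'}$ also has insufficient support in~$\PrefixB$.

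From here the proposition follows immediately from Theorem~\ref{thm:min-capture}: any optimal rule list must have each antecedent satisfy the support lower bound of $\Reg$, and since $P_{K'}$ in $\PrefixB$ fails to do so, $\RLB \notin \argmin_{\RL} \Obj(\RL, \x, \y)$. I do not anticipate a substantive obstacle; the only friction is notational, namely keeping track of what the conditional $\Supp(\cdot, \x \given \cdot)$ denotes and verifying the direction of the containment induced by $\StartContains$. Once the capture inclusion above is written down, the rest is a one-line appeal to Theorem~\ref{thm:min-capture}.
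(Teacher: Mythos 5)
Your proposal is correct and follows essentially the same route as the paper's proof: both unpack the conditional support $\Supp(\cdot,\x\given\cdot)$ as the fraction of data capturing the antecedent but not captured by any preceding antecedent, use the fact that $\PrefixB^{K'-1}\in\StartContains(\Prefix^{K-1})$ together with $P_{K'}=p_K$ to obtain the monotone containment (the paper writes it as an inequality of indicator sums, you as a set inclusion, which is the same thing), and then conclude via Theorem~\ref{thm:min-capture}. No gaps.
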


\begin{proof}
The support of~$p_K$ in~$\Prefix$ depends only on the
set of antecedents in ${\Prefix^{K} = (p_1, \dots, p_{K})}$:
\begin{align}
\Supp(p_K, \x \given \Prefix)
= \frac{1}{N} \sum_{n=1}^N \Cap(x_n, p_K \given \Prefix)
&= \frac{1}{N} \sum_{n=1}^N \left( \neg\, \Cap(x_n, \Prefix^{K-1}) \right)
  \wedge \Cap(x_n, p_K) \nn \\
&= \frac{1}{N} \sum_{n=1}^N \left( \bigwedge_{k=1}^{K-1} \neg\, \Cap(x_n, p_k) \right)
  \wedge \Cap(x_n, p_K)
< \Reg, \nn
\end{align}
and the support of~$P_{K'}$ in~$\PrefixB$ depends only on
the set of antecedents in ${\PrefixB^{K'} =}$ ${(P_1, \dots, P_{K'})}$:
\begin{align}
\Supp(P_{K'}, \x \given \PrefixB)
= \frac{1}{N} \sum_{n=1}^N \Cap(x_n, P_{K'} \given \PrefixB) 
&= \frac{1}{N} \sum_{n=1}^N \left( \bigwedge_{k=1}^{K'-1} \neg\, \Cap(x_n, P_k) \right)
   \wedge \Cap(x_n, P_{K'}) \nn \\
&\le \frac{1}{N} \sum_{n=1}^N \left( \bigwedge_{k=1}^{K-1} \neg\, \Cap(x_n, p_k) \right)
  \wedge \Cap(x_n, P_{K'}) \nn \\
&= \frac{1}{N} \sum_{n=1}^N \left( \bigwedge_{k=1}^{K-1} \neg\, \Cap(x_n, p_k) \right)
  \wedge \Cap(x_n, p_{K}) \nn \\
&= \Supp(p_K, \x \given \Prefix) < \Reg.
\label{ineq:supp}
\end{align}
The first inequality reflects the condition that
${\PrefixB^{K'-1} \in \StartContains(\Prefix^{K-1})}$,
which implies that the set of antecedents in~$\PrefixB^{K'-1}$
contains the set of antecedents in~$\Prefix^{K-1}$,
and the next equality reflects the fact that~${P_{K'} = p_K}$.
Thus,~$P_K'$ has insufficient support in prefix~$\PrefixB$,
therefore by Theorem~\ref{thm:min-capture}, $\RLB$~cannot be optimal,
\ie ${\RLB \notin \argmin_{\RL} \Obj(\RL, \x, \y)}$.
\end{proof}

\begin{proposition}[Insufficient accurate antecedent support propagates]
\label{prop:min-capture-correct}
~Let~$\StartContains(\Prefix)$ denote the set of all
prefixes~$\Prefix'$ such that
the set of all antecedents in~$\Prefix$ is a subset of
the set of all antecedents in~$\Prefix'$,
as in~\eqref{eq:start-contains}.
Let ${\RL = (\Prefix, \Labels, \Default, K)}$ be a rule list
with prefix ${\Prefix = (p_1, \dots, p_{K})}$
and labels ${\Labels = (q_1, \dots, q_{K})}$, such that
the last antecedent~$p_{K}$ has insufficient accurate support,
\ie the opposite of the bound in~\eqref{eq:min-capture-correct}:
\begin{align}
\frac{1}{N} \sum_{n=1}^N \Cap(x_n, p_K \given \Prefix) \wedge \one [ q_K = y_n ]
< \Reg. \nn
\end{align}
Let ${\Prefix^{K-1} = (p_1, \dots, p_{K-1})}$
and let ${\RLB = (\PrefixB, \LabelsB, \DefaultB, \kappa)}$
be any rule list with prefix ${\PrefixB =}$ ${(P_1, \dots, P_{\kappa})}$
and labels ${\LabelsB = (Q_1, \dots, Q_{\kappa})}$,
such that~$\PrefixB$ starts with ${\PrefixB^{K'-1} =}$
${(P_1, \dots, P_{K'-1})}$ ${\in \StartContains(\Prefix^{K-1})}$
and ${P_{K'} = p_{K}}$.
It follows that~$P_{K'}$ has insufficient accurate support in
prefix~$\PrefixB$, and furthermore,
${\RLB \notin \argmin_{\RL^\dagger} \Obj(\RL^\dagger, \x, \y)}$.
\end{proposition}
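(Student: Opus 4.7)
The plan is to mirror the proof of Proposition \ref{prop:min-capture} as closely as possible, reducing the argument to a subset-inclusion between capture sets and then combining it with a majority-label observation to handle the possibility that $Q_{K'} \neq q_K$.

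First I would expand both accurate supports into sums of indicators. Concretely, writing $\Cap(x_n, p_K \given \Prefix) = \bigl(\bigwedge_{k=1}^{K-1} \neg\,\Cap(x_n, p_k)\bigr) \wedge \Cap(x_n, p_K)$ and similarly for $\Cap(x_n, P_{K'} \given \PrefixB)$, the hypothesis ${\PrefixB^{K'-1} \in \StartContains(\Prefix^{K-1})}$ means that every antecedent in $\Prefix^{K-1}$ reappears among $P_1, \ldots, P_{K'-1}$. Consequently the conjunction $\bigwedge_{k=1}^{K'-1} \neg\,\Cap(x_n, P_k)$ logically implies $\bigwedge_{k=1}^{K-1} \neg\,\Cap(x_n, p_k)$. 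Combined with $P_{K'} = p_K$, this yields the capture inclusion
\begin{equation*}
\{x_n : \Cap(x_n, P_{K'} \given \PrefixB)\} \subseteq \{x_n : \Cap(x_n, p_K \given \Prefix)\}.
\end{equation*}

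Next I would use this inclusion to bound the accurate support of $P_{K'}$ in $\PrefixB$. If $Q_{K'} = q_K$, the inclusion immediately gives
\begin{equation*}
\tfrac{1}{N}\sum_{n=1}^N \Cap(x_n, P_{K'} \given \PrefixB) \wedge \one[Q_{K'} = y_n]
\le \tfrac{1}{N}\sum_{n=1}^N \Cap(x_n, p_K \given \Prefix) \wedge \one[q_K = y_n] < \Reg.
\end{equation*}
The trickier situation is $Q_{K'} \neq q_K$, which is where I expect the main obstacle. I would resolve it by invoking the convention from Section \ref{sec:framework} that each leaf label is the majority class of the data it captures: since $q_K$ is the majority label among the data that $p_K$ captures in $\Prefix$, the count of minority-labeled points in that capture set is bounded above by the majority count, which by hypothesis is strictly less than $\Reg N$. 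Applying the capture inclusion then bounds the minority count in $\PrefixB$'s capture set by the same quantity, so the accurate support for the opposite label is also $< \Reg$.

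Having established $P_{K'}$'s insufficient accurate support in $\PrefixB$ for whichever label $Q_{K'}$ is chosen, I would close by contraposing Theorem \ref{thm:min-capture-correct}: any tree/rule list violating the leaf accurate support bound is not optimal, so $\RLB \notin \argmin_{\RL^\dagger} \Obj(\RL^\dagger, \x, \y)$. The structural steps are routine once the capture inclusion is in hand; the only non-mechanical step is the majority-label argument used to cover a possibly mismatched $Q_{K'}$.
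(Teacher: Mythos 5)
Your proposal is correct and follows essentially the same route as the paper: the capture-set inclusion from $\PrefixB^{K'-1} \in \StartContains(\Prefix^{K-1})$ and $P_{K'}=p_K$, followed by the observation that $q_K$ is the majority label of $p_K$'s capture set so that matches with $Q_{K'}$ cannot exceed matches with $q_K$, and finally contraposition of Theorem~\ref{thm:min-capture-correct}. The paper merely folds your two cases ($Q_{K'}=q_K$ versus $Q_{K'}\neq q_K$) into the single inequality $\sum_n \Cap(x_n, p_K \given \Prefix)\wedge\one[Q_{K'}=y_n] \le \sum_n \Cap(x_n, p_K \given \Prefix)\wedge\one[q_K=y_n]$, which is the same majority-label argument you give.
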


\begin{proof}
The accurate support of~$P_{K'}$ in~$\PrefixB$ is insufficient:
\begin{align}
\frac{1}{N} \sum_{n=1}^N \Cap(x_n, P_{K'} &\given \PrefixB) \wedge \one [ Q_{K'} = y_n ] \nn \\
&= \frac{1}{N} \sum_{n=1}^N \left( \bigwedge_{k=1}^{K'-1} \neg\, \Cap(x_n, P_k) \right)
   \wedge \Cap(x_n, P_{K'}) \wedge \one [ Q_{K'} = y_n ] \nn \\
&\le \frac{1}{N} \sum_{n=1}^N \left( \bigwedge_{k=1}^{K-1} \neg\, \Cap(x_n, p_k) \right)
   \wedge \Cap(x_n, P_{K'}) \wedge \one [ Q_{K'} = y_n ] \nn \\
&= \frac{1}{N} \sum_{n=1}^N \left( \bigwedge_{k=1}^{K-1} \neg\, \Cap(x_n, p_k) \right)
   \wedge \Cap(x_n, p_K) \wedge \one [ Q_{K'} = y_n ] \nn \\
&= \frac{1}{N} \sum_{n=1}^N \Cap(x_n, p_K \given \Prefix) \wedge \one [ Q_{K'} = y_n ] \nn \\
&\le \frac{1}{N} \sum_{n=1}^N \Cap(x_n, p_K \given \Prefix) \wedge \one [ q_{K} = y_n ]
< \Reg. \nn
\end{align}
The first inequality reflects the condition that
${\PrefixB^{K'-1} \in \StartContains(\Prefix^{K-1})}$,
the next equality reflects the fact that~${P_{K'} = p_K}$.
For the following equality, notice that~$Q_{K'}$ is the majority
class label of data captured by~$P_{K'}$ in~$\PrefixB$, and~$q_K$
is the majority class label of data captured by~$P_K$ in~$\Prefix$,
and recall from~\eqref{ineq:supp} that
${\Supp(P_{K'}, \x \given \PrefixB) \le \Supp(p_{K}, \x \given \Prefix)}$.
By Theorem~\ref{thm:min-capture-correct},
${\RLB \notin \argmin_{\RL^\dagger} \Obj(\RL^\dagger, \x, \y)}$.
\end{proof}

Propositions~\ref{prop:min-capture} and~\ref{prop:min-capture-correct},
combined with Proposition~\ref{prop:ub-support} (Appendix~\ref{appendix:ub-supp}),
constitute the proof of Theorem~\ref{thm:reject}.
\end{arxiv} 

\subsection{Equivalent Points Bound}
\label{sec:identical}
When multiple observations captured by a leaf in~$\Default$
have identical features but opposite labels, then no tree, including those that extend $\Default$, can correctly classify all of these observations. The number of misclassifications must be at least the minority label of the equivalent points. This bound has been a very powerful tool for our algorithm, but requires substantial notation and is thus presented in Appendix \ref{appendix:eqsu}.

%

\begin{arxiv} 
For data set~${\{(x_n, y_n)\}_{n=1}^N}$ and a set of features
${\{s_m\}_{m=1}^M}$,
we define a set of samples to be equivalent if they have exactly the same feature values, \ie ${x_i \neq x_j}$ are equivalent~if
$\frac{1}{M} \sum_{m=1}^M \one [ \Cap(x_i, s_m) = \Cap(x_j, s_m) ] = 1$. 
Note that a data set consists of multiple sets of equivalent points;
let~${\{e_u\}_{u=1}^U}$ enumerate these sets.
For each observation~$x_i$, it belongs to a equivalent points set~$e_u$.
We denote the fraction of data with the minority label in set~$e_u$ as~$\theta(e_u)$, \eg let
\begin{align}
{e_u = \{x_n : \forall m \in [M],\, \one [ \Cap(x_n, s_m) = \Cap(x_i, s_m) ] \}}, \nn
\end{align}

${e_u = \{x_n : \forall m \in [M],\, }$ ${\one [ \Cap(x_n, s_m) = \Cap(x_i, s_m) ] \}}$,
and let~$q_u$ be the minority class label among points in~$e_u$, then
\begin{align}
\theta(e_u) = \frac{1}{N} \sum_{n=1}^N \one [ x_n \in e_u ]\, \one [ y_n = q_u ].
\label{eq:theta}
\end{align}
We can combine the equivalent points bound with other bounds to get a tighter lower bound on the objective function. As the experimental results demonstrate in \S\ref{sec:experiments}, there is sometimes a substantial reduction of the search space after incorporating the equivalent points bound.
We propose a general equivalent points bound in Proposition~\ref{prop:identical}. We incorporate it into our framework by proposing the specific equivalent points bound in Theorem~\ref{thm:identical}.
\begin{proposition}[General equivalent points bound]
\label{prop:identical}
Let ${\RL = (\Prefix, \Labels, \Default, \DefaultLabels, K, H)}$ be a tree, then
$\Obj(\RL, \x, \y) \ge \sum_{u=1}^U \theta(e_u) + \Reg H$. 
\end{proposition}

\begin{proof}
Recall that the objective is ${\Obj(\RL, \x, \y) = \Loss(\RL, \x, \y) + \Reg H}$,
where the misclassification error~${\Loss(\RL, \x, \y)}$ is given by
\begin{align}
\Loss(\RL, \x, \y)
&= \frac{1}{N} \sum_{n=1}^N \sum_{k=1}^K \Cap(x_n, p_k) \wedge \one [\q_k \neq y_n]. \nn
\end{align}
Any particular tree uses a specific leaf, and therefore a single class label,
to classify all points within a set of equivalent points.
Thus, for a set of equivalent points~$u$, the tree~$\RL$ correctly classifies either
points that have the majority class label, or points that have the minority class label.
It follows that~$\RL$ misclassifies a number of points in~$u$ at least as great as
the number of points with the minority class label.
To translate this into a lower bound on~$\Loss(\RL, \x, \y)$,
we first sum over all sets of equivalent points, and then for each such set,
count differences between class labels and the minority class label of the set,
instead of counting mistakes:
\begin{align}
\Loss(\RL, \x, \y) \nn
&= \frac{1}{N} \sum_{u=1}^U \sum_{n=1}^N \sum_{k=1}^K \Cap(x_n, p_k) \wedge \one [\q_k \neq y_n] \wedge
   \one [x_n \in e_u]  \nn \\
&\ge \frac{1}{N} \sum_{u=1}^U \sum_{n=1}^N \sum_{k=1}^K \Cap(x_n, p_k) \wedge \one [q_u = y_n] \wedge
   \one [x_n \in e_u]  \nn \\
\label{eq:lb-equiv-pts}
\end{align}
Next, because every datum must be captured by a leaf in the tree~$\RL$, $\sum_{k=1}^K \Cap(x_n, p_k)=1$.
\begin{align}
\Loss(\RL, \x, \y) &= \frac{1}{N} \sum_{u=1}^U \sum_{n=1}^N \sum_{k=1}^K \Cap(x_n, p_k) \wedge \one [q_u = y_n] \wedge
   \one [x_n \in e_u] \nn \\
&= \frac{1}{N} \sum_{u=1}^U \sum_{n=1}^N \one [ x_n \in e_u ]\, \one [ y_n = q_u ]
= \sum_{u=1}^U \theta(e_u), \nn
\end{align}
where the final equality applies the definition of~$\theta(e_u)$ in~\eqref{eq:theta}.
Therefore, ${\Obj(\RL, \x, \y) =}$ ${\Loss(\RL, \x, \y) + \Reg K}$ ${\ge \sum_{u=1}^U \theta(e_u) + \Reg K}$.
\end{proof}

In our lower bound~${b(\Prefix, \x, \y)}$
in~\eqref{eq:lower-bound}, we omitted the misclassification errors of leaves we are going to split~$\Loss_0(\Default, \DefaultLabels,$ $\x, \y)$
from the objective~${\Obj(\RL, \x, \y)}$.
Incorporating the equivalent points bound in Theorem~\ref{thm:identical} gives a tighter bound on our objective because we now have a tighter lower bound on the misclassification errors of leaves we are going to split,
${0 \le b_0(\Default, \x, \y) \le}$ $\Loss_0(\Default, \DefaultLabels, \x, \y)$.

\begin{theorem}[Equivalent points bound]
\label{thm:identical}
Let~$\RL$ be a tree with leaves~$\Prefix, \Default$
and lower bound ${b(\Prefix, \x, \y)}$,
then for any tree~${\RL' \in \StartsWith(\RL)}$
whose prefix~$\Prefix' \supseteq \Prefix$,
\begin{align}
\Obj(\RL', \x, \y) &\ge b(\Prefix, \x, \y) + b_0(\Default, \x, \y),\;\; \textrm{ where }
\label{eq:identical}\\
b_0(\Default, \x, \y) &= \frac{1}{N} \sum_{u=1}^U \sum_{n=1}^N
    \Cap(x_n, \Default) \wedge\one [ x_n \in e_u ]\, \one [ y_n = q_u ].
\label{eq:lb-b0}
\end{align}
\end{theorem}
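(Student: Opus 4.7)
The plan is to decompose $\Loss(\RL', \x, \y)$ according to whether each training datum is captured by a leaf of $\Prefix$ or by a leaf of $\Default$, and then to apply the equivalent points argument of Proposition \ref{prop:identical} only to the $\Default$-captured portion. Since $\RL' \in \StartsWith(\RL)$ satisfies $\Prefix' \supseteq \Prefix$, every leaf $p_k$ of $\Prefix$ (along with its label $\q_k$) appears unchanged in $\RL'$ and captures exactly the same data it captures in $\RL$. Consequently, the contribution of the $\Prefix$-captured data to $\Loss(\RL', \x, \y)$ is precisely $\Loss_p(\Prefix, \Labels, \x, \y)$.

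For the remaining data---those with $\Cap(x_n, \Default) = 1$---I would repeat the argument of Proposition \ref{prop:identical}, restricted to this subset. Any leaf of $\RL'$ that captures a point with $\Cap(x_n, \Default) = 1$ must lie below some leaf of $\Default$ in the tree, and it assigns a single label to all points it captures. Within each equivalence class $e_u$, the points satisfying $\Cap(x_n, \Default) = 1$ are therefore partitioned across a collection of leaves of $\RL'$, each committing to one label, so at least the number of minority-label points in $\{n : x_n \in e_u,\, \Cap(x_n, \Default) = 1\}$ must be misclassified. Summing over $u$ reproduces $b_0(\Default, \x, \y)$ as a lower bound on the $\Default$-contribution to $\Loss(\RL', \x, \y)$.

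Adding the two disjoint loss contributions and noting that $\Reg H_{\RL'} \ge \Reg H_{\RL}$ (since $\RL'$ has at least as many leaves as $\RL$) yields $\Obj(\RL', \x, \y) \ge \Loss_p(\Prefix, \Labels, \x, \y) + \Reg H_{\RL} + b_0(\Default, \x, \y) = b(\Prefix, \x, \y) + b_0(\Default, \x, \y)$. The main obstacle I anticipate is the localization step: I must justify that the minority-label lower bound still applies class-by-class when restricted to $\Default$-captured data, even though the same equivalence class may also have representatives among $\Prefix$-captured data. This works because $\Prefix$ and $\Default$ partition the training set into disjoint pieces, so the two contributions to $\Loss(\RL', \x, \y)$ are over disjoint index sets and can be bounded independently; the rest is essentially bookkeeping on top of the proofs of Theorem \ref{thm:bound} and Proposition \ref{prop:identical}.
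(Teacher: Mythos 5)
Your proof is correct and follows essentially the same route as the paper: decompose the child tree's loss into the $\Prefix$ part (which equals $\Loss_p(\Prefix, \Labels, \x, \y)$ exactly, as in Theorem~\ref{thm:bound}) and the $\Default$ part, apply the minority-label counting argument of Proposition~\ref{prop:identical} to the latter, and use $H_{\RL'} \ge H_{\RL}$ for the regularization term. The one point worth making explicit in your ``localization'' step is that each equivalence class $e_u$ consists of points with identical features and hence is captured in its entirety by a single leaf of any tree, so the set $\{n : x_n \in e_u,\, \Cap(x_n, \Default) = 1\}$ is either empty or all of $e_u$'s data; this is why $q_u$, the minority label of the \emph{full} class, is still the correct label to count in $b_0(\Default, \x, \y)$, rather than a possibly different minority label of a proper subset.
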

\end{arxiv}
\begin{arxiv}
\begin{proof}
See Appendix~\ref{appendix:equiv-pts} for the proof of Theorem~\ref{thm:identical}.
\end{proof}
\end{arxiv}  

\begin{arxiv}
\section{Implementation}
\label{sec:implementation}
\end{arxiv}
\begin{arxiv}
\section{Incremental Computation}
\label{sec:incremental}
\end{arxiv}

\subsection{Incremental Computation}
\label{sec:incremental}
Much of our implementation effort revolves around exploiting incremental computation, designing data structures and ordering of the worklist. Together, these ideas save $>$97\% execution time. We provide the details of our implementation in the supplement.

\begin{arxiv}
We provide the details of the incremental computation in this section. And in the next section~\S\ref{sec:implementation}, we further provide the data structures we implement in practice to cache those values for the incremental computation.

For the hierarchical branch-and-bound execution, we start from an empty tree $\RL = ((),(),(),\DefaultLabels,0,0)$, which is the first tree evaluated in the algorithm. For the empty tree, it does not contain any unchanged leaves and has a length of zero. Thus, its lower bound is $b((),\x,\y)=0$. Since all the data are captured by the empty leaf, the objective of the empty tree is the proportion of minority class data, \ie $\Obj{(d,\x,\y)}=\Loss_q((), \DefaultLabels, \x, \y)$.

Based on the initial objective and the lower bound we get from the first tree, \ie the empty tree, we further derive their incremental expressions.
Let ${\RL = (\Prefix, \Labels, \Default, \DefaultLabels, K, H)}$ and
${\RL' = (\Prefix', \Labels', \Default', \DefaultLabels', K', H')}$
be decision trees such that $\RL$ is the parent of $\RL'$ and ${\Prefix =}$ ${(p_1, \dots, p_K)} \subset {\Prefix' = (p_1, \dots, p_K, \dots, p_{K'})}$.
Let ${\Labels = (\q_1, \dots, \q_H)}$ and
${\Labels' = (\q_1, \dots,}$ ${\q_{H'})}$ be the corresponding labels.
It follows from the hierarchical structure of Algorithm~\ref{alg:branch-and-bound} that we must have already evaluated both the objective and the lower bound of~$\RL$ if we are evaluating its child~$\RL'$. Thus, in the evaluation of~$\RL'$, we would like to reuse these computations as much as possible in our algorithm.
Now, we express the objective lower bound of~$\RL'$ incrementally,
with respect to the objective lower bound of its parent~$\RL$:
\begin{align}
b(\Prefix', \x, \y)
  &= \Loss_p(\Prefix', \Labels', \x, \y) + \Reg (H') \nn \\
&= \frac{1}{N} \sum_{n=1}^N \sum_{k=1}^{K'} \Cap(x_n, p_k)
  \wedge \one [ \q_k \neq y_n ] + \Reg H' \label{eq:non-inc-lb} \\
&= \Loss_p(\Prefix, \Labels, \x, \y) + \Reg H'
  + \frac{1}{N} \sum_{n=1}^N \sum_{k=K+1}^{K'} \Cap(x_n, p_k) \wedge \one [\q_k \neq y_n ] \nn \\
&= b(\Prefix, \x, \y) + \Reg (H'-H)
  + \frac{1}{N} \sum_{n=1}^N \sum_{k=K+1}^{K'} \Cap(x_n, p_k) \wedge \one [\q_k \neq y_n ].
\label{eq:inc-lb}
\end{align}
Therefore, when computing $b(\Prefix', \x, \y)$, we can reuse the quantity of the lower bound $b(\Prefix, \x, \y)$ if we cache it in a data structure. Moreover, in practice, we cache the misclassification error of each leaf, because we can view leaves as set of clauses which we can reuse in different trees.
In the same way, we can compute the objective of~$\RL'$ incrementally with respect to its lower bound:
\begin{align}
\Obj(\RL', \x, \y) &=  \Loss_p(\Prefix', \Labels', \x, \y) +
  \Loss_0(\Default', \DefaultLabels', \x, \y) + \Reg H' \nn \\
&= b(\Prefix', \x, \y) + \Loss_0(\Default', \DefaultLabels', \x, \y) \nn \\
&= b(\Prefix', \x, \y) + \frac{1}{N}\sum_{n=1}^N \sum_{k=K'+1}^{H'} \Cap(x_n, p_k) \wedge
  \one [\q_k \neq y_n].
\label{eq:inc-obj}
\end{align}
The incremental computation in~\eqref{eq:inc-obj} reuses $b(\Prefix', \x, \y)$,
which we have already computed in~\eqref{eq:inc-lb}.
Note that we could alternatively compute~$\Obj(\RL', \x, \y)$ incrementally from $\Obj(\RL, \x, \y)$. However, doing so is a somewhat more complex, because we have to subtract the misclassification error of those split leaves of $\RL$ from $\Obj(\RL, \x, \y)$ and then add that of those new leaves of $\RL'$ to $\Obj(\RL, \x, \y)$.

\begin{algorithm}[t!]
  \caption{Incremental branch-and-bound for learning trees, for simplicity, from a cold start.}
\label{alg:incremental}
\begin{algorithmic}
\normalsize
\State \textbf{Input:} Objective function~$\Obj(\RL, \x, \y)$,
objective lower bound~${b(\Prefix, \x, \y)}$,
set of features ${\RuleSet = \{s_m\}_{m=1}^M}$,
training data~$(\x, \y) = {\{(x_n, y_n)\}_{n=1}^N}$,
regularization parameter~$\Reg$
\State \textbf{Output:} Provably optimal tree~$\OptimalRL$ with minimum objective~$\OptimalObj$ \\

\State $\CurrentRL \gets ((), (), (), \DefaultLabels, 0, 0)$ \Comment{Initialize current best tree with the empty tree}
\State $\CurrentObj \gets \Obj(\CurrentRL, \x, \y)$ \Comment{Initialize current best objective}
\State $Q \gets $ queue$(\,[\CurrentRL]\,)$ \Comment{Initialize queue with the empty tree}
\State $C \gets $ cache$(\,[\,(\,\CurrentRL\,, 0\,)\,]\,)$ \Comment{Initialize cache with the empty tree and its objective lower bound}
\While {$Q$ not empty} \Comment{Optimization complete when the queue is empty}
	\State $(\Prefix, \Labels, \Default, \DefaultLabels, K, H) \gets Q$.pop(\,) \Comment{Remove a length-$H$ tree~$\RL$ from the queue}
        \State $b(\Prefix, \x, \y) \gets C$.find$(\Prefix)$ \Comment{Look up $\Prefix$'s lower bound in the cache}
        \For {every possible combination of features to split $\Default$}
            \State split $\Default$ and get new leaves $d_{new}$
            \For {each possible subset $\Default'$ of $d_{new}$}
                \State $\Prefix'=\Prefix+d_{new}-\Default'$
                \State $\RL'=(\Prefix', \Labels', \Default', \DefaultLabels', K', H')$  \Comment{\textbf{Branch}: Generate child tree $\RL'$}
                \State $\Delta_{un}=\frac{1}{N} \sum_{n=1}^N \sum_{k=K+1}^{K'} \Cap(x_n, p_k) \wedge \one [\q_k \neq y_n ]$
                \State $b(\Prefix', \x, \y) \gets b(\Prefix, \x, \y) + \Reg (H'-H)~ + \Delta_{un}$  
                \If {$b(\Prefix', \x, \y) < \CurrentObj$} \Comment{\textbf{Bound}: Apply bound from Theorem~\ref{thm:bound}}
                \State $\Delta_R = \frac{1}{N}\sum_{n=1}^N \sum_{k=K'+1}^{H'} \Cap(x_n, p_k) \wedge
  \one [\q_k \neq y_n]$
                    \State $\Obj(\RL', \x, \y) \gets b(\Prefix', \x, \y)~ + \Delta_R$  
                    \If {$\Obj(\RL', \x, \y) < \CurrentObj$}
                        \State $(\CurrentRL, \CurrentObj) \gets (\RL', \Obj(\RL', \x, \y))$ \Comment{Update current best tree and objective}
                    \EndIf
                    \State $Q$.push$(\RL')$ \Comment{Add $\RL'$ to the queue}
                    \State $C$.insert$(\RL', b(\Prefix', \x, \y))$ \Comment{Add $\RL'$ and its lower bound to the cache}
                \EndIf
            \EndFor
        \EndFor
\EndWhile
\State $(\OptimalRL, \OptimalObj) \gets (\CurrentRL, \CurrentObj)$ \Comment{Identify provably optimal tree and objective}
\end{algorithmic}
\end{algorithm}

In Algorithm~\ref{alg:incremental}, we present an incremental branch-and-bound procedure with the incremental computations of the objective lower bound~\eqref{eq:inc-lb} and objective~\eqref{eq:inc-obj}, and we use a cache to store each tree and its lower bound.
In addition, we reorganizes the structure
of Algorithm~\ref{alg:branch-and-bound} in Algorithm~\ref{alg:incremental} to apply the hierarchical objective lower bound from Theorem~\ref{thm:bound} to child trees instead of the parent tree.
This could shrink the size of the queue since we do not need to push those trees which do not satisfy Theorem~\ref{thm:bound} into the queue. And therefore, this could further save the execution time especially when we use priority queue to order entries as described in section~\S\ref{sec:implementation}.

\end{arxiv}

\section{Experiments}
\label{sec:experiments}
\begin{figure*}[t!]
    \centering
    \begin{subfigure}[b]{0.245\textwidth}
        \centering
        \includegraphics[width=0.245\textwidth, trim={0mm 12mm 0mm 0mm}, width=\textwidth]{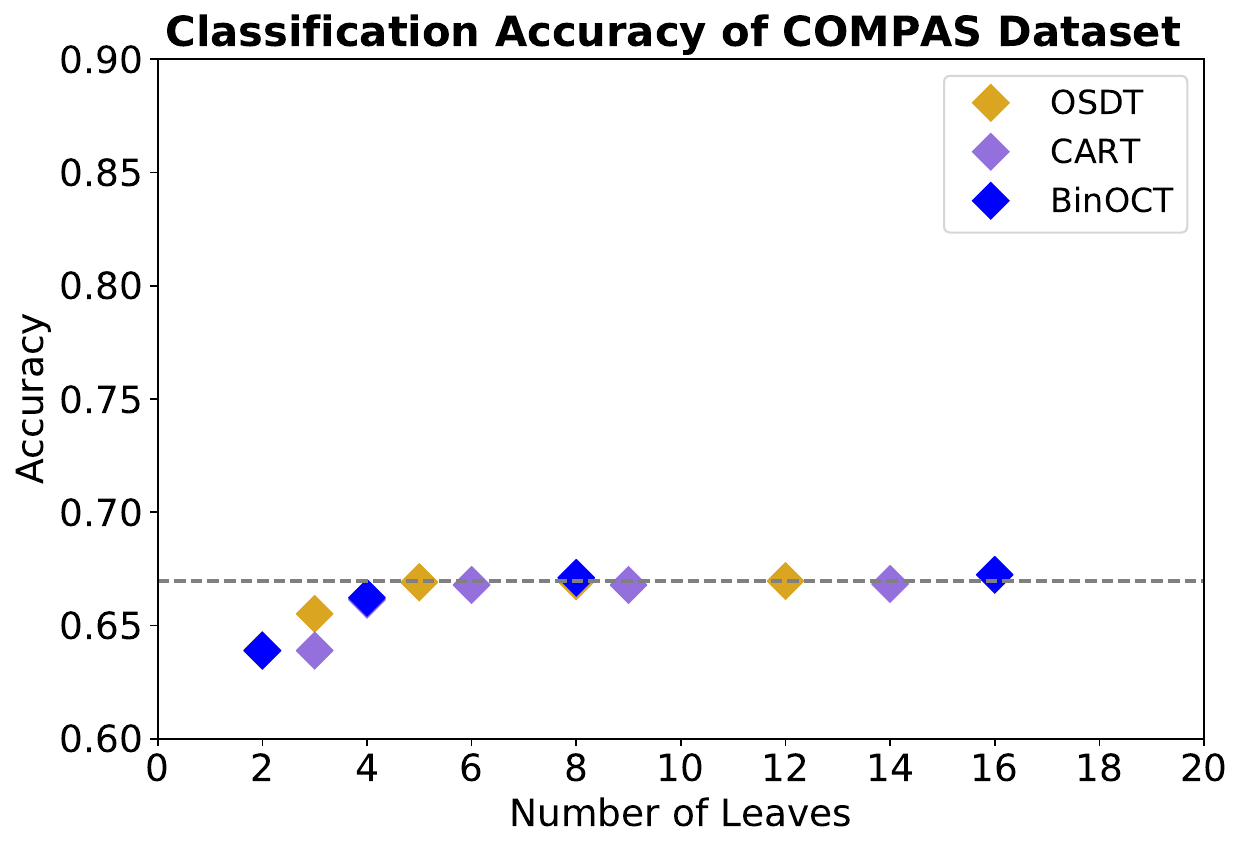}
        \label{fig:COMPAS}
    \end{subfigure}
    \begin{subfigure}[b]{0.245\textwidth}  
        \centering 
        \includegraphics[width=0.245\textwidth,trim={0mm 12mm 0mm 0mm}, width=\textwidth]{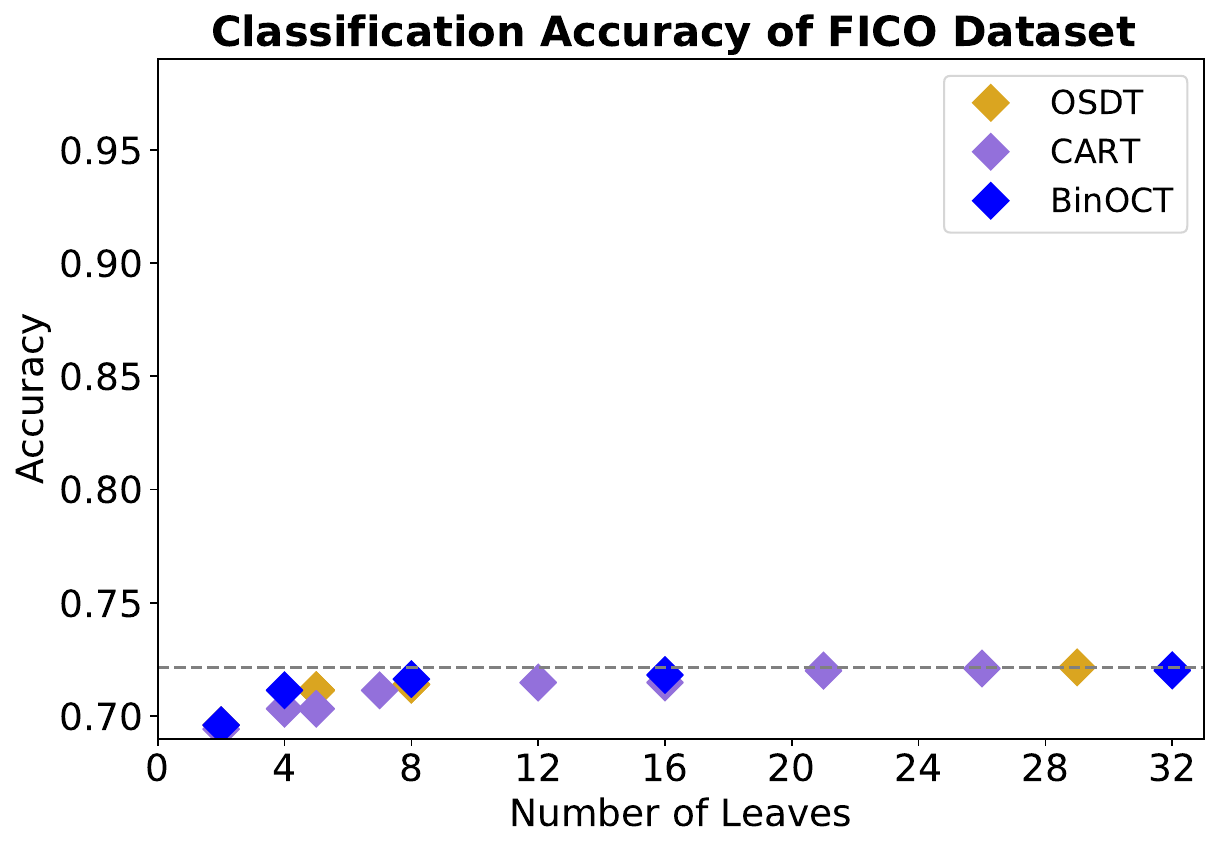}
        \label{fig:FICO}
    \end{subfigure}
    \begin{subfigure}[b]{0.245\textwidth}   
        \centering 
        \includegraphics[width=0.245\textwidth,trim={0mm 12mm 0mm 0mm}, width=\textwidth]{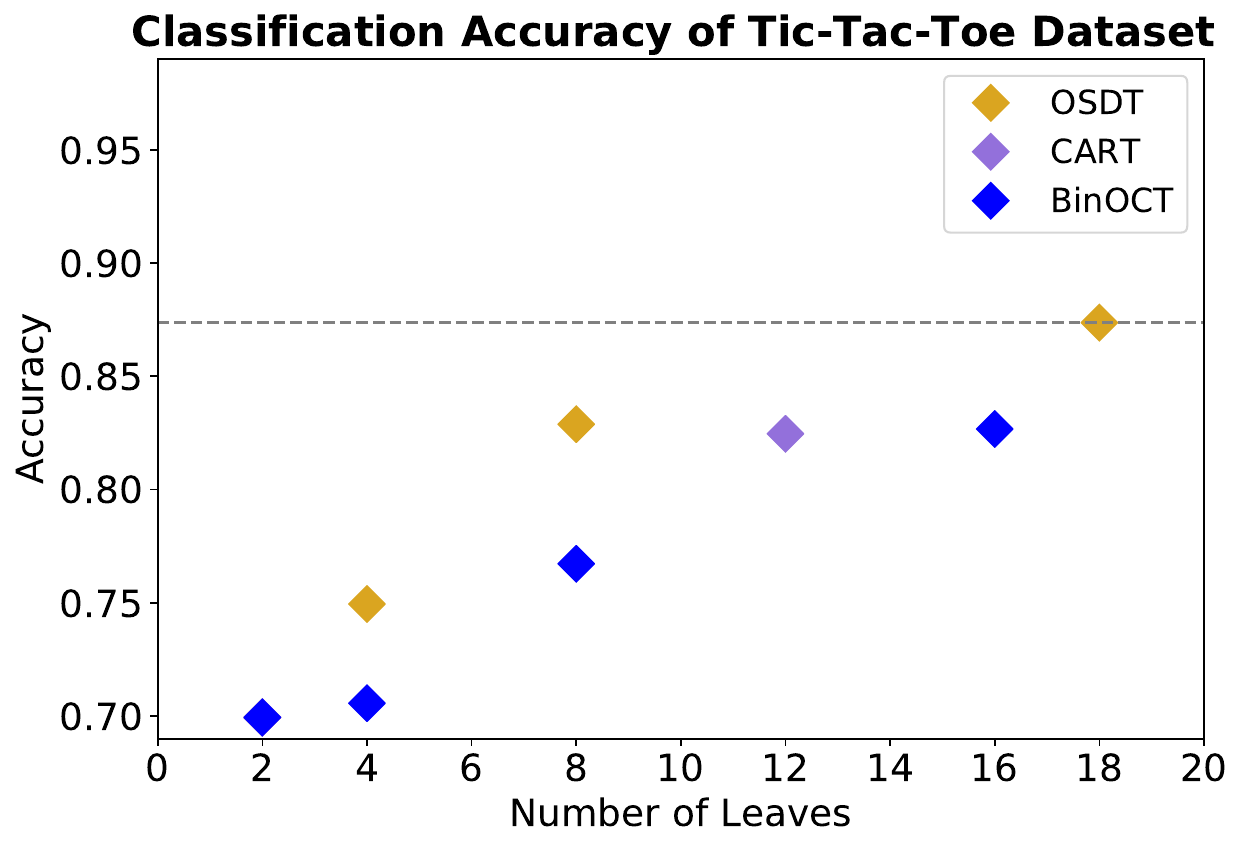}
        \label{fig:tictactoe}
    \end{subfigure}
    \begin{subfigure}[b]{0.245\textwidth}   
        \centering 
         \includegraphics[width=0.245\textwidth,trim={0mm 12mm 0mm 0mm}, width=\textwidth]{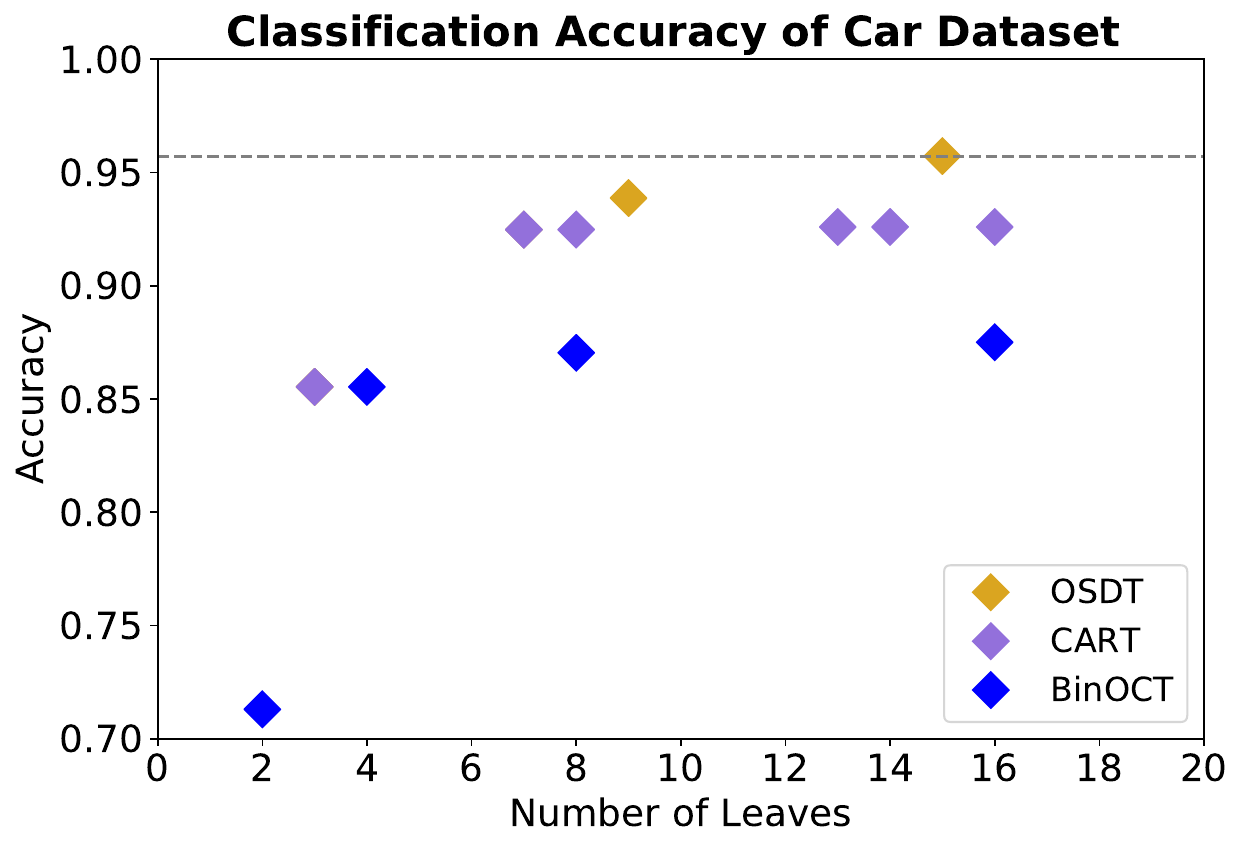}
        \label{fig:Car}
    \end{subfigure}
    \vskip\baselineskip
    \begin{subfigure}[b]{0.245\textwidth}   
        \centering 
        \includegraphics[width=0.245\textwidth,trim={0mm 12mm 0mm 15mm}, width=\textwidth]{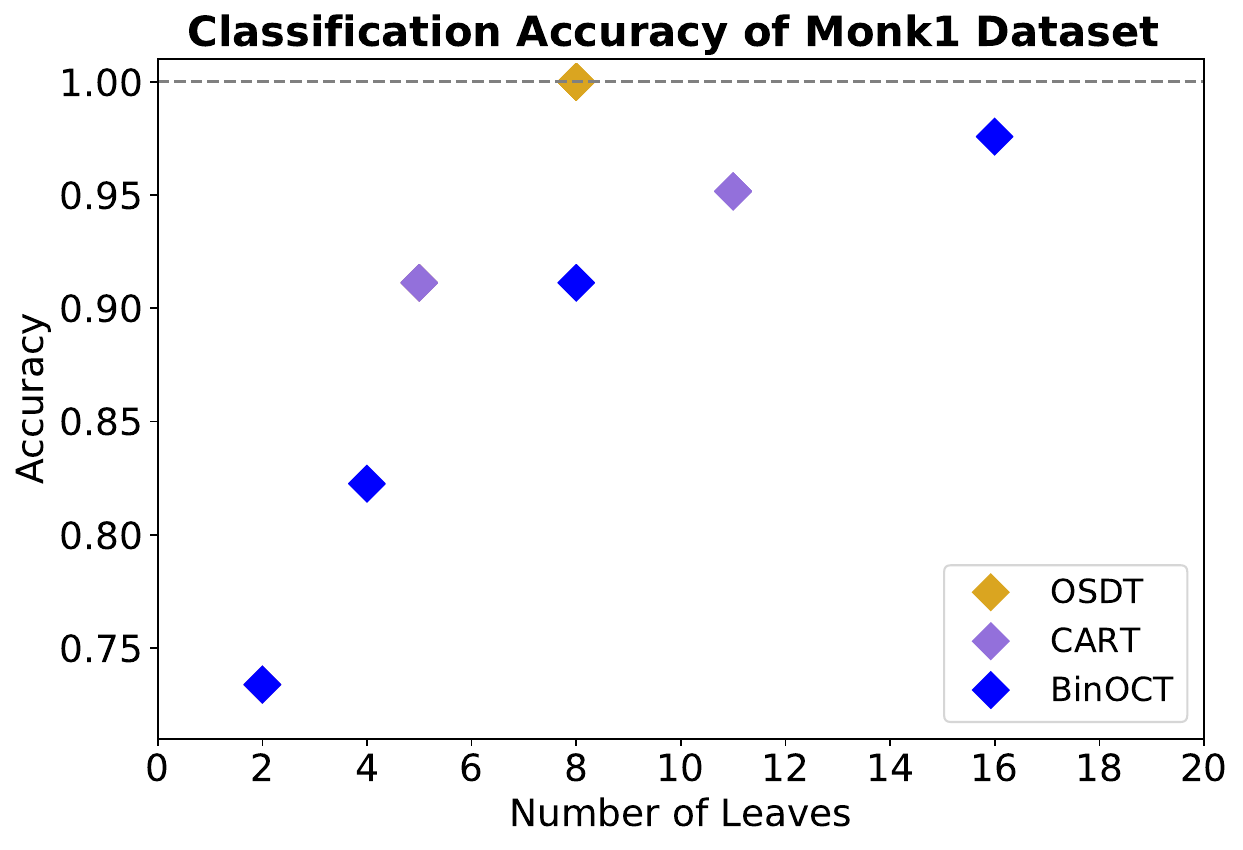}
        \label{fig:Monk1}
    \end{subfigure}
    \begin{subfigure}[b]{0.245\textwidth}   
        \centering 
         \includegraphics[width=0.245\textwidth,trim={0mm 12mm 0mm 15mm}, width=\textwidth]{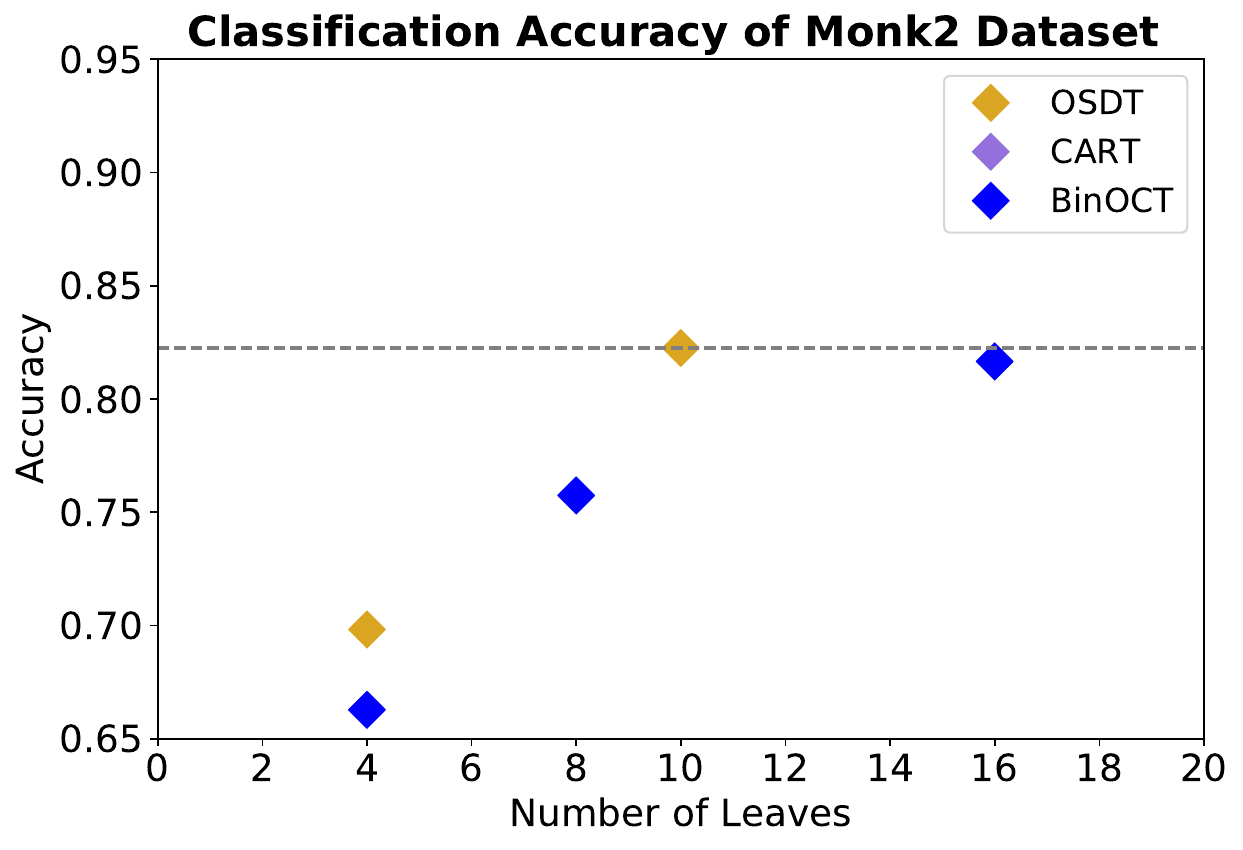}
        \label{fig:Monk2}
    \end{subfigure}
    \begin{subfigure}[b]{0.245\textwidth}   
        \centering 
        \includegraphics[width=0.245\textwidth,trim={0mm 12mm 0mm 15mm}, width=\textwidth]{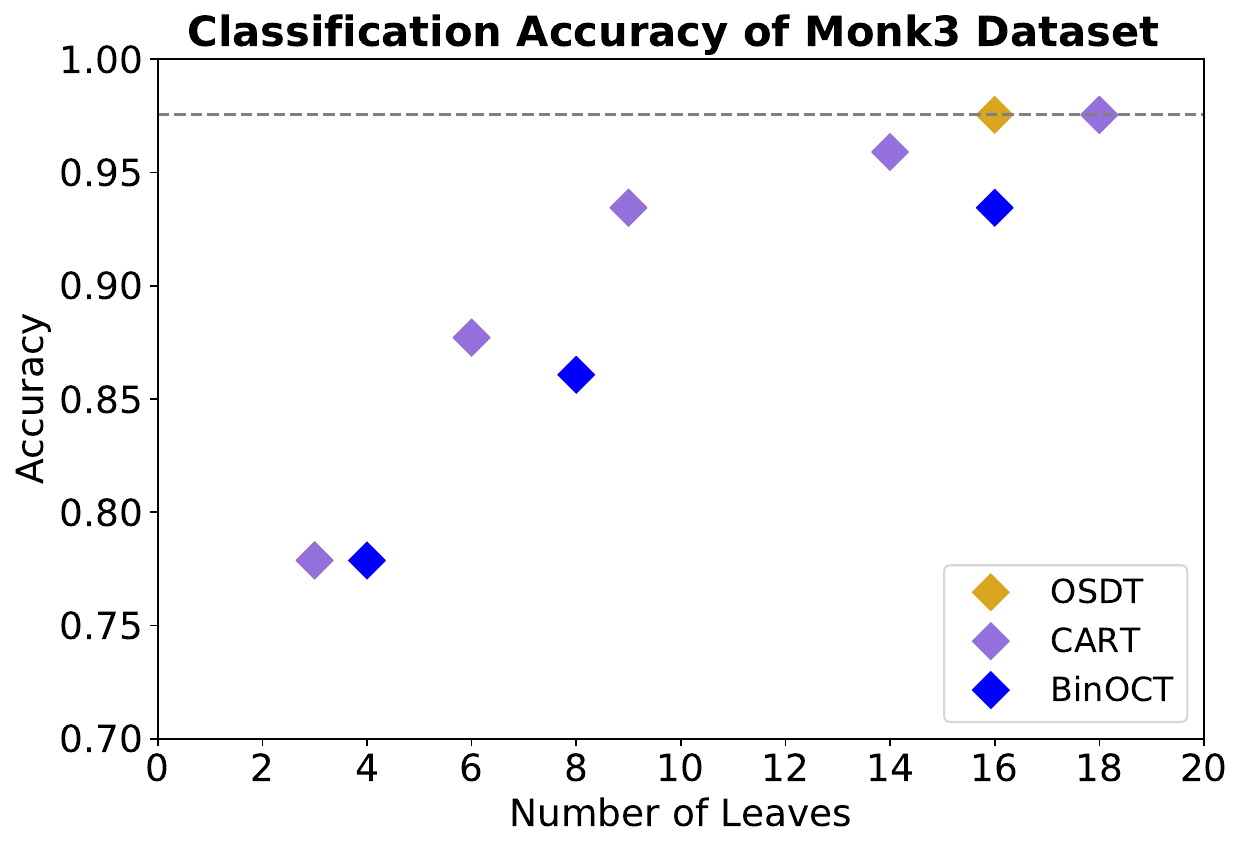}
        \label{fig:Monk3}
    \end{subfigure}
    \vskip -2mm
    \caption{Training accuracy of OSDT, CART, BinOCT on different datasets (time limit: 30 minutes). Horizontal lines indicate the accuracy of the best OSDT tree. On most datasets, all trees of BinOCT and CART are below this line.%
    } 
    \label{fig:train-accuracy}
    \vskip -5mm
\end{figure*}

We address the following questions through experimental analysis: 
(1)~\textrm{Do existing methods achieve optimal solutions, and if not, how far are they from optimal?}
(2)~\textrm{How fast does our algorithm converge given the hardness of the problem it is solving?} 
(3)~\textrm{How much does each of the bounds contribute to the performance of our algorithm?}
(4)~\textrm{What do optimal trees look like?}

The results of the per-bound performance and memory improvement experiment (Table~\ref{tab:ablation} in the supplement) were run on a $m5a.4xlarge$ instance of AWS's Elastic Compute Cloud (EC2). The instance has 16 2.5GHz virtual CPUs (although we run single-threaded on a single core) and 64 GB of RAM. All other results were run on a personal laptop with a 2.4GHz i5-8259U processor and 16GB of RAM.

We used 7 datasets: Five of them are from the UCI Machine Learning Repository \cite{Dua:2017}, (Tic Tac Toe, Car Evaluation, Monk1, Monk2, Monk3). The other two datasets are the ProPublica recidivism data set~\cite{LarsonMaKiAn16} and the Fair Isaac (FICO) credit risk dataset \cite{competition}. We predict which individuals are arrested within two years of release (${N = 7,215}$) on the recidivism data set and whether an individual will default on a loan for the FICO dataset.

\textit{Accuracy and optimality:}
 We tested the accuracy of our algorithm against baseline methods CART and BinOCT~\cite{verwer2019learning}. BinOCT is the most recent publicly available method for learning optimal classification trees and was shown to outperform other previous methods. As far as we know, there is no public code for most of the other relevant baselines, including  \cite{bertsimas2017optimal, molerooptimal, MenickellyGKS18}. One of these methods, OCT \cite{bertsimas2017optimal}, reports that CART often dominates their performance (see Fig. 4 and Fig. 5 in their paper). Our models can never be worse than CART's models even if we stop early, because in our implementation, we use the objective value of CART's solution as a warm start to the objective value of the current best. 
%
Figure~\ref{fig:train-accuracy} shows the training accuracy on each dataset. The time limits for both BinOCT and our algorithm are set to be 30 minutes. 

\begin{wrapfigure}{r}{0.5\textwidth}
  \centering
    \begin{subfigure}[b]{0.245\textwidth}
        \centering
        \includegraphics[trim={5mm 5mm 0mm 15mm}, width=\textwidth]{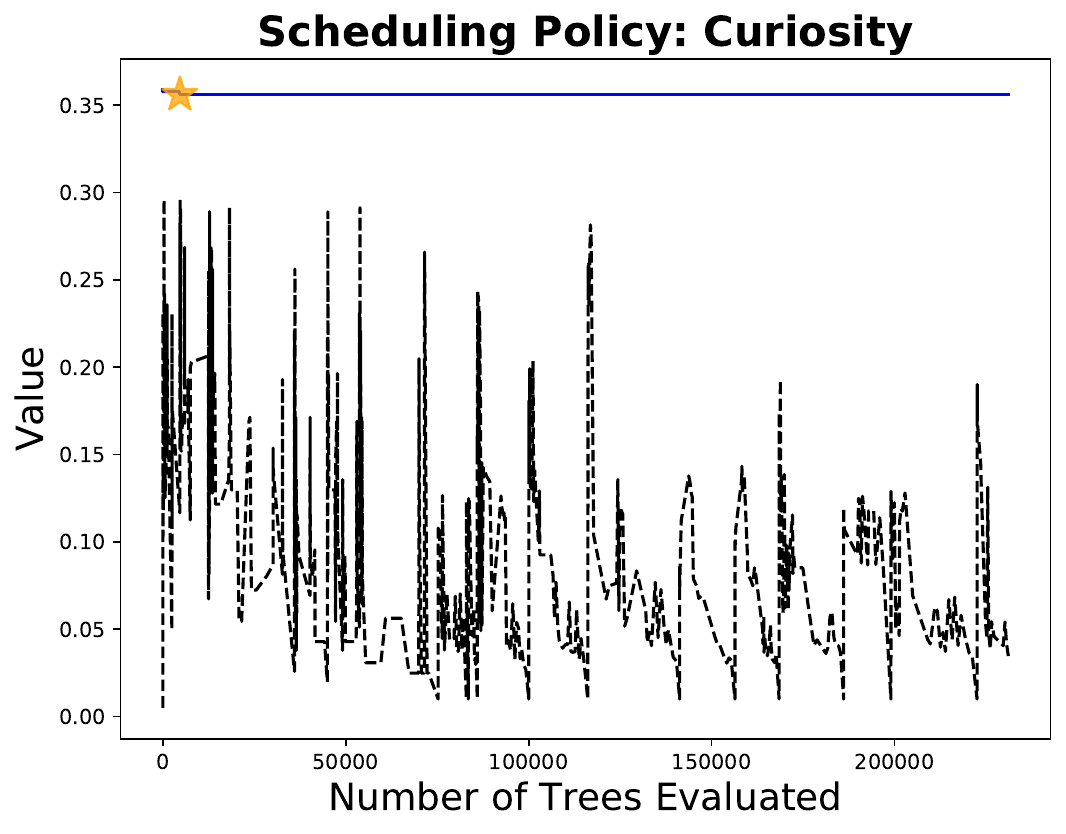}
        \label{fig:trace-COMPAS}
    \end{subfigure}
    \begin{subfigure}[b]{0.245\textwidth}  
        \centering 
        \includegraphics[trim={5mm 5mm 0mm 15mm}, width=\textwidth]{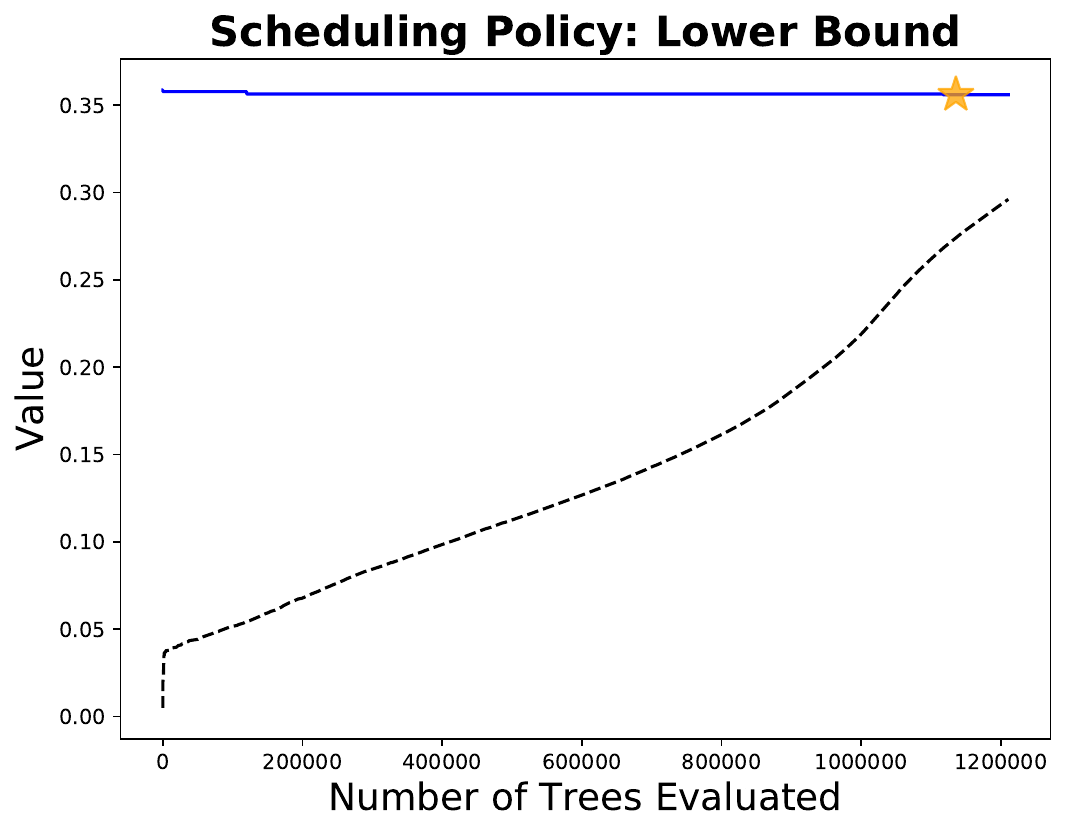}
        \label{fig:trace-FICO}
    \end{subfigure}
    \vskip -2mm
\caption{Example OSDT execution traces (COMPAS data, $\lambda=0.005$). Lines are the objective value and dashes are the lower bound for OSDT. For each scheduling policy, the time to optimum and optimal objective value are marked with a star.}
\label{fig:trace}
\end{wrapfigure}

\textit{Main results:} 
(i) We can now evaluate how close to optimal other methods are (and they are often close to optimal or optimal). (ii) Sometimes, the baselines are \textit{not} optimal. Recall that BinOCT searches only for the optimal tree \textit{given the topology of the complete binary tree of a certain depth}. This restriction on the topology massively reduces the search space so that BinOCT runs quickly, but in exchange, it misses optimal sparse solutions that our method finds.  (iii) Our method is \textit{fast}. Our method runs on only one thread (we have not yet parallelized it) whereas BinOCT is highly optimized; it makes use of eight threads. 
Even with BinOCT's 8-thread parallelism, our method is competitive.

\begin{wrapfigure}{r}{0.5\textwidth}
  \begin{subfigure}[b]{0.245\textwidth}
        \centering
        \includegraphics[trim={5mm 5mm 0mm 10mm}, width=\textwidth]{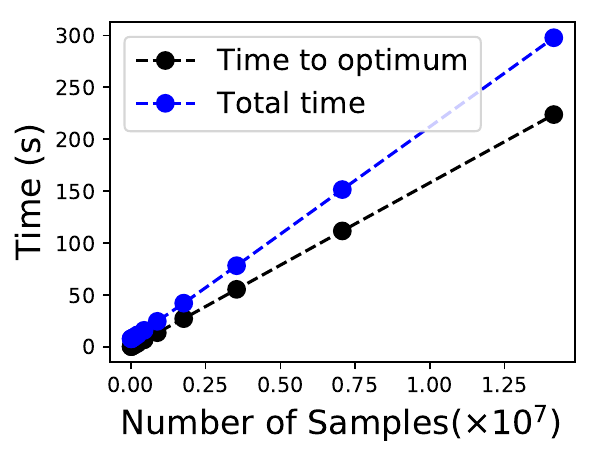}
        \caption[]%
        {{\footnotesize This is based on all the 12 features}}
        \label{fig:scal_ndata}
    \end{subfigure}
    \begin{subfigure}[b]{0.24\textwidth}  
        \centering 
        \includegraphics[trim={5mm 15mm 0mm 10mm}, width=\textwidth]{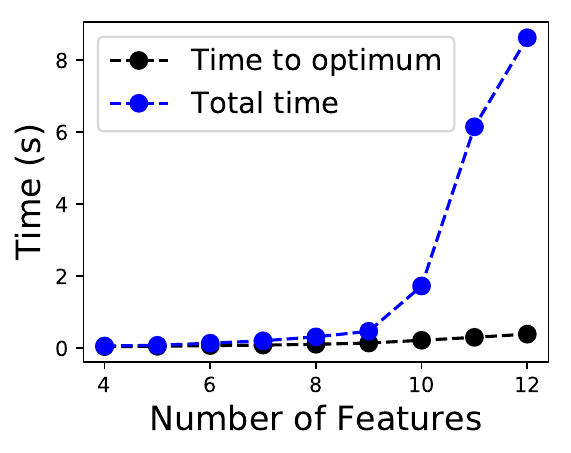}
        \label{fig:scal_nfeature.}
        \caption[]%
        {{\footnotesize The 4 features are those in Figure~\ref{fig:tree-compas-5}}}
    \end{subfigure}
  \vskip -2mm
\caption{Scalability with respect to number of samples and number of features using (multiples of) the ProPublica data set. (${\Reg = 0.005}$).
Note that all these executions include the 4 features of the optimal tree, and the data size are increased by duplicating the whole data set multiple times.
}
\label{fig:scalability}
\end{wrapfigure}

\textit{Convergence:}
Figure~\ref{fig:trace} illustrates the behavior of OSDT for the ProPublica COMPAS dataset with $\Reg = 0.005$, for two different scheduling policies (curiosity and lower bound, see supplement).
The charts show how the
current best objective value~$\CurrentObj$
and the lower bound~$b(\Prefix, \x, \y)$ vary as the algorithm progresses.
When we schedule using the lower bound, the lower bounds of evaluated trees increase monotonically, and OSDT certifies optimality only when the value of the lower bound becomes large enough that we can prune the remaining search space or when the queue is empty, whichever is reached earlier.
Using curiosity, OSDT finds the optimal tree much more quickly than when using the lower bound.

\textit{Scalability:}
Figure~\ref{fig:scalability} shows the scalability of OSDT with respect to the number of samples and the number of features. Runtime can theoretically grow exponentially with the number of features. However, as we add extra features that differ from those in the optimal tree, we can reach the optimum more quickly, because we are able to prune the search space more efficiently as the number of extra features grows. For example, with 4 features, it spends about 75\% of the runtime to reach the optimum; with 12 features, it takes about 5\% of the runtime to reach the optimum.

\textit{Ablation experiments:} Appendix \ref{appendix:ablation} shows that the lookahead and equivalent points bounds are, by far, the most significant of our bounds, reducing time to optimum by at least two orders of magnitude and reducing memory consumption by more than one order of magnitude.

\begin{arxiv}
\textit{Algorithm optimizations:}
Next, we evaluate how much each of our bounds contributes to OSDT's performance and what effect the scheduling metric has on execution.
Table~\ref{tab:ablation} provides experimental statistics of total execution time, time to optimum, total number of trees evaluated,
number of trees evaluated to optimum, and memory consumption on the recidivism data set.
The first row is the full OSDT implementation, and the others are variants each of which removes a specific bound.
While all the optimizations reduce the search space, the lookahead and equivalent points bounds are, by far, the most significant, reducing time to optimum by at least two orders of magnitude and reducing memory consumption by more than one order of magnitude.
In our experiment, although the scheduling policy has a smaller effect, it is still significant -- curiosity is a factor of two faster than the objective function and consumes 25\% of the memory consumed when using the objective function. All other scheduling policies, \ie the lower bound and the entropy, are significantly worse.

\begin{table}[t!]
\centering
Per-bound performance improvement (ProPublica data set) \\
\vspace{1mm}
\begin{tabular}{l | c | c | c }
& Total time & Slow- & Time to \\
Algorithm variant & (s) & down & optimum (s) \\
\hline
All bounds & 14.75 & --- & 1.09 \\
No support bound & 16.69 & 1.13$\times$ & 1.18 \\
No incremental accuracy bound & 29.42 & 1.99$\times$ & 1.27 \\
No accuracy bound & 31.72 & 2.15$\times$ & 1.44 \\
No lookahead bound & 31479 & 2134$\times$ & 186 \\
No equivalent points bound & $>$8226 & $>$557$\times$ & --- \\
\hline
\end{tabular}
\begin{tabular}{l | c | c | c}
\hline
 & Total \#trees & \#trees &  Mem \\
Algorithm variant & evaluated & to optimum &~ (GB) \\
\hline
All bounds & 241306 & 18195 & .08 \\
No support bound & 278868 & 21232 & .08 \\
No incremental accuracy bound & 548618 & 24682 & .08 \\
No accuracy bound & 482013 & 23075 & .09 \\
No lookahead bound & 283712499 & 3078445 & 10 \\
No equivalent points bound & $>$41000000 & --- & $>$64 \\
\end{tabular}
\vspace{2mm}
\caption{Per-bound performance improvement, for the ProPublica data set
(${\Reg = 0.005}$, cold start, using curiosity).
The columns report the total execution time,
time to optimum, total number of trees evaluated,
number of trees evaluated to optimum,
and memory consumption.
The first row shows our algorithm with all bounds; subsequent rows show variants
that each remove a specific bound 
(one bound at a time, not cumulative).
%
All rows except the last one represent a complete execution, \ie until the queue is empty.
For the last row (`No equivalent points bound'), the algorithm was terminated after running out of the memory (about $\sim$64GB RAM). \vspace*{-20pt}
%
%
}
\label{tab:ablation}
\end{table}

\begin{table}[t!]
\centering
Performance of different priority metrics (ProPublica data set) \\
\vspace{1mm}
\begin{tabular}{l | c | c | c }
& Total time & Slow- & Time to \\
Priority metric & (s) & down & optimum (s) \\
\hline
Curiosity & 96 & --- & 25 \\
Objective & 130 & 1.35$\times$ & 57 \\
Lower bound & 186 & 1.93$\times$ & 175 \\
Entropy & 671 & 6.98$\times$ & 582 \\
Gini & 646 & 6.72$\times$ & 551 \\
\hline
\end{tabular}

\begin{tabular}{l | c | c | c}
\hline
 & Total \#trees & \#trees &  Memory Con-\\
Priority metric & evaluated & to optimum &~ sumption (GB) \\
\hline
Curiosity & 241306 & 18195 & .08 \\
Objective & 635449 & 423627 & .30 \\
Lower bound & 1714011 & 1713709 & .78 \\
Entropy & 5381467 & 5332413 & 4.90 \\
Gini & 5381001 & 5331925 & 4.90 \\
\end{tabular}
\vspace{4mm}
\caption{Performance of different priority policies, for the ProPublica data set
(${\Reg = 0.005}$, cold start).
The columns report the total execution time, factor slower,
time to optimum, total number of trees evaluated,
number of trees evaluated to optimum,
and memory consumption.
The first row shows our algorithm with curiosity (\ref{eq:curio}) as the priority queue metric; subsequent rows show variants
that use other policies.
All rows represent complete executions that certify optimality.
%
%
}
\vspace{4mm}
\label{tab:scheduling}
\end{table}
\end{arxiv}

\textit{Trees:}
We provide illustrations of the trees produced by OSDT and the baseline methods in Figures~\ref{fig:tree-compas-5}, \ref{fig:tree-tictactoe} and \ref{fig:tree-monk1}. OSDT generates trees of any shape, and our objective penalizes trees with more leaves, thus it never introduces splits that produce a pair of leaves with the same label. 
In contrast, BinOCT trees are always complete binary trees of a given depth.
This limitation on the tree shape can prevent BinOCT from finding the globally optimal tree.
In fact, BinOCT often produces useless splits, leading to trees with more leaves than necessary to achieve the same accuracy.

\begin{figure}[t]
\centering
    \begin{subfigure}{\linewidth}
        \centering
        \includegraphics[trim={30mm 30mm 30mm 8mm}, width=0.25\linewidth]{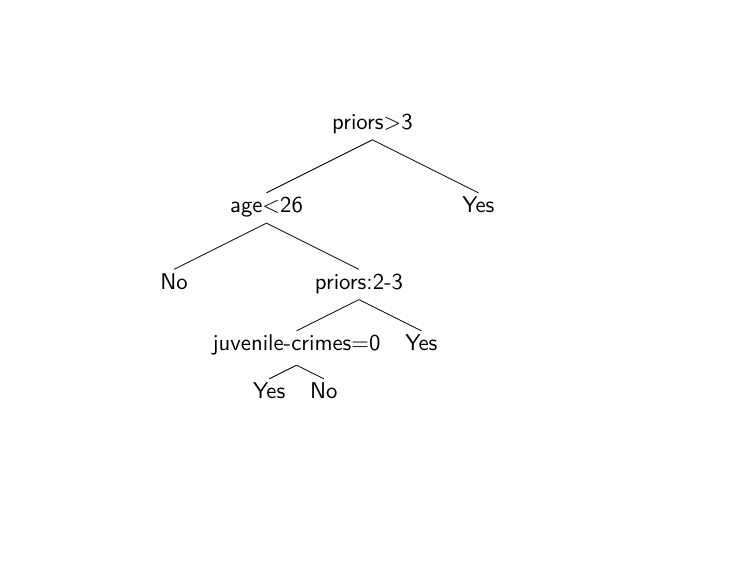}
    \end{subfigure}
\caption{An optimal decision tree generated by OSDT on the COMPAS dataset. ($\Reg = 0.005$, accuracy: 66.90\%) 
}
\label{fig:tree-compas-5}
\vskip -5mm
\end{figure}


\begin{figure}[t!]
\centering
    \begin{subfigure}[b]{0.4\textwidth}
        \centering
        \includegraphics[trim={0mm 0mm 0mm 0mm}, width=\textwidth]{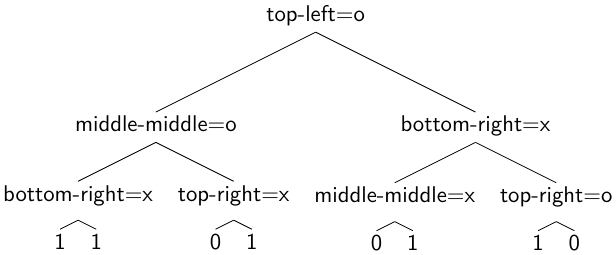}
        \caption[]%
        {{\footnotesize BinOCT (accuracy: 76.722\%)}}
        \label{fig:tree-tictactoe-binoct}
    \end{subfigure}
    \begin{subfigure}[b]{0.3\textwidth} 
        \centering 
        \includegraphics[trim={0mm 0mm 0mm 5mm}, width=\textwidth]{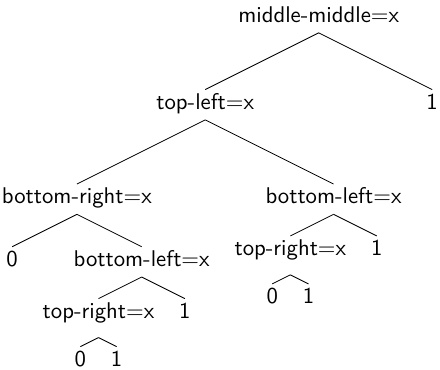}
        \caption[]%
        {{\footnotesize OSDT (accuracy: 82.881\%)}}
        \label{fig:tree-tictactoe-osdt}
    \end{subfigure}
    \vskip -2mm
\caption{Eight-leaf decision trees generated by BinOCT and OSDT on the Tic-Tac-Toe data. Trees of BinOCT must be complete binary trees, while OSDT can generate binary trees of any shape.
}
\label{fig:tree-tictactoe}
\vskip -3mm
\end{figure}

\begin{figure}[t!]
\centering

    \centering
    \begin{subfigure}[b]{0.35\textwidth}
        \centering
        \includegraphics[trim={0mm 0mm 0mm 3mm}, width=\textwidth]{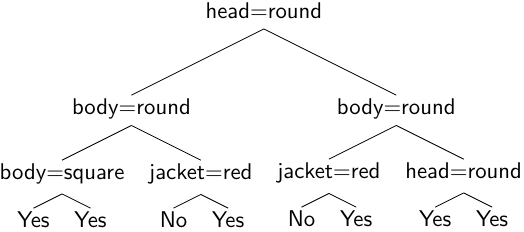}
        \caption[]%
        {{\footnotesize BinOCT (accuracy: 91.129\%)}}
        \label{fig:tree-monk1-binoct}
    \end{subfigure}
    \begin{subfigure}[b]{0.35\textwidth} 
        \centering 
        \includegraphics[trim={0mm 0mm 0mm 3mm}, width=\textwidth]{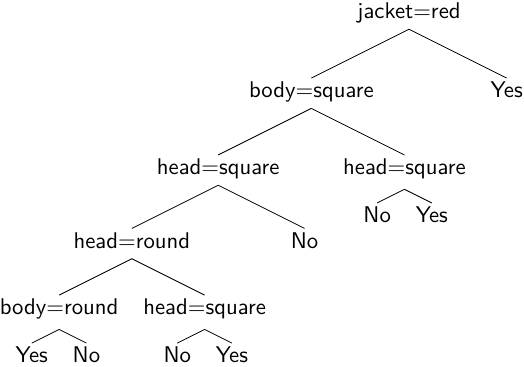}
        \caption[]%
        {{\footnotesize OSDT (accuracy: 100\%)}}
        \label{fig:tree-monk1-osdt}
    \end{subfigure}
    \vskip -2mm
\caption{Decision trees generated by BinOCT and OSDT on the Monk1 dataset. The tree generated by BinOCT includes two useless splits (the left and right splits), while OSDT can avoid this problem. BinOCT is 91\% accurate, OSDT is 100\% accurate. 
}
\label{fig:tree-monk1}
\vskip -5mm
\end{figure}

\textit{Additional experiments:} It is well-established that simpler models such as small decision trees generalize well; a set of cross-validation experiments is in the supplement demonstrating this.


\textbf{Conclusion:}
Our work shows the possibility of optimal (or provably near-optimal) sparse decision trees. It is the first work to balance the accuracy and the number of leaves optimally in a practical amount of time.
We have reason to believe this framework can be extended to much larger datasets. Theorem~\ref{thm:similar} identifies a key mechanism for scaling these algorithms up. It suggests a bound stating that highly correlated features can substitute for each other, leading to similar model accuracies. Applications of this bound allow for the elimination of features throughout the entire execution, allowing for more aggressive pruning. Our experience to date shows that by supporting such bounds with the right data structures can potentially lead to dramatic increases in performance and scalability. 

\clearpage

\vspace{-1mm}
\bibliographystyle{abbrv}
\small
\bibliography{refs}

\begin{thebibliography}{10}

\bibitem{AngelinoLaAlSeRu17-kdd}
E.~Angelino, N.~Larus-Stone, D.~Alabi, M.~Seltzer, and C.~Rudin.
\newblock Learning certifiably optimal rule lists for categorical data.
\newblock In {\em Proc. {ACM} {SIGKDD} International Conference on Knowledge
  Discovery and Data Mining {(KDD)}}, 2017.

\bibitem{AngelinoEtAl18}
E.~Angelino, N.~Larus-Stone, D.~Alabi, M.~Seltzer, and C.~Rudin.
\newblock Learning certifiably optimal rule lists for categorical data.
\newblock {\em Journal of Machine Learning Research}, 18(234):1--78, 2018.

\bibitem{Bennett92}
K.~Bennett.
\newblock Decision tree construction via linear programming.
\newblock In {\em Proceedings of the 4th Midwest Artificial Intelligence and
  Cognitive Science Society Conference, Utica, Illinois}, 1992.

\bibitem{Bennett96optimaldecision}
K.~P. Bennett and J.~A. Blue.
\newblock Optimal decision trees.
\newblock Technical report, R.P.I. Math Report No. 214, Rensselaer Polytechnic
  Institute, 1996.

\bibitem{bertsimas2017optimal}
D.~Bertsimas and J.~Dunn.
\newblock Optimal classification trees.
\newblock {\em Machine Learning}, 106(7):1039--1082, 2017.

\bibitem{molerooptimal}
R.~Blanquero, E.~Carrizosa, C.~Molero-R{\i}o, and D.~R. Morales.
\newblock Optimal randomized classification trees.
\newblock Aug. 2018.

\bibitem{Breiman84}
L.~Breiman, J.~H. Friedman, R.~A. Olshen, and C.~J. Stone.
\newblock {\em Classification and Regression Trees}.
\newblock Wadsworth, 1984.

\bibitem{Dua:2017}
D.~Dheeru and E.~Karra~Taniskidou.
\newblock {UCI} machine learning repository, 2017.

\bibitem{competition}
{FICO}, {Google}, {Imperial College London}, {MIT}, {University of Oxford}, {UC
  Irvine}, and {UC Berkeley}.
\newblock {Explainable Machine Learning Challenge}.
\newblock https://community.fico.com/s/explainable-machine-learning-challenge,
  2018.

\bibitem{flores16}
A.~W. Flores, C.~T. Lowenkamp, and K.~Bechtel.
\newblock False positives, false negatives, and false analyses: A rejoinder to
  ``{M}achine bias: There's software used across the country to predict future
  criminals".
\newblock {\em Federal probation}, 80(2), September 2016.

\bibitem{Klivans06}
A.~R. Klivans and R.~A. Servedio.
\newblock Toward attribute efficient learning of decision lists and parities.
\newblock {\em Journal of Machine Learning Research}, 7:587--602, 2006.

\bibitem{LarsonMaKiAn16}
J.~Larson, S.~Mattu, L.~Kirchner, and J.~Angwin.
\newblock How we analyzed the {COMPAS} recidivism algorithm.
\newblock {\em ProPublica}, 2016.

\bibitem{Larus-Stone18-sysml}
N.~Larus-Stone, E.~Angelino, D.~Alabi, M.~Seltzer, V.~Kaxiras, A.~Saligrama,
  and C.~Rudin.
\newblock Systems optimizations for learning certifiably optimal rule lists.
\newblock In {\em Proc. Conference on Systems and Machine Learning {(SysML)}},
  2018.

\bibitem{LethamRuMcMa15}
B.~Letham, C.~Rudin, T.~H. McCormick, and D.~Madigan.
\newblock Interpretable classifiers using rules and {B}ayesian analysis:
  {B}uilding a better stroke prediction model.
\newblock {\em The Annals of Applied Statistics}, 9(3):1350--1371, 2015.

\bibitem{McGough2018}
M.~McGough.
\newblock How bad is {S}acramento's air, exactly? {G}oogle results appear at
  odds with reality, some say.
\newblock {\em Sacramento Bee}, 2018.

\bibitem{MenickellyGKS18}
M.~Menickelly, O.~G{\"{u}}nl{\"{u}}k, J.~Kalagnanam, and K.~Scheinberg.
\newblock Optimal decision trees for categorical data via integer programming.
\newblock {\em Preprint at arXiv:1612.03225}, Jan. 2018.

\bibitem{narodytska2018learning}
N.~Narodytska, A.~Ignatiev, F.~Pereira, and J.~Marques-Silva.
\newblock Learning optimal decision trees with {SAT}.
\newblock In {\em Proc. International Joint Conferences on Artificial
  Intelligence {(IJCAI)}}, pages 1362--1368, 2018.

\bibitem{nijssen2007mining}
S.~Nijssen and E.~Fromont.
\newblock Mining optimal decision trees from itemset lattices.
\newblock In {\em Proceedings of the ACM SIGKDD International Conference on
  Knowledge Discovery and Data Mining {(KDD)}}, pages 530--539. ACM, 2007.

\bibitem{Quinlan93}
J.~R. Quinlan.
\newblock {\em C4.5: Programs for Machine Learning}.
\newblock Morgan Kaufmann, 1993.

\bibitem{verwer2019learning}
S.~Verwer and Y.~Zhang.
\newblock Learning optimal classification trees using a binary linear program
  formulation.
\newblock In {\em 33rd AAAI Conference on Artificial Intelligence}, 2019.

\bibitem{YangRuSe16}
H.~Yang, C.~Rudin, and M.~Seltzer.
\newblock Scalable {B}ayesian rule lists.
\newblock In {\em International Conference on Machine Learning (ICML)}, 2017.

\bibitem{Zech2018}
J.~R. Zech, M.~A. Badgeley, M.~Liu, A.~B. Costa, J.~J. Titano, and E.~K.
  Oermann.
\newblock Variable generalization performance of a deep learning model to
  detect pneumonia in chest radiographs: A cross-sectional study.
\newblock {\em PLoS Med.}, 15(e1002683), 2018.

\end{thebibliography}

\appendix 

\newpage
\begin{center}
    {\Large{\textbf{Optimal Sparse Decision Trees: Supplementary Material}}}
\end{center}

\section{Branch and Bound Algorithm}
\label{appendix:alg}
Algorithm \ref{alg:branch-and-bound} shows the structure of our approach.
\begin{algorithm}[b!]
\caption{Branch-and-bound for learning optimal decision trees. }
\label{alg:branch-and-bound}
\begin{algorithmic}
\normalsize
\State \textbf{Input:} Objective function $\Obj(\RL, \x, \y)$,
objective lower bound ${b(\Prefix, \x, \y)}$,
set of features ${S = \{s_m\}_{m=1}^M}$,
training data $(\x, \y) = {\{(x_n, y_n)\}_{n=1}^N}$,
initial best known tree~$\InitialRL$ with objective
${\InitialObj = \Obj(\InitialRL, \x, \y)}$;
$\InitialRL$ could be obtained as output from another (approximate) algorithm,
otherwise, $(\InitialRL, \InitialObj) = (\text{null}, 1)$ provides reasonable default values. The initial value of $\DefaultLabels$ is the majority label of the whole dataset.
\State \textbf{Output:} Provably optimal decision tree~$\OptimalRL$ with minimum objective~$\OptimalObj$ \\

\State $(\CurrentRL, \CurrentObj) \gets (\InitialRL, \InitialObj)$ \Comment{Initialize best tree and objective}
\State $Q \gets $ queue$(\,[\,((), (), (), \DefaultLabels, 0, 0)\,]\,)$ \Comment{Initialize queue with empty tree}
\While {$Q$ not empty} \Comment{Stop when queue is empty}
	\State $d=(\Prefix, \Labels, \Default, \DefaultLabels, K, H) \gets Q$.pop(\,) \Comment{Remove tree~$\RL$ from the queue}
	\If {$b(\Prefix, \x, \y) < \CurrentObj$} \Comment{\textbf{Bound}: Apply Theorem~\ref{thm:bound}}
        \State $\Obj \gets \Obj(\RL, \x, \y)$ \Comment{Compute objective of tree~$\RL$}
        \If {$\Obj < \CurrentObj$} \Comment{Update best tree and objective}
            \State $(\CurrentRL, \CurrentObj) \gets (\RL, \Obj)$
        \EndIf
        \For {every possible combination of features to split $\Default$}
        \State \Comment{\textbf{Branch}: Enqueue~$\Prefix$'s children}
            \State split $\Default$ and get new leaves $d_{\textrm{new}}$
            \For {each possible subset $\Default'$ of $d_{\textrm{new}}$}
                \State $ \Prefix'=\Prefix \cup (d_{\textrm{new}}\setminus \Default')$
                \State $Q$.push$(\,(\Prefix', \Labels', \Default', \DefaultLabels', K', H')\,)$
            \EndFor
        \EndFor

    \EndIf
\EndWhile
\State $(\OptimalRL, \OptimalObj) \gets (\CurrentRL, \CurrentObj)$ \Comment{Identify provably optimal solution}
\end{algorithmic}
\end{algorithm}

\section{Equivalent Points Bound}
\label{sec:identical}
When multiple observations captured by a leaf in~$\Default$
have identical features but opposite labels, then no tree, including those that extend $\Default$, can correctly classify all of these observations. The number of misclassifications must be at least the minority label of the equivalent points.
%

For data set~${\{(x_n, y_n)\}_{n=1}^N}$ and a set of features
${\{s_m\}_{m=1}^M}$,
we define a set of samples to be equivalent if they have exactly the same feature values, \ie ${(x_i, y_i)}$ and ${(x_j, y_j)}$ are equivalent~if
$\frac{1}{M} \sum_{m=1}^M \one [ \Cap(x_i, s_m) = \Cap(x_j, s_m) ] = 1$. 
Note that a data set consists of multiple sets of equivalent points;
let~${\{e_u\}_{u=1}^U}$ enumerate these sets.
For each observation~$x_i$, it belongs to a equivalent points set~$e_u$.
We denote the fraction of data with the minority label in set~$e_u$ as~$\theta(e_u)$, \eg let
\begin{arxiv}
\begin{align}
{e_u = \{x_n : \forall m \in [M],\, \one [ \Cap(x_n, s_m) = \Cap(x_i, s_m) ] \}}, \nn
\end{align}
\end{arxiv}
\begin{kdd}
${e_u = \{x_n : \forall m \in [M],\, }$ ${\one [ \Cap(x_n, s_m) = \Cap(x_i, s_m) ] \}}$,
\end{kdd}
and let~$q_u$ be the minority class label among points in~$e_u$, then
\begin{align}
\theta(e_u) = \frac{1}{N} \sum_{n=1}^N \one [ x_n \in e_u ]\, \one [ y_n = q_u ].
\label{eq:theta}
\end{align}
We can combine the equivalent points bound with other bounds to get a tighter lower bound on the objective function. As the experimental results demonstrate in \S\ref{sec:experiments}, there is sometimes a substantial reduction of the search space after incorporating the equivalent points bound.
We propose a general equivalent points bound in Proposition~\ref{prop:identical}. We incorporate it into our framework by proposing the specific equivalent points bound in Theorem~\ref{thm:identical}.
\begin{proposition}[General equivalent points bound]
\label{prop:identical}
Let ${\RL = (\Prefix, \Labels, \Default, \DefaultLabels, K, H)}$ be a tree, then
$\Obj(\RL, \x, \y) \ge \sum_{u=1}^U \theta(e_u) + \Reg H$. 
\end{proposition}

\begin{arxiv}
\begin{proof}
Recall that the objective is ${\Obj(\RL, \x, \y) = \Loss(\RL, \x, \y) + \Reg H}$,
where the misclassification error~${\Loss(\RL, \x, \y)}$ is given by
\begin{align}
\Loss(\RL, \x, \y)
&= \frac{1}{N} \sum_{n=1}^N \sum_{k=1}^K \Cap(x_n, p_k) \wedge \one [\q_k \neq y_n]. \nn
\end{align}
Any particular tree uses a specific leaf, and therefore a single class label,
to classify all points within a set of equivalent points.
Thus, for a set of equivalent points~$u$, the tree~$\RL$ correctly classifies either
points that have the majority class label, or points that have the minority class label.
It follows that~$\RL$ misclassifies a number of points in~$u$ at least as great as
the number of points with the minority class label.
To translate this into a lower bound on~$\Loss(\RL, \x, \y)$,
we first sum over all sets of equivalent points, and then for each such set,
count differences between class labels and the minority class label of the set,
instead of counting mistakes:
\begin{align}
\Loss(\RL, \x, \y) \nn
&= \frac{1}{N} \sum_{u=1}^U \sum_{n=1}^N \sum_{k=1}^K \Cap(x_n, p_k) \wedge \one [\q_k \neq y_n] \wedge
   \one [x_n \in e_u]  \nn \\
&\ge \frac{1}{N} \sum_{u=1}^U \sum_{n=1}^N \sum_{k=1}^K \Cap(x_n, p_k) \wedge \one [q_u = y_n] \wedge
   \one [x_n \in e_u]  \nn \\
\label{eq:lb-equiv-pts}
\end{align}
Next, because every datum must be captured by a leaf in the tree~$\RL$, $\sum_{k=1}^K \Cap(x_n, p_k)=1$.
\begin{align}
\Loss(\RL, \x, \y) &= \frac{1}{N} \sum_{u=1}^U \sum_{n=1}^N \sum_{k=1}^K \Cap(x_n, p_k) \wedge \one [q_u = y_n] \wedge
   \one [x_n \in e_u] \nn \\
&= \frac{1}{N} \sum_{u=1}^U \sum_{n=1}^N \one [ x_n \in e_u ]\, \one [ y_n = q_u ]
= \sum_{u=1}^U \theta(e_u), \nn
\end{align}
where the final equality applies the definition of~$\theta(e_u)$ in~\eqref{eq:theta}.
Therefore, ${\Obj(\RL, \x, \y) =}$ ${\Loss(\RL, \x, \y) + \Reg K}$ ${\ge \sum_{u=1}^U \theta(e_u) + \Reg K}$.
\end{proof}
\end{arxiv}

Recall that in our lower bound~${b(\Prefix, \x, \y)}$
in~\eqref{eq:lower-bound}, we leave out the misclassification errors of leaves we are going to split~$\Loss_0(\Default, \DefaultLabels,$ $\x, \y)$
from the objective~${\Obj(\RL, \x, \y)}$.
Incorporating the equivalent points bound in Theorem~\ref{thm:identical}, we obtain a tighter bound on our objective because we now have a tighter lower bound on the misclassification errors of leaves we are going to split,
${0 \le b_0(\Default, \x, \y) \le}$ $\Loss_0(\Default, \DefaultLabels, \x, \y)$.

\begin{theorem}[Equivalent points bound]
\label{thm:identical}
Let~$\RL$ be a tree with leaves~$\Prefix, \Default$
and lower bound ${b(\Prefix, \x, \y)}$,
then for any tree~${\RL' \in \StartsWith(\RL)}$
whose prefix~$\Prefix' \supseteq \Prefix$,
\begin{align}
\Obj(\RL', \x, \y) \ge b(\Prefix, \x, \y) + b_0(\Default, \x, \y),\;\; \textrm{ where }
\label{eq:identical}
\end{align}
\begin{align}
b_0(\Default, \x, \y) &= \frac{1}{N} \sum_{u=1}^U \sum_{n=1}^N
    \Cap(x_n, \Default) \wedge\one [ x_n \in e_u ]\, \one [ y_n = q_u ].
\label{eq:lb-b0}
\end{align}
\end{theorem}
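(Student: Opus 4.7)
The plan is to combine the hierarchical decomposition underlying Theorem~\ref{thm:bound} with the equivalent-points counting argument of Proposition~\ref{prop:identical}, applying the latter locally to the data captured by $\Default$. The key structural observation is that each datum $x_n$ is captured by exactly one of $\Prefix$ or $\Default$ in $\RL$, and this partition is preserved in every descendant $\RL' \in \StartsWith(\RL)$: the leaves of $\Prefix$ are unchanged in $\Prefix'$, while leaves of $\Default$ may only be further refined, so data with $\Cap(x_n, \Default) = 1$ remains captured by leaves of $\RL'$ lying in $(\Prefix' \setminus \Prefix) \cup \Default'$.

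First, I would split the loss of $\RL'$ along this partition. Let $L_\Prefix$ and $L_\Default$ denote the contributions to $\Loss(\RL', \x, \y)$ from data originally captured by $\Prefix$ and by $\Default$ respectively, so that $\Loss(\RL', \x, \y) = L_\Prefix + L_\Default$. Because the leaves $\Prefix$ with labels $\Labels$ are carried over verbatim into $\RL'$, we have $L_\Prefix = \Loss_p(\Prefix, \Labels, \x, \y)$ exactly.

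Next, I would rerun the argument of Proposition~\ref{prop:identical}, restricted to data captured by $\Default$. For each equivalence class $e_u$, points in $e_u$ with $\Cap(x_n, \Default) = 1$ share identical features, hence follow identical paths in $\RL'$, land in a single leaf of $(\Prefix' \setminus \Prefix) \cup \Default'$, and receive a single predicted label. Thus at least the points in that subset carrying the minority label $q_u$ are misclassified; inserting the indicator $\Cap(x_n, \Default)$ into every summand of the original proof yields $L_\Default \ge b_0(\Default, \x, \y)$, exactly matching the definition of $b_0$.

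Finally, combining the two bounds with $H_{\RL'} \ge H_\RL$ gives
\[
\Obj(\RL', \x, \y) = \Loss(\RL', \x, \y) + \Reg H_{\RL'} \ge \Loss_p(\Prefix, \Labels, \x, \y) + b_0(\Default, \x, \y) + \Reg H_\RL = b(\Prefix, \x, \y) + b_0(\Default, \x, \y).
\]
The main obstacle is the bookkeeping in the third step: I need to verify that the step from the proof of Proposition~\ref{prop:identical} which uses $\sum_k \Cap(x_n, p_k) = 1$ still works under the restriction, by noting that the sum of $\Cap(x_n, p_k)$ over leaves of $\RL'$ descending from $\Default$ equals $\Cap(x_n, \Default)$ for every $x_n$. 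Once this identity is in hand, the rest of the derivation carries over term-by-term.
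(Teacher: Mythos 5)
Your proposal is correct and follows exactly the route the paper intends: the paper itself defers this proof to an appendix (combining the decomposition $\Loss = \Loss_p + \Loss_q$ behind the hierarchical bound of Theorem~\ref{thm:bound} with the counting argument of Proposition~\ref{prop:identical} applied only to the data captured by $\Default$), and the surrounding text confirms that $b_0(\Default,\x,\y)$ is meant precisely as a lower bound on $\Loss_q$ for every descendant. Your bookkeeping identity --- that the capture indicators of the leaves of $\RL'$ descending from $\Default$ sum to $\Cap(x_n,\Default)$ --- is the right justification, and it is further simplified by the observation that all points of an equivalence class $e_u$ share a capture status and land in a single leaf of $\RL'$.
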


\begin{arxiv}
\begin{proof}
See Appendix~\ref{appendix:equiv-pts} for the proof of Theorem~\ref{thm:identical}.
\end{proof}
\end{arxiv}

\section{Upper Bounds on Number of Leaves}
\label{sec:ub-prefix-length}

\begin{arxiv}
In this section, we propose several upper bounds on the number of leaves:
\begin{itemize}
\item The simplest upper bound on the number of leaves is given by the total
number of leaves of the perfect binary tree. (Proposition~\ref{prop:trivial-length})
\item The current best objective~$\CurrentObj$ implies an upper bound
on the number of leaves. (Theorem~\ref{thm:ub-prefix-length})
\item For intuition, we state a version of the above bound that is valid
at the start of execution. (Corollary~\ref{cor:ub-prefix-length})
\item By considering specific families of trees,
we can obtain tighter bounds on the number of leaves. (Theorem~\ref{thm:ub-prefix-specific})
\end{itemize}
In the next section (\S\ref{sec:ub-size}), we use these results
to derive corresponding upper bounds on the number of
prefix evaluations made by Algorithm~\ref{alg:branch-and-bound}.

\begin{proposition}[Trivial upper bound on the number of leaves]
\label{prop:trivial-length}
Consider a state space of all trees formed from
a set of~$M$ features,
and let~$L(\RL)$ be the number of leaves of tree~$\RL$.
$2^M$ provides an upper bound on the  number of leaves of
any optimal tree
${\OptimalRL \in \argmin_\RL \Obj(\RL, \x, \y)}$,
\ie ${L(\RL) \le 2^M}$.
\end{proposition}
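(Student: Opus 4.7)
The plan is to observe that the leaves of any binary decision tree built over $M$ binary features induce a partition of the feature domain $\{0,1\}^M$, and then to invoke the regularization term in the objective to rule out empty leaves in an optimal tree. Since $|\{0,1\}^M| = 2^M$, a partition into nonempty cells has at most $2^M$ cells, and this will give the desired inequality $L(\RL^*) \le 2^M$.

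Concretely, I would first argue by induction on the tree structure that for any decision tree $\RL$ over $M$ binary features, the leaf predicates $p_1, \dots, p_{H_{\RL}}$ are pairwise mutually exclusive and jointly exhaustive over $\{0,1\}^M$: the root is exhaustive, and each split on some feature $s_m$ replaces a leaf predicate $p$ by the two disjoint predicates $p \wedge s_m$ and $p \wedge \neg s_m$ whose union is $p$. This is the standard observation that a decision tree partitions the input space. Consequently, the sets $\{x \in \{0,1\}^M : \Cap(x, p_k) = 1\}$, $k = 1, \dots, H_{\RL}$, form a partition of $\{0,1\}^M$, and in particular the number of nonempty such sets is bounded above by $2^M$.

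Second, I would show that in an optimal tree $\OptimalRL$ every leaf must be nonempty. Suppose some leaf $p_k$ captured no data, i.e.\ $\Supp(p_k, \x) = 0$. Let $p_j$ be its sibling leaf with parent predicate $p$. Then deleting both $p_k$ and $p_j$ and replacing them by $p$ yields a tree $\RL'$ whose misclassification error satisfies $\Loss(\RL', \x, \y) \le \Loss(\OptimalRL, \x, \y)$ (all data captured by $p$ were captured by $p_j$ alone, so we can reassign $\hat y_p := \hat y_j$ without increasing error) while $H_{\RL'} = H_{\RL^*} - 1$. Then $\Obj(\RL', \x, \y) \le \Obj(\OptimalRL, \x, \y) - \lambda < \Obj(\OptimalRL, \x, \y)$, contradicting optimality. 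Hence every leaf of $\OptimalRL$ is nonempty.

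Combining the two steps, the $H_{\RL^*}$ leaves of $\OptimalRL$ correspond to distinct nonempty cells of a partition of $\{0,1\}^M$, so $L(\RL^*) = H_{\RL^*} \le 2^M$. The main subtlety is the partition claim in the first step; once that is nailed down cleanly (via induction on the splitting operation, noting that features are binary so each split is disjoint and exhaustive), the rest is immediate from the strict positivity of $\lambda$ in the regularizer.
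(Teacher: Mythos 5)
Your proof is valid in outline but takes a genuinely different (and heavier) route than the paper's. The paper's entire proof is the one-line observation that a perfect binary tree of depth $M$ contains $2^M$ leaves; implicitly this uses the fact that no binary feature is split twice along a root-to-leaf path (this is baked into the paper's tree count in \eqref{eq:number-tree-depth}, where the pool of available features shrinks by one at each level), so every tree has depth at most $M$ and hence at most $2^M$ leaves. Your argument instead combines the partition property of decision-tree leaves with the strict positivity of $\Reg$ to rule out empty leaves in an optimal tree. This buys you something: it does not need the no-repeated-feature convention (a contradictory leaf such as $s_m \wedge \neg s_m$ is simply an empty cell and gets pruned by your optimality argument), and it actually yields the stronger bound $L(\OptimalRL) \le \min(2^M, |\{x_n\}_{n=1}^N|)$, since each surviving leaf must capture at least one training point and the captured sets are disjoint. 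On the other hand it proves a slightly different statement than the one written: the proposition as stated bounds the leaves of \emph{any} tree in the state space (the paper's proof covers this), whereas your argument only bounds optimal trees; for the way the bound is used downstream (Theorem~\ref{thm:ub-prefix-length}, Corollary~\ref{cor:ub-prefix-length}) the optimal-tree version suffices.

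One step in your second stage is not quite right as written. You assume the empty leaf $p_k$ has a sibling \emph{leaf} $p_j$, and you perform the delete-pair-and-restore-parent surgery from the proof of Theorem~\ref{thm:min-capture}. But an empty leaf's sibling may be an internal node whose subtree contains many leaves, in which case that surgery is undefined. The fix is easy but should be stated: if the sibling of $p_k$ is an internal node, replace the parent's entire subtree by the sibling's subtree (equivalently, delete the useless split at the parent). Every datum reaching the parent is routed to the sibling side, so each surviving leaf captures exactly the same data as before (its predicate merely loses one conjunct that was always satisfied by the data reaching it), the loss is unchanged, and the leaf count drops by one, giving the same contradiction $\Obj(\RL', \x, \y) \le \Obj(\OptimalRL, \x, \y) - \Reg$. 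With that case added, your argument is complete.
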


\begin{proof}
Perfect binary tree of depth $M$ contains $2^M$ leaves.
\end{proof}
\end{arxiv}

During the branch-and-bound execution, the current best objective~$\CurrentObj$
implies an upper bound on the maximum number of leaves for those trees we still need to consider.
\begin{theorem}[Upper bound on the number of leaves]
\label{thm:ub-prefix-length}
For a dataset with $M$ features, consider a state space of all trees.
Let~$L(\RL)$ be the number of leaves of tree~$\RL$
and let~$\CurrentObj$ be the current best objective.
For all optimal trees ${\OptimalRL \in \argmin_\RL \Obj(\RL, \x, \y)}$
\begin{arxiv}
\begin{align}
L(\OptimalRL) \le \min \left(\left\lfloor \frac{\CurrentObj}{\Reg} \right\rfloor, 2^M \right),
\label{eq:max-length}
\end{align}
\end{arxiv}
\begin{kdd}
\begin{align}
L(\OptimalRL) \le \min \left(\left\lfloor \CurrentObj / \Reg \right\rfloor, 2^M \right),
\label{eq:max-length}
\end{align}
\end{kdd}
where~$\Reg$ is the regularization parameter.
\begin{arxiv}
Furthermore, if~$\CurrentRL$ is a tree with
objective ${\Obj(\CurrentRL, \x, \y) }$ ${= \CurrentObj}$,
size~$H$, and zero misclassification error,
then for every optimal tree
${\OptimalRL \in \argmin_\RL}$ ${ \Obj(\RL, \x, \y)}$,
if ${\CurrentRL \in \argmin_d \Obj(\RL, \x, \y)}$,
then ${L(\OptimalRL) \le H}$,
or otherwise if ${\CurrentRL \notin \argmin_d \Obj(\RL, \x, \y)}$,
then ${L(\OptimalRL) \le H - 1}$.
\end{arxiv}
\end{theorem}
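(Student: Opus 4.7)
The plan is to prove the two parts of the theorem in sequence, bounding $L(d^*)$ first by $\lfloor R^c/\lambda \rfloor$, then by $2^M$, and finally handling the zero-error refinement.

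For the $\lfloor R^c/\lambda \rfloor$ bound, I would start from the objective decomposition $R(d, \x, \y) = \ell(d, \x, \y) + \lambda H_d$ in equation~\eqref{eq:objective}. Since the misclassification error is always nonnegative, $\lambda H_d \le R(d, \x, \y)$ for every tree $d$. Next I would use the fact that $R^c$ is the current best objective seen so far: by definition there is some feasible tree achieving objective $R^c$, so any optimal tree $d^*$ must satisfy $R(d^*, \x, \y) \le R^c$ (otherwise the optimum would be larger than a known achievable value). Chaining these gives $\lambda L(d^*) \le R^c$, and because $L(d^*)$ is a nonnegative integer, we may take the floor to conclude $L(d^*) \le \lfloor R^c / \lambda \rfloor$.

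For the $2^M$ bound, I would argue combinatorially: since features are binary, every root-to-leaf path corresponds to a distinct Boolean conjunction over the $M$ features, and distinct leaves of a tree are captured by disjoint subsets of the $2^M$ possible complete assignments. Hence $L(d^*) \le 2^M$. Taking the minimum of the two upper bounds gives~\eqref{eq:max-length}.

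For the refinement when $d^c$ has objective $R^c$, size $H$, and zero misclassification error, I would observe that $R^c = \lambda H$ exactly. I would then split into two cases based on whether $d^c$ is itself optimal. If $d^c \in \argmin_d R(d, \x, \y)$, then $R^* = \lambda H$, and for any other optimal tree $d^*$ we have $\ell(d^*, \x, \y) + \lambda L(d^*) = \lambda H$, so (since $\ell \ge 0$) $L(d^*) \le H$. If $d^c \notin \argmin_d R(d, \x, \y)$, then the true optimum satisfies $R^* < R^c = \lambda H$ strictly, so $\lambda L(d^*) \le R^* < \lambda H$, and integrality of $L(d^*)$ forces $L(d^*) \le H - 1$.

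The proof is essentially routine given the objective definition; the only subtlety I anticipate is being careful with the strict-versus-weak inequality in the zero-error case (to justify the drop from $H$ to $H-1$) and to note that $R^c$ is dynamic so the bound automatically tightens as Algorithm~\ref{alg:branch-and-bound} updates $R^c$ — this is precisely what makes the bound useful for pruning rather than merely a static statement.
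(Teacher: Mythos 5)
Your proposal is correct and follows essentially the same route as the paper's proof: bound $\Reg L(\OptimalRL)$ by the nonnegativity of the loss and the inequality $\Obj(\OptimalRL,\x,\y)\le\CurrentObj$, cap the leaf count at $2^M$, and split the zero-error refinement into the optimal (weak inequality, $L(\OptimalRL)\le H$) and suboptimal (strict inequality plus integrality, $L(\OptimalRL)\le H-1$) cases. The only cosmetic difference is that you justify the $2^M$ cap by disjointness of the leaves' Boolean conjunctions over the $2^M$ feature assignments, whereas the paper cites the leaf count of a depth-$M$ perfect binary tree; both are fine.
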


\begin{arxiv}
\begin{proof}
For an optimal tree~$\OptimalRL$ with objective~$\OptimalObj$,
\begin{align}
\Reg L(\OptimalRL) \le \OptimalObj = \Obj(\OptimalRL, \x, \y)
= \Loss(\OptimalRL, \x, \y) + \Reg L(\OptimalRL)
\le \CurrentObj. \nn
\end{align}
The maximum possible number of leaves for~$\OptimalRL$ occurs
when~$\Loss(\OptimalRL, \x, \y)$ is minimized;
combining with Proposition~\ref{prop:trivial-length}
gives bound~\eqref{eq:max-length}.

For the rest of the proof,
let~${K^* = L(\OptimalRL)}$ be the number of leaves of~$\OptimalRL$.
If the current best tree~$\CurrentRL$ has zero
misclassification error, then
\begin{align}
\Reg H^* \leq \Loss(\OptimalRL, \x, \y) + \Reg H^* = \Obj(\OptimalRL, \x, \y)
\le \CurrentObj = \Obj(\CurrentRL, \x, \y) = \Reg H, \nn
\end{align}
and thus ${H^* \leq H}$.
If the current best tree is suboptimal,
\ie ${\CurrentRL \notin \argmin_\RL \Obj(\RL, \x, \y)}$, then
\begin{align}
\Reg H^* \leq \Loss(\OptimalRL, \x, \y) + \Reg H^* = \Obj(\OptimalRL, \x, \y)
< \CurrentObj = \Obj(\CurrentRL, \x, \y) = \Reg H, \nn
\end{align}
in which case ${H^* < H}$, \ie ${H^* \leq H-1}$, since $H$ is an integer.
\end{proof}

The latter part of Theorem~\ref{thm:ub-prefix-length} tells us that
if we only need to identify a single instance of an optimal tree
${\OptimalRL \in \argmin_\RL \Obj(\RL, \x, \y)}$, and we encounter a perfect
$H$-leaf tree with zero misclassification error, then we can prune all
tree of number of leaves~$H$ or greater.

\end{arxiv}

\begin{corollary}[A priori upper bound on the number of leaves]
\label{cor:ub-prefix-length}
\begin{arxiv}
Let~$L(\RL)$ be the number of leaves of tree~$\RL$.
\end{arxiv}
For all optimal trees ${\OptimalRL \in \argmin_\RL \Obj(\RL, \x, \y)}$,
\begin{arxiv}
\begin{align}
L(\OptimalRL) \le \min \left( \left\lfloor \frac{1}{2\Reg} \right\rfloor, 2^M \right).
\label{eq:max-length-trivial}
\end{align}
\end{arxiv}
\begin{kdd}
\begin{align}
L(\OptimalRL) \le \min \left( \left\lfloor 1 / 2\Reg \right\rfloor, 2^M \right).
\label{eq:max-length-trivial}
\end{align}
\end{kdd}
\end{corollary}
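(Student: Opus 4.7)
The plan is to derive Corollary~\ref{cor:ub-prefix-length} as a direct instantiation of Theorem~\ref{thm:ub-prefix-length}, by producing an a priori upper bound on the current best objective~$\CurrentObj$ without running the algorithm. Theorem~\ref{thm:ub-prefix-length} already gives $L(\OptimalRL) \le \min(\lfloor \CurrentObj / \Reg \rfloor, 2^M)$ whenever $\CurrentObj$ is the objective of any tree we have in hand, so the real work is just to pin down a sufficiently tight value of~$\CurrentObj$ before any search has occurred.

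First I would exhibit a trivial baseline whose objective can be bounded in closed form, namely the one-leaf tree that labels every training point by the majority class. By the definition of ``majority'', this classifier misclassifies at most half of the training points, so its misclassification loss is at most $1/2$ and its regularized objective is at most $1/2 + \Reg$. Because any optimal tree has objective no larger than this baseline, $\OptimalObj \le 1/2 + \Reg$. Combining with the trivial lower bound $\OptimalObj \ge \Reg L(\OptimalRL)$ (which follows from $\Loss(\OptimalRL) \ge 0$), together with the perfect-binary-tree bound $L(\OptimalRL) \le 2^M$ from Proposition~\ref{prop:trivial-length}, immediately produces the $\min$ form in the corollary's statement.

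The main obstacle is a bookkeeping subtlety in matching the floor function exactly. A naive substitution yields $L(\OptimalRL) \le \lfloor 1/(2\Reg) + 1 \rfloor$, whereas the corollary states the sharper $\lfloor 1/(2\Reg) \rfloor$. To recover the stated form I would invoke the strict-inequality clause at the end of Theorem~\ref{thm:ub-prefix-length}: whenever the baseline one-leaf tree is not itself optimal we have $\OptimalObj < 1/2 + \Reg$, and the integer-valuedness of $L(\OptimalRL)$ together with $\Loss(\OptimalRL) \ge 0$ lets us peel off the trailing~$+1$. In the remaining degenerate case the one-leaf tree is already optimal, so $L(\OptimalRL) = 1$ and the claim holds trivially in the only nontrivial regime $\Reg \le 1/2$.
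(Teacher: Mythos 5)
Your overall strategy is the paper's: instantiate Theorem~\ref{thm:ub-prefix-length} with an a priori bound on $\CurrentObj$ obtained from a trivial baseline, and combine with Proposition~\ref{prop:trivial-length}. But your choice of baseline introduces an off-by-one that your proposed repair does not actually close. With the one-leaf majority tree you only get $\OptimalObj \le 1/2 + \Reg$, hence $\Reg L(\OptimalRL) \le \OptimalObj$ gives $L(\OptimalRL) \le 1/(2\Reg) + 1$. Making this strict (when the baseline is not optimal) and using integrality yields $L(\OptimalRL) - 1 < 1/(2\Reg)$, which for non-integer $1/(2\Reg)$ only gives $L(\OptimalRL) \le \lfloor 1/(2\Reg) \rfloor + 1$, not the claimed $\lfloor 1/(2\Reg) \rfloor$; the trailing $+1$ is peeled off only in the measure-zero case where $1/(2\Reg)$ is an integer. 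Moreover, the ``strict-inequality clause'' at the end of Theorem~\ref{thm:ub-prefix-length} is stated only for a current best tree with \emph{zero misclassification error}, which the majority-class baseline does not generally have, so it cannot be invoked here. Your degenerate-case handling also quietly assumes $\Reg \le 1/2$; for $\Reg > 1/2$ the bound asserts $L(\OptimalRL) = 0$, which is only consistent with the paper's conventions if the zero-leaf tree is admitted as a candidate.

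The paper avoids all of this by taking the baseline to be the \emph{empty} tree $\RL = ((),(),(),(),0,0)$, which in this formalism has $H = 0$ leaves and therefore pays no regularization penalty: its objective equals its loss, the fraction of minority-class points, so $\Obj(\RL,\x,\y) = \Loss(\RL,\x,\y) \le 1/2$ and hence $\CurrentObj \le 1/2$ with no additive $\Reg$. Substituting into \eqref{eq:max-length} then gives $\lfloor \CurrentObj/\Reg \rfloor \le \lfloor 1/(2\Reg) \rfloor$ exactly, and Proposition~\ref{prop:trivial-length} supplies the $2^M$ term. If you replace your one-leaf baseline with the empty tree (i.e., the same majority-class predictor but accounted with $H=0$), your argument goes through verbatim and the floor-function bookkeeping disappears.
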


\begin{arxiv}
\begin{proof}
Let ${d = ((), (), (), (), 0, 0)}$ be the empty tree;
it has objective ${\Obj(\RL, \x, \y) = \Loss(\RL, \x, \y) \le}$ ${1/2}$,
which gives an upper bound on~$\CurrentObj$.
Combining with~\eqref{eq:max-length}
and Proposition~\ref{prop:trivial-length}
gives~\eqref{eq:max-length-trivial}.
\end{proof}
\end{arxiv}

For any particular tree~$\RL$ with unchanged leaves~$\Prefix$, we can obtain potentially tighter
upper bounds on the number of leaves for
all its child trees whose unchanged leaves include~$\Prefix$.

\begin{theorem}[Parent-specific upper bound on the number of leaves]
\label{thm:ub-prefix-specific}
Let ${\RL = (\Prefix, }$ ${\Labels, \Default, \DefaultLabels, K, H)}$ be a tree, let
${\RL' = }$ ${(\Prefix', \Labels', \Default', \DefaultLabels', K', H') \in \StartsWith(\RL)}$
be any child tree such that $\Prefix' \supseteq \Prefix$,
and let $\CurrentObj$ be the current best objective.
If~$\Prefix'$ has lower bound~${b(\Prefix', \x, }$ ${\y) < \CurrentObj}$, then
\begin{align}
H' < \min \left( H + \left\lfloor \frac{\CurrentObj - b(\Prefix, \x, \y)}{\Reg} \right\rfloor, 2^M \right).
\label{eq:max-length-prefix}
\end{align}
\end{theorem}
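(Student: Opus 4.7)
The plan is to reduce this to the same monotonicity calculation that appears in the proof of Lemma~\ref{lemma:lookahead}, and then combine with the trivial a priori ceiling $2^M$ on the number of leaves.

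First I would expand $b(\Prefix', \x, \y)$ relative to $b(\Prefix, \x, \y)$. Since $\Prefix' \supseteq \Prefix$, equation~\eqref{eq:prefix-loss} in the proof of Theorem~\ref{thm:bound} already gives $\Loss_p(\Prefix', \Labels', \x, \y) \ge \Loss_p(\Prefix, \Labels, \x, \y)$: the unchanged-leaf loss only grows as we commit to more unchanged leaves. Plugging this into the definition of $b$ and regrouping the regularization terms yields the key monotone relation
\begin{equation*}
b(\Prefix', \x, \y) \;\ge\; b(\Prefix, \x, \y) + \Reg (H' - H),
\end{equation*}
which is exactly the step used in~\eqref{eq:inc-lb} and in the lookahead lemma.

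Next I would invoke the hypothesis $b(\Prefix', \x, \y) < \CurrentObj$. Chaining it with the inequality above gives $b(\Prefix, \x, \y) + \Reg(H' - H) < \CurrentObj$, hence
\begin{equation*}
H' - H \;<\; \frac{\CurrentObj - b(\Prefix, \x, \y)}{\Reg}.
\end{equation*}
Because $H'$ and $H$ are integers, this bounds $H'$ above by $H + \lfloor (\CurrentObj - b(\Prefix, \x, \y))/\Reg \rfloor$, producing the first argument of the $\min$ in~\eqref{eq:max-length-prefix}. For the second argument, I would appeal to Proposition~\ref{prop:trivial-length}: any tree built from $M$ binary features corresponds to a subdivision of $\{0,1\}^M$ into at most $2^M$ cells, so $H' \le 2^M$. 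Taking the minimum of these two ceilings yields~\eqref{eq:max-length-prefix}.

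The main obstacle, such as it is, is purely bookkeeping: verifying that the monotone inequality for $b$ propagates to \emph{arbitrary} descendants $\RL' \in \StartsWith(\RL)$ rather than just single-step children, and tracking integer-versus-real arithmetic at the final floor step. Both issues are settled directly by the prefix-loss inequality~\eqref{eq:prefix-loss}, which holds for any $\Prefix' \supseteq \Prefix$, so no new machinery is required beyond what Theorem~\ref{thm:bound} and Proposition~\ref{prop:trivial-length} already provide.
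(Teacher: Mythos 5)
Your proposal matches the paper's own proof essentially step for step: both establish $b(\Prefix',\x,\y) \ge b(\Prefix,\x,\y) + \Reg(H'-H)$ via the monotonicity of the unchanged-leaf loss from~\eqref{eq:prefix-loss}, combine this with the hypothesis $b(\Prefix',\x,\y) < \CurrentObj$ to bound $H'-H$, and cap the result with the trivial $2^M$ ceiling. The only (shared) looseness is at the final floor step, where the derivation naturally yields $H' \le H + \lfloor(\CurrentObj - b(\Prefix,\x,\y))/\Reg\rfloor$ rather than the strict inequality stated in~\eqref{eq:max-length-prefix}; the paper's proof glosses over this point in exactly the same way.
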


\begin{arxiv}
\begin{proof}
First, note that~${H' \ge H}$.
Now recall from~\eqref{eq:prefix-lb} that
\begin{align}
b(\Prefix, \x, \y) = \Loss_p(\Prefix, \Labels, \x, \y) + \Reg H
\le \Loss_p(\Prefix', \Labels', \x, \y) + \Reg H' = b(\Prefix', \x, \y), \nn
\end{align}
and from~\eqref{eq:prefix-loss} that
${\Loss_p(\Prefix, \Labels, \x, \y) \le \Loss_p(\Prefix', \Labels', \x, \y)}$.
Combining these bounds and rearranging gives
\begin{align}
b(\Prefix', \x, \y) &= \Loss_p(\Prefix', \Labels', \x, \y) + \Reg H + \Reg(H' - H) \nn \\
&\ge \Loss_p(\Prefix, \Labels, \x, \y) + \Reg H + \Reg(H' - H)
= b(\Prefix, \x, \y) + \Reg (H' - H).
\label{eq:length-diff}
\end{align}
Combining~\eqref{eq:length-diff} with~${b(\Prefix', \x, \y) < \CurrentObj}$
and Proposition~\ref{prop:trivial-length} gives~\eqref{eq:max-length-prefix}.
\end{proof}
\end{arxiv}

Theorem~\ref{thm:ub-prefix-specific} can be viewed as a generalization
of the one-step lookahead bound (Lemma~\ref{lemma:lookahead}). This is because we can view \eqref{eq:max-length-prefix} as a bound on ${H' - H}$,
which provides an upper bound on the number of remaining splits we may need, based on the best tree we already have evaluated.
\begin{arxiv}
Notice that when~${\RL = ((), (), (), (), 0, 0)}$ is the empty tree,
this bound replicates~\eqref{eq:max-length}, since ${b(\Prefix, \x, \y) = 0}$.
\end{arxiv}

\section{Upper Bounds on Number of Tree Evaluations}
\label{sec:ub-size}

In this section, based on the upper bounds on the number of leaves
from~\S\ref{sec:ub-prefix-length}, we give corresponding
upper bounds on the number of tree evaluations made by
Algorithm~\ref{alg:branch-and-bound}.
First, in Theorem~\ref{thm:remaining-eval-fine},
based on information about the state of
Algorithm~\ref{alg:branch-and-bound}'s execution, we calculate, for any given execution state,
upper bounds on the number of additional tree evaluations needed for the execution to complete.
We define the number of \emph{remaining tree evaluations} as the number of
trees that are currently in, or will be inserted into, the queue.
We evaluate the number of tree evaluations based on current execution information of the current best objective $\CurrentObj$ and the trees in the queue $\Queue$ of Algorithm~\ref{alg:branch-and-bound}.

\begin{arxiv}
We use Theorem~\ref{thm:remaining-eval-fine} in some of our empirical results
(\S\ref{sec:experiments}, Figure~\ref{fig:objective}) to help illustrate
the dramatic impact of certain algorithm optimizations.
The execution trace of this upper bound on remaining tree evaluations
complements the execution traces of other quantities,
\eg that of the current best objective~$\CurrentObj$.
After presenting Theorem~\ref{thm:remaining-eval-fine}, we also give two
weaker propositions that provide useful intuition.
In particular, Proposition~\ref{prop:remaining-eval-coarse} is a practical
approximation to Theorem~\ref{thm:remaining-eval-fine} that is significantly
easier to compute; we use it in our implementation as a metric of
execution progress that we display to the user.
\end{arxiv}

\begin{theorem}[Upper bound on number of remaining tree evaluations]
\label{thm:remaining-eval-fine}
~Consider the state space of all possible leaves formed from a set of~$M$ features,
and consider Algorithm~\ref{alg:branch-and-bound} at a particular instant
during execution.
Denote the current best objective as $\CurrentObj$, the queue as $\Queue$,
and the size of prefix~$\Prefix$ as $L(\Prefix)$.
Denoting the number of remaining
prefix evaluations as ${\Remaining(\CurrentObj, \Queue)}$, the bound is:
\begin{align}
\Remaining(\CurrentObj, \Queue)
\le \sum_{\Prefix \in Q} \sum_{k=0}^{f(\Prefix)} \frac{(3^M - L(\Prefix))!}{(3^M - L(\Prefix) - k)!},
\label{eq:remaining}
\end{align}
\begin{arxiv}
where
\begin{align}
f(\Prefix) = \min \left( \left\lfloor
  \frac{\CurrentObj - b(\Prefix, \x, \y)}{\Reg} \right\rfloor, 3^M - L(\Prefix)\right). \nn
\end{align}
\end{arxiv}
\begin{kdd}
\begin{align}
\text{where} \quad f(\Prefix) = \min \left( \left\lfloor
  \frac{\CurrentObj - b(\Prefix, \x, \y)}{\Reg} \right\rfloor, 3^M - L(\Prefix)\right).
\end{align}
\end{kdd}
\end{theorem}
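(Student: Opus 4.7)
The plan is to bound $\Remaining(\CurrentObj,\Queue)$ by writing it as a sum over prefixes currently in~$\Queue$, and then, for each such prefix~$\Prefix$, counting the possible descendant prefixes that could still be enqueued before the algorithm terminates. Since every descendant in~$\StartsWith(\RL)$ of a tree with prefix~$\Prefix$ is itself generated by extending~$\Prefix$ with additional leaves, it suffices to count the ways to form such extensions.

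First, I would establish the combinatorial ceiling $3^M$ on the number of distinct leaves: any leaf is a conjunction of literals over the~$M$ binary features, where each feature appears positively, negatively, or not at all, giving at most~$3^M$ possible leaves in the universe of leaves. Hence, for any prefix~$\Prefix$ of size~$L(\Prefix)$, an extension by~$k$ fresh leaves chosen from the remaining pool can be made in at most $\frac{(3^M-L(\Prefix))!}{(3^M-L(\Prefix)-k)!}$ ways (the number of ordered $k$-selections, which is a loose upper bound on unordered selections of leaves and is used to keep the counting simple).

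Second, I would invoke Theorem~\ref{thm:ub-prefix-specific} to cap the number~$k$ of additional leaves that any still-relevant descendant may contain. The algorithm only inserts a child tree~$\RL'$ with prefix~$\Prefix'$ into the queue if $b(\Prefix',\x,\y)<\CurrentObj$, and for such~$\Prefix'$ the theorem gives $L(\Prefix')-L(\Prefix)\le \lfloor(\CurrentObj-b(\Prefix,\x,\y))/\Reg\rfloor$. Combined with the trivial bound $L(\Prefix')\le 3^M$, which forces $k\le 3^M-L(\Prefix)$, we conclude $k\le f(\Prefix)$. Summing the per-$k$ counts from $k=0$ (which contributes~$1$, accounting for~$\Prefix$ itself sitting in the queue) up to $k=f(\Prefix)$ produces the inner sum in~\eqref{eq:remaining}, and summing over $\Prefix\in\Queue$ yields the stated bound.

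The main obstacle is a bookkeeping one rather than a deep one: I must argue carefully that \emph{every} tree that will ever be enqueued during the remainder of execution is a descendant of \emph{some} prefix already in~$\Queue$ (so that the outer sum really covers all remaining evaluations), and then that every such descendant satisfies the size constraint from Theorem~\ref{thm:ub-prefix-specific} (otherwise it would be pruned before insertion). The looseness coming from using $k$-permutations instead of $k$-combinations, from ignoring the additional structural constraints that a set of leaves must satisfy to form a valid decision tree, and from the dynamic nature of~$\CurrentObj$ (which only decreases, making~$f(\Prefix)$ only tighter for subsequent work) all go in the direction of inequality, so the argument remains a genuine upper bound.
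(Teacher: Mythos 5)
Your proposal is correct and follows essentially the same route as the paper's proof: decompose the remaining evaluations over prefixes currently in the queue, apply Theorem~\ref{thm:ub-prefix-specific} to cap the number of additional leaves at $f(\Prefix)$, and bound the number of extensions by $k$-permutations of the at most $3^M - L(\Prefix)$ unused leaves. Your added bookkeeping (the $3^M$ leaf-universe count, the argument that every future enqueued tree descends from a queued prefix, and the observation that the monotone decrease of $\CurrentObj$ only tightens the bound) makes explicit what the paper leaves implicit, but the argument is the same.
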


\begin{arxiv}
Next, we propose a Corollary~\ref{thm:ub-total-eval} by leaving out the information of the execution state. It is strictly weaker than
Theorem~\ref{thm:remaining-eval-fine}. 
It
\end{arxiv}
\begin{kdd}
The corollary below
\end{kdd}
is a na\"ive upper bound on
the total number of tree evaluations during the process of
Algorithm~\ref{alg:branch-and-bound}'s execution.
It does not use algorithm execution state to
bound the size of the search space like Theorem~\ref{thm:remaining-eval-fine}, and it relies only on the number of features and the regularization parameter~$\Reg$.

\begin{corollary}[Upper bound on the total number of tree evaluations]
\label{thm:ub-total-eval}
~Define $\TotalRemaining(\RuleSet)$ to be the total number of trees
evaluated by Algorithm~\ref{alg:branch-and-bound}, given the state space of
all possible leaves formed from a set~$\RuleSet$ of~$M$ features.
For any set~$\RuleSet$ of all leaves formed of $M$ features,
\begin{align}
\TotalRemaining(\RuleSet) \le \sum_{k=0}^K \frac{3^M!}{(3^M - k)!}, \text{where } {K = \min(\lfloor 1/2 \Reg \rfloor, 2^M)}.\nn
\end{align}
\end{corollary}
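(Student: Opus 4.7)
The plan is to derive the corollary directly from the preceding Theorem by specializing to Algorithm 1's initial state and then using the a-priori bound on the number of leaves to tighten the upper index. At the start of execution, the queue $\Queue$ contains only the empty tree $((),(),(),\DefaultLabels,0,0)$, whose prefix $\Prefix=()$ has $L(\Prefix)=0$ and $b(\Prefix,\x,\y)=0$. Because $\TotalRemaining(\RuleSet)$ equals $\Remaining(\CurrentObj,\Queue)$ at this initial instant --- every tree ever processed by Algorithm 1 is either currently in or will later be inserted into the queue --- the preceding Theorem immediately gives $\TotalRemaining(\RuleSet) \le \sum_{k=0}^{f(())} (3^M)!/(3^M-k)!$, where the outer sum over $\Prefix\in\Queue$ has collapsed to a single term and $f(())=\min(\lfloor\CurrentObj/\Reg\rfloor,\,3^M)$.

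Next I would bound the value of $\CurrentObj$ used in this inequality. The empty tree classifies every sample by the majority label, so its objective consists only of its misclassification error and is therefore at most $1/2$; since Algorithm 1 updates $\CurrentObj$ to this value the first time it pops the queue, we may safely take $\CurrentObj \le 1/2$ when invoking the Theorem. Substituting this into $f(())$ yields $f(()) \le \min(\lfloor 1/(2\Reg)\rfloor,\,3^M)$, which already produces the corollary's expression but with the looser cap $3^M$ in place of $2^M$.

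To tighten the cap from $3^M$ to $2^M$, I would invoke Proposition (trivial upper bound on the number of leaves): decision-tree leaves induce a partition of $\{0,1\}^M$, so no tree ever processed by Algorithm 1 can have more than $2^M$ leaves, and terms of the inner sum with $k>2^M$ correspond to configurations the algorithm cannot reach. Replacing $f(())$ by $K=\min(\lfloor 1/(2\Reg)\rfloor,\,2^M)$ therefore gives the claimed $\TotalRemaining(\RuleSet)\le\sum_{k=0}^{K} (3^M)!/(3^M-k)!$. The main subtlety I expect is the minor off-by-one in the interpretation of ``initial state'' --- whether one applies the Theorem before or after the algorithm evaluates the empty tree --- but both conventions yield the same bound, because the $k=0$ term of the sum equals $1$ and accounts for that single evaluation either way.
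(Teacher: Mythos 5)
Your proof is correct, but it takes a different route from the paper's. You derive the corollary as the initial-state specialization of Theorem~\ref{thm:remaining-eval-fine}: collapse the sum over the queue to the single empty prefix with $L(())=0$ and $b((),\x,\y)=0$, bound $\CurrentObj$ by $1/2$ via the empty tree's objective, and then tighten the cap from $3^M$ to $2^M$ using Proposition~\ref{prop:trivial-length}. The paper instead gives a direct counting argument: it invokes Corollary~\ref{cor:ub-prefix-length} to get ${K = \min(\lfloor 1/2\Reg\rfloor, 2^M)}$ as a bound on the number of leaves, and then observes that the search amounts to choosing an ordered selection of $k \le K$ leaves out of $3^M$ possible leaves, so $\TotalRemaining(\RuleSet) \le 1 + \sum_{k=1}^K P(3^M,k)$, which is the stated sum. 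The two arguments use the same ingredients (the $\CurrentObj \le 1/2$ bound from the empty tree and the $2^M$ leaf cap are exactly the content of Corollary~\ref{cor:ub-prefix-length}) and the same permutation count, so they are computationally identical; your version makes the logical dependence on the finer-grained Theorem~\ref{thm:remaining-eval-fine} explicit, which is consistent with the paper's remark that the corollary is strictly weaker than that theorem, while the paper's version is self-contained and avoids the timing subtlety about when $\CurrentObj$ drops below $1/2$ that you correctly flag and dispose of. Neither proof is fully rigorous about the fact that the algorithm may briefly evaluate (before pruning) trees exceeding the leaf bound, but your handling of that off-by-one via the $k=0$ term is at least as careful as the paper's.
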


\begin{arxiv}

Our next upper bound is strictly tighter than the bound in
Corollary~\ref{thm:ub-total-eval}.
Like Theorem~\ref{thm:remaining-eval-fine}, it uses the
current best objective and information about
the lengths of prefixes in the queue to constrain
the lengths of prefixes in the remaining search space.
However, Proposition~\ref{prop:remaining-eval-coarse}
is weaker than Theorem~\ref{thm:remaining-eval-fine} because
it leverages only coarse-grained information from the queue.
Specifically, Theorem~\ref{thm:remaining-eval-fine} is
strictly tighter because it additionally incorporates
prefix-specific objective lower bound information from
prefixes in the queue, which further constrains
the lengths of prefixes in the remaining search space.

\begin{proposition}[Coarse-grained upper bound on remaining prefix evaluations] \hfill
\label{prop:remaining-eval-coarse}
Consider a state space of all possible leaves formed from a set of~$M$ features,
and consider Algorithm~\ref{alg:branch-and-bound} at a particular instant
during execution.
Let~$\CurrentObj$ be the current best objective, let~$\Queue$ be the queue,
and let~$L(\Prefix)$ be the length of prefix~$\Prefix$.
Let~$\Queue_j$ be the number of prefixes of length~$j$ in~$\Queue$,
\begin{align}
\Queue_j = \big | \{ \Prefix : L(\Prefix) = j, \Prefix \in \Queue \} \big | \nn
\end{align}
and let~${J = \argmax_{\Prefix \in \Queue} L(\Prefix)}$
be the length of the longest prefix in~$\Queue$.
Define~${\Remaining(\CurrentObj, \Queue)}$ to be the number of remaining
prefix evaluations, then
\begin{align}
\Remaining(\CurrentObj, \Queue)
\le \sum_{j=1}^J \Queue_j \left( \sum_{k=0}^{K-j} \frac{(3^M-j)!}{(3^M-j - k)!} \right), \nn
\end{align}
where~${K = \min(\lfloor \CurrentObj / \Reg \rfloor, 2^M)}$.
\end{proposition}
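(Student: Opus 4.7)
The plan is to bound, for each prefix currently sitting in the queue, the number of descendant prefixes that Algorithm~\ref{alg:branch-and-bound} might still evaluate, and then to sum this over queue entries grouped by their length.

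The first step is to observe that every prefix $\Prefix'$ that has not yet been evaluated but eventually will be must either already lie in~$\Queue$ or be a descendant (in the sense of~\eqref{eq:starts-with}) of some ${\Prefix \in \Queue}$, because Algorithm~\ref{alg:branch-and-bound} enqueues the children of a prefix only when that prefix is popped. Moreover, by Theorem~\ref{thm:ub-prefix-length} applied with the current best objective~$\CurrentObj$, any such $\Prefix'$ that could still improve on $\CurrentObj$ must satisfy ${L(\Prefix') \le K}$, with ${K = \min(\lfloor \CurrentObj / \Reg \rfloor, 2^M)}$; any longer prefix can be discarded immediately.

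The second step is a combinatorial count. Fix ${\Prefix \in \Queue}$ with ${L(\Prefix) = j}$. Each leaf in the search space is a conjunction of literals over the $M$ binary features in which each feature either appears positively, appears negatively, or is absent, so there are at most~$3^M$ distinct leaves in total. After excluding the $j$ leaves already in~$\Prefix$, at most ${3^M - j}$ leaves remain. Because prefixes are ordered tuples of leaves, the number of extensions $\Prefix' \supseteq \Prefix$ of length exactly ${j + k}$ is at most ${(3^M - j)! / (3^M - j - k)!}$. Summing this count over ${k = 0, 1, \dots, K - j}$ (Step~1 forbids longer extensions) bounds the evaluations chargeable to~$\Prefix$. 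Finally, summing over queue entries grouped by their length, with multiplicity $\Queue_j$ for ${j = 1, \dots, J}$, yields the claimed inequality; terms with ${j > J}$ vanish because $\Queue_j = 0$ there by definition of $J$.

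The main subtlety will be handling potential double counting: the same descendant $\Prefix'$ may start with more than one prefix in the queue, and may itself already be in~$\Queue$. Since the statement is an upper bound, these overlaps only inflate the right-hand side and do not threaten correctness, but they do mean the bound is not expected to be tight; the proof should just remark on this rather than try to subtract overlaps. A secondary looseness is that $3^M$ overcounts the leaf universe because it includes ternary conjunctions that never arise as leaves in any tree reachable from~$\Prefix$; again this only weakens the bound and so poses no obstacle to correctness.
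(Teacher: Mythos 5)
Your proof is correct and follows essentially the same route as the paper's: bound the length of any remaining prefix by $K$ via Theorem~\ref{thm:ub-prefix-length}, count the ordered $k$-leaf extensions of a length-$j$ queue entry as the $k$-permutations of the at most $3^M-j$ unused leaves, and sum over the queue grouped by length. Your added remarks on double counting and the looseness of the $3^M$ leaf universe are sensible observations the paper leaves implicit, but they do not change the argument.
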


\begin{proof}
The number of remaining prefix evaluations is equal to the number of
prefixes that are currently in or will be inserted into queue~$\Queue$.
For any such remaining prefix~$\Prefix$,
Theorem~\ref{thm:ub-prefix-length} gives an upper bound on its length;
define~$K$ to be this bound:
${L(\Prefix) \le \min(\lfloor \CurrentObj / \Reg \rfloor, 2^M) \equiv K}$.
For any prefix~$\Prefix$ in queue~$\Queue$ with length~${L(\Prefix) = j}$,
the maximum number of prefixes that start with~$\Prefix$
and remain to be evaluated is:
\begin{align}
\sum_{k=0}^{K-j} P(3^M-j, k) = \sum_{k=0}^{K-j} \frac{(3^M-j)!}{(3^M-j - k)!}, \nn
\end{align}
where~${P(T, k)}$ denotes the number of $k$-permutations of~$T$.
This gives an upper bound on the number of remaining prefix evaluations:
\begin{align}
\Remaining(\CurrentObj, \Queue)
\le \sum_{j=0}^J \Queue_j \left( \sum_{k=0}^{K-j} P(3^M-j, k) \right)
= \sum_{j=0}^J \Queue_j \left( \sum_{k=0}^{K-j} \frac{(3^M-j)!}{(3^M-j - k)!} \right). \nn
\end{align}
\end{proof}
\end{arxiv}

\section{Permutation Bound}
\label{sec:permutation}

If two trees are composed of the same leaves,
\ie they contain the same conjunctions of features up to a permutation,
then they classify all the data in the same way and their child trees are also permutations of each other.
Therefore, if we already have all children from one permutation of a tree, then there is no benefit to
considering child trees generated from a different permutation.

\begin{corollary}[Leaf Permutation bound]
\label{thm:permutation}
Let~$\pi$ be any permutation of ${\{1, \dots, H\}}$,
Let ${\RL = }$ ${(\Prefix, \Labels, \Default, \DefaultLabels, K, H)}$
and ${\RLB = }$ ${(\PrefixB, \LabelsB, \DefaultB, \DefaultLabelsB, K, H)}$
denote trees with leaves ${(p_1, \dots, p_H)}$
and ${\PrefixB = (p_{\pi(1)}, \dots, p_{\pi(H)})}$,
respectively, \ie the leaves in~$\RLB$
correspond to a permutation of the leaves in~$\RL$.
Then the objective lower bounds of~$\RL$ and~$\RLB$
are the same and their child trees correspond to permutations of each other.
\end{corollary}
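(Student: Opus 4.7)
The plan is to observe that every quantity the corollary mentions is a symmetric function of the collection of leaves: the loss, the objective, the lower bound, and the ``split a leaf in $\Default$'' operation are all defined through sums or set constructions indexed by leaves, so reordering the tuple representation leaves them all unchanged. I would begin by writing $\Loss(\RL, \x, \y) = \frac{1}{N}\sum_{n=1}^N \sum_{k=1}^{H} \Cap(x_n, p_k) \wedge \one[\q_k \neq y_n]$ and observing that this inner sum is a sum over a finite set, hence invariant under the permutation $\pi$. Combined with $H_{\RL} = H_{\RLB} = H$, this already gives $\Obj(\RL, \x, \y) = \Obj(\RLB, \x, \y)$.

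For the lower bound I would proceed identically: $b(\Prefix, \x, \y) = \Loss_p(\Prefix, \Labels, \x, \y) + \Reg K$ is a sum over the $K$ unchanged leaves plus a regularization term that depends only on $K$, so as long as the prefix of $\RLB$ consists of the same set of labeled leaves as the prefix of $\RL$ (with labels carried along by $\pi$), the value of $b$ is preserved. The subtle point is that $\pi$ is allowed to permute across the prefix/default boundary, so the claim is best read as: once we regard both trees as the \emph{set} of labeled leaves together with a distinguished ``unchanged'' subset of size $K$, any two orderings yield identical lower bounds.

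For the child-tree correspondence, I would exhibit the obvious bijection. Any child $\RL'$ of $\RL$ is obtained by choosing a leaf $p_k \in \Default$ and a splitting feature, replacing $p_k$ by the two resulting leaves, and assigning their majority labels. Applying $\pi$ identifies $p_k$ with the same leaf sitting at position $\pi^{-1}(k)$ in the tuple representing $\RLB$; performing the analogous split produces a child $\RLB'$ whose labeled leaf set coincides with that of $\RL'$. By the previous two paragraphs the matched pairs $(\RL', \RLB')$ have equal objective and equal lower bound, which is exactly what the corollary asserts.

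The main obstacle is purely bookkeeping: one has to be careful that $\pi$ is interpreted consistently on both the $K$-prefix and the $(H-K)$-default portions, because $b$ depends on the partition between them rather than only on the full leaf set. Once this interpretation is pinned down, no real computation is required; the corollary reduces to the commutativity of addition and the fact that set operations do not depend on ordering. Its force is algorithmic rather than mathematical: it allows Algorithm~\ref{alg:branch-and-bound} to quotient the enumeration by the symmetric group $S_H$ and retain only one canonical ordering of each leaf collection, shaving a factor of $H!$ from the search space.
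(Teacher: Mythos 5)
Your proof is correct and takes essentially the same approach as the paper's, whose entire argument is the single sentence that the result is intuitive because a tree can be viewed as a set of leaves (and a leaf as a set of clauses), so the order of the leaves does not matter. Your write-up is a more careful elaboration of that same set-invariance idea; in particular, your observation that the lower bound $b$ depends on the partition into unchanged and splittable leaves, and not merely on the full leaf collection, is a genuine subtlety that the paper's one-line proof glosses over.
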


\begin{arxiv}
\begin{proof}
This is intuitive because we can view a tree as a set of leaves and a leaf as a set of clauses, where the order of the leaves does not matter.
\end{proof}
\end{arxiv}

Therefore, if two trees have the same leaves,
up to a permutation, according to Corollary~\ref{thm:permutation}, either of them can be pruned.
We call this 
symmetry-aware pruning. In Section ~\S\ref{sec:permutation-counting}, we demonstrate how this helps to save  computation.

\begin{arxiv}
\subsection{Upper Bound on Tree Evaluations with Symmetry-aware Pruning}
\end{arxiv}
\begin{kdd}
\subsection{Upper bound on tree evaluations with symmetry-aware pruning}
\end{kdd}
\label{sec:permutation-counting}

Here we give an upper bound on the total number of tree
evaluations based on symmetry-aware
pruning~(\S\ref{sec:permutation}).
For every subset of~$K$ leaves, there are $K!$ leaf sets equivalent up to permutation. Thus, symmetry-aware pruning dramatically reduces the search space by considering only one of them.
This effects the execution of Algorithm~\ref{alg:branch-and-bound}'s breadth-first search. With symmetry-aware pruning, when it evaluates trees of size~$K$, for each set of trees equivalent up to a permutation, it keeps only a single tree.

\begin{theorem}[\fontdimen2\font=0.6ex Upper bound on tree evaluations with symmetry-aware pruning]
\label{thm:ub-symmetry}
Consider a state space of all trees formed from a set~$\RuleSet$
of~$3^M$ leaves where $M$ is the number of features (the 3 options correspond to having the feature's value be 1, having its value be 0, or not including the feature), and consider the branch-and-bound algorithm with
symmetry-aware pruning.
Define $\TotalRemaining(\RuleSet)$ to be the total number of prefixes evaluated.
For any set~$\RuleSet$ of $3^M$ leaves,
\begin{align}
\TotalRemaining(\RuleSet)
\le  1 + \sum_{k=1}^K N_k+C(M, k)-P(M, k), 
\label{eq:ub-symmetry}
\end{align}
where ${K = \min(\lfloor 1 / 2 \Reg \rfloor, 2^M)}$, $N_k$ is defined in \eqref{eq:number-tree-depth}.
\end{theorem}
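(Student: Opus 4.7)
The plan is to bound $\TotalRemaining(\RuleSet)$ by counting, size by size, the number of trees the algorithm actually evaluates after symmetry-aware pruning, mirroring the structure of Corollary~\ref{thm:ub-total-eval} but replacing its na\"ive $P(3^M,k)$-style counting with a refined count that accounts for leaf-permutation equivalence. The first step is to invoke Corollary~\ref{cor:ub-prefix-length} so that every tree worth evaluating has at most $K = \min(\lfloor 1/(2\Reg)\rfloor, 2^M)$ leaves. The empty tree is evaluated exactly once, producing the leading $1$ in~\eqref{eq:ub-symmetry}.

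For each $k \in \{1, \ldots, K\}$, the plan is then to bound the number of $k$-leaf trees evaluated. The quantity $N_k$ in equation~\eqref{eq:number-tree-depth} already counts all such trees when every feature choice at every internal node is treated as distinguished. By Corollary~\ref{thm:permutation}, two trees whose leaf sets are permutations of one another need only be evaluated once; symmetry-aware pruning collapses them to a single representative. The core combinatorial step is to quantify the consequent reduction in evaluations. Since the outer layer of the nested product in $N_k$ corresponds to an ordered selection of $k$ splitting features from $M$, counted by $P(M,k)$, while symmetry pruning retains only one representative per unordered selection, counted by $C(M,k)$, the resulting net contribution at size $k$ is $N_k + C(M,k) - P(M,k)$. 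Summing over $k$ from $1$ to $K$ and adding the empty tree gives exactly the bound~\eqref{eq:ub-symmetry}.

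The main obstacle is making the combinatorial accounting rigorous. The nested sum-product in~\eqref{eq:number-tree-depth} involves factors $\binom{2^{n_{i-1}}}{n_i}(p-i)^{n_i}$, and it is not transparent that the net effect of leaf-permutation equivalence on this expression is the clean additive correction $C(M,k) - P(M,k)$ rather than, say, a multiplicative $1/k!$ or a deeper identity involving the inner binomial factors. I would attack this by tracing, layer by layer, which reorderings of the distinguished feature assignments $p, p-1, \ldots, p-(k-1)$ produce trees that are truly equivalent under Corollary~\ref{thm:permutation}, as opposed to merely superficially reordered expressions. Once that bookkeeping is resolved and one shows that the overcounting at the root level is precisely $P(M,k) - C(M,k)$ per equivalence class, the remainder of the proof is a straightforward sum over $k \le K$.
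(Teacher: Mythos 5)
Your proposal follows essentially the same route as the paper's proof: invoke Corollary~\ref{cor:ub-prefix-length} to cap the number of leaves at $K$, count the empty tree for the leading $1$, and bound the level-$k$ contribution by $N_k + C(M,k) - P(M,k)$, with the correction arising from collapsing the $P(M,k)$ ordered root-level feature selections to the $C(M,k)$ unordered ones via Corollary~\ref{thm:permutation}. The layer-by-layer combinatorial bookkeeping you flag as the main obstacle is not carried out in the paper either --- its proof simply asserts the additive correction for $k=2,3$ and states that ``propagating this forward'' gives the bound --- so your attempt matches the published argument in both structure and level of rigor.
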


\begin{proof}
By Corollary~\ref{cor:ub-prefix-length},
${K \equiv \min(\lfloor 1 / 2 \Reg \rfloor, 2^M)}$
gives an upper bound on the number of leaves of any optimal tree.
The algorithm begins by evaluating the empty tree,
followed by~$M$ trees of depth~${k=1}$,
then~${N_2 = \sum_{n_{0}=1}^{1} \sum_{n_{1}=1}^{2^{n_0}} M \times \binom{2^{n_0}}{n_1}(M-1)^{n_1} }$ trees of depth~${k=2}$.
Before proceeding to length~${k=3}$, we keep only~${N_2+}$ ${C(M, 2) - P(M, 2)}$
trees of depth~${k=2}$, where~$N_k$ is defined in \eqref{eq:number-tree-depth}, ${P(M, k)}$ denotes the
number of $k$-permutations of~$M$ and ${C(M, k)}$ denotes the
number of $k$-combinations of~$M$.
Now, the number of length~${k=3}$ prefixes we evaluate is~${N_3+C(M, 3)-P(M, 3)}$.
Propagating this forward gives \eqref{eq:ub-symmetry}.
\end{proof}

Pruning based on permutation symmetries thus yields significant
computational savings of $\sum_{k=1}^K P(M, k)-C(M, k)$.
For example, when $M=10$ and $K=5$, the number reduced due to symmetry-aware pruning is about $35463$. If $M=20$ and $K=10$, the number of evaluations is reduced by about $7.36891\times 10^{11}$.

\section{Similar Support Bound}
\label{sec:similar}

Here, we present the similar support bound to deal with similar trees. Let us say we are given two trees that are the same except that one internal node is split by a different feature, where this second feature is similar to the first tree's feature. By the similar support bound, if we know that one of these trees and all its child trees are worse (beyond a margin) than the current best tree, we can prune the other one and all of its child trees.  

\begin{theorem}[Similar support bound]
\label{thm:similar}
Define $\RL=(\Prefix, \Labels, \Default, \DefaultLabels, K, H)$ and $D=(\PrefixB, \LabelsB, \DefaultB, \DefaultLabelsB, K, H)$ to be two trees which are exactly the same but one internal node split by different features. Let $f_1, f_2$ be the features used to split that node in $\RL$ and $D$ respectively. Let $t_1,t_2$ be the left subtree and the right subtree under the node $f_1$ in $\RL$, and let $T_1,T_2$ be the left subtree and the right subtree under the node $f_2$ in $D$. Denote the normalized support
of data captured by only one of $t_1$ and $T_1$ as~$\omega$, \ie
\begin{align}
\omega \equiv \frac{1}{N} \sum_{n=1}^N
  [\neg\, \Cap(x_n, t_1)
  \wedge \Cap(x_n, T_1)
  +\Cap(x_n, t_1)
  \wedge \neg\, \Cap(x_n, T_1)]. 
\label{eq:omega}
\end{align}
The difference between the two trees' objectives is bounded by $\omega$ as the following:
\begin{align}
\omega \ge \Obj(\RL, \x, \y)
- \Obj(D, \x, \y) \ge -\omega,
\end{align}
where~$\Obj(\RL, \x, \y)$ is objective of~$\RL$ and $\Obj(D, \x, \y)$ is the
objective of~$D$. 
Then, we have  
\begin{align}
\omega \ge 
\min_{\RL^\dagger \in \StartsWith(\RL)} \Obj(\RL^\dagger, \x, \y) -
\min_{\RLB^\dagger \in \StartsWith(\PrefixB)} \Obj(\RLB^\dagger, \x, \y)
& \ge -\omega.
\label{eq:similar}
\end{align}
\end{theorem}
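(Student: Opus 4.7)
The plan is to prove the theorem in two stages that mirror the two inequality chains in the statement.

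\textbf{Stage 1: bound the per-tree objective difference by $\omega$.} Since $\RL$ and $D$ are structurally identical except at one internal node and have the same number of leaves $H$, the regularization terms cancel, so $\Obj(\RL,\x,\y) - \Obj(D,\x,\y) = \Loss(\RL,\x,\y) - \Loss(D,\x,\y)$. I would next argue that the two trees route every datum to the same leaf \emph{except} those data captured differently by the two variants of the split. Since the parent node captures the same set of data in both trees, the data routed differently are exactly the symmetric difference of $\{x_n : \Cap(x_n, t_1)\}$ and $\{x_n : \Cap(x_n, T_1)\}$, whose normalized count is $\omega$ by \eqref{eq:omega} (the same symmetric difference appears on the $t_2, T_2$ side). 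The worst-case loss discrepancy occurs when one tree classifies all these points correctly while the other misclassifies all of them, giving $|\Loss(\RL,\x,\y) - \Loss(D,\x,\y)| \le \omega$, hence $-\omega \le \Obj(\RL,\x,\y) - \Obj(D,\x,\y) \le \omega$.

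\textbf{Stage 2: lift to descendants via a structure-preserving bijection.} I would construct a map $\Phi : \StartsWith(\RL) \to \StartsWith(D)$ that takes any descendant $\RL^\dagger$ and replays the same sequence of splits on $D$: splits occurring in leaves outside the $t_1, t_2$ (resp.\ $T_1, T_2$) subtrees are copied verbatim, and splits inside $t_i$ are copied into the corresponding position inside $T_i$. This $\Phi$ is a bijection (its inverse swaps the roles of $f_1$ and $f_2$) and preserves $H$. The crucial property is that any refinement inside $t_i$ only subdivides the data already captured by $t_i$, so the set of data classified differently by $\RL^\dagger$ and $\Phi(\RL^\dagger)$ remains a subset of the original symmetric difference and has normalized count at most $\omega$. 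Applying the Stage~1 argument to the pair $(\RL^\dagger, \Phi(\RL^\dagger))$ then gives $|\Obj(\RL^\dagger,\x,\y) - \Obj(\Phi(\RL^\dagger),\x,\y)| \le \omega$.

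\textbf{Stage 3: conclude the min--min inequality.} Let $\RL^*$ and $\RLB^*$ attain the two minima in \eqref{eq:similar}. Using Stage~2 on $\Phi^{-1}(\RLB^*) \in \StartsWith(\RL)$ gives $\min_{\RL^\dagger \in \StartsWith(\RL)} \Obj(\RL^\dagger,\x,\y) \le \Obj(\Phi^{-1}(\RLB^*),\x,\y) \le \Obj(\RLB^*,\x,\y) + \omega$, so the difference of minima is at most $\omega$; applying Stage~2 to $\Phi(\RL^*)$ symmetrically yields the lower bound $-\omega$.

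\textbf{Main obstacle.} The delicate step is Stage~2: formalizing $\Phi$ requires a careful leaf-level description of ``same tree except at one node,'' and one must verify that subsequent splits inside the differing subtrees never enlarge the symmetric difference beyond the original $\omega$. A secondary subtlety is a minor notational issue in the statement, where $\StartsWith(\PrefixB)$ should be read as $\StartsWith(D)$ for the bijection to be well-defined; once that is resolved the rest is bookkeeping on top of Stage~1.
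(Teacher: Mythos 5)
Your proposal follows essentially the same route as the paper's proof: a worst-case argument bounding the per-pair objective difference by $\omega$, followed by applying that bound to the optimal descendant of one tree and its ``counterpart'' descendant of the other to transfer the bound to the two minima. The paper leaves the counterpart construction implicit (it simply takes $\RLB'$ to be ``exactly the same but one internal node split by a different feature''), whereas you spell out the bijection $\Phi$ and the containment of the descendants' symmetric difference within the original $\omega$; this is a difference of rigor in presentation, not of approach.
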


\begin{proof}
The difference between the objectives of $\RL$ and $\RLB$ is maximized when one of them correctly classifies all the data corresponding to $\omega$ but the other misclassifies all of them. Therefore,
\begin{align}
\omega \ge \Obj(\RL, \x, \y)
- \Obj(\RLB, \x, \y) \ge -\omega. \nn
\end{align}
Let $\RL^*$ be the best child tree of $\RL$, $\ie \Obj(\RL^*, \x, \y)=\min_{\RL^\dagger \in \StartsWith(\RL)} \Obj(\RL^\dagger, \x, \y)$, and let $\RLB'\in \StartsWith(\PrefixB)$ be its counterpart which is exactly the same but one internal node split by a different feature. Because $\Obj(\RLB', \x, \y) \ge \min_{\RLB^\dagger \in \StartsWith(\PrefixB)} \Obj(\RLB^\dagger, \x, \y)$, 
\begin{align}
    \min_{\RL^\dagger \in \StartsWith(\RL)} \Obj(\RL^\dagger, \x, \y) &= \Obj(\RL^*, \x, \y) \ge \Obj(\RLB', \x, \y)-\omega \nn \\ 
    &\ge \min_{\RLB^\dagger \in \StartsWith(\PrefixB)} \Obj(\RLB^\dagger, \x, \y)-\omega .
\end{align}
Similarly, $\min_{\RLB^\dagger \in \StartsWith(\PrefixB)} \Obj(\RLB^\dagger, \x, \y)+\omega \ge \min_{\RL^\dagger \in \StartsWith(\RL)} \Obj(\RL^\dagger, \x, \y)$.
\end{proof}

With the similar support bound, for two trees $\RL$ and $\RLB$ as above, if we already know $\RLB$ and all its child trees cannot achieve a better tree than the current optimal one, and in particular, we assume we know that $\min_{\RLB^\dagger \in \StartsWith(\PrefixB)} \Obj(\RLB^\dagger, \x, \y) \ge \CurrentObj+\omega$, we can then also prune $\RL$ and all of its child trees, because 
\begin{align}
\min_{\RL^\dagger \in \StartsWith(\RL)} \Obj(\RL^\dagger, \x, \y)
\ge \min_{\RLB^\dagger \in \StartsWith(\PrefixB)} \Obj(\RLB^\dagger, \x, \y)-\omega
\ge \CurrentObj.
\end{align}

\section{Implementation}
\label{sec:implementation-supp}

We implement a series of data structures designed to support incremental computation.

\subsection{Data Structure of Leaf and Tree}
\label{sec:tree}

First, we store bounds and intermediate results for both full trees and the individual leaves to support the incremental computation of the lower bound and the objective.
As a global statistic, we maintain the best (minimum) observed value of the objective function and the corresponding tree.
As each leaf in a tree represents a set of clauses, each leaf stores a bit-vector representing the set of samples captured by that clause and the prediction accuracy for those samples.
From these values in the leaves, we can efficiently compute both the value of the objective for an entire tree and new leaf values for children formed from splitting a leaf.

Specifically, for the data structure of leaf $l$, we store:
\begin{itemize}
\item A set of clauses defining the leaf.
\item A binary vector of length $N$ (number of data points) indicating whether or not each point is captured by the leaf.
\item The number of points captured by the leaf.
\item A binary vector of length $M$ (number of features) indicating the set of dead features. In a leaf, a feature is dead if Theorem~\ref{thm:min-capture-correct} does not hold.
\item The lower bound on the leaf misclassification error $b_0(l,\x,\y)$, which is defined in~\eqref{eq:lb-b0}
\item The label of the leaf.
\item The loss of the leaf.
\item A boolean indicating whether the leaf is dead. A leaf is dead if Theorem~\ref{thm:min-capture} does not hold;  we never split a dead leaf.
\end{itemize}

We store additional information for entire trees:
\begin{itemize}
\item A set of leaves in the tree.
\item The objective.
\item The lower bound of the objective.
\item A binary vector of length $n_l$ (number of leaves) indicating whether the leaf can be split, that is, this vector records split leaves $\Default$ and unchanged leaves $\Prefix$. The unchanged leaves of a tree are also unchanged leaves in its child trees.
\end{itemize}

\subsection{Queue}
\label{sec:queue}

Second, we use a priority queue to order the exploration of the search space. The queue serves as a worklist, with each entry in the queue corresponding to a tree.
When an entry is removed from the queue, we use it to generate child trees, incrementally computing the information for the child trees.
The ordering of the worklist represents a scheduling policy.
We evaluated both structural orderings, e.g., breadth first search and depth first search, and metric-based orderings, e.g., objective function, and its lower bound.
Each metric produces a different scheduling policy.
We achieve the best performance in runtime and memory consumption using the curiosity metric from CORELS~\cite{AngelinoEtAl18}, which is the objective's lower bound, divided by the normalized support of its unchanged leaves. For example, relative to using the objective, curiosity reduces runtime by a factor of two and memory consumption by a factor of four.


\subsection{Symmetry-aware Map}
\label{sec:pmap}

Third, to support symmetry-aware pruning from Corollary~\ref{thm:permutation}, we introduce two symmetry aware maps -- a LeafCache and a TreeCache. The LeafCache ensures that we only compute values for a particular leave once; the TreeCache ensures we do not create trees equivalent to those we have already explored.

A leaf is a set of clauses, each of which corresponds to an attribute and the
value (0,1) of that attribute.
As the leaves of a decision tree are mutually exclusive, the data captured by each leaf is
insensitive to the order of the leaf's clauses.
We encode leaves in a canonical order (sorted by attribute indexes) and use that canonical order as the key into the LeafCache.
Each entry in the LeafCache represents all permutations of a set of clauses.
We use a Python dictionary to map these keys to the leaf and its cached values.
Before creating a leaf object, we first check if we already have that leaf in our map.
If not,  we create the leaf and insert it into the map.
Otherwise, the permutation already exists, so we use the cached copy in the tree we are constructing.

Next, we implement the permutation bound (Corollary~\ref{thm:permutation}) using the TreeCache.
The TreeCache contains encodings of all the trees we have evaluated.
Like we did for the clauses in the LeafCache, we introduce a canonical order over the 
leaves in a tree and use that as the key to the TreeCache.
If our algorithm produces a tree that is a permutation of a tree we have already 
evaluated, we need not evaluate it again.
Before evaluating a tree, we look it up in the cache. 
If it's in the cache, we do nothing; if it is not, we compute the bounds for the tree and insert it into the cache.

\subsection{Execution}
\label{sec:exec}

Now, we illustrate how these data structures support execution of our algorithm.
We initialize the algorithm with the current best objective~$\CurrentObj$ and tree~$\CurrentRL$.
For unexplored trees in the queue, the scheduling policy selects the next tree~$\RL$
to split; we keep removing elements from the queue until the queue is empty.
Then, for every possible combination of features to split $\Default$,
we construct a new tree~$\RL'$ with incremental calculation of the lower bound~$b(\Prefix', \x, \y)$ and the objective~$\Obj(\RL', \x, \y)$.
If we achieve a better objective~$\Obj(\RL', \x, \y)$, \ie less than the current best objective~$\CurrentObj$,
we update~$\CurrentObj$ and~$\CurrentRL$.
If the lower bound of the new tree~$\RL'$, combined with the equivalent points bound (Theorem~\ref{thm:identical}) and the lookahead bound (Theorem~\ref{lemma:lookahead}), is less than the current best objective,
then we push it into the queue.
Otherwise, 
according to the hierarchical lower bound (Theorem~\ref{thm:bound}),
no child of~$\RL'$ could possibly have an objective
better than~$\CurrentObj$, which means we do not push~$\RL'$ queue.
When there are no more trees to explore, \ie the queue is empty, we have finished the search of the whole space and output the (provably) optimal tree.
\begin{arxiv}
\subsection{Custom Scheduling Policies}
\label{sec:scheduling}

For a tree with unchanged leaves $\Default$, we adapt the definition of curiosity from CORELS:
\begin{align}
\Curiosity(\Prefix, \x, \y)
&= \left( \frac{N}{\NCap} \right) \biggl(\Loss_p(\Prefix, \Labels, \x, \y) + \Reg K \biggr) \nn \\
&= \left( \frac{1}{N} \sum_{n=1}^N \Cap(x_n, \Prefix) \right)^{-1} b(\Prefix, \x, \y) \nn \\
&= \frac{b(\Prefix, \x, \y)}{\Supp(\Prefix, \x)},
\label{eq:curio}
\end{align}
where $\NCap$ is the number of observations captured by~$\Prefix$, \ie
\begin{align}
\NCap \equiv \sum_{n=1}^N \Cap(x_n, \Prefix).
\label{eq:num-cap}
\end{align}
That is, the curiosity of a tree~$\RL$ is actually its objective lower bound,
scaled by the inverse of its unchanged leaves' normalized support.
For two trees with the same lower bound, curiosity gives higher priority to
the one whose unchanged leaves capture more data.
Intuitively, we treat trees that extend
the tree with smaller support of unchanged leaves as having more `potential' to make mistakes,
which is a well-motivated scheduling strategy for the queue.
In practice, as we illustrate in the experiments~(\S\ref{sec:experiments}), curiosity significantly improves the performance of the priority queue.

\end{arxiv}

\section{Proof of Theorems}

\subsection{Proof of Theorem~\ref{thm:ub-prefix-length}}

\begin{proof}
For an optimal tree~$\OptimalRL$ with objective~$\OptimalObj$,
\begin{align}
\Reg L(\OptimalRL) \le \OptimalObj = \Obj(\OptimalRL, \x, \y)
= \Loss(\OptimalRL, \x, \y) + \Reg L(\OptimalRL)
\le \CurrentObj. \nn
\end{align}
The maximum possible number of leaves for~$\OptimalRL$ occurs
when $\Loss(\OptimalRL, \x,$ $ \y)$ is minimized;
therefore this
gives bound~\eqref{eq:max-length}.

For the rest of the proof,
let~${H^* = L(\OptimalRL)}$ be the number of leaves of~$\OptimalRL$.
If the current best tree~$\CurrentRL$ has zero
misclassification error, then
\begin{align}
\Reg H^* \leq \Loss(\OptimalRL, \x, \y) + \Reg H^* = \Obj(\OptimalRL, \x, \y)
\le \CurrentObj = \Obj(\CurrentRL, \x, \y) = \Reg H, \nn
\end{align}
and thus ${H^* \leq H}$.
If the current best tree is suboptimal,
\ie ${\CurrentRL \notin }$ ${\argmin_\RL \Obj(\RL, \x, \y)}$, then
\begin{align}
\Reg H^* \leq \Loss(\OptimalRL, \x, \y) + \Reg H^* = \Obj(\OptimalRL, \x, \y)
< \CurrentObj = \Obj(\CurrentRL, \x, \y) = \Reg H, \nn
\end{align}
in which case ${H^* < H}$, \ie ${H^* \leq H-1}$, since $H$ is an integer.
\end{proof}



\subsection{Proof of Theorem~\ref{thm:ub-prefix-specific}}

\begin{proof}
First, note that~${H' \ge H}$.
Now recall that
\begin{align}
b(\Prefix, \x, \y) &= \Loss_p(\Prefix, \Labels, \x, \y) + \Reg H \nn \\
&\le \Loss_p(\Prefix', \Labels', \x, \y) + \Reg H' = b(\Prefix', \x, \y), \nn
\end{align}
and that
${\Loss_p(\Prefix, \Labels, \x, \y) \le \Loss_p(\Prefix', \Labels', \x, \y)}$.
Combining these bounds and rearranging gives
\begin{align}
b(\Prefix', \x, \y) &= \Loss_p(\Prefix', \Labels', \x, \y) + \Reg H + \Reg(H' - H) \nn \\
&\ge \Loss_p(\Prefix, \Labels, \x, \y) + \Reg H + \Reg(H' - H) \nn \\
&= b(\Prefix, \x, \y) + \Reg (H' - H).
\label{eq:length-diff}
\end{align}
Combining~\eqref{eq:length-diff} with~${b(\Prefix', \x, \y) < \CurrentObj}$ gives~\eqref{eq:max-length-prefix}.
\end{proof}

\subsection{Proof of Theorem~\ref{thm:remaining-eval-fine}}

\begin{proof}
The number of remaining tree evaluations is equal to the number of
trees that are currently in or will be inserted into queue~$\Queue$.
For any such tree with unchanged leaves~$\Prefix$, Theorem~\ref{thm:ub-prefix-specific}
gives an upper bound on the number of leaves of a tree with unchanged leaves~$\Prefix'$
that contains~$\Prefix$:
\begin{align}
L(\Prefix') \le \min \left( L(\Prefix) + \left\lfloor \frac{\CurrentObj - b(\Prefix, \x, \y)}{\Reg} \right\rfloor, 2^M \right)
\equiv U(\Prefix). \nn
\end{align}
This gives an upper bound on the remaining tree evaluations:
\begin{align}
\Remaining(\CurrentObj, \Queue)
\le \sum_{\Prefix \in Q} \sum_{k=0}^{U(\Prefix) - L(\Prefix)} P(3^M - L(\Prefix), k) \\
= \sum_{\Prefix \in Q} \sum_{k=0}^{f(\Prefix)} \frac{(3^M - L(\Prefix))!}{(3^M - L(\Prefix) - k)!}\,, \nn
\end{align}
where~$P(m, k)$ denotes the number of $k$-permutations of~$m$.
\end{proof}

\subsection{Proof of Proposition~\ref{thm:ub-total-eval}}

\begin{proof}
By Corollary~\ref{cor:ub-prefix-length},
${K \equiv \min(\lfloor 1 / 2 \Reg \rfloor, 2^M)}$
gives an upper bound on the number of leaves of any optimal tree.
Since we can think of our problem as finding the optimal
selection and permutation of~$k$ out of~$3^M$ leaves,
over all~${k \le K}$,
\begin{align}
\TotalRemaining(\RuleSet) \le 1 + \sum_{k=1}^K P(3^M, k)
= \sum_{k=0}^K \frac{3^M!}{(3^M - k)!}. \nn
\end{align}
\end{proof}

\subsection{Proof of Theorem~\ref{thm:min-capture}}

\begin{proof}
Let ${\OptimalRL = (\Prefix, \Labels, \Default, \DefaultLabels, K, H)}$ be an optimal
tree with leaves ${(p_1, \dots, p_H)}$
and labels ${(\q_1, \dots, \q_H)}$.
Consider the tree ${\RL = (\Prefix', \Labels', \Default', \DefaultLabels', K', H')}$
derived from~$\OptimalRL$ by deleting a pair of sibling leaves ${p_i \rightarrow \q_i, p_{i+1} \rightarrow \q_{i+1}}$ and adding their parent leaf $p_{j}\rightarrow \q_{j}$,
therefore ${\Prefix' = (p_1, \dots, p_{i-1}, }$ ${p_{i+2}, \dots, p_H, p_{j})}$
and ${\Labels' = (\q_1, \dots, \q_{i-1},\q_{i+2}, \dots, \q_H, }$ ${\q_{j})}$.

When~$\RL$ misclassifies half of the data captured by~$p_i, p_{i+1}$, while $\OptimalRL$ correctly classifies them all, the difference between~$\RL$ and~$\OptimalRL$ would be maximized,
which provides an upper bound:
\begin{align}
\Obj(\RL, \x, \y) &= \Loss(\RL, \x, \y) + \Reg (H - 1) \nn \\
&\le \Loss(\OptimalRL, \x, \y) + \Supp(p_i, \x) + \Supp(p_{i+1}, \x) \nn \\ &~~~ -\frac{1}{2}[\Supp(p_i, \x) + \Supp(p_{i+1}, \x)] + \Reg(H - 1) \nn \\
&= \Obj(\OptimalRL, \x, \y) + \frac{1}{2}[\Supp(p_i, \x) + \Supp(p_{i+1}, \x)] - \Reg \nn \\
&= \OptimalObj + \frac{1}{2}[\Supp(p_i, \x) + \Supp(p_{i+1}, \x)] - \Reg
\label{eq:ub-i}
\end{align}
where~$\Supp(p_i, \x), \Supp(p_i, \x)$ is the normalized support of~$p_i, p_{i+1}$, defined in~\eqref{eq:support},
and the regularization `bonus' comes from the fact that~$\OptimalRL$ has one more leaf than~$\RL$.

Because $\OptimalRL$ is the optimal tree, we have ${\OptimalObj \le \Obj(\RL, \x, \y)}$, which combined with~\eqref{eq:ub-i} leads to~\eqref{eq:min-capture}. Therefore, for each child leaf pair $p_k, p_{k+1}$ of a split,
the sum of normalized supports of~$p_k, p_{k+1}$ should be no less than twice the regularization parameter, \ie $2\lambda$.
\end{proof}

\subsection{Proof of Theorem~\ref{thm:incre-min-capture-correct}}

\begin{proof}
Let ${\RL = (\Prefix', \Labels', \Default', \DefaultLabels', K',}$ ${ H')}$
be the tree derived from~$\OptimalRL$ by deleting a pair of leaves, ${p_i}$ and $p_{i+1}$, and adding the their parent leaf, $p_{j}$.
The discrepancy between~$\OptimalRL$ and~$\RL$ is the discrepancy between $(p_i, p_{i+1})$ and $p_j$:
$\Loss(\RL, \x, \y) - \Loss(\OptimalRL, \x, \y) = a_i,$ \nn
where $a_i$ is defined in~\eqref{eq:rule-correct}.
Therefore,
\begin{eqnarray}
\Obj(\RL, \x, \y) &=& \Loss(\RL, \x, \y) + \Reg (K - 1)
= \Loss(\OptimalRL, \x, \y) + a_i + \Reg(K - 1) \nn \\
\lefteqn{= \Obj(\OptimalRL, \x, \y) + a_i - \Reg \nn 
= \OptimalObj + a_i - \Reg.
\label{eq:ub-ii}}
\end{eqnarray}
This combined with
${\OptimalObj \le \Obj(\RL, \x, \y)}$ leads to~${\Reg \le a_i}$.
\end{proof}

\subsection{Proof of Theorem~\ref{thm:min-capture-correct}}

\begin{proof}
Let ${\RL = (\Prefix', \Labels', \Default', \DefaultLabels', K', }$ ${H')}$
be the tree derived from~$\OptimalRL$ by deleting a pair of leaves, ${p_i}$ with label $\q_i$ and $p_{i+1}$ with label $\q_{i+1}$, and adding the their parent leaf $p_{j}$ with label $\q_{j}$.
The discrepancy between~$\OptimalRL$ and~$\RL$ is the discrepancy between $p_i, p_{i+1}$ and $p_j$:
$\Loss(\RL, \x, \y) - \Loss(\OptimalRL, \x, \y) = a_i, \nn$
where we defined~$a_i$ in~\eqref{eq:rule-correct}.
According to Theorem~\ref{thm:incre-min-capture-correct}, $\Reg \le  a_i$ and
\begin{align}
\Reg \le & \frac{1}{N} \sum_{n=1}^N \{
  \Cap(x_n, p_i) \wedge \one [ \q_i = y_n ] \nn \\
  & + \Cap(x_n, p_{i+1}) \wedge \one [ \q_{i+1} = y_n ]  \nn \\
  & - \Cap(x_n, p_j) \wedge \one [ \q_j = y_n ] \}.
\end{align}
For any leaf $j$ and its two child leaves $i,i+1$, we always have
\begin{align*}
 \sum_{n=1}^N \Cap(x_n, p_i) \wedge \one [ \q_i = y_n ] \le 
   \sum_{n=1}^N \Cap(x_n, p_j) \wedge \one [ \q_j = y_n ], \nn \\
  \sum_{n=1}^N \Cap(x_n, p_{i+1}) \wedge \one [ \q_{i+1} = y_n ] \le 
  \sum_{n=1}^N \Cap(x_n, p_j) \wedge \one [ \q_j = y_n ]
\end{align*}
which indicates that $a_i \le \frac{1}{N}\sum_{n=1}^N \Cap(x_n,p_i)\wedge \one[\q_i = y_n]$ and $a_i \le \frac{1}{N}\sum_{n=1}^N \Cap(x_n,p_{i+1})\wedge \one[\q_{i+1} = y_n]$. Therefore,
\begin{align*}
  &\Reg \le \frac{1}{N}\sum_{n=1}^N \Cap(x_n,p_i)\wedge \one[\q_i = y_n], \nn \\
  &\Reg \le \frac{1}{N}\sum_{n=1}^N \Cap(x_n,p_{i+1})\wedge \one[\q_{i+1} = y_n].
\end{align*}
\end{proof}


%
%
%
%

\subsection{Proof of Proposition~\ref{prop:identical}}

\begin{proof}
Recall that the objective is ${\Obj(\RL, \x, \y) = \Loss(\RL, \x, \y) + \Reg H}$,
where the misclassification error~${\Loss(\RL, \x, \y)}$ is given by
\begin{align}
\Loss(\RL, \x, \y)
&= \frac{1}{N} \sum_{n=1}^N \sum_{k=1}^K \Cap(x_n, p_k) \wedge \one [\q_k \neq y_n]. \nn
\end{align}
Any particular tree uses a specific leaf, and therefore a single class label,
to classify all points within a set of equivalent points.
Thus, for a set of equivalent points~$u$, the tree~$\RL$ correctly classifies either
points that have the majority class label, or points that have the minority class label.
It follows that~$\RL$ misclassifies a number of points in~$u$ at least as great as
the number of points with the minority class label.
To translate this into a lower bound on~$\Loss(\RL, \x, \y)$,
we first sum over all sets of equivalent points, and then for each such set,
count differences between class labels and the minority class label of the set,
instead of counting mistakes:
\begin{align}
& \Loss(\RL, \x, \y) \nn \\
&= \frac{1}{N} \sum_{u=1}^U \sum_{n=1}^N \sum_{k=1}^K \Cap(x_n, p_k) \wedge \one [\q_k \neq y_n] \wedge
   \one [x_n \in e_u]  \nn \\
&\ge \frac{1}{N} \sum_{u=1}^U \sum_{n=1}^N \sum_{k=1}^K \Cap(x_n, p_k) \wedge \one [q_u = y_n] \wedge
   \one [x_n \in e_u].  \nn 
\end{align}
Next, because every datum must be captured by a leaf in the tree~$\RL$, $\sum_{k=1}^K \Cap(x_n, p_k)=1$.
\begin{align}
\Loss(\RL, \x, \y) &\ge \frac{1}{N} \sum_{u=1}^U \sum_{n=1}^N \one [ x_n \in e_u ]\, \one [ y_n = q_u ]
= \sum_{u=1}^U \theta(e_u), \nn
\end{align}
where the final equality applies the definition of~$\theta(e_u)$ in~\eqref{eq:theta}.
Therefore, ${\Obj(\RL, \x, \y) =}$ ${\Loss(\RL, \x, \y) + \Reg K}$ ${\ge \sum_{u=1}^U \theta(e_u) + \Reg K}$.
\end{proof}

\section{Ablation Experiments}\label{appendix:ablation}
We evaluate how much each of our bounds contributes to OSDT's performance and what effect the scheduling metric has on execution.
Table~\ref{tab:ablation} provides experimental statistics of total execution time, time to optimum, total number of trees evaluated,
number of trees evaluated to optimum, and memory consumption on the recidivism data set.
The first row is the full OSDT implementation, and the others are variants, each of which removes a specific bound.
While all the optimizations reduce the search space, the lookahead and equivalent points bounds are, by far, the most significant, reducing time to optimum by at least two orders of magnitude and reducing memory consumption by more than one order of magnitude.
In our experiment, although the scheduling policy has a smaller effect, it is still significant -- curiosity is a factor of two faster than the objective function and consumes 25\% of the memory consumed when using the objective function for scheduling. All other scheduling policies, \ie the lower bound and the entropy, are significantly worse.

\begin{table}[t!]
\centering
Per-bound performance improvement (ProPublica data set) \\
\vspace{1mm}
\begin{tabular}{l | c | c | c }
& Total time & Slow- & Time to \\
Algorithm variant & (s) & down & optimum (s) \\
\hline
All bounds & 14.70 & --- & 1.01 \\
No support bound & 17.11 & 1.16$\times$ & 1.09 \\
No incremental accuracy bound & 30.16 & 2.05$\times$ & 1.13 \\
No accuracy bound & 31.83 & 2.17$\times$ & 1.23 \\
No lookahead bound & 31721 & 2157.89$\times$ & 187.18 \\
No equivalent points bound & $>$12475 & $>$848$\times$ & --- \\
\hline
\end{tabular}
\begin{tabular}{l | c | c | c}
\hline
 & Total \#trees & \#trees &  Mem \\
Algorithm variant & evaluated & to optimum &~ (GB) \\
\hline
All bounds & 232402 & 16001 & .08 \\
No support bound & 279763 & 18880 & .08 \\
No incremental accuracy bound & 546402 & 21686 & .08 \\
No accuracy bound & 475691 & 19676 & .09 \\
No lookahead bound & 284651888 & 3078274 & 10 \\
No equivalent points bound & $>$77000000 & --- & $>$64 \\
\end{tabular}
\vspace{2mm}
\caption{Per-bound performance improvement, for the ProPublica data set
(${\Reg = 0.005}$, cold start, using curiosity).
The columns report the total execution time,
time to optimum, total number of trees evaluated,
number of trees evaluated to optimum,
and memory consumption.
The first row shows our algorithm with all bounds; subsequent rows show variants
that each remove a specific bound 
(one bound at a time, not cumulative).
%
All rows except the last one represent a complete execution, \ie until the queue is empty.
For the last row (`No equivalent points bound'), the algorithm was terminated after running out of the memory (about $\sim$64GB RAM). \vspace*{-10pt}
%
%
}
\label{tab:ablation}
\end{table}

\section{Regularized BinOCT}
\label{sec:rbinoct}

Since BinOCT always produces complete binary trees of given depth, we add a regularization term to the objective function of BinOCT. In this way, regularized BinOCT (RBinOCT) can generate the same trees as OSDT. Following the notation of \cite{verwer2019learning}, we provide the formulation of regularized BinOCT:

\begin{align}
\text{min} \sum_{l,c} e_{l,c}+\Reg \sum_{l}\alpha_l \textrm{ s.t. }
\label{eq:binoct-obj}
\end{align}
\vskip -5mm
\begin{align}
&\forall n\;\; \sum_{f} f_{n,f}=1 \nn \\ 
&\forall r\;\; \sum_{l} l_{r,l}=1 \nn \\
&\forall l\;\; \sum_{c} p_{l,c}=1 \nn \\
&\forall {n,f,b\in bin(f)}\;\;\; M\cdot f_{n,f}+\sum_{r\in lr(b)}\sum_{l\in ll(n)}l_{r,l}+\sum_{t\in tl(b)}M\cdot t_{n,t}-\sum_{t\in tl(b)}M \le M \nn \\
&\forall {n,f,b\in bin(f)}\;\;\; M'\cdot f_{n,f}+\sum_{r\in rr(b)}\sum_{l\in rl(n)}l_{r,l}-\sum_{t\in tl(b)}M'\cdot t_{n,t} \le M' \nn \\
&\forall  {n,f}\;\;\; M''\cdot f_{n,f}+\sum_{\text{max}_{t(f)}<f(r)}\sum_{l\in ll(n)}l_{r,l}+\sum_{f(r)<\text{min}_{t(f)}}\sum_{l\in rl(n)}l_{r,l}\le M" \nn \\
&\forall\; {l,c}\;\; \sum_{r:C_r=c}l_{r,l}-M'''\cdot p_l \le e_{l,c} \nn \\
&\forall \; {l} \sum_{r}l_{r,l}\le R\cdot \alpha_l,
\label{eq:binoct-newcons}
\end{align}
where $1\leq n \leq N$, $1\leq f \leq F$, $1\leq r\leq R$, $1\leq l \leq L$, $1\leq c\leq C$. Variables $\alpha_l$, $f_{n,f}$, $t_{n,t}$, $p_{l,c}$ are binary, and $e_{l,c}$ and $l_{r,l}$ are continuous (see Table \ref{tab:rbinoct-notation}).
Compared with BinOCT, we add a penalization term $\Reg \sum_{l}\alpha_l$ to the objective function \eqref{eq:binoct-obj} and a new constraint \eqref{eq:binoct-newcons},
where $\Reg$ is the same as that of OSDT, and $\alpha_l=1$ if leaf $l$ is not empty and $\alpha_l=0$ if leaf $l$ is empty. All the rest of the constraints are the same as those of BinOCT. We use the same notation as in the original BinOCT formulation \cite{verwer2019learning}.

\begin{table}[]
\centering
\vspace{1mm}
\begin{tabular}{l | l | l }
\hline
Notation & Type & Definition \\
\hline
$n$ & index & internal (non-leaf) node in the tree, $1\le n\le N$\\
$l$ & index & leaf of the tree, $1\le l\le L$\\
$r$ & index & row in the training data, $1\le r \le R$\\
$f$ & index & feature in the training data, $1\le f\le F$\\
$c$ & index & class in the training data, $1\le c \le C$\\
\hline
$bin(f)$ & set & feature $f$'s binary encoding ranges\\
$lr(b)$ & set & rows with values in $b$'s lower range, $b\in bin(f)$\\
$ur(b)$ & set & rows with values in $b$'s upper range\\
$tl(b)$ & set & $t_{n,t}$ variables for $b$'s range\\
$ll(n)$ & set & node $n$'s leaves under the left branch\\
$rl(n)$ & set & node $n$'s leaves under the right branch\\
\hline
$K$ & constant & the tree's depth\\
$N=2^K-1$ & constant & the number of internal nodes (not leaves)\\
$L=2^K$ & constant & the number of leaf nodes\\
$F$ & constant & the number of features\\
$C$ & constant & the number of classes\\
$R$ & constant & the number of training data rows\\
$T$ & constant & the total number of threshold values\\
$T_f$ & constant & the number of threshold values for feature $f$\\
$T_{max}$ & constant & maximum of $T_f$ over all features $f$\\
$V^f_{r}$ & constant & feature $f$'s value in training data row $r$\\
$C_r$ & constant & class value in training data row $r$\\
$\min_t(f)$ & constant & feature $f$'s minimum threshold value\\
$\max_t(f)$ & constant & feature $f$'s maximum threshold value\\
$M$ & constant & minimized big-M value\\
\hline
$f_{n,f}$ & binary & node $n$'s selected feature $f$\\
$t_{n,t}$ & binary & node $n$'s selected threshold $t$\\
$p_{l,c}$ & binary & leaf $l$'s selected prediction class $c$\\
$\alpha_{l}$ & binary & $\alpha_{l}=1$ if leaf $l$ is not empty\\
\hline
$e_{l,c}$ & continuous & error for rows with class $c$ in leaf $l$\\
$l_{r,l}$ & continuous & row $r$ reaches leaf $l$\\
$\lambda$ & continuous & the regularization parameter\\
\hline
\end{tabular}
\caption{Summary of the notations used in RBinOCT.}
\vspace{5mm}
\label{tab:rbinoct-notation}
\end{table}

Figure \ref{fig:tree-regularized} shows the trees generated by regularized BinOCT and OSDT when using the same regularization parameter $\Reg=0.007$. Although the two algorithms produce the same optimal trees, regularized BinOCT is much slower than OSDT. In our experiment, it took only 3.390 seconds to run the OSDT algorithm to optimality, while the regularized BinOCT algorithm had not finished running after 1 hour.

\begin{figure}[t!]
    \centering
    \begin{subfigure}[b]{0.3\textwidth}
        \centering
        \includegraphics[trim={0mm 3mm 0mm 3mm}, width=\textwidth]{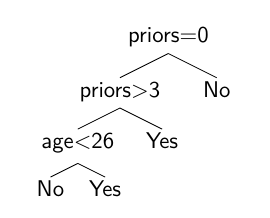}
        \caption[]%
        {{\footnotesize RBinOCT \\ (accuracy: 66.223\%)}}
    \end{subfigure}
    \begin{subfigure}[b]{0.3\textwidth} 
        \centering 
        \includegraphics[trim={0mm 3mm 0mm 3mm}, width=\textwidth]{figs/tree-compas-rbinoct.pdf}
        \caption[]%
        {{\footnotesize OSDT \\ (accuracy: 66.223\%)}}
    \end{subfigure}
\caption{The decision trees generated by Regularized BinOCT and OSDT on the Monk1 dataset. The two trees are exactly the same, but regularized BinOCT is much slower in producing this tree.
}
\label{fig:tree-regularized}
\end{figure}

In Figure \ref{fig:compare_traces}, we provide execution traces of OSDT and RBinOCT. OSDT converges much faster than RBinOCT. For some datasets, \ie FICO and Monk1, the total execution time of RBinOCT was several times longer than that of OSDT.

\begin{figure*}[t!]
    \centering
    \begin{subfigure}[b]{0.48\textwidth}
        \centering
        \includegraphics[trim={0mm 10mm 0mm 12mm}, width=\textwidth]{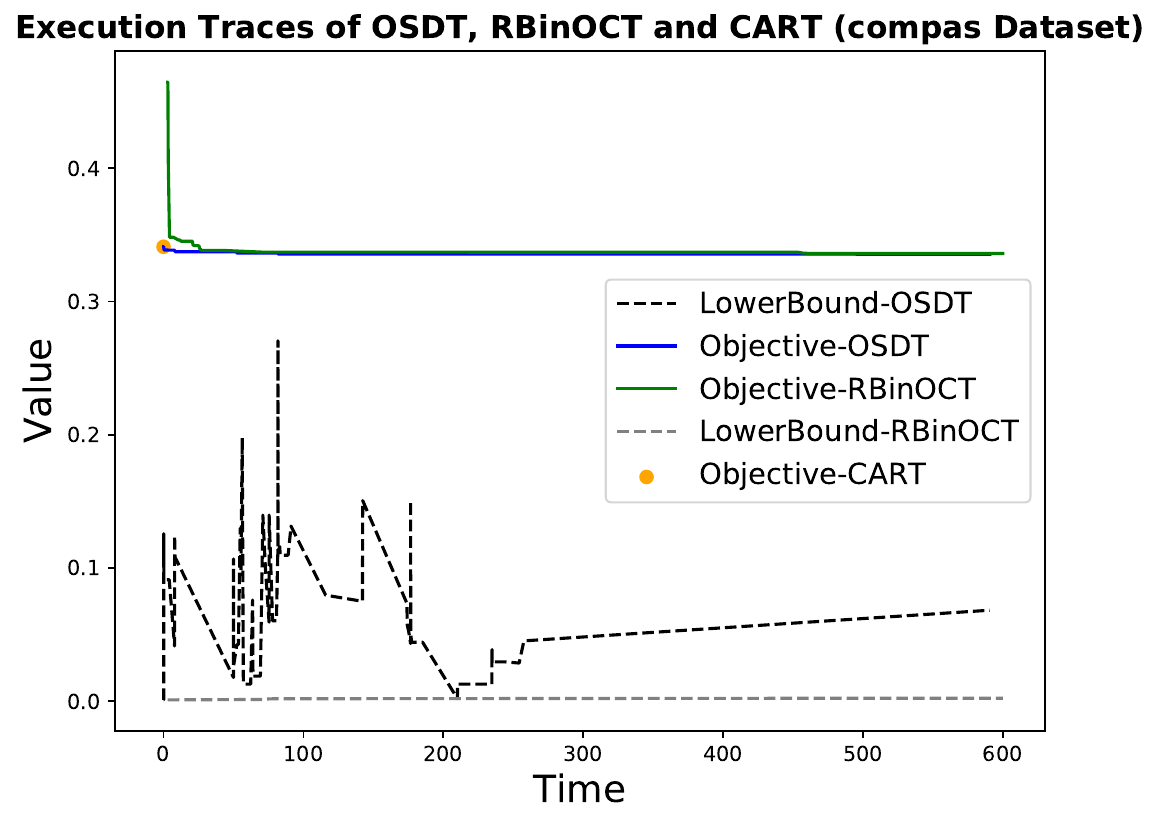}
        \label{fig:COMPAS-trace}
    \end{subfigure}
    \begin{subfigure}[b]{0.47\textwidth}  
        \centering 
        \includegraphics[trim={0mm 10mm 0mm 12mm}, width=\textwidth]{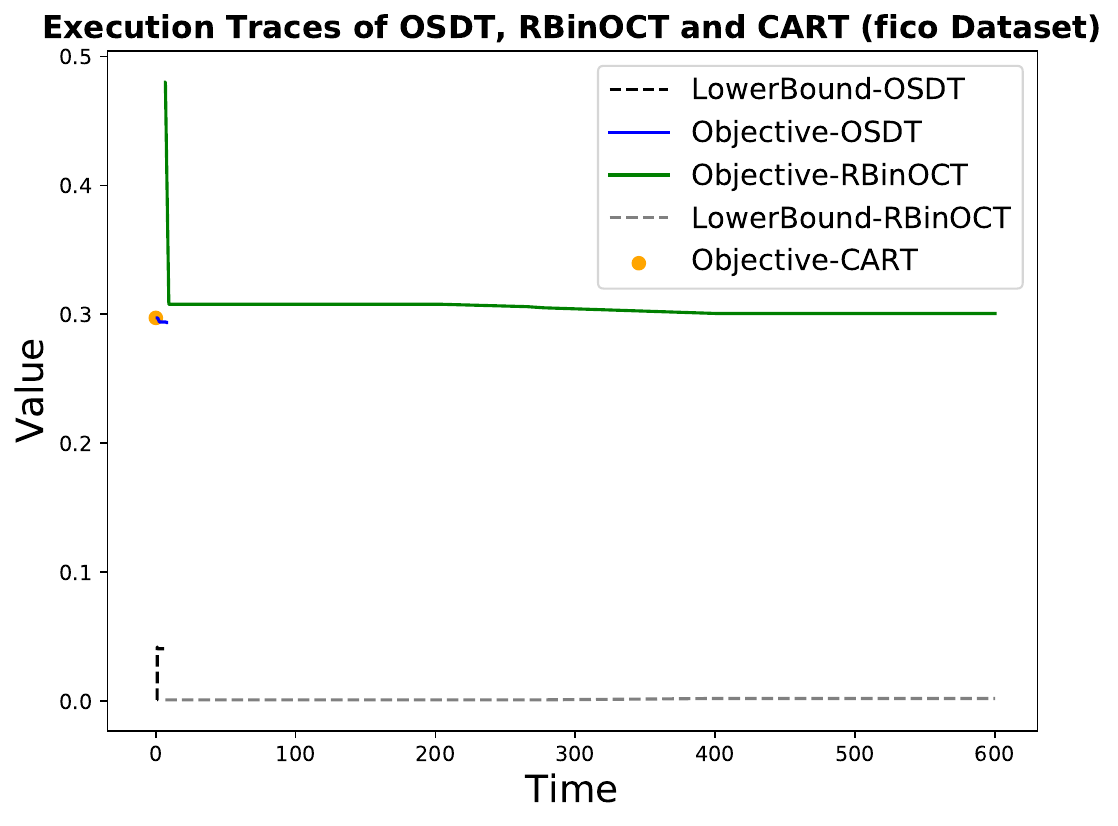}
        \label{fig:FICO-trace}
    \end{subfigure}
    \vskip\baselineskip
    \begin{subfigure}[b]{0.48\textwidth}   
        \centering 
        \includegraphics[trim={0mm 10mm 0mm 12mm}, width=\textwidth]{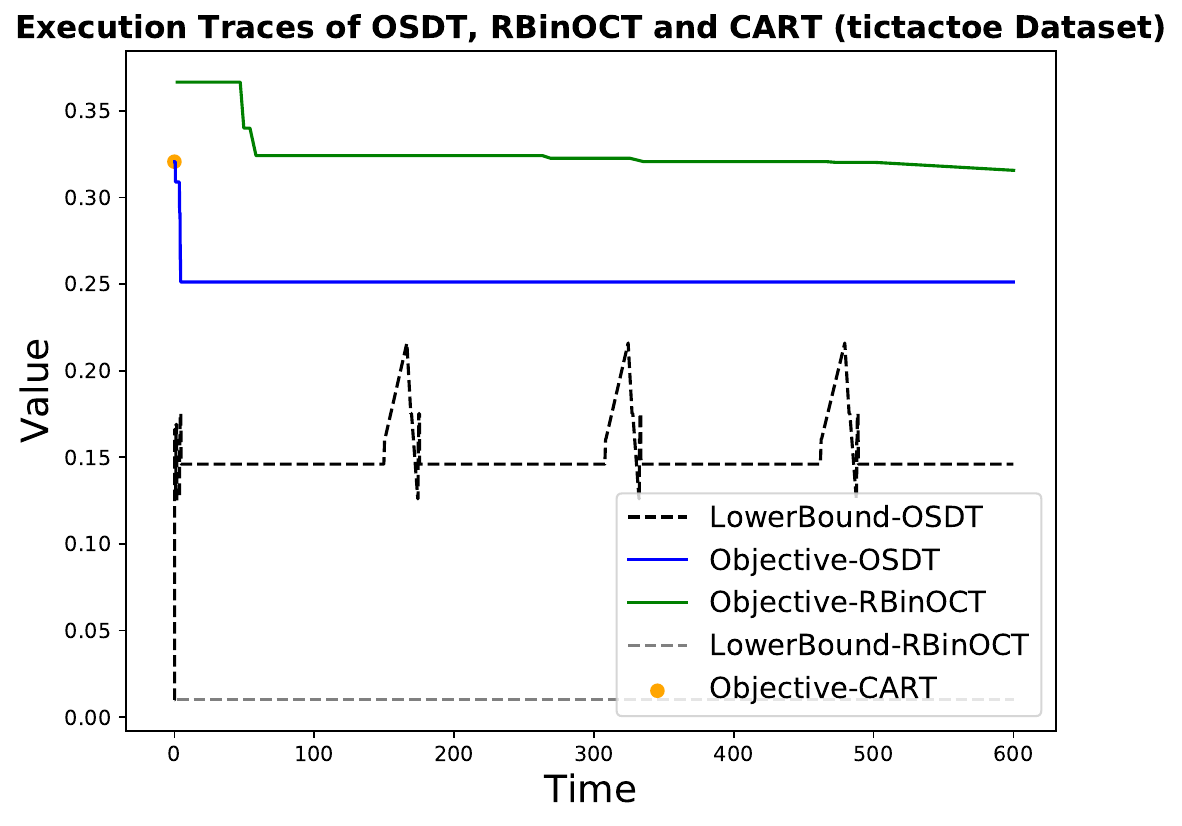}
        \label{fig:tictactoe-trace}
    \end{subfigure}
    \begin{subfigure}[b]{0.465\textwidth}   
        \centering 
         \includegraphics[trim={0mm 10mm 0mm 12mm}, width=\textwidth]{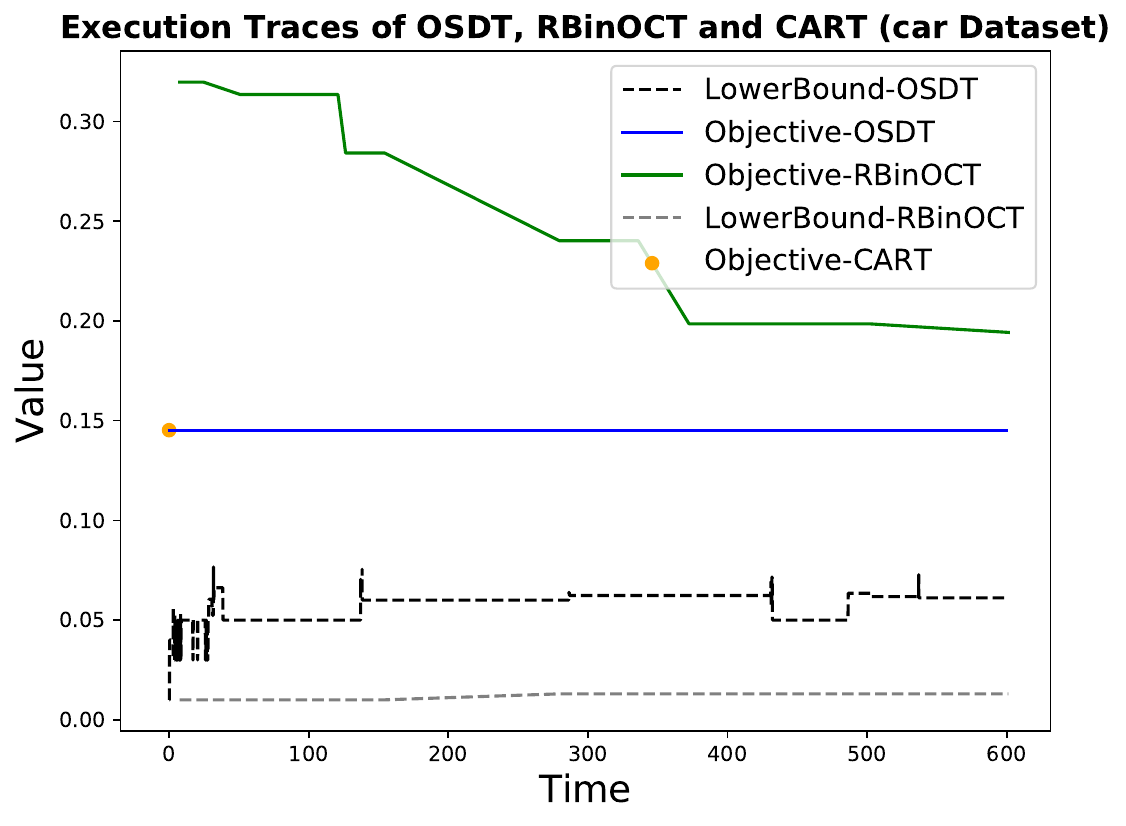}
        \label{fig:Car-trace}
    \end{subfigure}
    \vskip\baselineskip
    \begin{subfigure}[b]{0.48\textwidth}   
        \centering 
        \includegraphics[trim={0mm 10mm 0mm 12mm}, width=\textwidth]{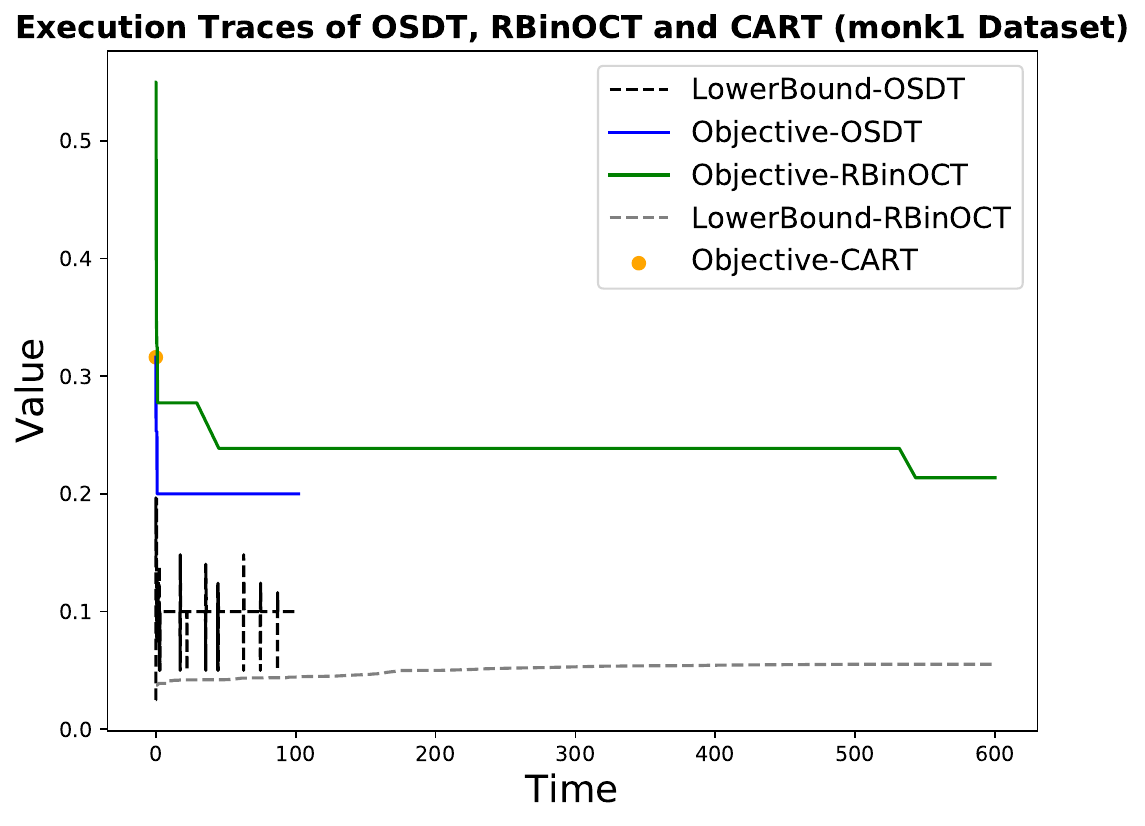}
        \label{fig:Monk1-trace}
    \end{subfigure}
    \begin{subfigure}[b]{0.48\textwidth}   
        \centering 
         \includegraphics[trim={0mm 10mm 0mm 12mm}, width=\textwidth]{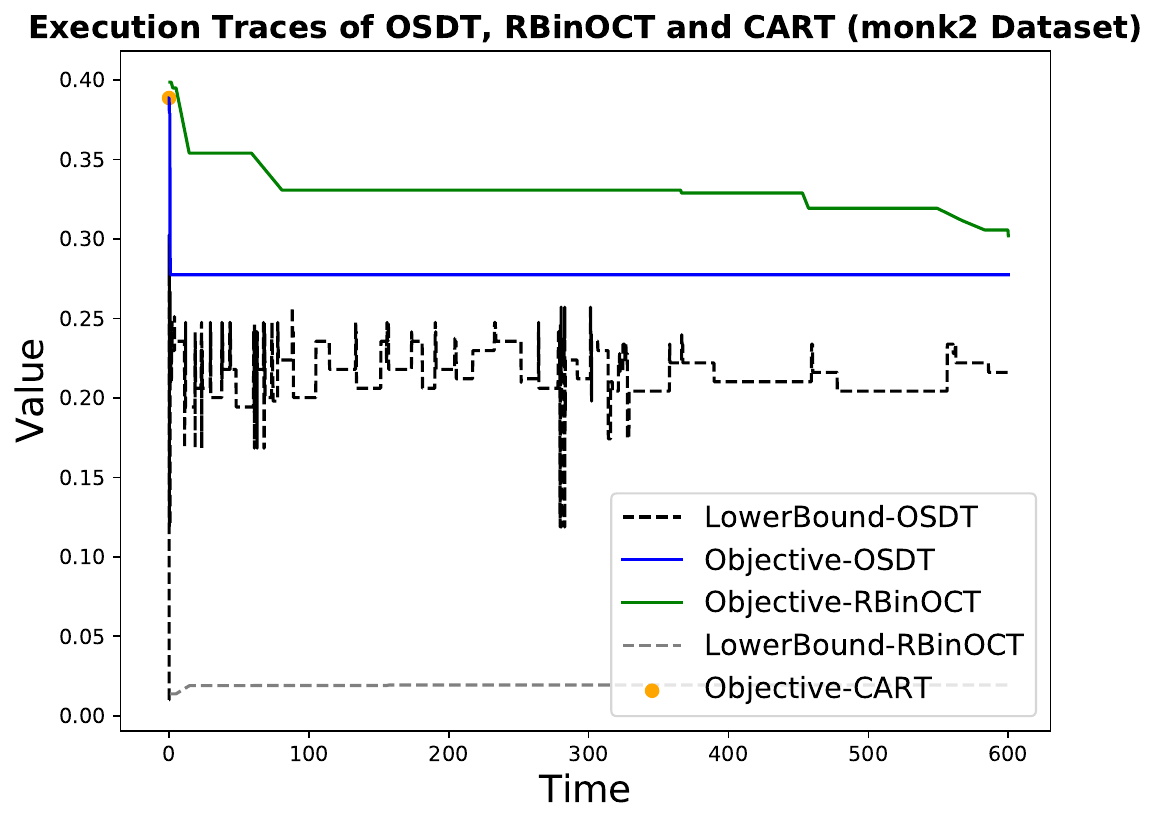}
        \label{fig:Monk2-trace}
    \end{subfigure}
    \vskip\baselineskip
    \begin{subfigure}[b]{0.48\textwidth}   
        \centering 
        \includegraphics[trim={0mm 10mm 0mm 12mm}, width=\textwidth]{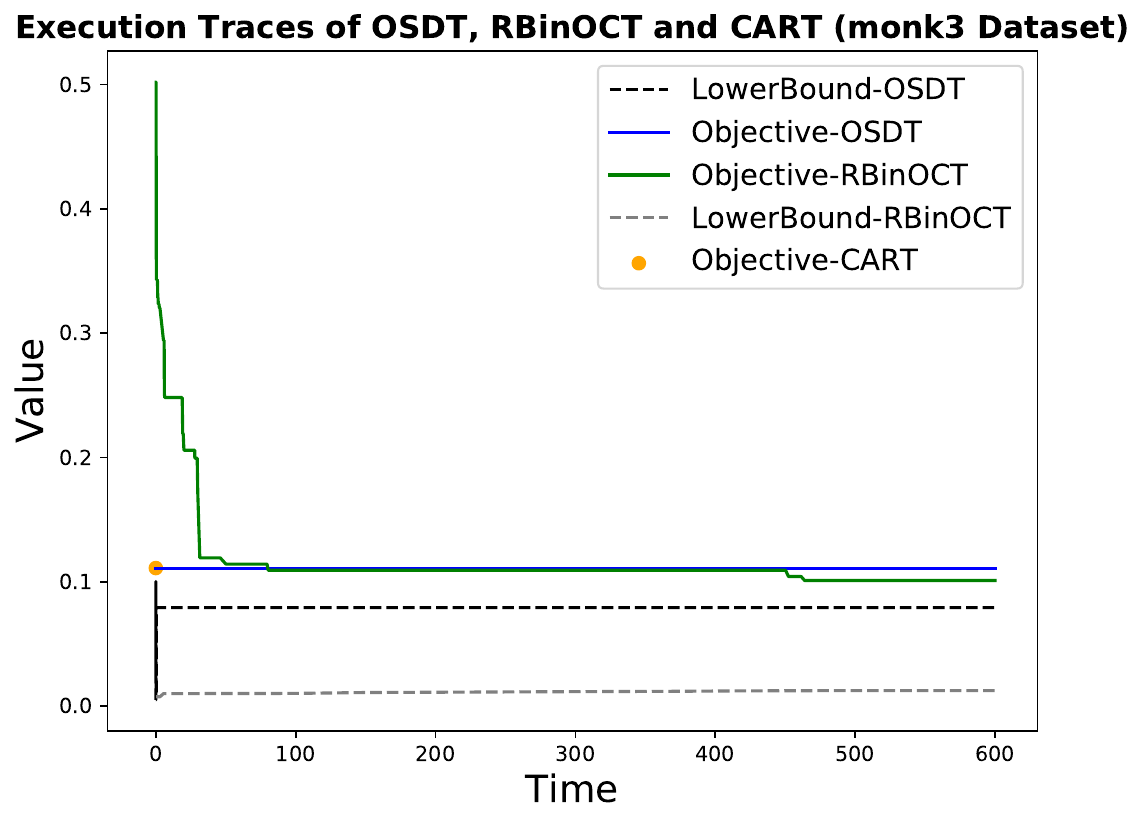}
        \label{fig:Monk3-trace}
    \end{subfigure}
    \vskip -3mm
    \caption{Execution traces of OSDT and regularized BinOCT. {OSDT converges much more quickly than RBinOCT.} %
    } 
    \label{fig:compare_traces}
\end{figure*}

\begin{arxiv}

\begin{table}[t!]
\centering
Performance of different priority metrics (ProPublica data set) \\
\vspace{1mm}
\begin{tabular}{l | c | c | c }
& Total time & Slow- & Time to \\
Priority metric & (s) & down & optimum (s) \\
\hline
Curiosity & 96 & --- & 25 \\
Objective & 130 & 1.35$\times$ & 57 \\
Lower bound & 186 & 1.93$\times$ & 175 \\
Entropy & 671 & 6.98$\times$ & 582 \\
Gini & 646 & 6.72$\times$ & 551 \\
\hline
\end{tabular}

\begin{tabular}{l | c | c | c}
\hline
 & Total \#trees & \#trees &  Memory Con-\\
Priority metric & evaluated & to optimum &~ sumption (GB) \\
\hline
Curiosity & 241306 & 18195 & .08 \\
Objective & 635449 & 423627 & .30 \\
Lower bound & 1714011 & 1713709 & .78 \\
Entropy & 5381467 & 5332413 & 4.90 \\
Gini & 5381001 & 5331925 & 4.90 \\
\end{tabular}
\vspace{4mm}
\caption{Performance of different priority policies, for the ProPublica data set
(${\Reg = 0.005}$, cold start).
The columns report the total execution time, factor slower,
time to optimum, total number of trees evaluated,
number of trees evaluated to optimum,
and memory consumption.
The first row shows our algorithm with curiosity (\ref{eq:curio}) as the priority queue metric; subsequent rows show variants
that use other policies.
All rows represent complete executions that certify optimality.
%
%
}
\vspace{4mm}
\label{tab:scheduling}
\end{table}
\end{arxiv}

\section{CART}\label{appendix:cart}

We provide the trees generated by CART on COMPAS, Tic-Tac-Toe, and Monk1 datasets. With the same number of leaves as their OSDT counterparts (Figure~\ref{fig:tree-compas-5}, Figure~\ref{fig:tree-monk1}, Figure~\ref{fig:tree-tictactoe}), these CART trees perform much worse than the OSDT ones.

\begin{figure}[t!]
\centering
    \centering
    \begin{subfigure}[b]{0.4\textwidth} 
        \centering 
        \includegraphics[trim={0mm 20mm 30mm 0mm}, width=\textwidth]{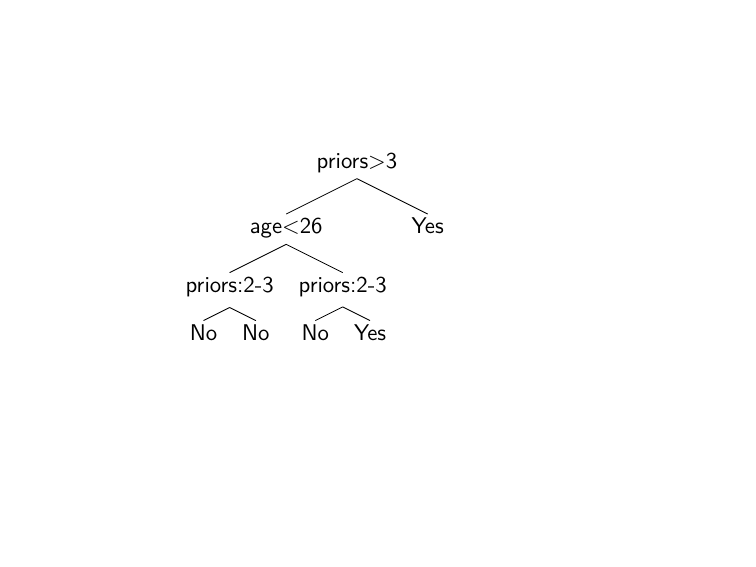}
        \caption[]%
        {{\footnotesize COMPAS \\ (accuracy: 66.135\%)}}
        \label{fig:tree-compas-cart}
    \end{subfigure}
    \begin{subfigure}[b]{0.4\textwidth}
        \centering
        \includegraphics[trim={6mm 15mm 10mm 0mm}, width=\textwidth]{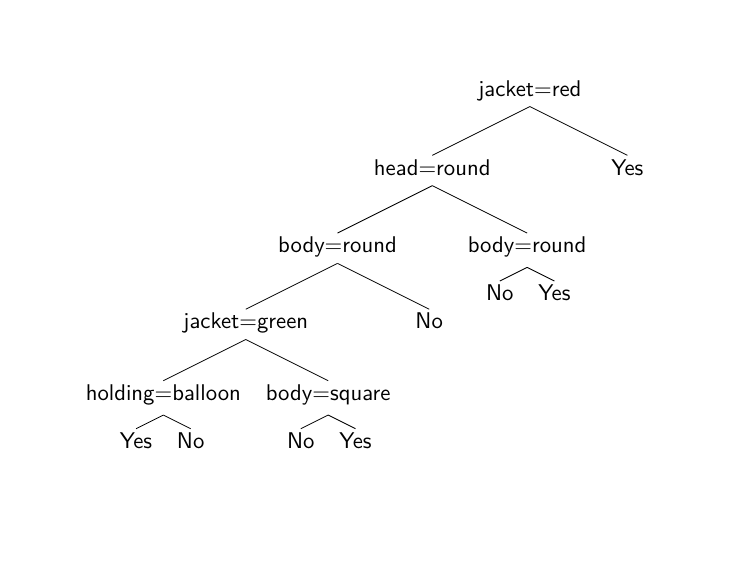}
        \caption[]%
        {{\footnotesize MONK1 \\ (accuracy: 91.935\%)}}
        \label{fig:tree-monk1-cart}
    \end{subfigure}
    \begin{subfigure}[b]{0.35\textwidth} 
        \centering 
        \includegraphics[trim={0mm 15mm 30mm 0mm}, width=\textwidth]{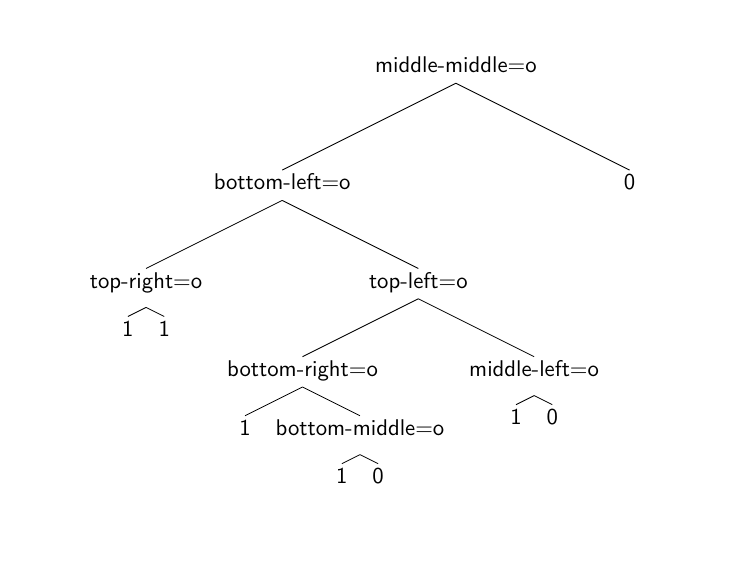}
        \caption[]%
        {{\footnotesize Tic-Tac-Toe \\ (accuracy: 76.513\%)}}
        \label{fig:tree-tictactoe-cart}
    \end{subfigure}
\caption{Decision trees generated by CART on COMPAS, Tic-Tac-Toe, and Monk1 datasets. They are all inferior to these trees produced by OSDT as shown in Section \ref{sec:experiments}.}
\label{fig:cart}
\vskip -5mm
\end{figure}

\section{Cross-validation Experiments}\label{appendix:cross-valid}
The difference between training and test error is probabilistic and depends on the number of observations in both training and test sets, as well as the complexity of the model class. The best learning-theoretic bound on test error occurs when the training error is minimized for each model class, which in our case, is the maximum number of leaves in the tree (the level of sparsity). By adjusting the regularization parameter throughout its full range, OSDT will find the most accurate tree for each given sparsity level. Figures \ref{fig:cv-compas}-\ref{fig:cv-monk3} show the training and test results for each of the datasets and for each fold. As indicated by theory, higher training accuacy for the same level of sparsity tends to yield higher test accuracy in general, but not always. There are some cases, like the car dataset, where OSDT's almost-uniformly-higher training accuracy leads to higher test accuracy, and other cases where all methods perform the same. In the case where all methods perform the same, OSDT provides a certificate of optimality showing that no better training performance is possible for the same level of sparsity.

\begin{figure*}[t!]
    \centering
    \begin{subfigure}[b]{0.4\textwidth}
        \centering
        \includegraphics[trim={0mm 12mm 0mm 15mm}, width=\textwidth]{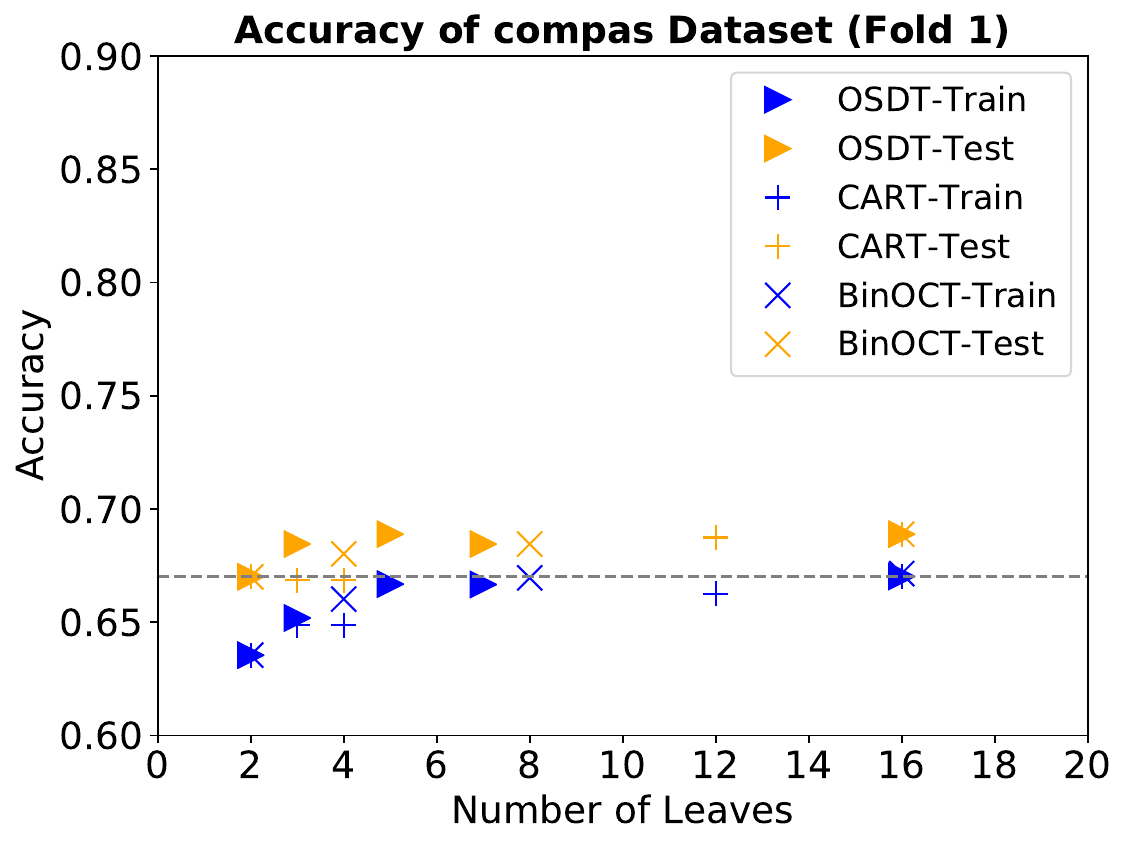}
        \label{fig:COMPAS1}
    \end{subfigure}
    \begin{subfigure}[b]{0.4\textwidth}
        \centering
        \includegraphics[trim={0mm 12mm 0mm 15mm}, width=\textwidth]{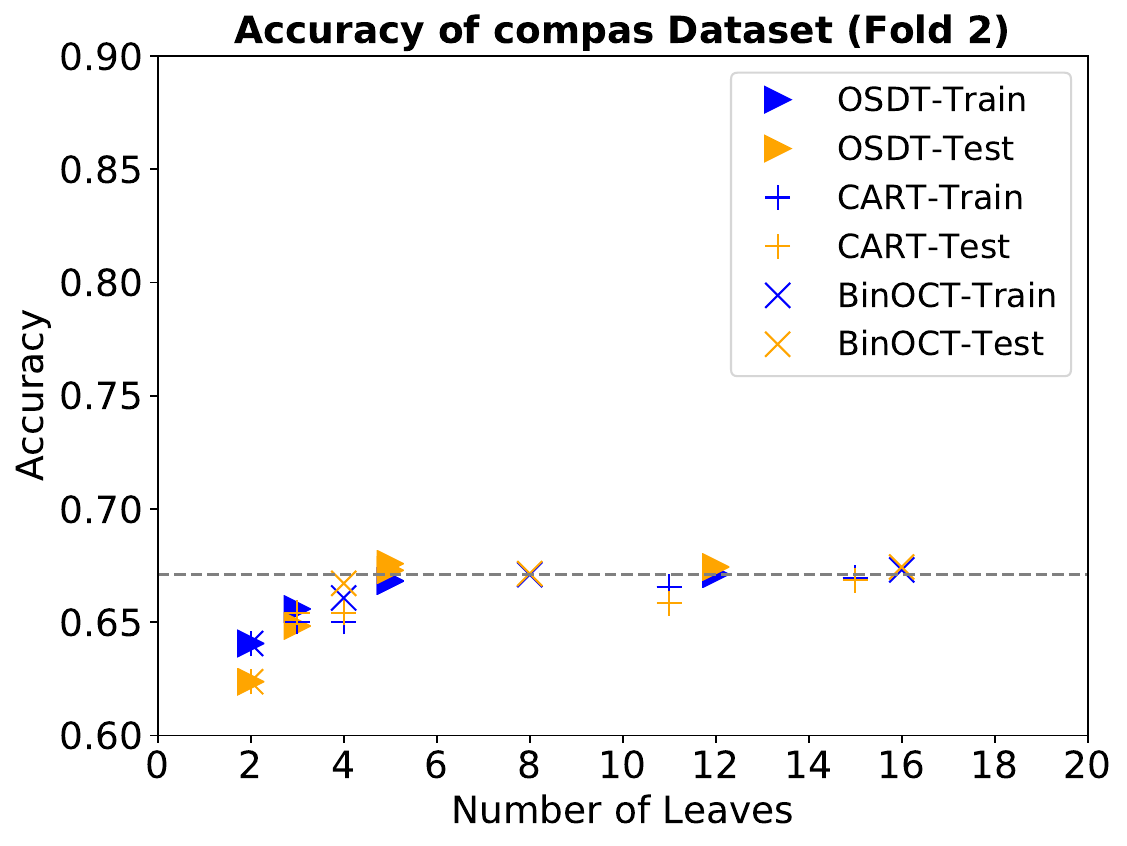}
        \label{fig:COMPAS2}
    \end{subfigure}
    \vskip\baselineskip
    \begin{subfigure}[b]{0.4\textwidth}
        \centering
        \includegraphics[trim={0mm 12mm 0mm 15mm}, width=\textwidth]{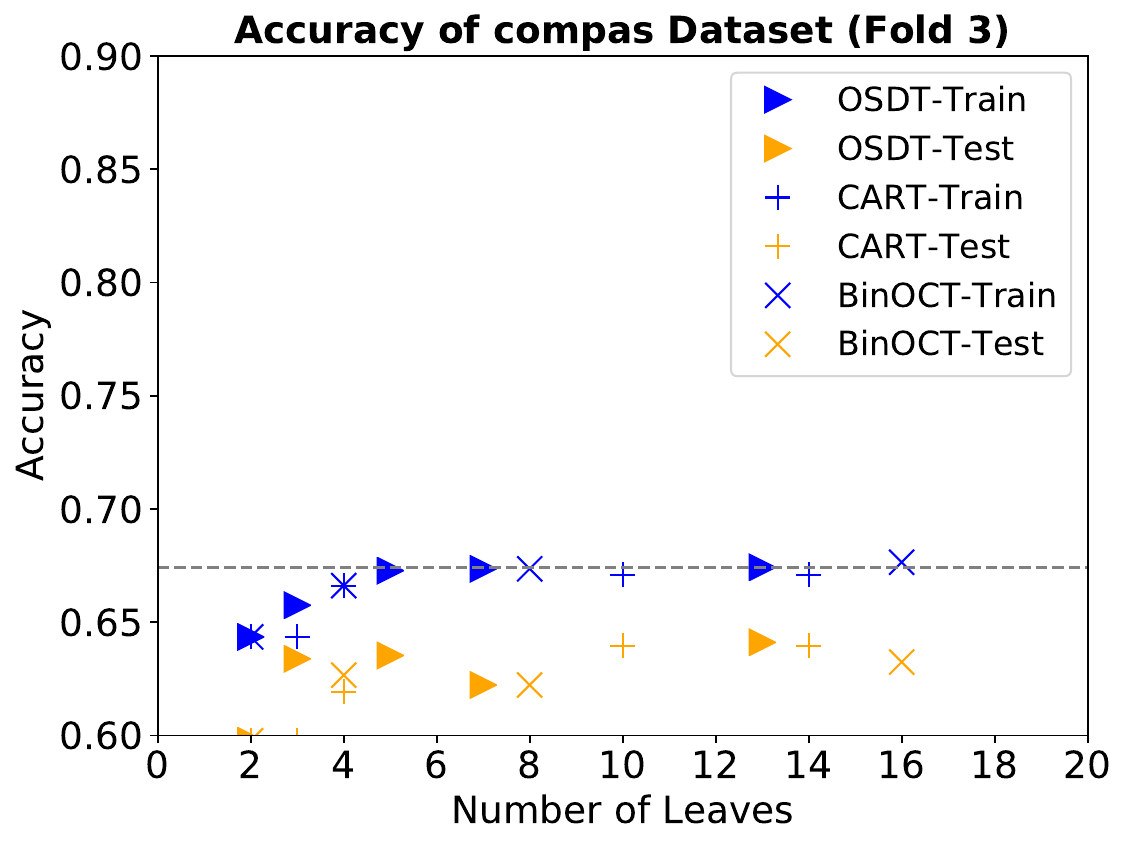}
        \label{fig:COMPAS3}
    \end{subfigure}
    \begin{subfigure}[b]{0.4\textwidth}
        \centering
        \includegraphics[trim={0mm 12mm 0mm 15mm}, width=\textwidth]{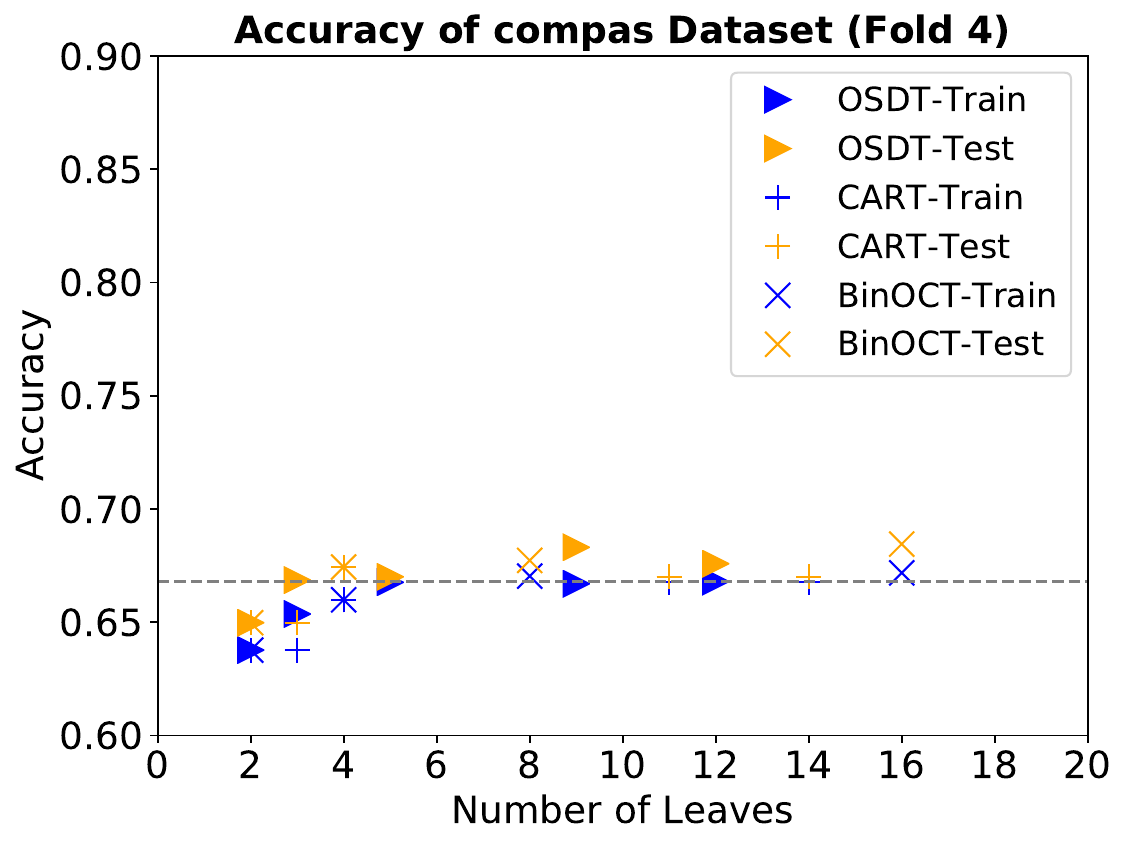}
        \label{fig:COMPAS4}
    \end{subfigure}
    \vskip\baselineskip
    \begin{subfigure}[b]{0.4\textwidth}
        \centering
        \includegraphics[trim={0mm 12mm 0mm 15mm}, width=\textwidth]{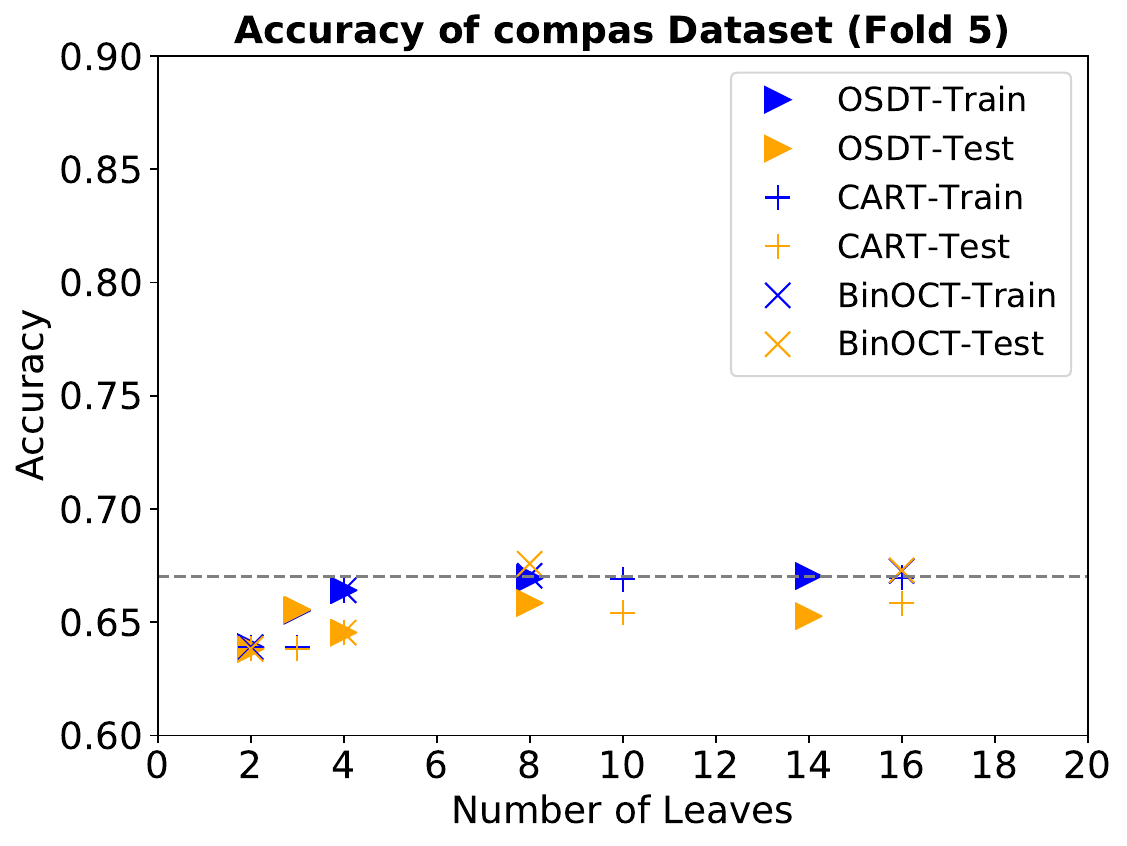}
        \label{fig:COMPAS5}
    \end{subfigure}
    \begin{subfigure}[b]{0.4\textwidth}
        \centering
        \includegraphics[trim={0mm 12mm 0mm 15mm}, width=\textwidth]{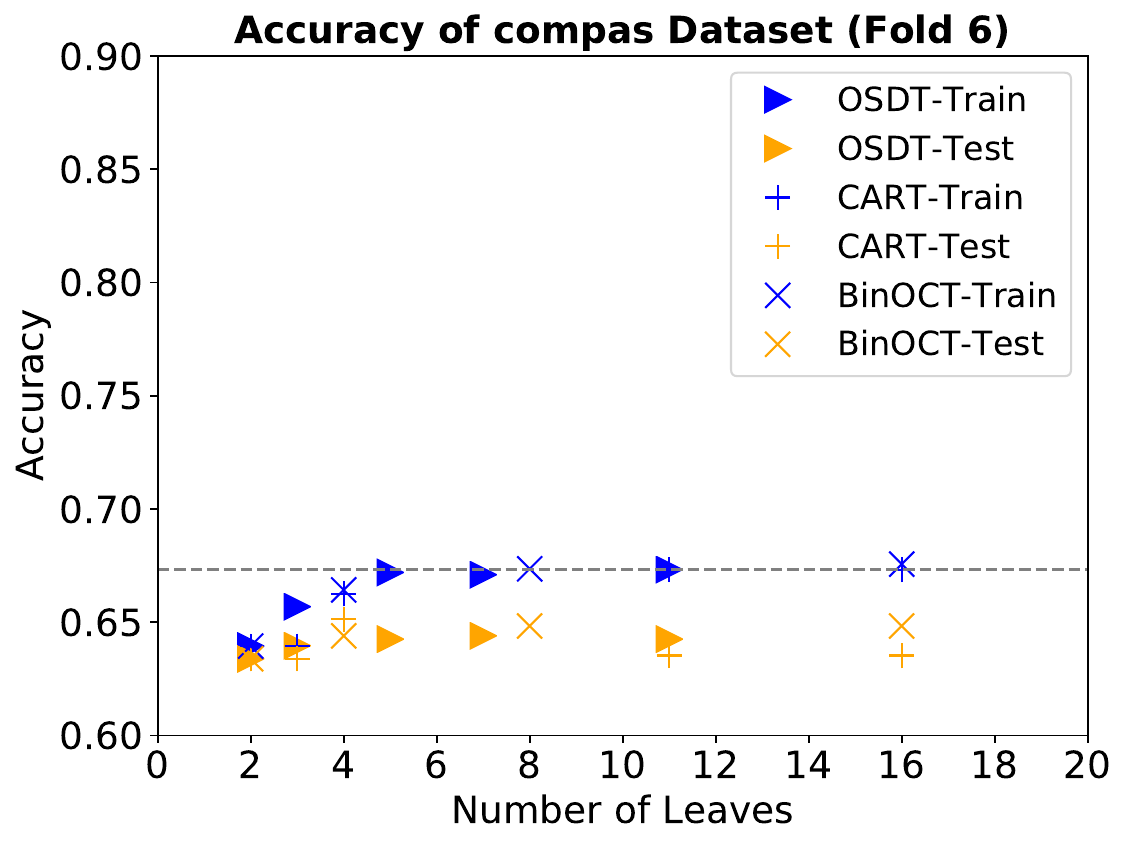}
        \label{fig:COMPAS6}
    \end{subfigure}
    \vskip\baselineskip
    \begin{subfigure}[b]{0.4\textwidth}
        \centering
        \includegraphics[trim={0mm 12mm 0mm 15mm}, width=\textwidth]{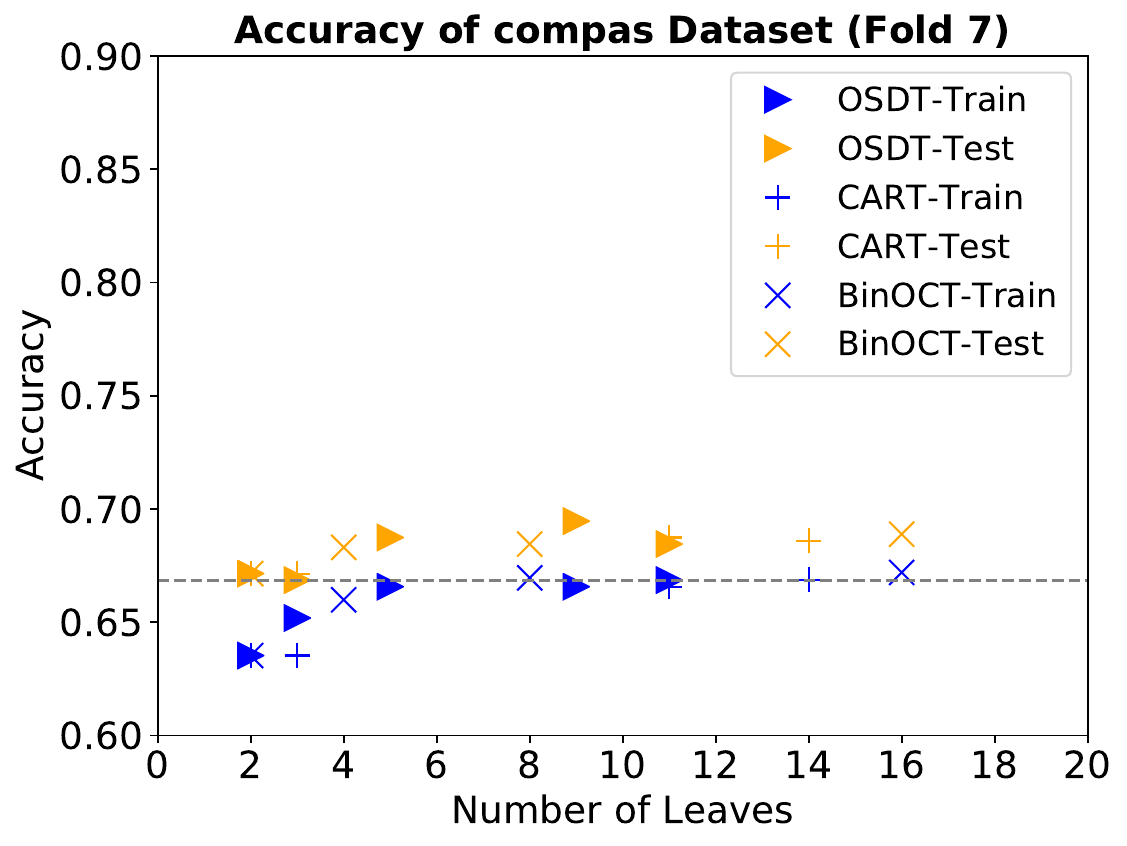}
        \label{fig:COMPAS7}
    \end{subfigure}
    \begin{subfigure}[b]{0.4\textwidth}
        \centering
        \includegraphics[trim={0mm 12mm 0mm 15mm}, width=\textwidth]{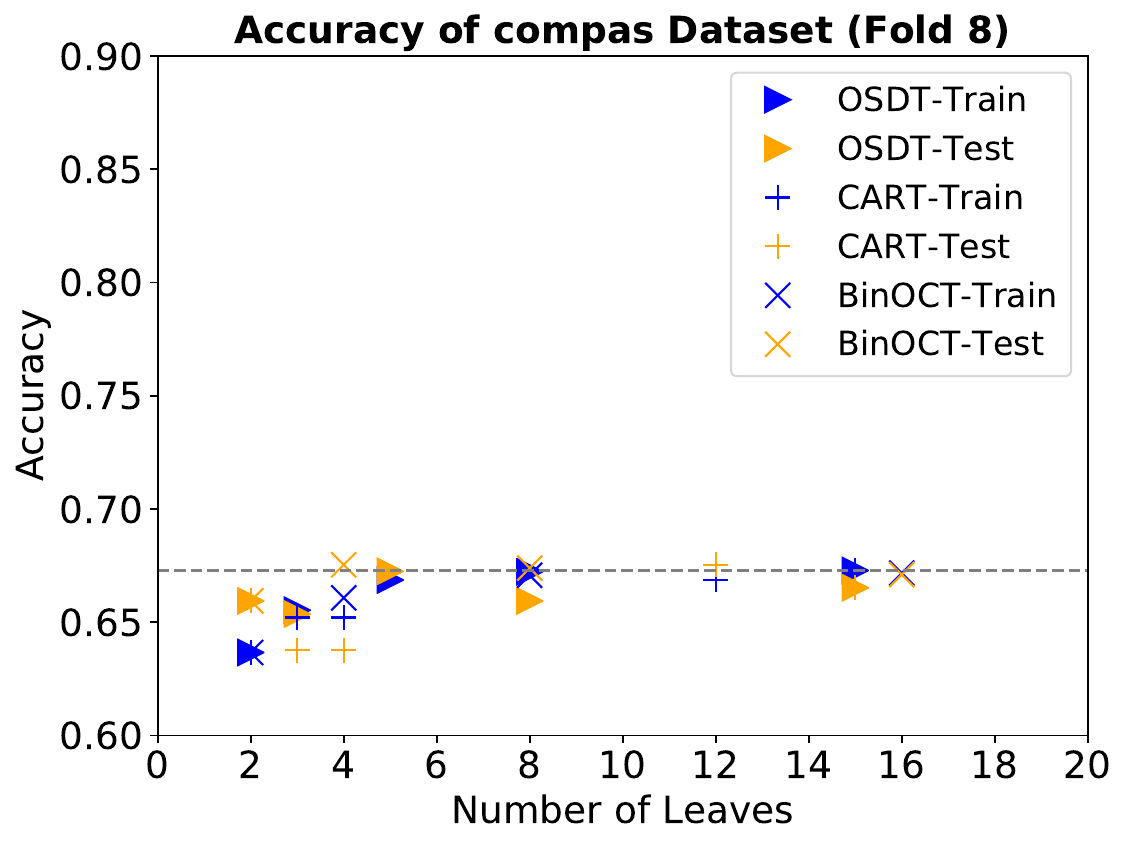}
        \label{fig:COMPAS8}
    \end{subfigure}
    \vskip\baselineskip
    \begin{subfigure}[b]{0.4\textwidth}
        \centering
        \includegraphics[trim={0mm 12mm 0mm 15mm}, width=\textwidth]{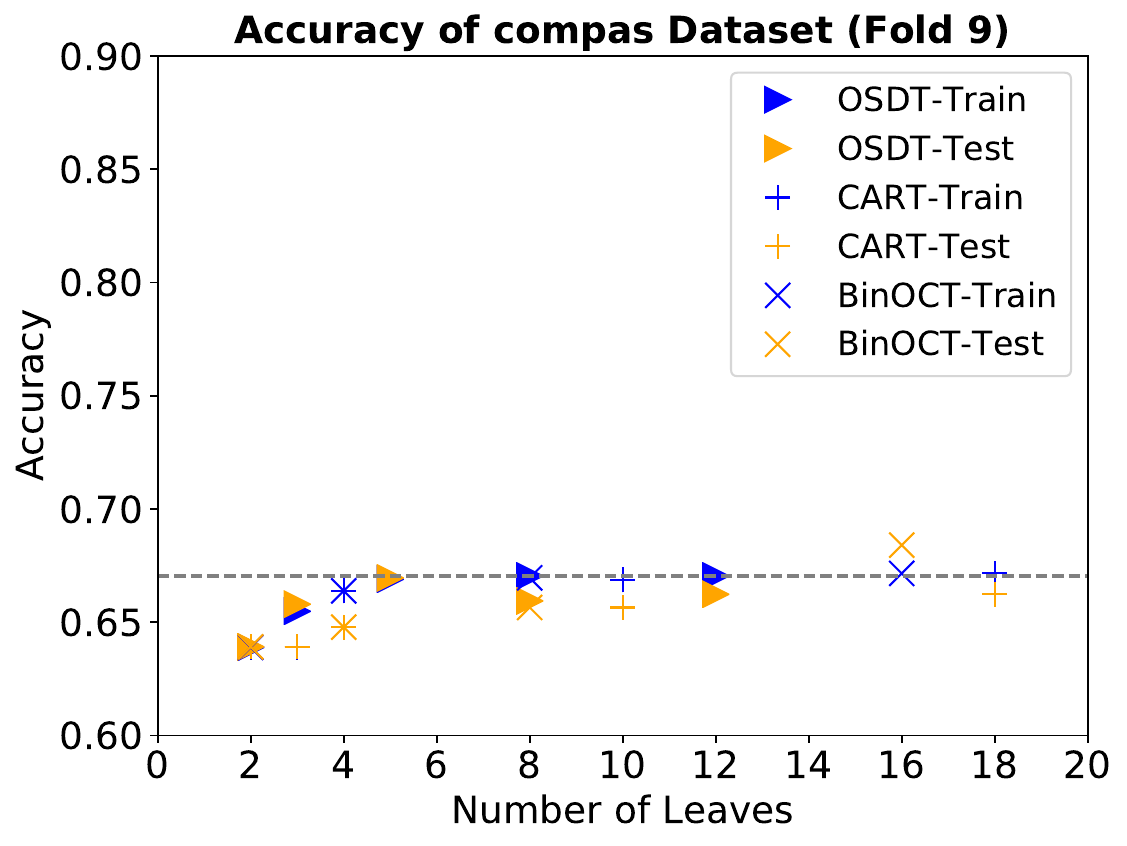}
        \label{fig:COMPAS9}
    \end{subfigure}
    \begin{subfigure}[b]{0.4\textwidth}
        \centering
        \includegraphics[trim={0mm 12mm 0mm 15mm}, width=\textwidth]{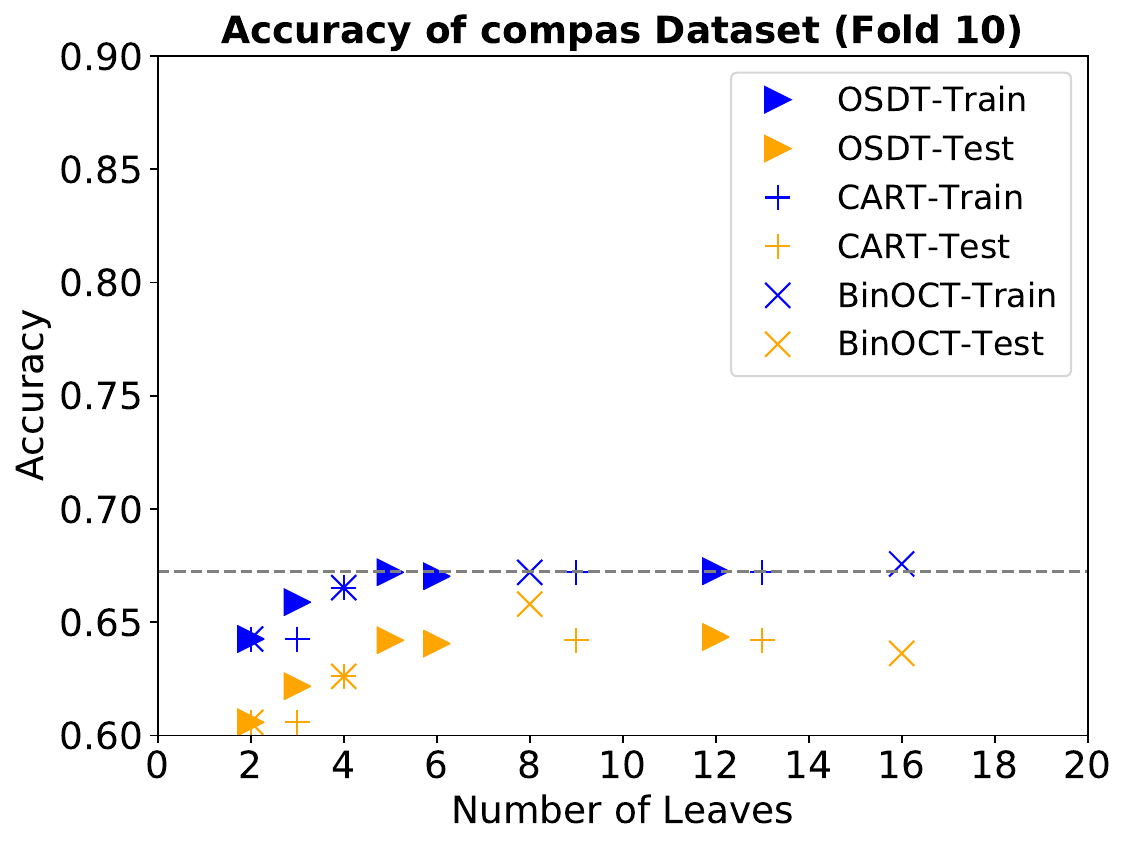}
        \label{fig:COMPAS10}
    \end{subfigure}
    \vskip\baselineskip
    \caption{10-fold cross-validation experiment results of OSDT, CART, BinOCT on COMPAS dataset. Horizontal lines indicate the accuracy of the best OSDT tree in training. %
    } 
    \label{fig:cv-compas}
\end{figure*}

\begin{figure*}[t!]
    \centering
    \begin{subfigure}[b]{0.4\textwidth}
        \centering
        \includegraphics[trim={0mm 12mm 0mm 15mm}, width=\textwidth]{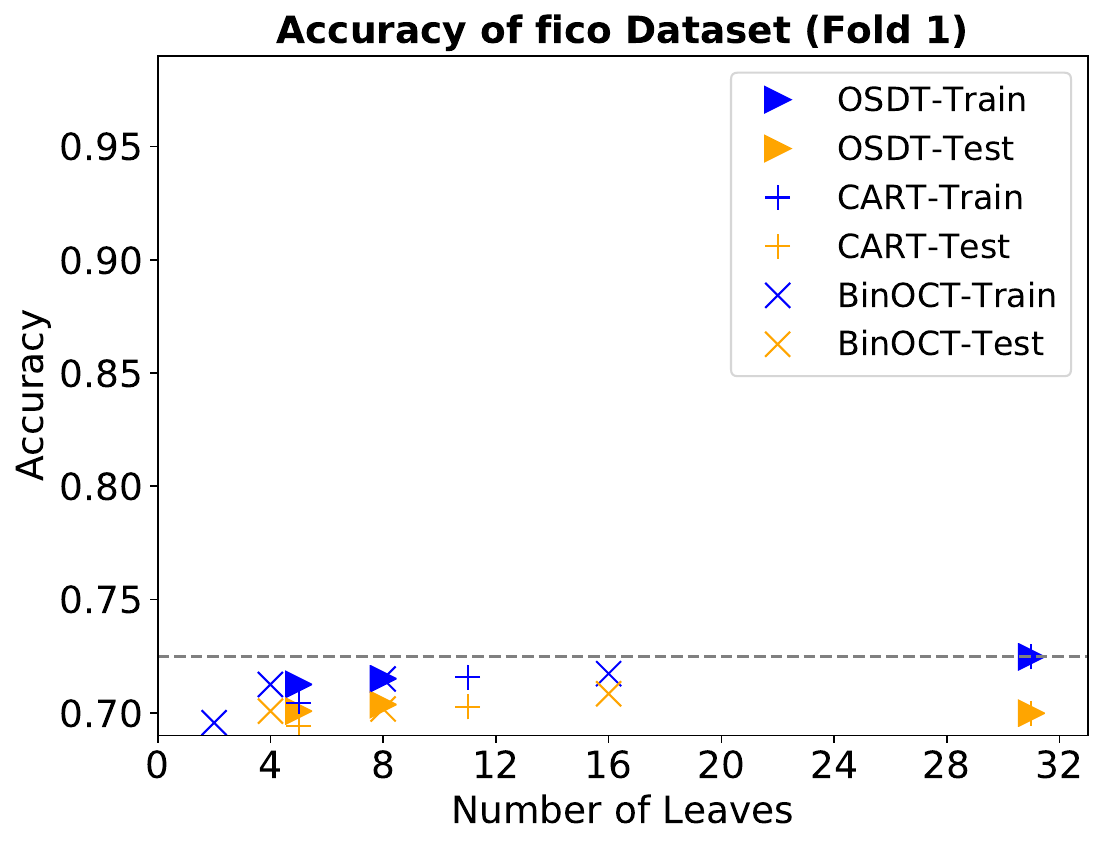}
        \label{fig:fico1}
    \end{subfigure}
    \begin{subfigure}[b]{0.4\textwidth}
        \centering
        \includegraphics[trim={0mm 12mm 0mm 15mm}, width=\textwidth]{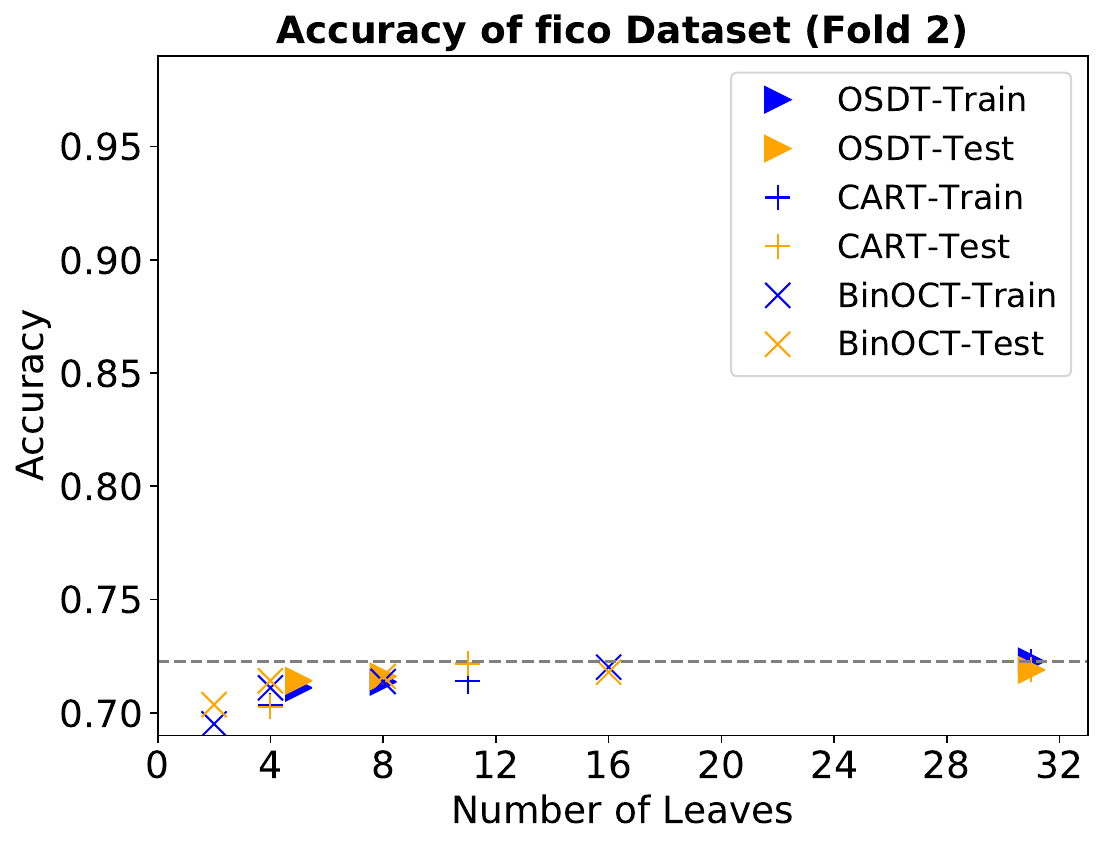}
        \label{fig:fico2}
    \end{subfigure}
    \vskip\baselineskip
    \begin{subfigure}[b]{0.4\textwidth}
        \centering
        \includegraphics[trim={0mm 12mm 0mm 15mm}, width=\textwidth]{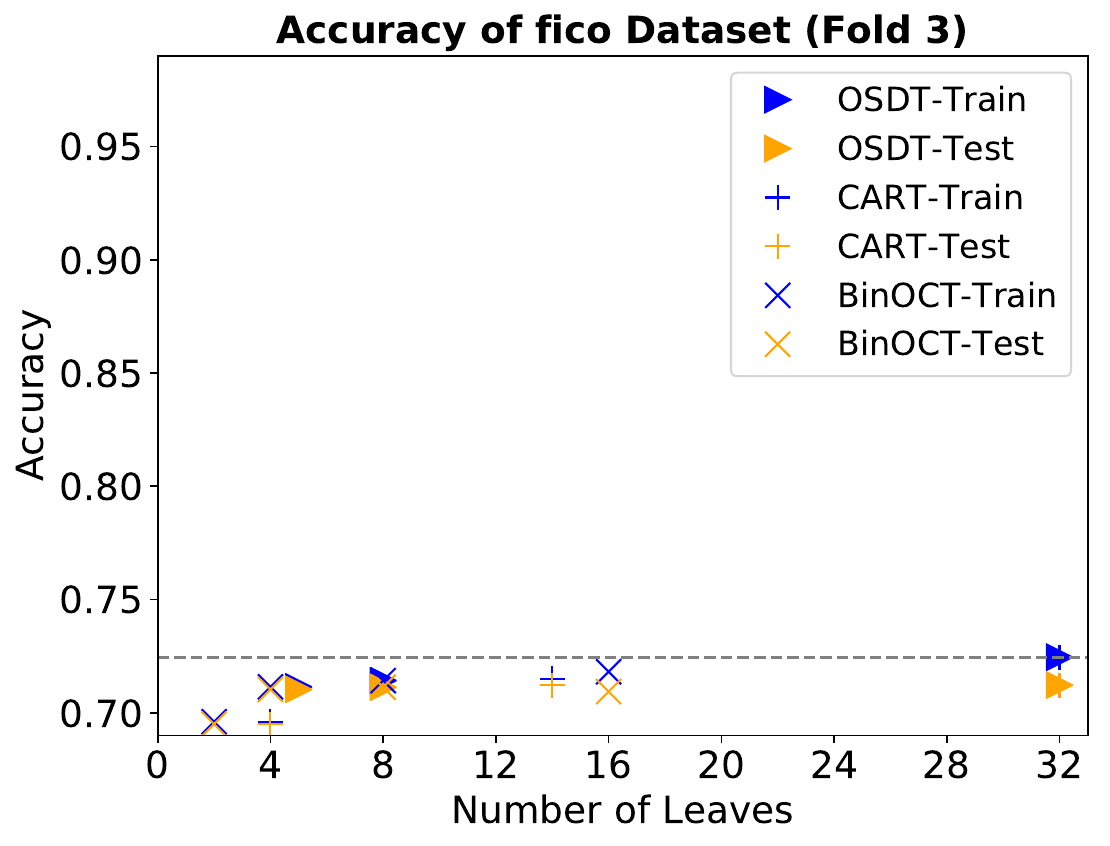}
        \label{fig:fico3}
    \end{subfigure}
    \begin{subfigure}[b]{0.4\textwidth}
        \centering
        \includegraphics[trim={0mm 12mm 0mm 15mm}, width=\textwidth]{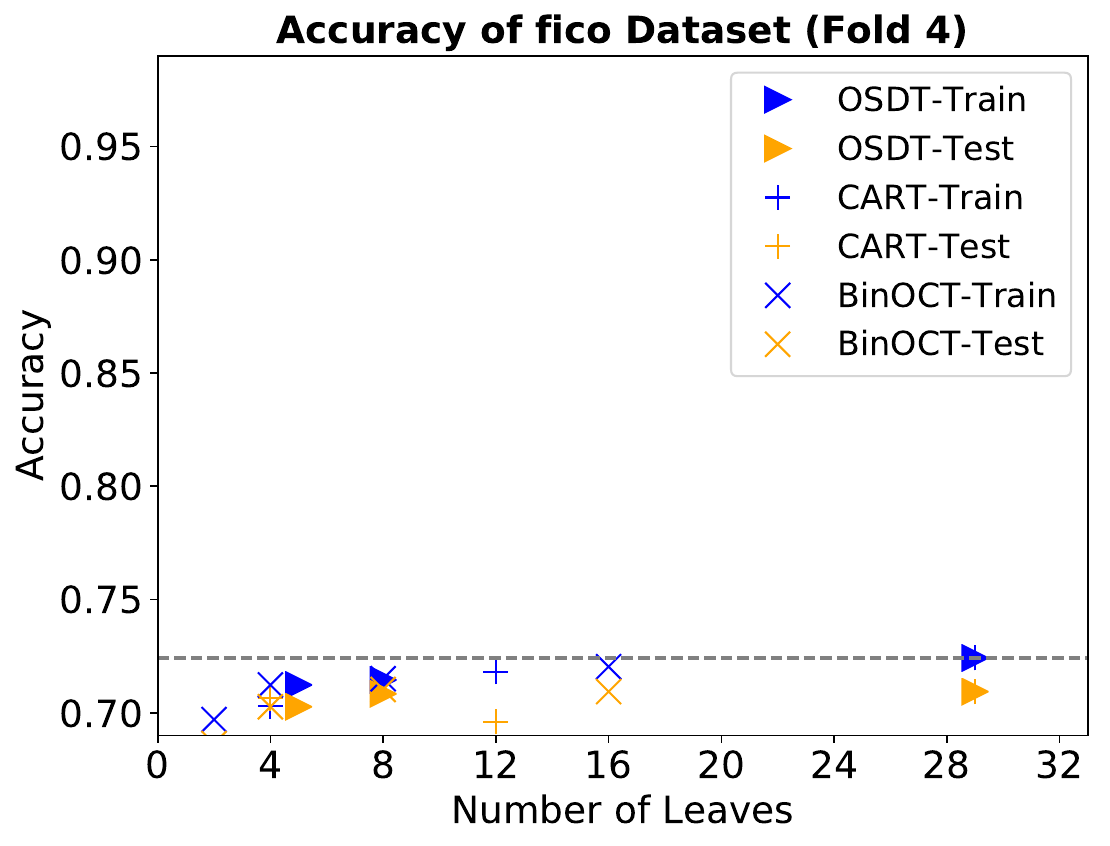}
        \label{fig:fico4}
    \end{subfigure}
    \vskip\baselineskip
    \begin{subfigure}[b]{0.4\textwidth}
        \centering
        \includegraphics[trim={0mm 12mm 0mm 15mm}, width=\textwidth]{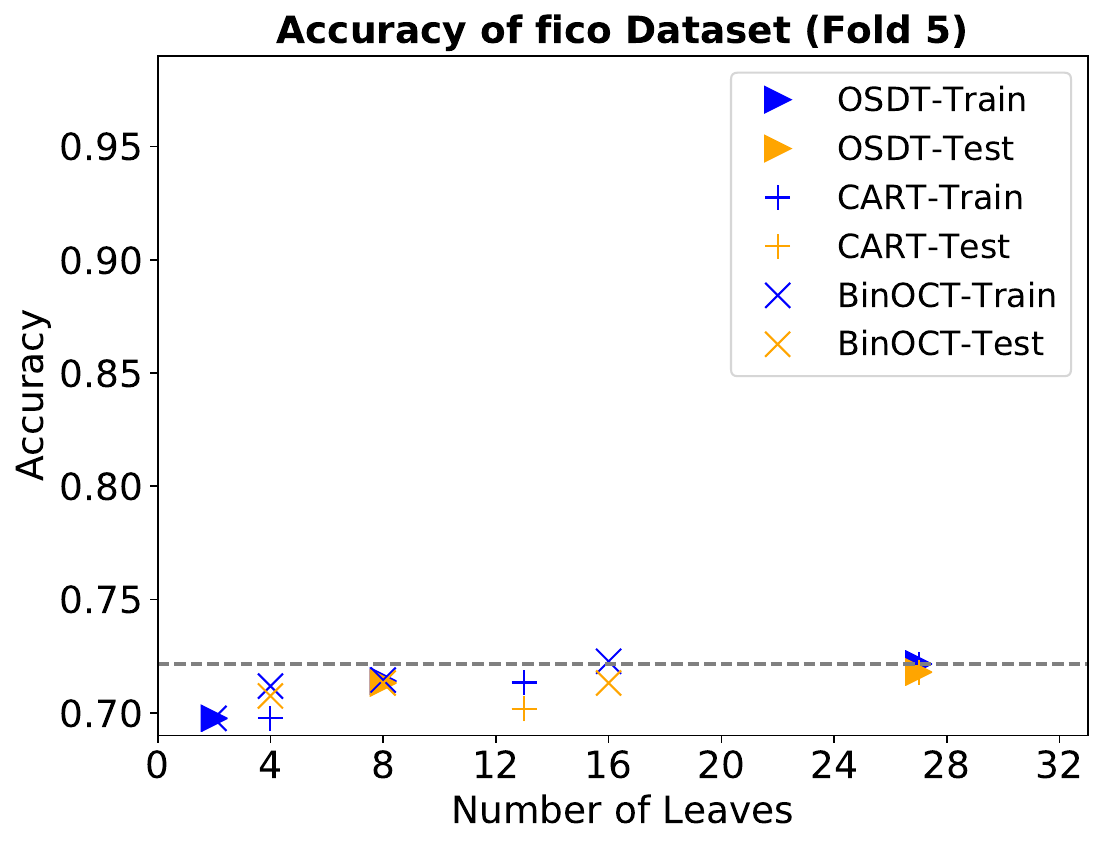}
        \label{fig:fico5}
    \end{subfigure}
    \begin{subfigure}[b]{0.4\textwidth}
        \centering
        \includegraphics[trim={0mm 12mm 0mm 15mm}, width=\textwidth]{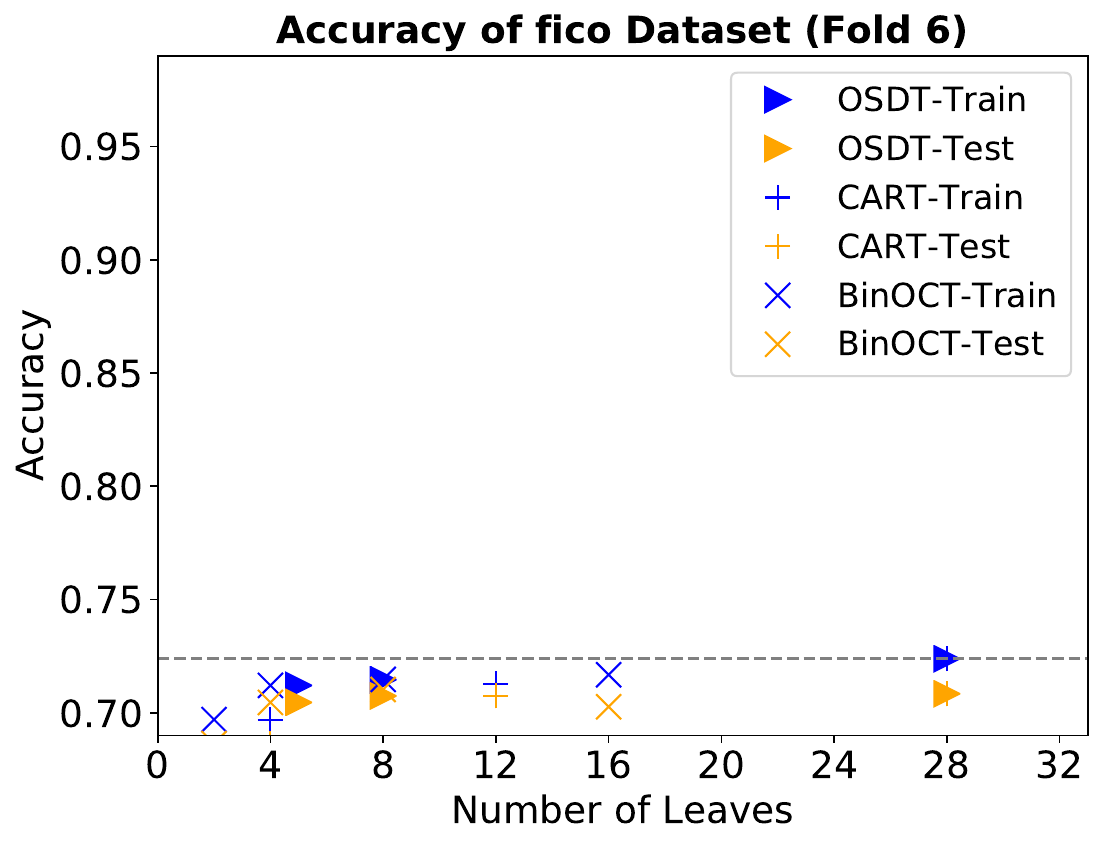}
        \label{fig:fico6}
    \end{subfigure}
    \vskip\baselineskip
    \begin{subfigure}[b]{0.4\textwidth}
        \centering
        \includegraphics[trim={0mm 12mm 0mm 15mm}, width=\textwidth]{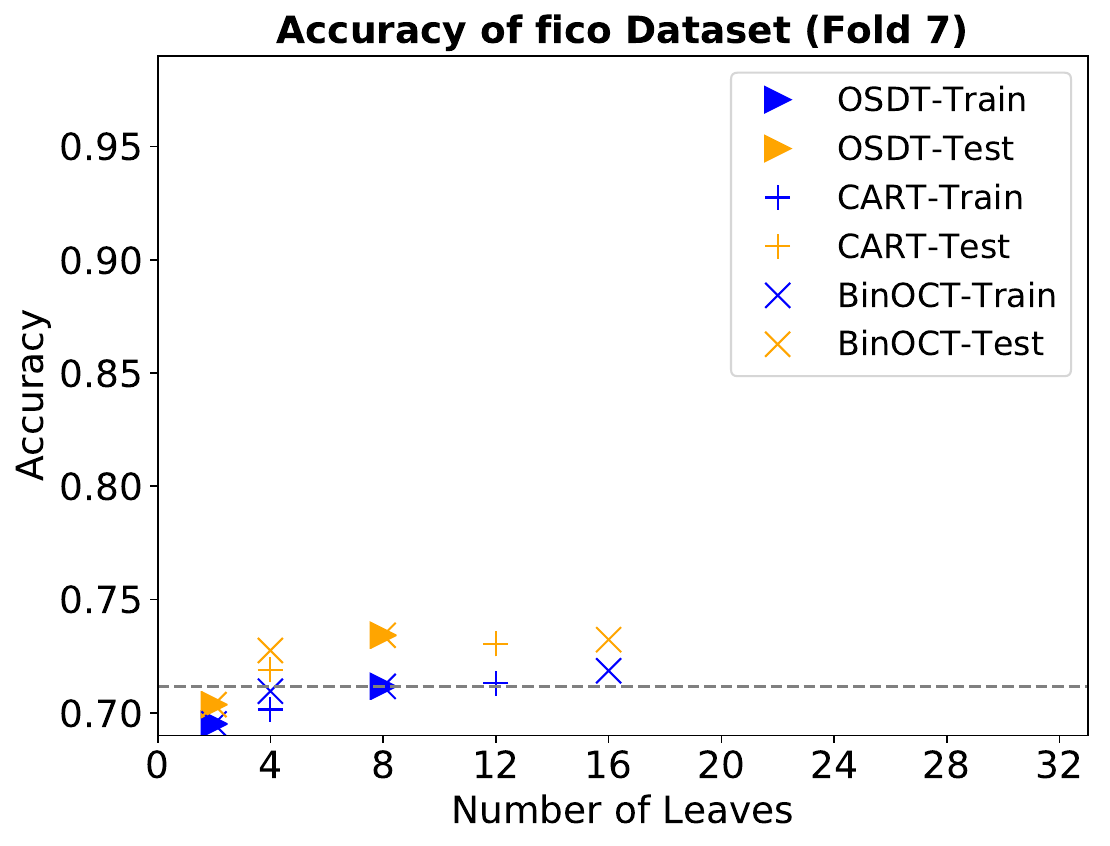}
        \label{fig:fico7}
    \end{subfigure}
    \begin{subfigure}[b]{0.4\textwidth}
        \centering
        \includegraphics[trim={0mm 12mm 0mm 15mm}, width=\textwidth]{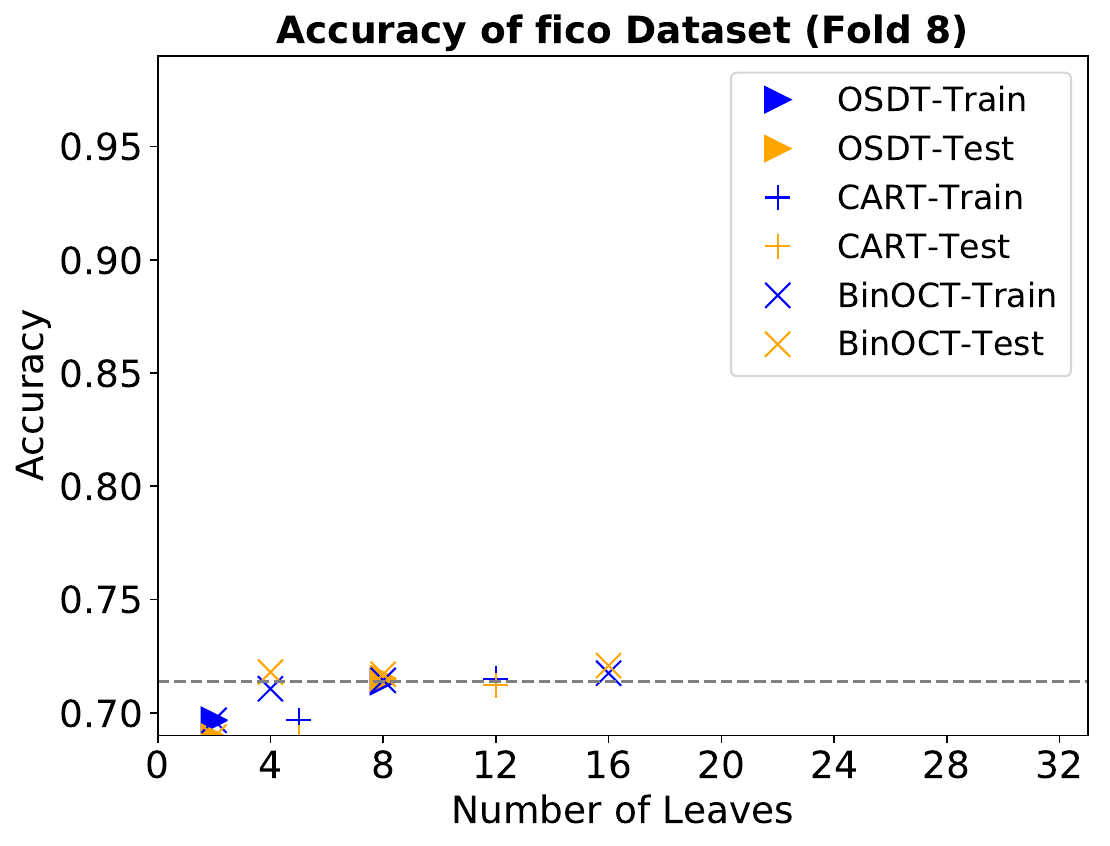}
        \label{fig:fico8}
    \end{subfigure}
    \vskip\baselineskip
    \begin{subfigure}[b]{0.4\textwidth}
        \centering
        \includegraphics[trim={0mm 12mm 0mm 15mm}, width=\textwidth]{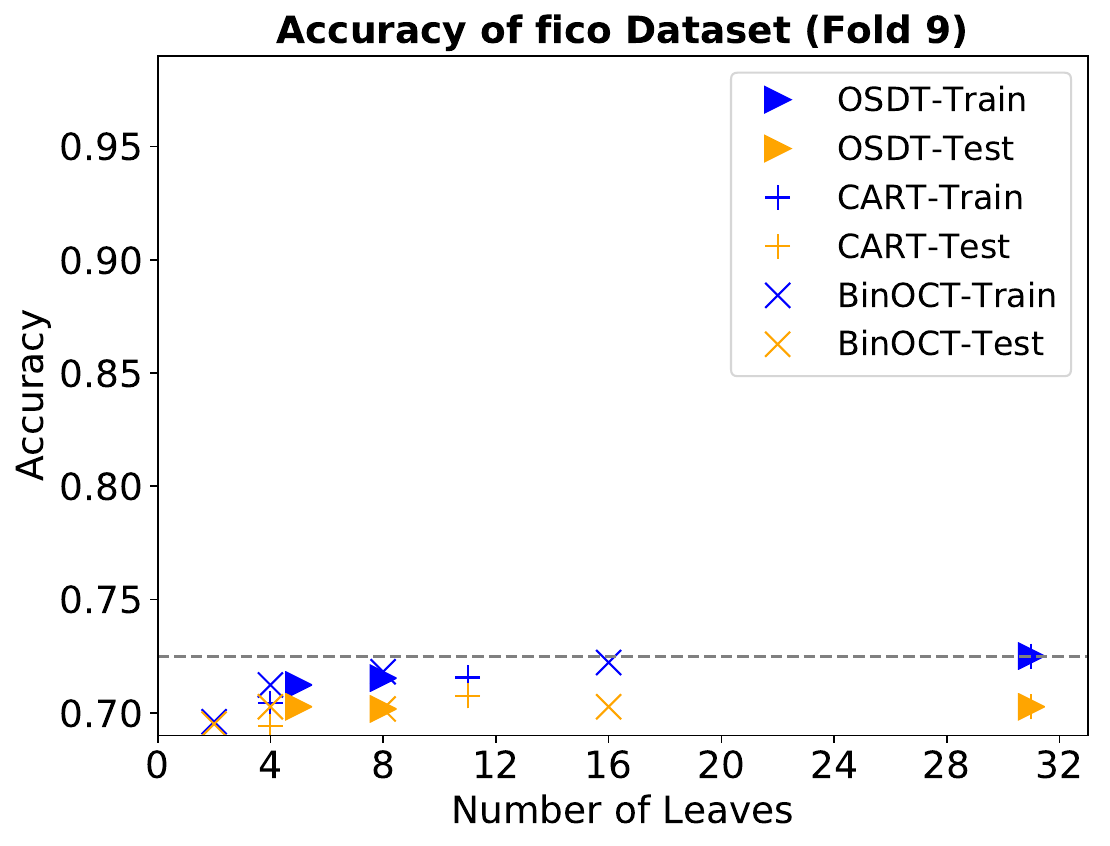}
        \label{fig:fico9}
    \end{subfigure}
    \begin{subfigure}[b]{0.4\textwidth}
        \centering
        \includegraphics[trim={0mm 12mm 0mm 15mm}, width=\textwidth]{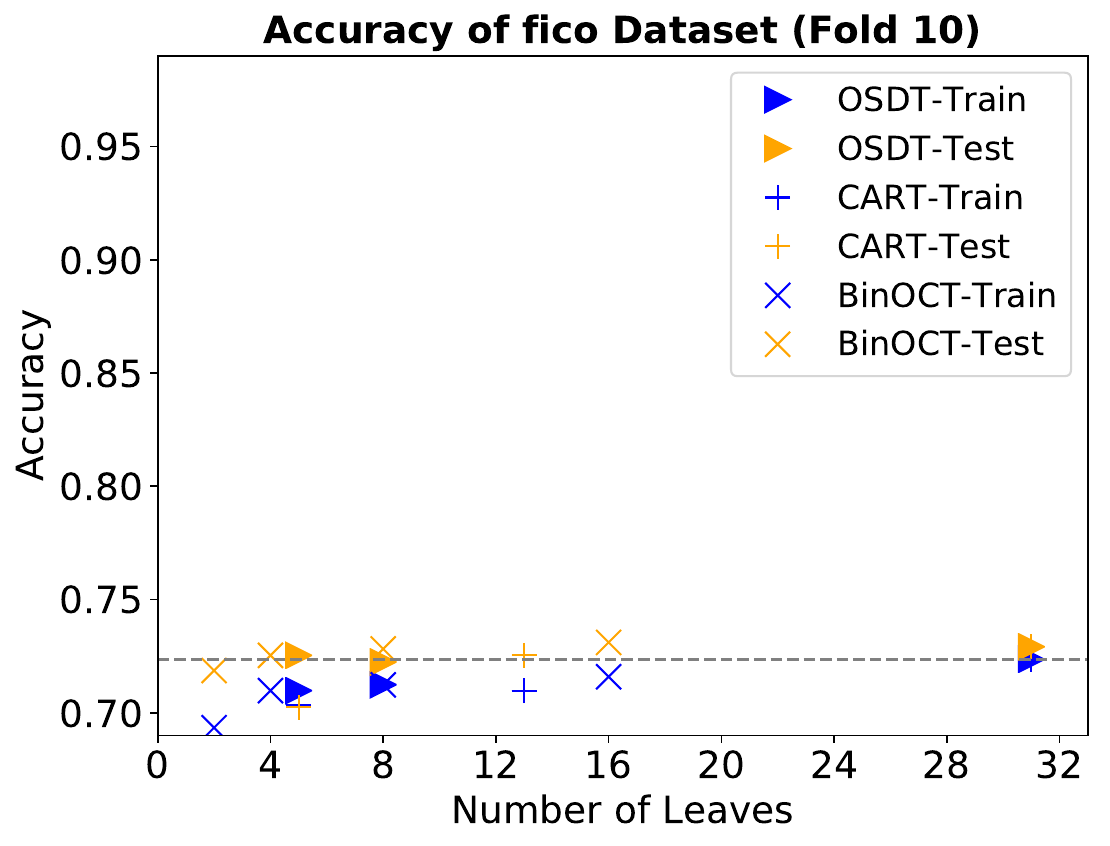}
        \label{fig:fico10}
    \end{subfigure}
    \vskip\baselineskip
    \caption{10-fold cross-validation experiment results of OSDT, CART, BinOCT on FICO dataset. Horizontal lines indicate the accuracy of the best OSDT tree in training. %
    }
    \label{fig:cv-fico}
\end{figure*}

\begin{figure*}[t!]
    \centering
    \begin{subfigure}[b]{0.4\textwidth}
        \centering
        \includegraphics[trim={0mm 12mm 0mm 15mm}, width=\textwidth]{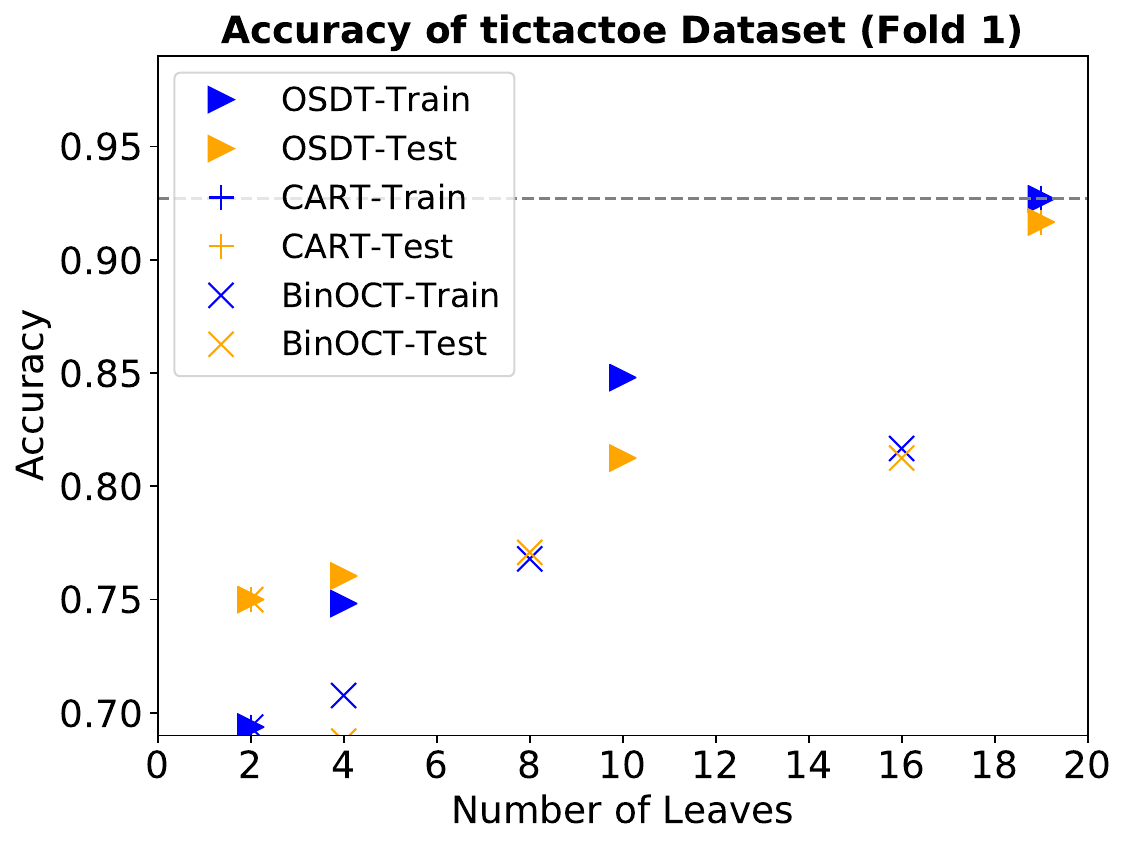}
        \label{fig:tictactoe1}
    \end{subfigure}
    \begin{subfigure}[b]{0.4\textwidth}
        \centering
        \includegraphics[trim={0mm 12mm 0mm 15mm}, width=\textwidth]{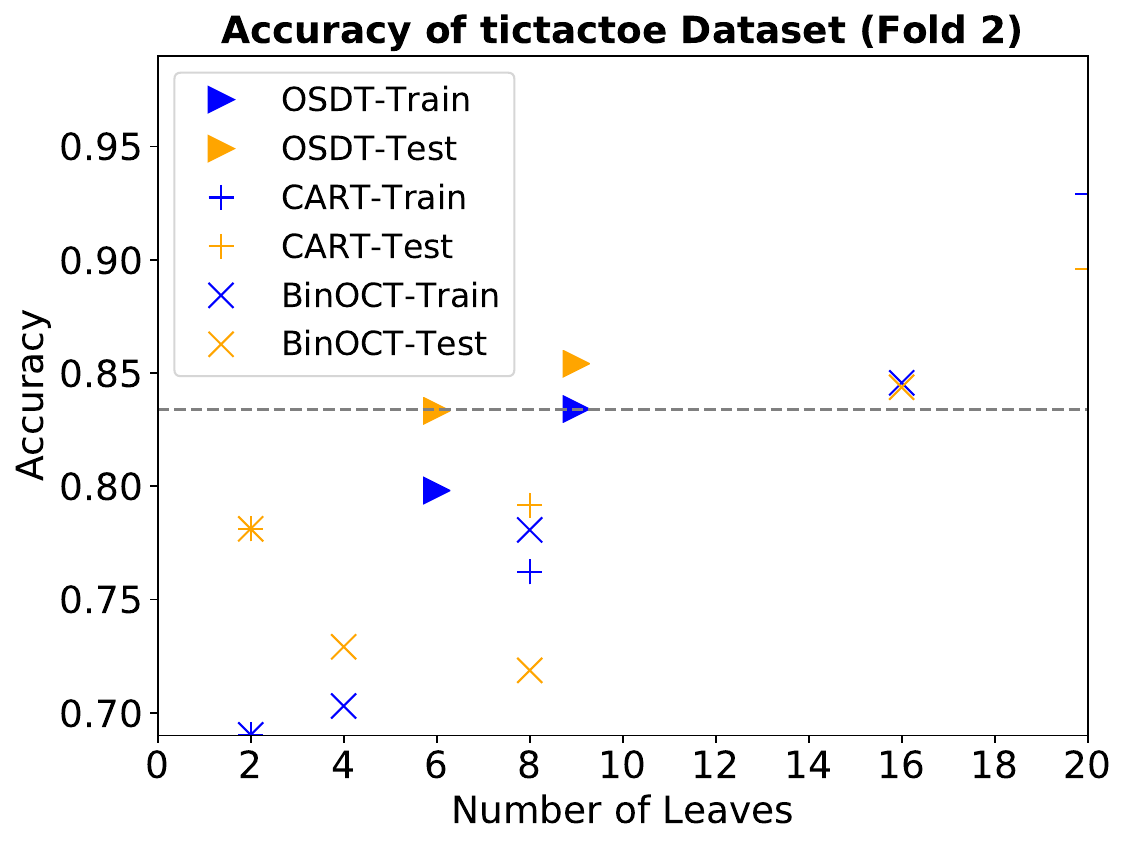}
        \label{fig:tictactoe2}
    \end{subfigure}
    \vskip\baselineskip
    \begin{subfigure}[b]{0.4\textwidth}
        \centering
        \includegraphics[trim={0mm 12mm 0mm 15mm}, width=\textwidth]{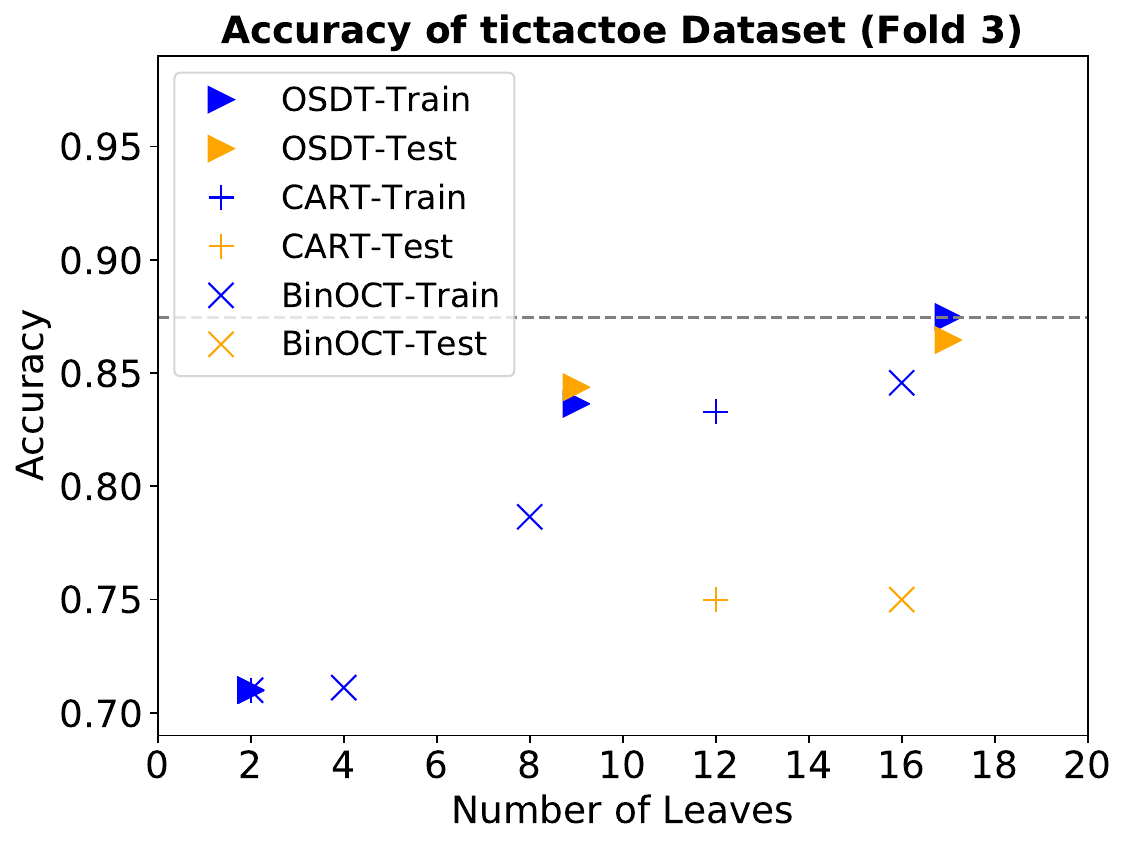}
        \label{fig:tictactoe3}
    \end{subfigure}
    \begin{subfigure}[b]{0.4\textwidth}
        \centering
        \includegraphics[trim={0mm 12mm 0mm 15mm}, width=\textwidth]{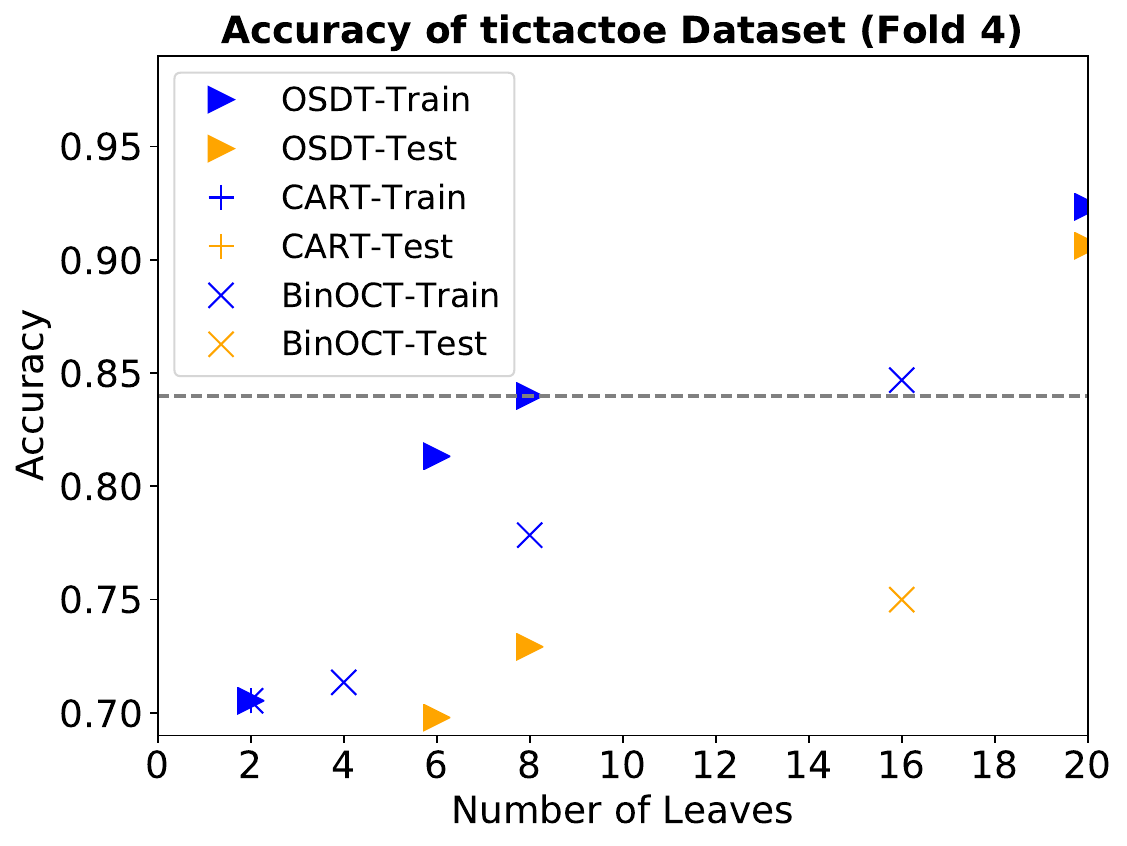}
        \label{fig:tictactoe4}
    \end{subfigure}
    \vskip\baselineskip
    \begin{subfigure}[b]{0.4\textwidth}
        \centering
        \includegraphics[trim={0mm 12mm 0mm 15mm}, width=\textwidth]{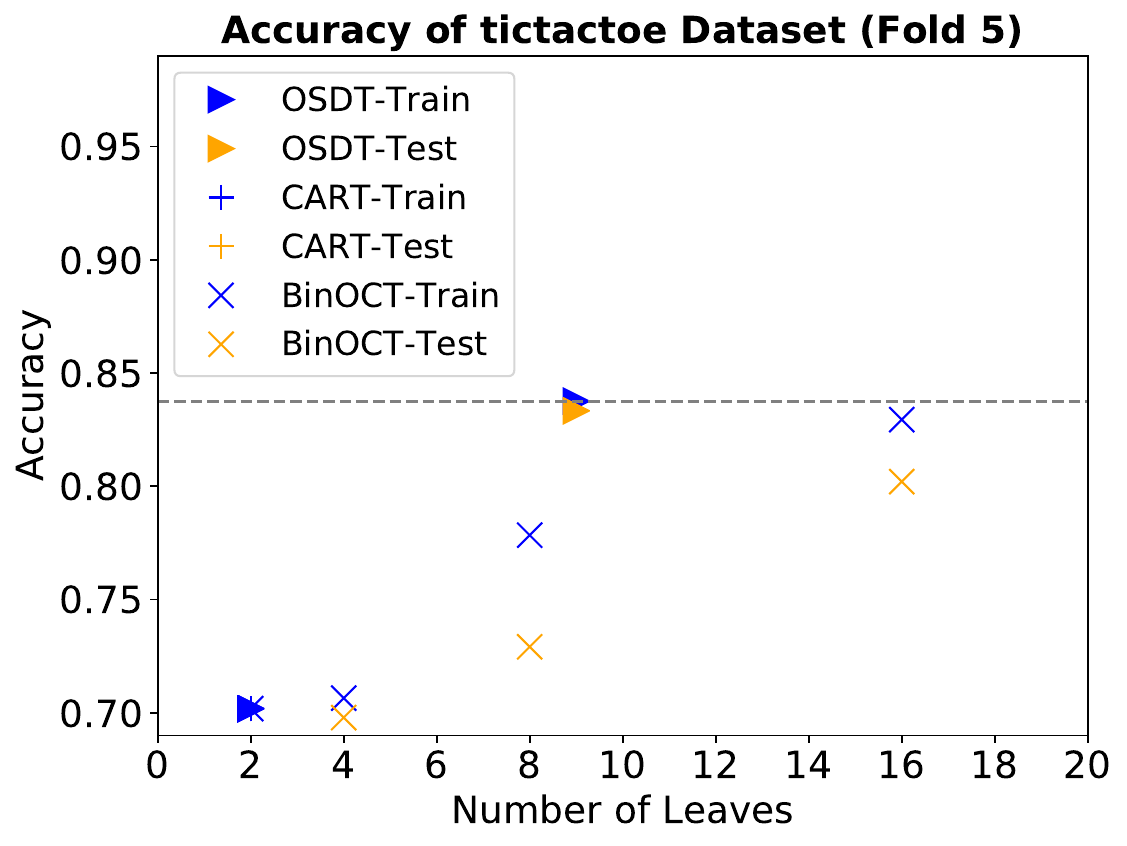}
        \label{fig:tictactoe5}
    \end{subfigure}
    \begin{subfigure}[b]{0.4\textwidth}
        \centering
        \includegraphics[trim={0mm 12mm 0mm 15mm}, width=\textwidth]{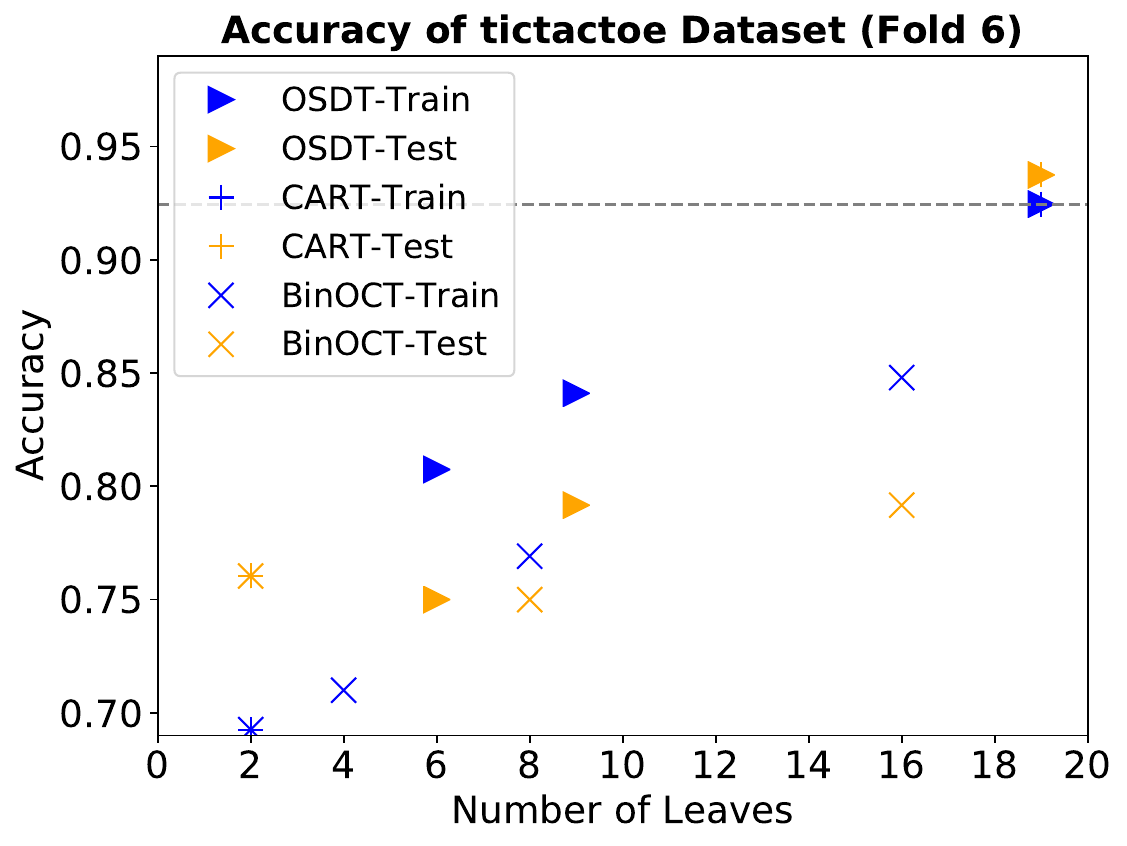}
        \label{fig:tictactoe6}
    \end{subfigure}
    \vskip\baselineskip
    \begin{subfigure}[b]{0.4\textwidth}
        \centering
        \includegraphics[trim={0mm 12mm 0mm 15mm}, width=\textwidth]{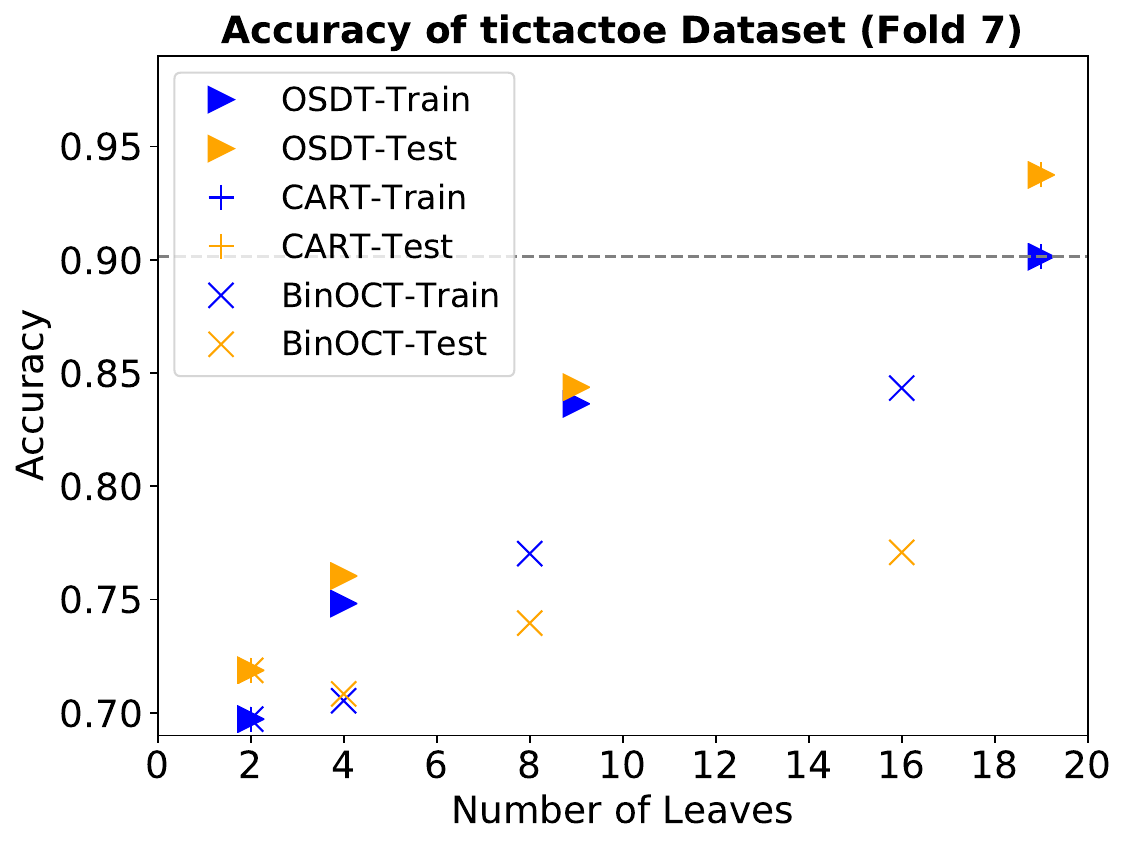}
        \label{fig:tictactoe7}
    \end{subfigure}
    \begin{subfigure}[b]{0.4\textwidth}
        \centering
        \includegraphics[trim={0mm 12mm 0mm 15mm}, width=\textwidth]{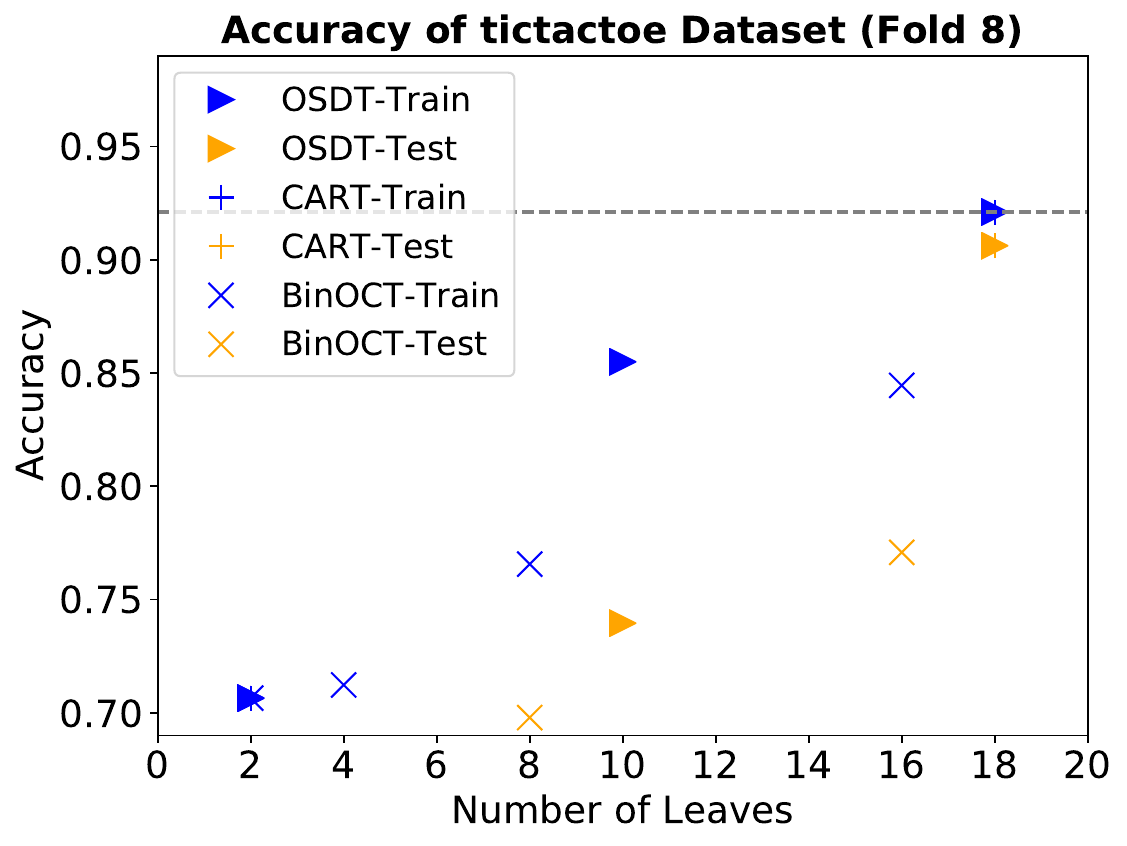}
        \label{fig:tictactoe8}
    \end{subfigure}
    \vskip\baselineskip
    \begin{subfigure}[b]{0.4\textwidth}
        \centering
        \includegraphics[trim={0mm 12mm 0mm 15mm}, width=\textwidth]{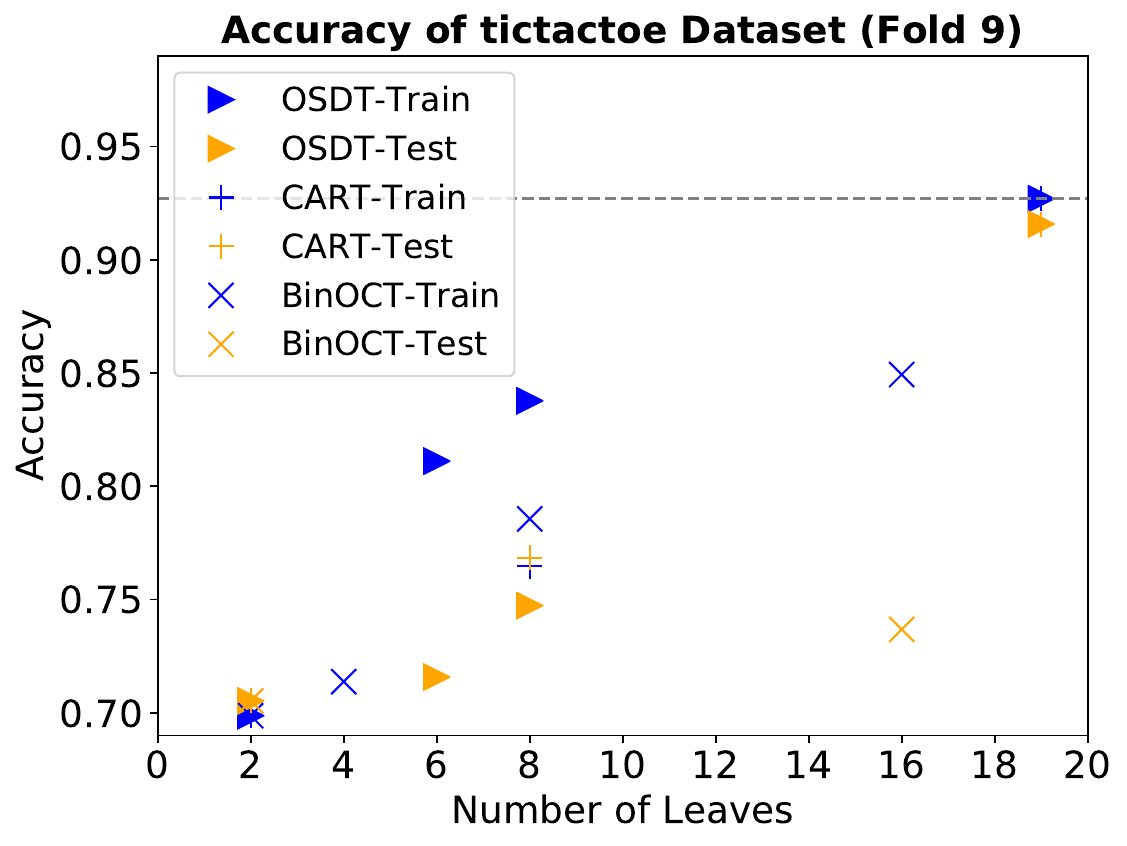}
        \label{fig:tictactoe9}
    \end{subfigure}
    \begin{subfigure}[b]{0.4\textwidth}
        \centering
        \includegraphics[trim={0mm 12mm 0mm 15mm}, width=\textwidth]{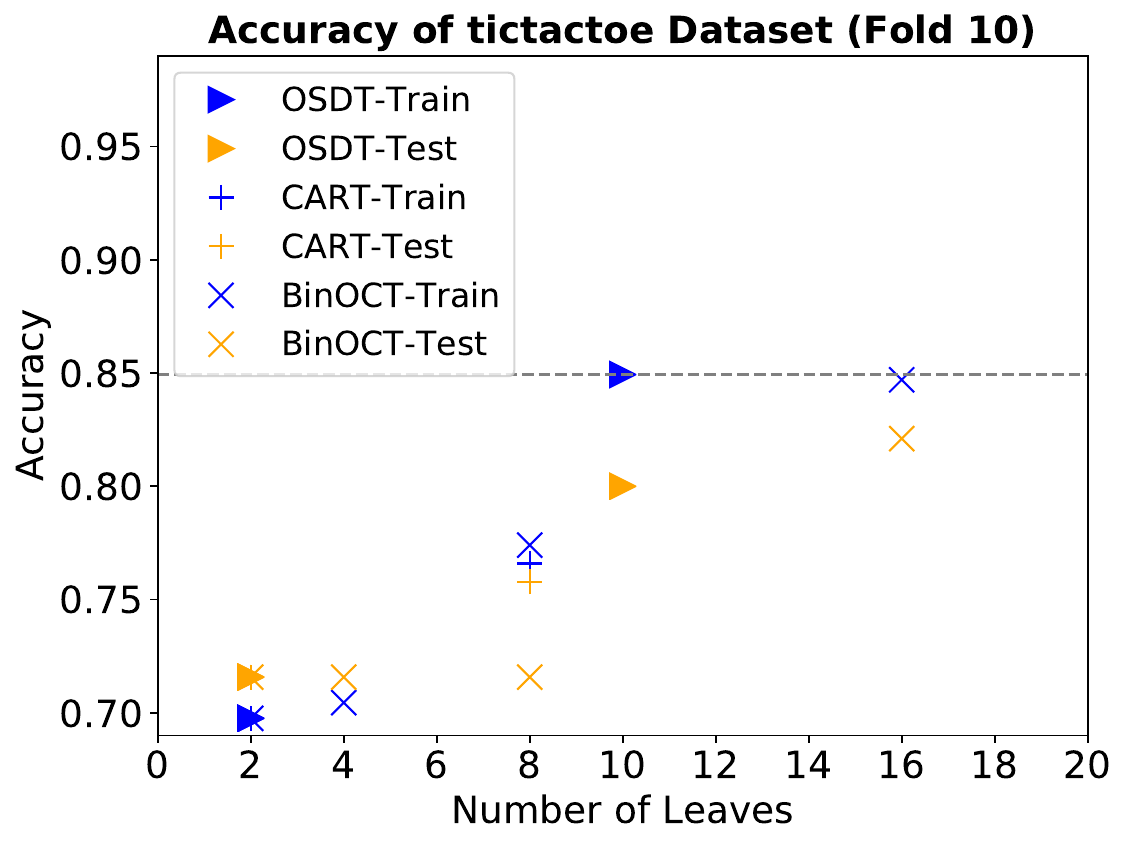}
        \label{fig:tictactoe10}
    \end{subfigure}
    \vskip\baselineskip
    \caption{10-fold cross-validation experiment results of OSDT, CART, BinOCT on Tic-Tac-Toe dataset. Horizontal lines indicate the accuracy of the best OSDT tree in training. %
    }
    \label{fig:cv-tictactoe}
\end{figure*}

\begin{figure*}[t!]
    \centering
    \begin{subfigure}[b]{0.4\textwidth}
        \centering
        \includegraphics[trim={0mm 12mm 0mm 15mm}, width=\textwidth]{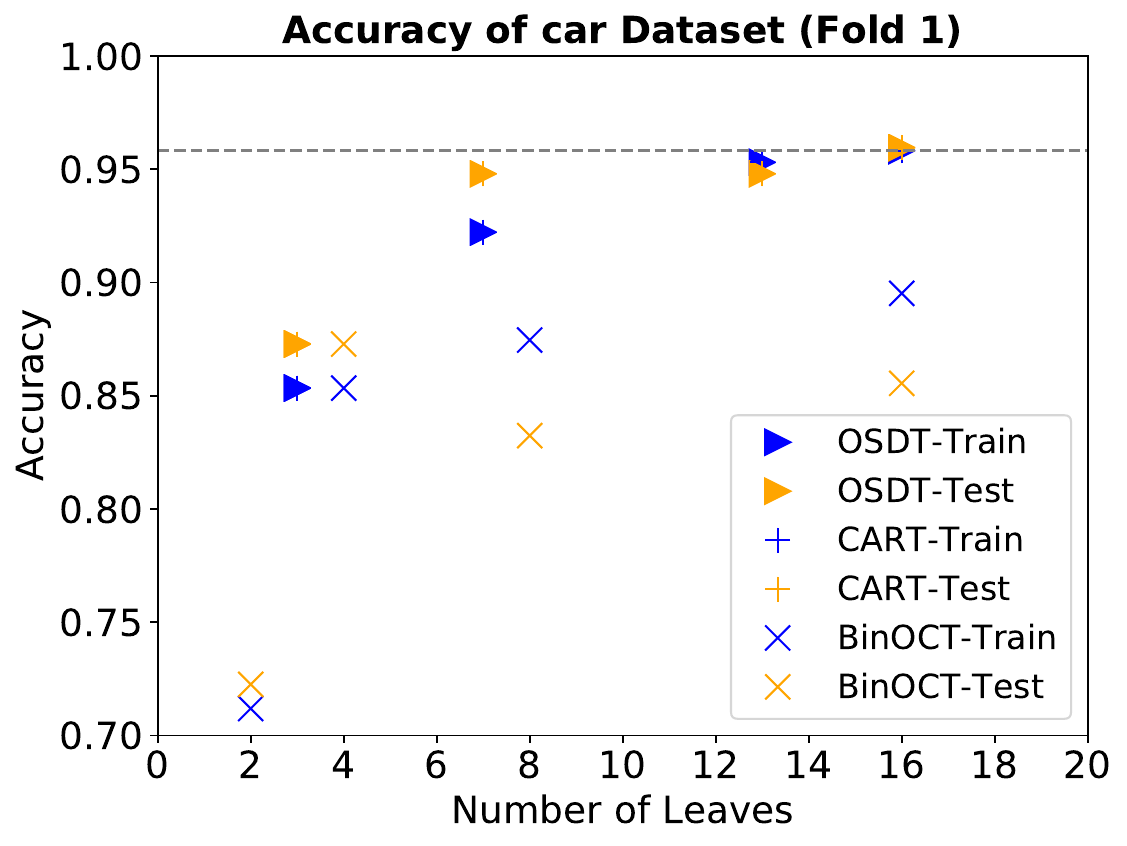}
        \label{fig:car1}
    \end{subfigure}
    \begin{subfigure}[b]{0.4\textwidth}
        \centering
        \includegraphics[trim={0mm 12mm 0mm 15mm}, width=\textwidth]{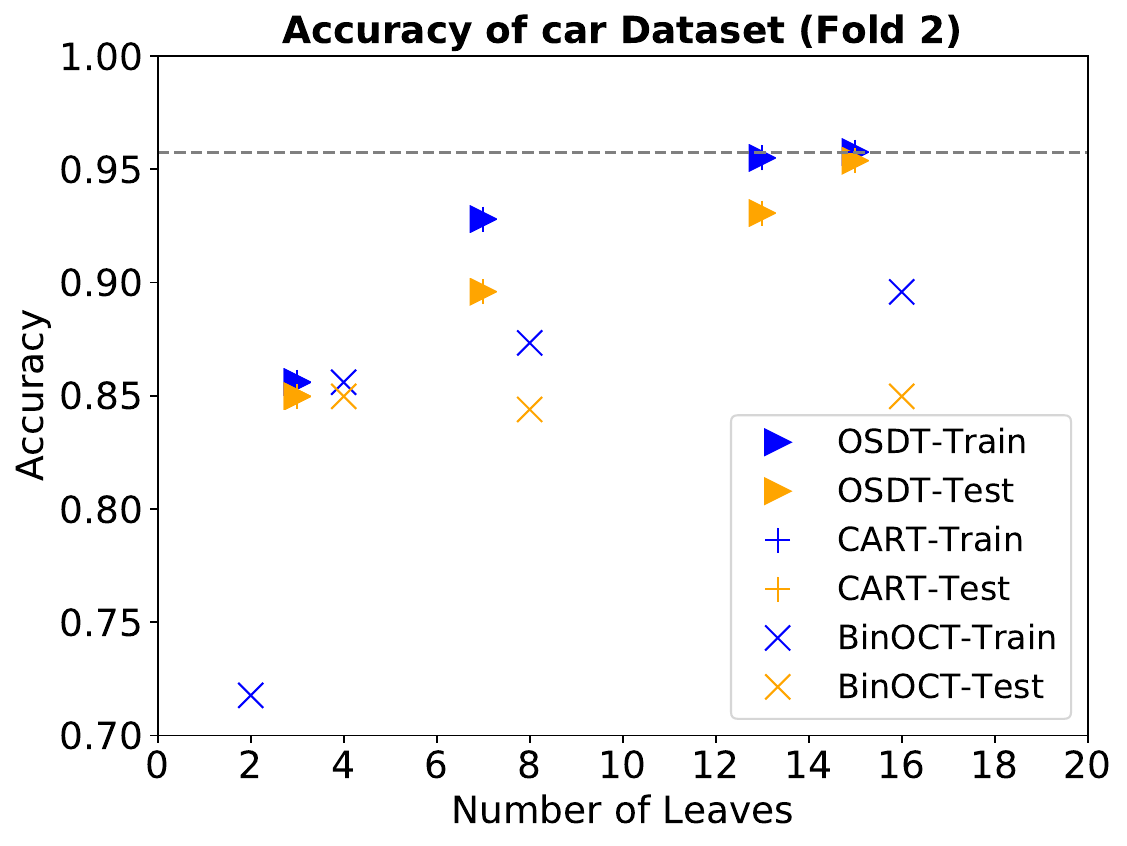}
        \label{fig:car2}
    \end{subfigure}
    \vskip\baselineskip
    \begin{subfigure}[b]{0.4\textwidth}
        \centering
        \includegraphics[trim={0mm 12mm 0mm 15mm}, width=\textwidth]{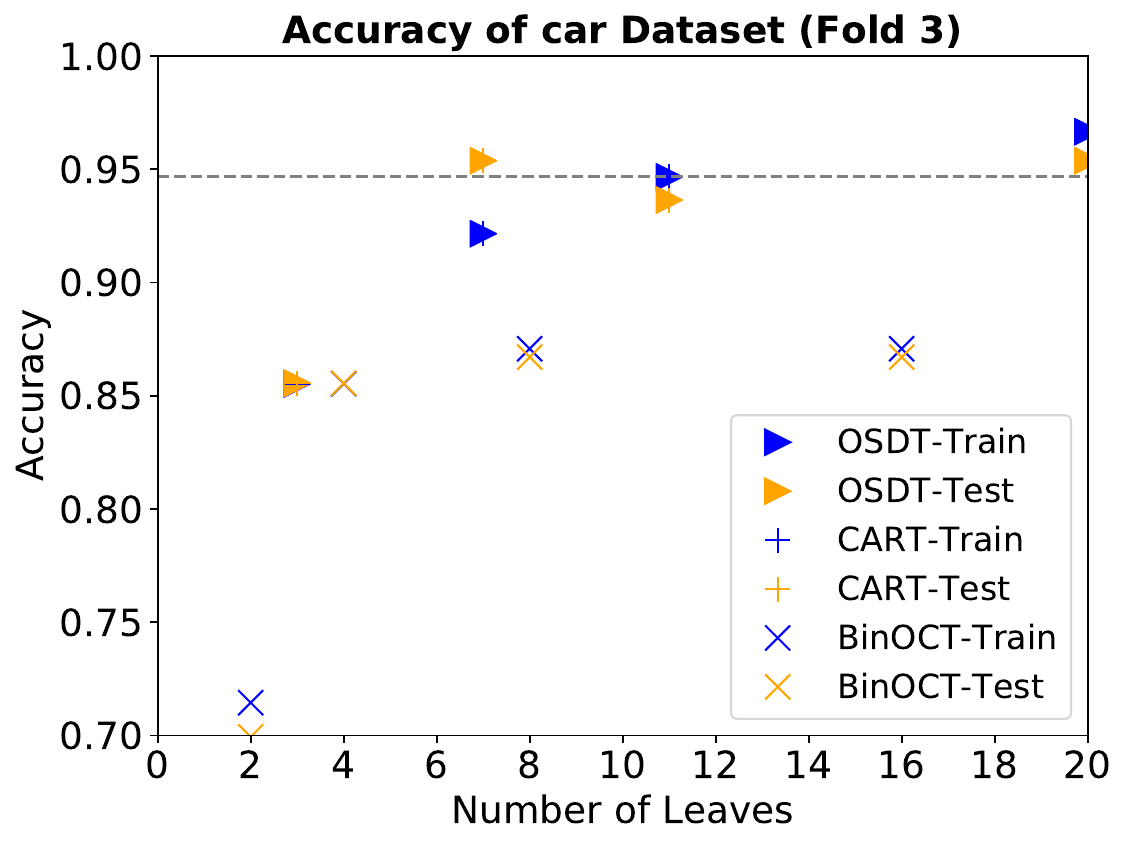}
        \label{fig:car3}
    \end{subfigure}
    \begin{subfigure}[b]{0.4\textwidth}
        \centering
        \includegraphics[trim={0mm 12mm 0mm 15mm}, width=\textwidth]{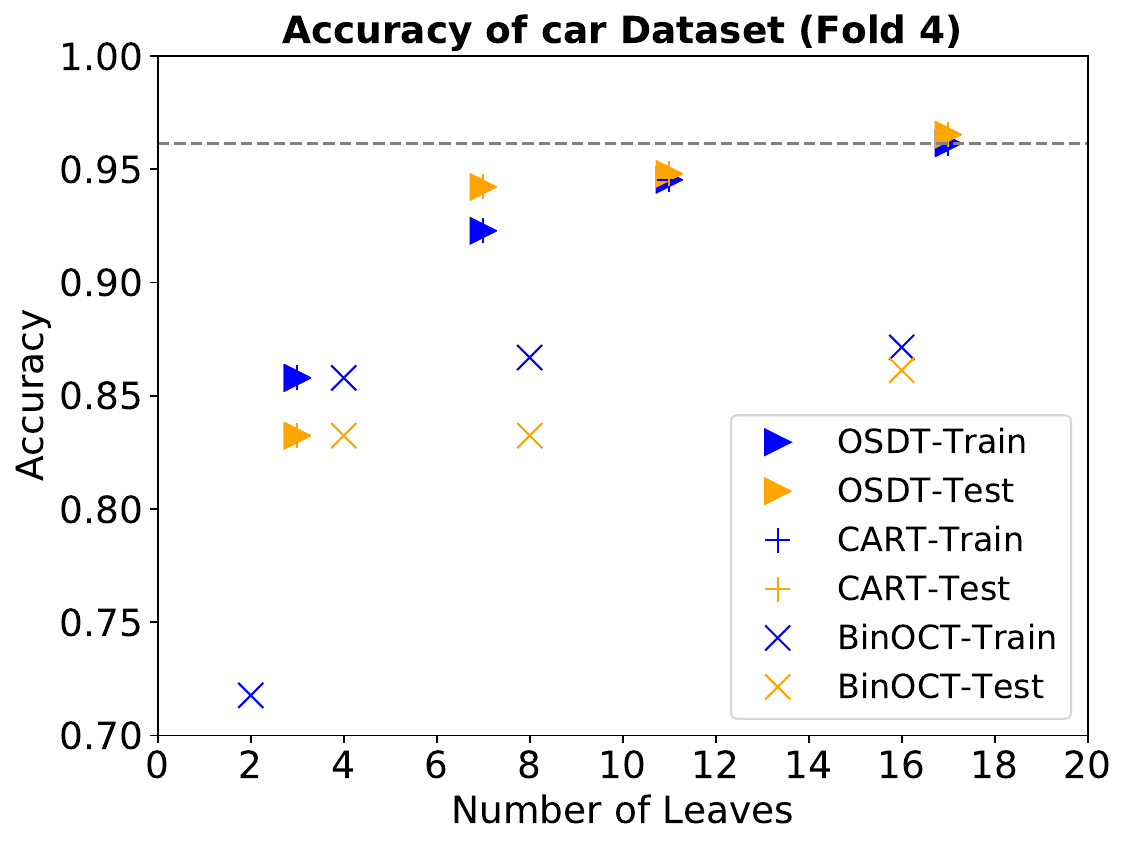}
        \label{fig:car4}
    \end{subfigure}
    \vskip\baselineskip
    \begin{subfigure}[b]{0.4\textwidth}
        \centering
        \includegraphics[trim={0mm 12mm 0mm 15mm}, width=\textwidth]{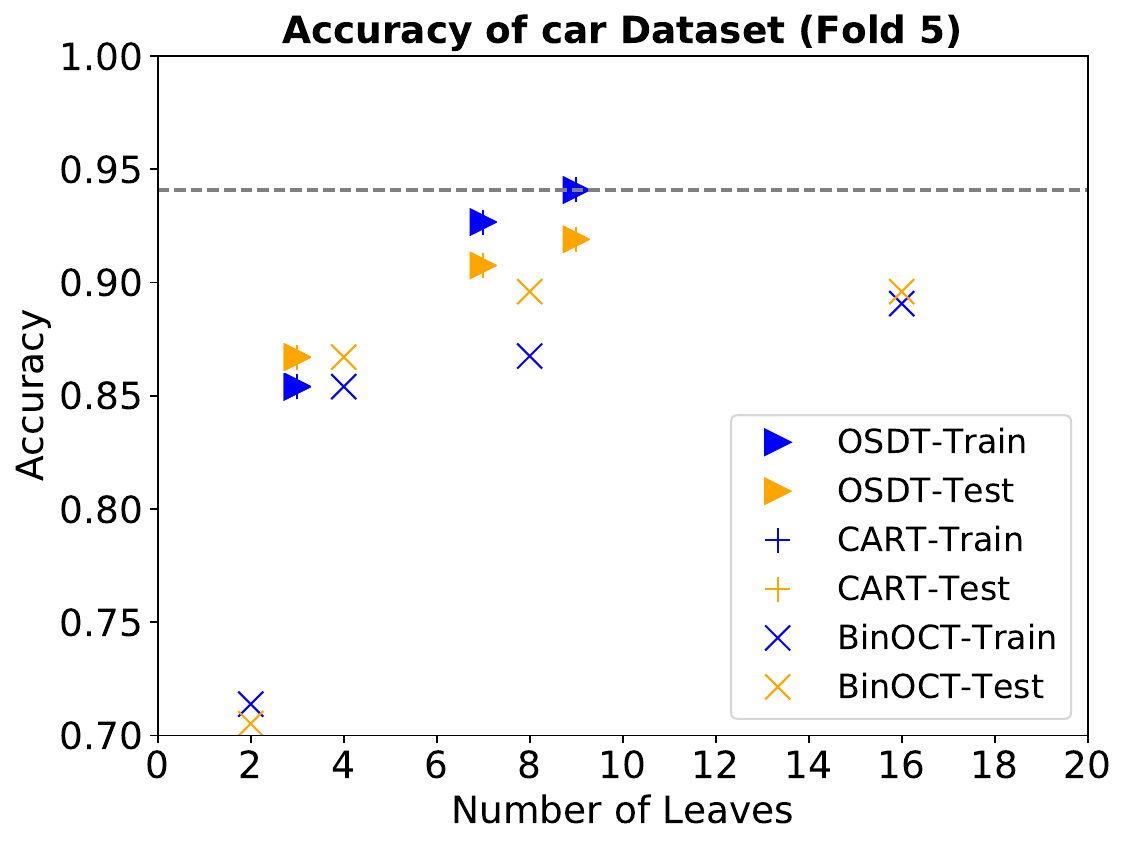}
        \label{fig:car5}
    \end{subfigure}
    \begin{subfigure}[b]{0.4\textwidth}
        \centering
        \includegraphics[trim={0mm 12mm 0mm 15mm}, width=\textwidth]{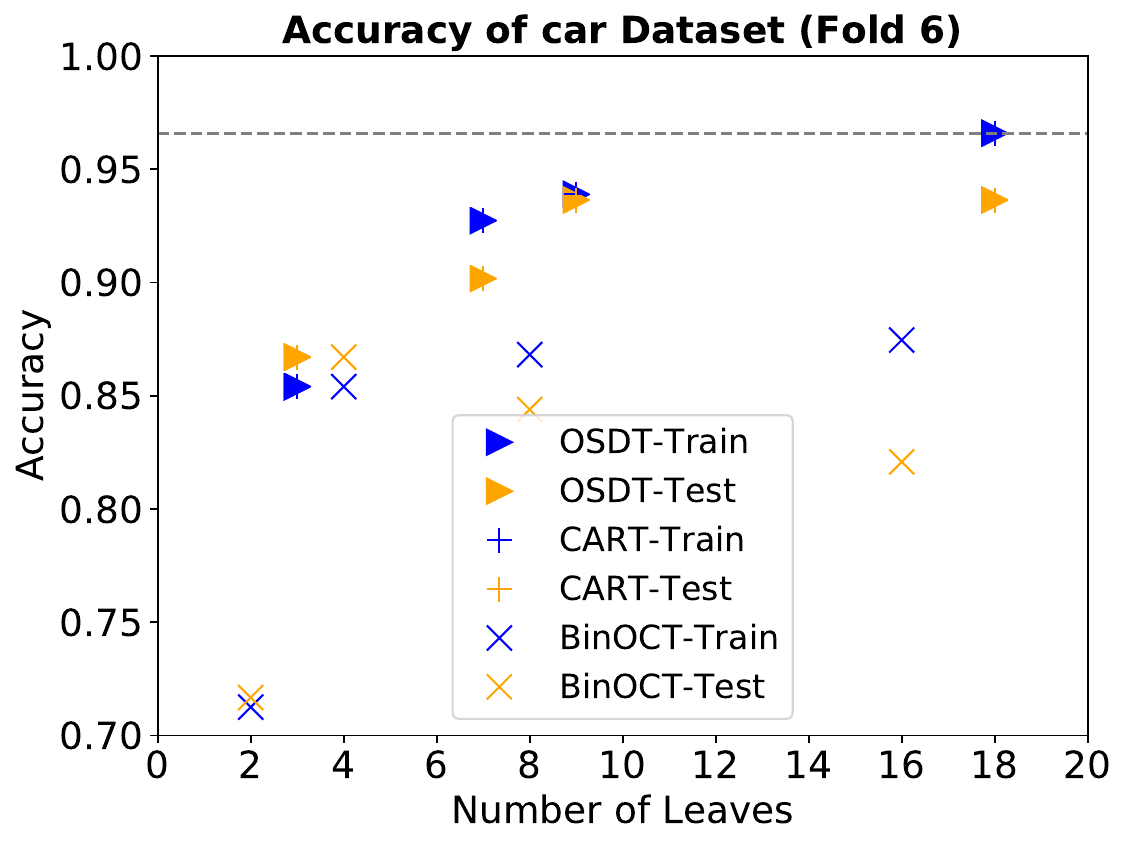}
        \label{fig:car6}
    \end{subfigure}
    \vskip\baselineskip
    \begin{subfigure}[b]{0.4\textwidth}
        \centering
        \includegraphics[trim={0mm 12mm 0mm 15mm}, width=\textwidth]{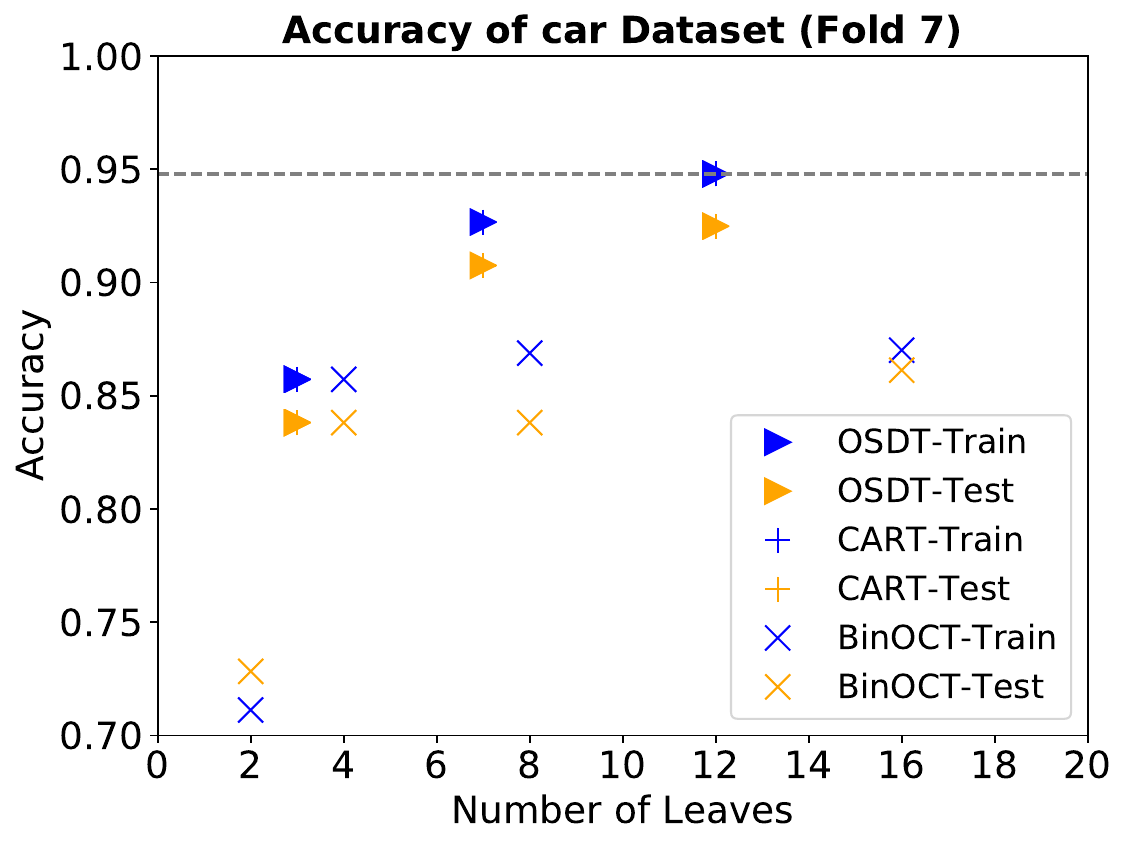}
        \label{fig:car7}
    \end{subfigure}
    \begin{subfigure}[b]{0.4\textwidth}
        \centering
        \includegraphics[trim={0mm 12mm 0mm 15mm}, width=\textwidth]{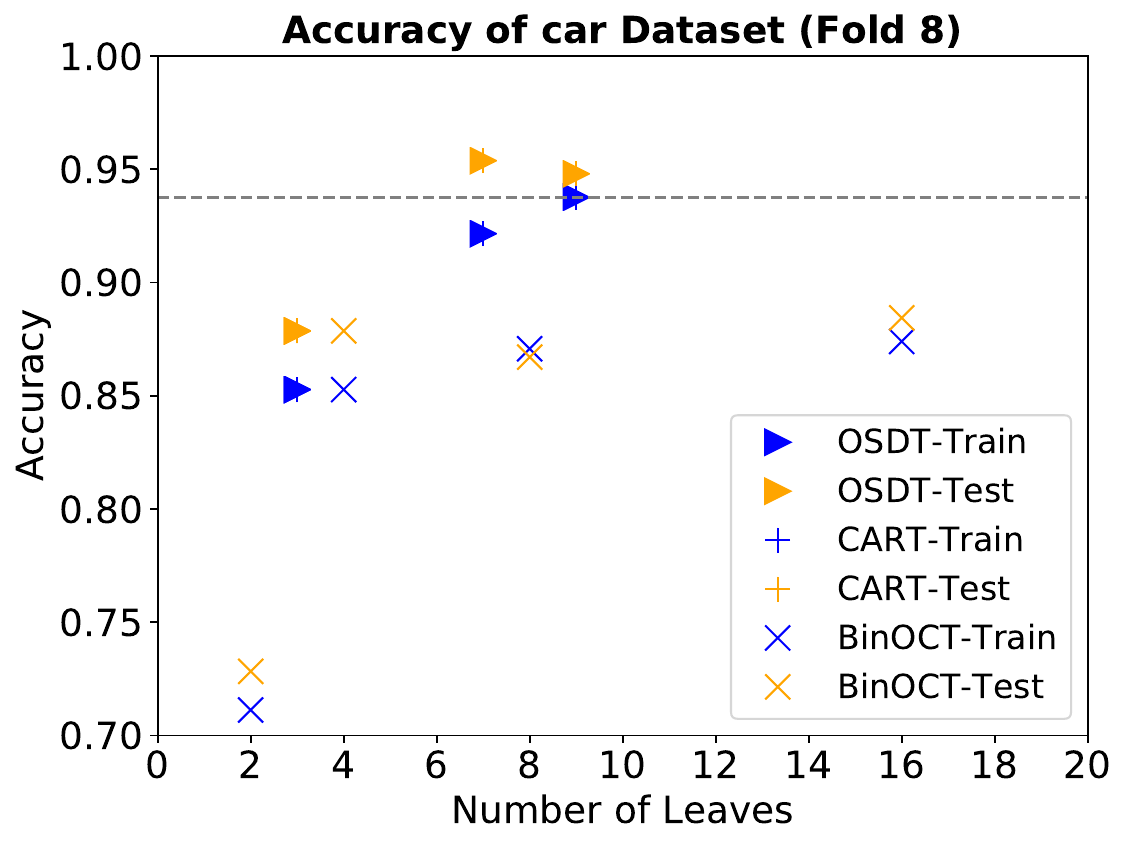}
        \label{fig:car8}
    \end{subfigure}
    \vskip\baselineskip
    \begin{subfigure}[b]{0.4\textwidth}
        \centering
        \includegraphics[trim={0mm 12mm 0mm 15mm}, width=\textwidth]{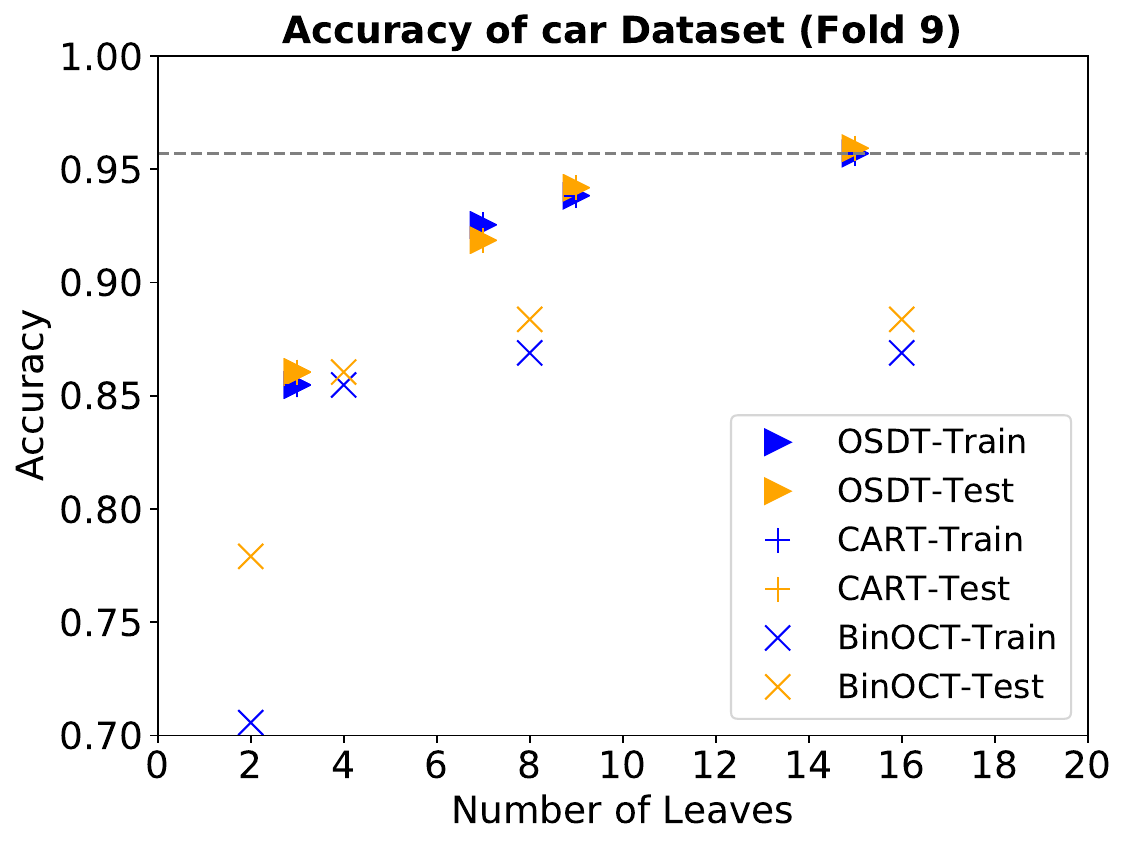}
        \label{fig:car9}
    \end{subfigure}
    \begin{subfigure}[b]{0.4\textwidth}
        \centering
        \includegraphics[trim={0mm 12mm 0mm 15mm}, width=\textwidth]{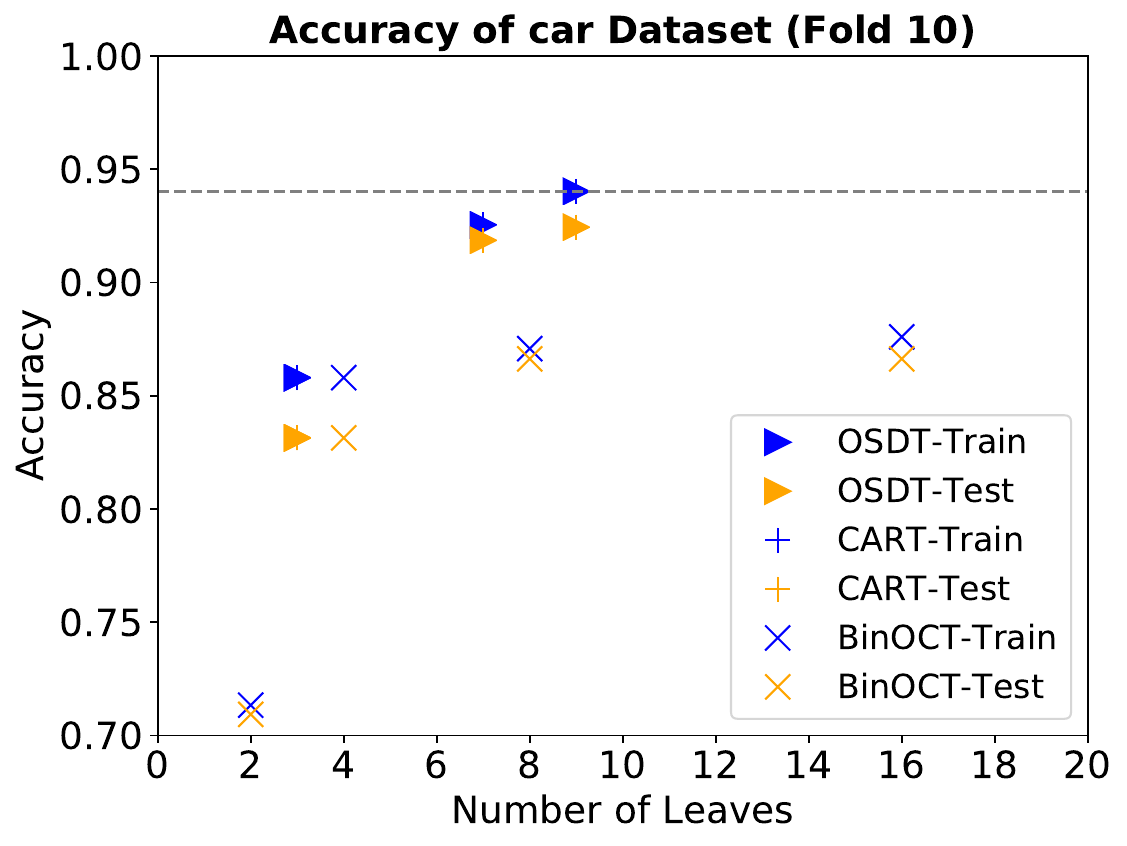}
        \label{fig:car10}
    \end{subfigure}
    \vskip\baselineskip
    \caption{10-fold cross-validation experiment results of OSDT, CART, BinOCT on car dataset. Horizontal lines indicate the accuracy of the best OSDT tree in training. %
    }
    \label{fig:cv-car}
\end{figure*}

\begin{figure*}[t!]
    \centering
    \begin{subfigure}[b]{0.4\textwidth}
        \centering
        \includegraphics[trim={0mm 12mm 0mm 15mm}, width=\textwidth]{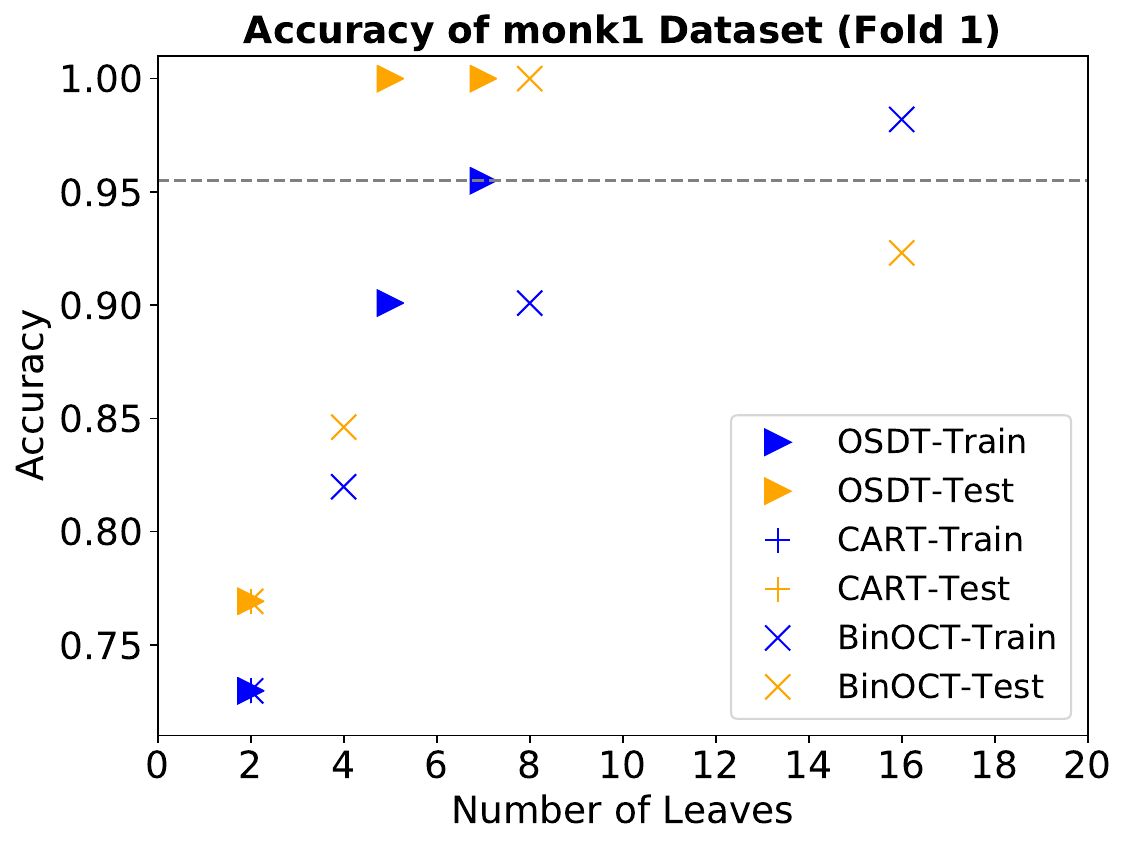}
        \label{fig:monk11}
    \end{subfigure}
    \begin{subfigure}[b]{0.4\textwidth}
        \centering
        \includegraphics[trim={0mm 12mm 0mm 15mm}, width=\textwidth]{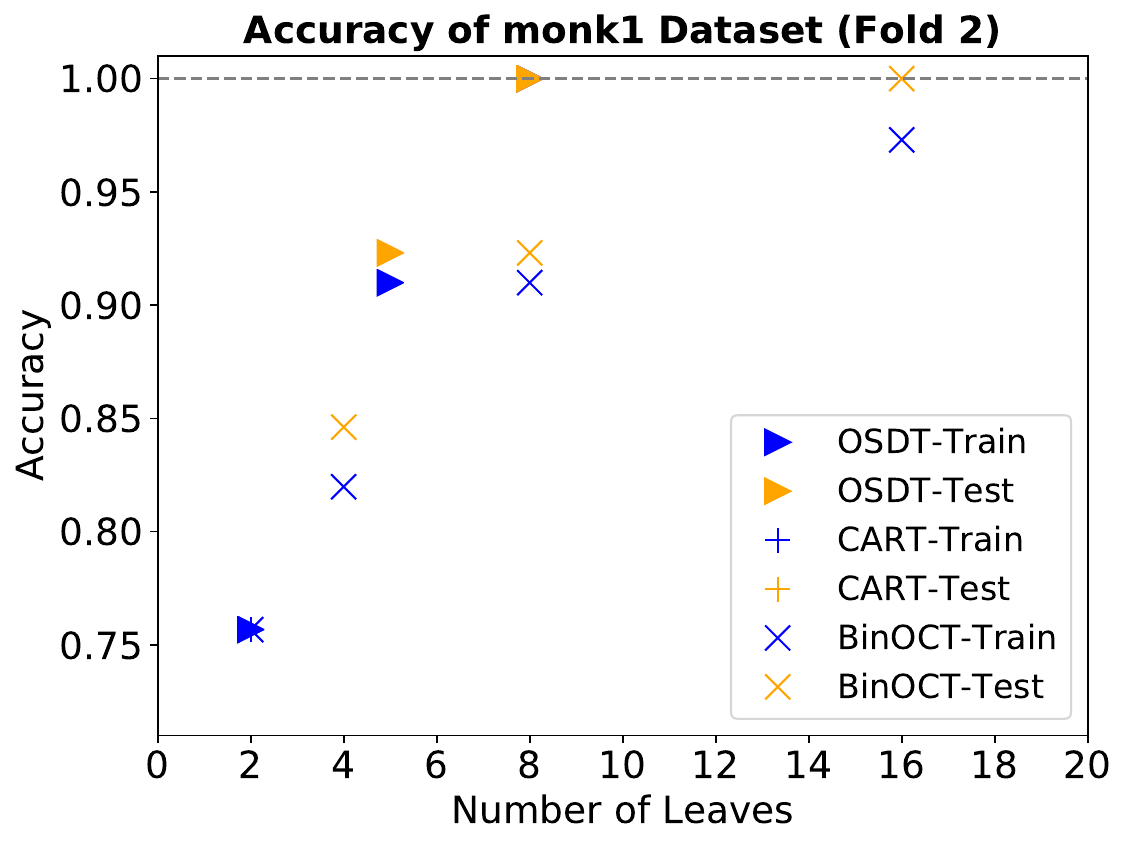}
        \label{fig:monk12}
    \end{subfigure}
    \vskip\baselineskip
    \begin{subfigure}[b]{0.4\textwidth}
        \centering
        \includegraphics[trim={0mm 12mm 0mm 15mm}, width=\textwidth]{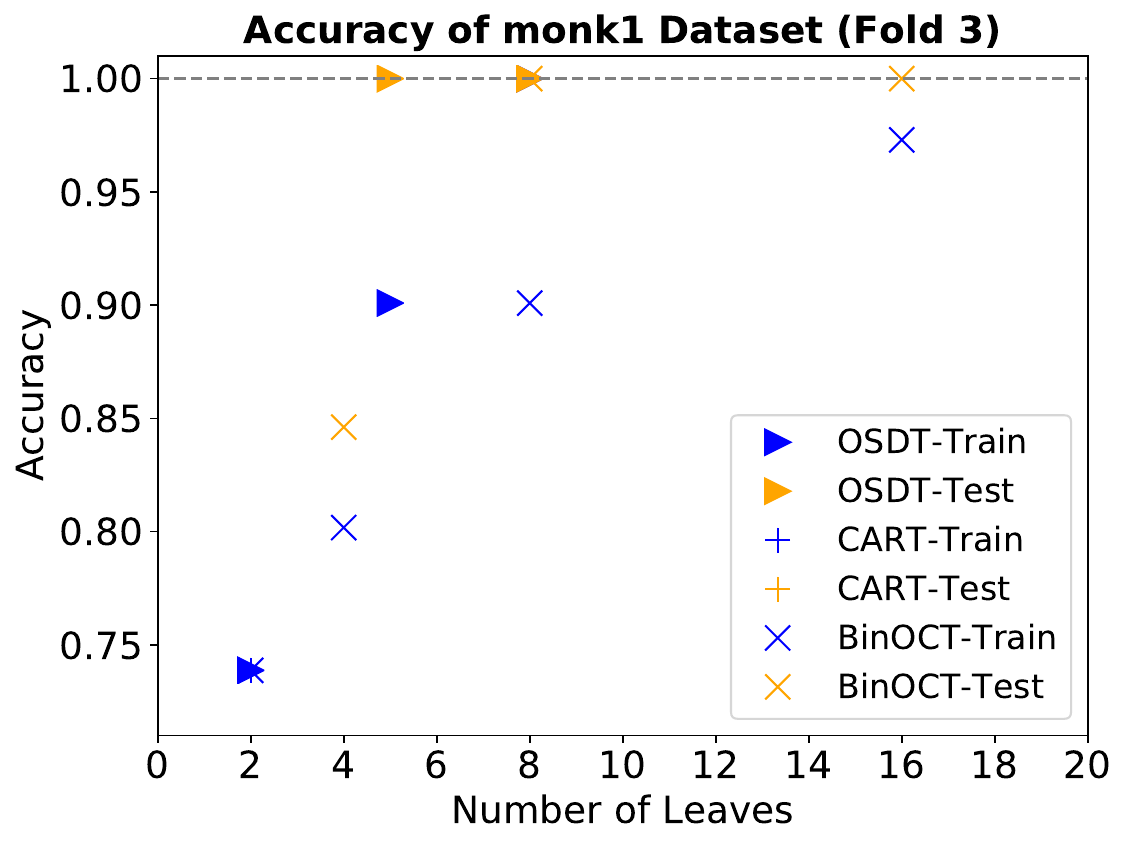}
        \label{fig:monk13}
    \end{subfigure}
    \begin{subfigure}[b]{0.4\textwidth}
        \centering
        \includegraphics[trim={0mm 12mm 0mm 15mm}, width=\textwidth]{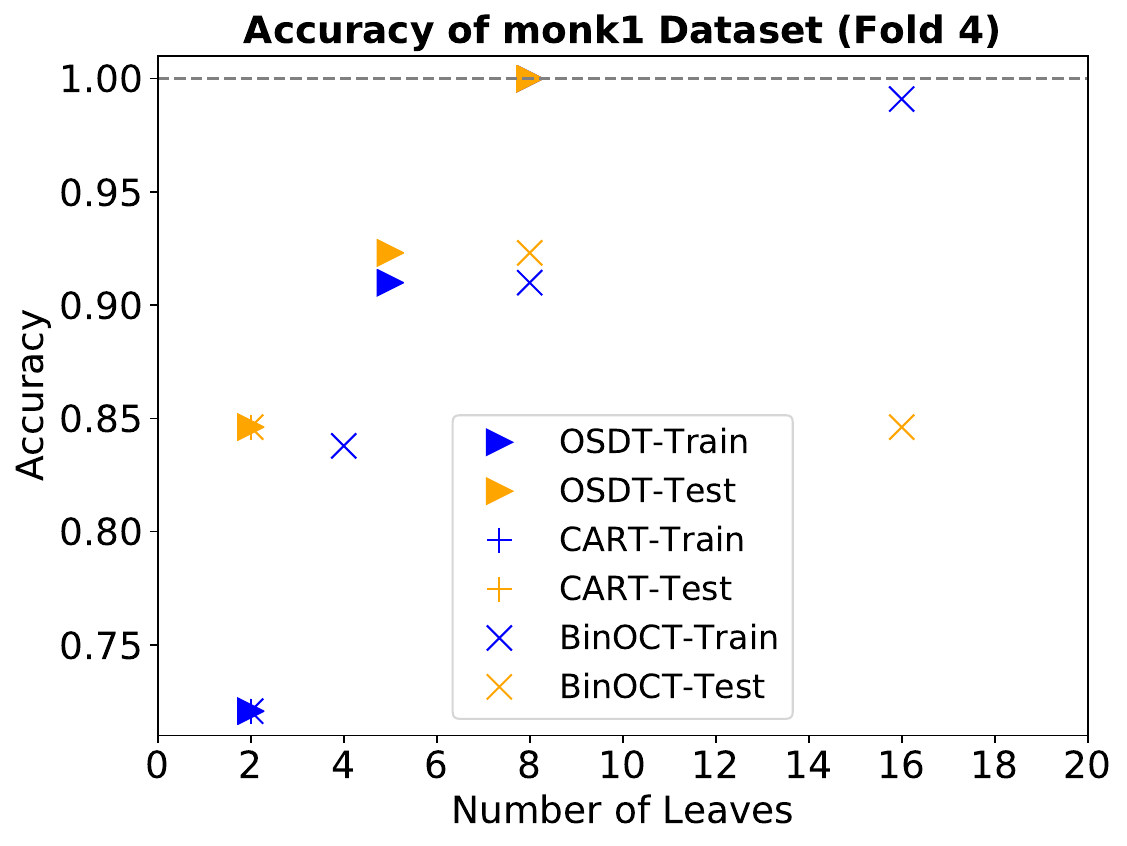}
        \label{fig:monk14}
    \end{subfigure}
    \vskip\baselineskip
    \begin{subfigure}[b]{0.4\textwidth}
        \centering
        \includegraphics[trim={0mm 12mm 0mm 15mm}, width=\textwidth]{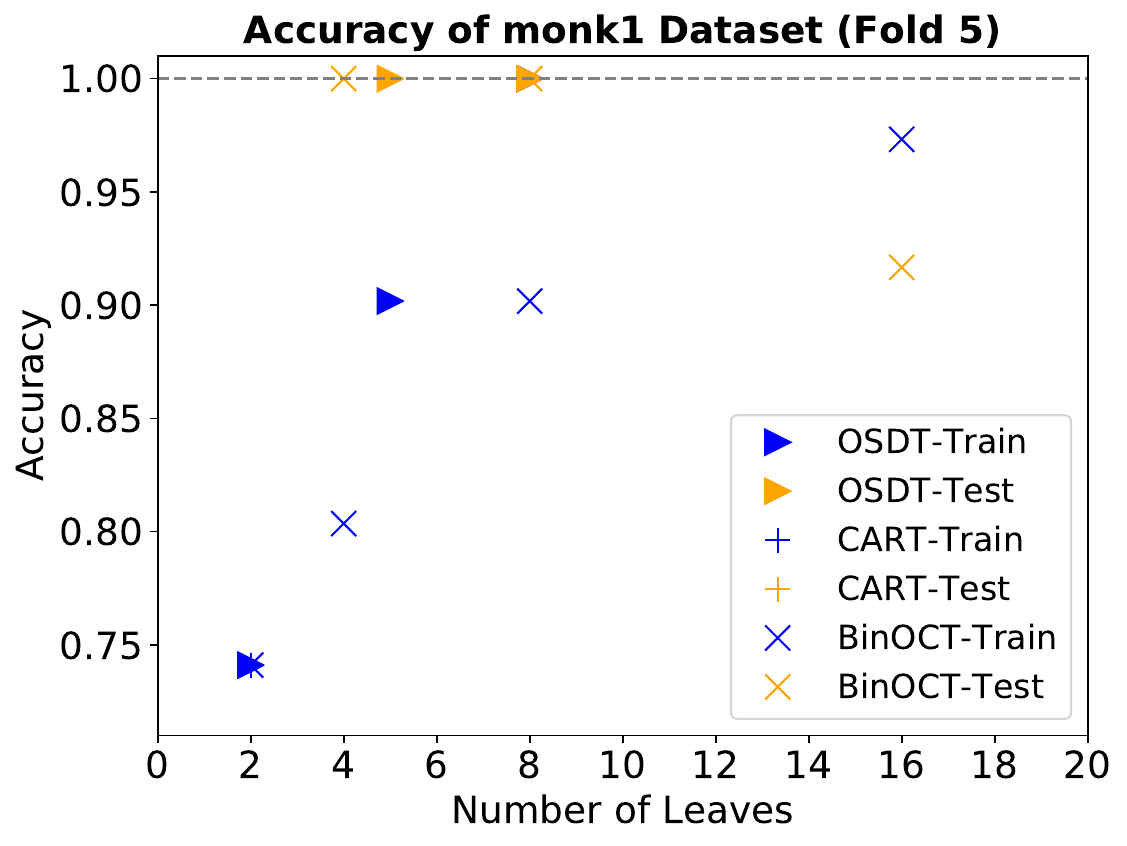}
        \label{fig:monk15}
    \end{subfigure}
    \begin{subfigure}[b]{0.4\textwidth}
        \centering
        \includegraphics[trim={0mm 12mm 0mm 15mm}, width=\textwidth]{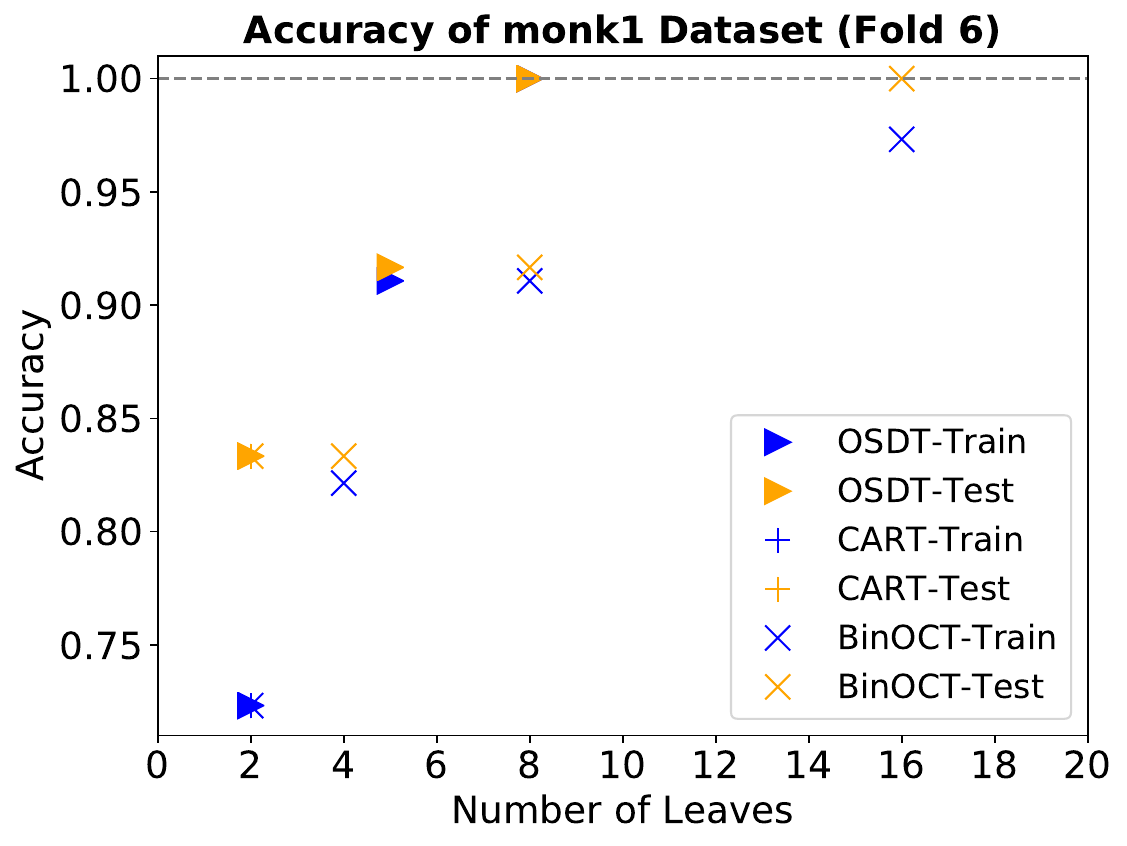}
        \label{fig:monk16}
    \end{subfigure}
    \vskip\baselineskip
    \begin{subfigure}[b]{0.4\textwidth}
        \centering
        \includegraphics[trim={0mm 12mm 0mm 15mm}, width=\textwidth]{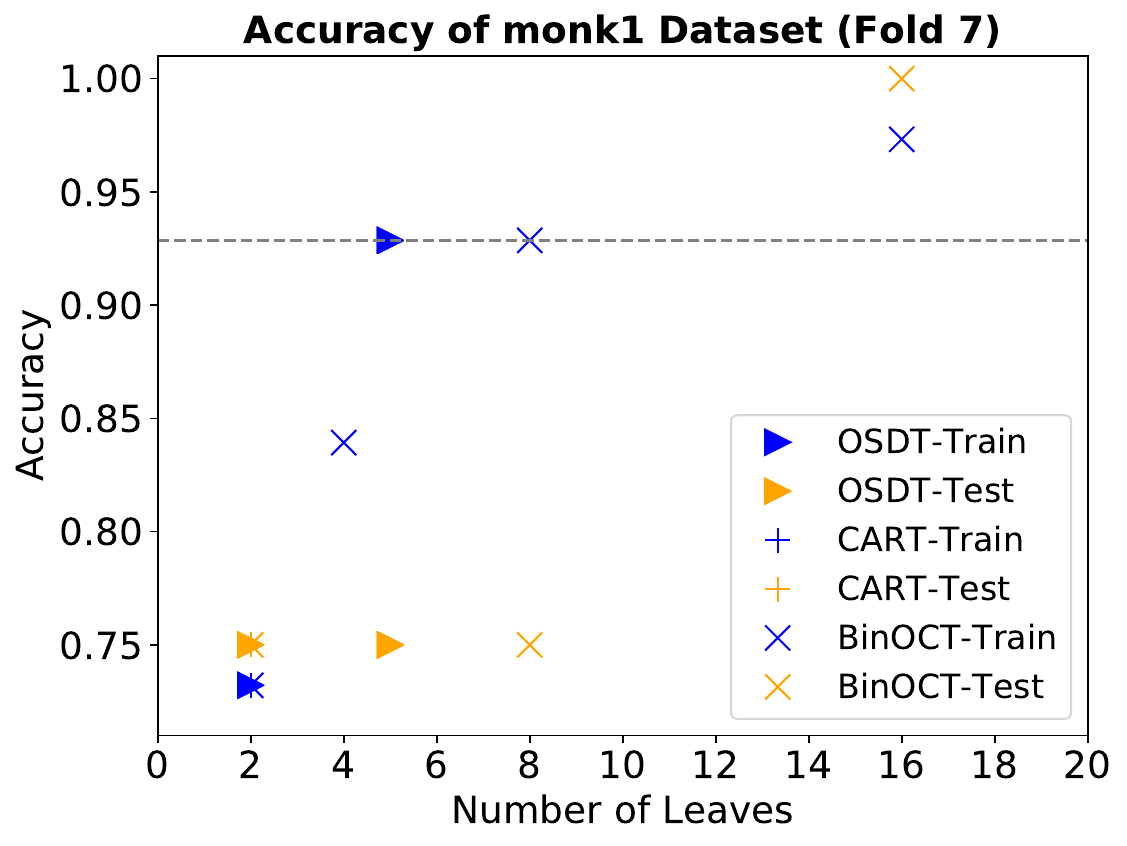}
        \label{fig:monk17}
    \end{subfigure}
    \begin{subfigure}[b]{0.4\textwidth}
        \centering
        \includegraphics[trim={0mm 12mm 0mm 15mm}, width=\textwidth]{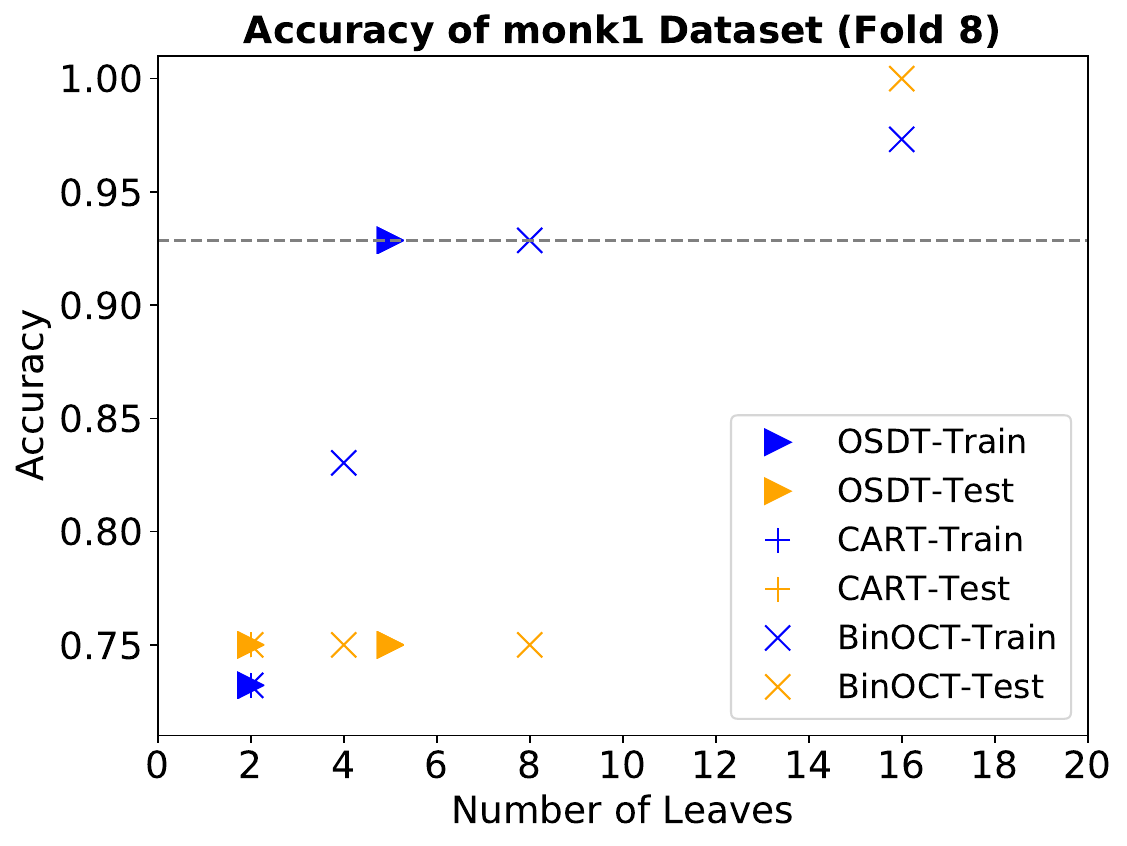}
        \label{fig:monk18}
    \end{subfigure}
    \vskip\baselineskip
    \begin{subfigure}[b]{0.4\textwidth}
        \centering
        \includegraphics[trim={0mm 12mm 0mm 15mm}, width=\textwidth]{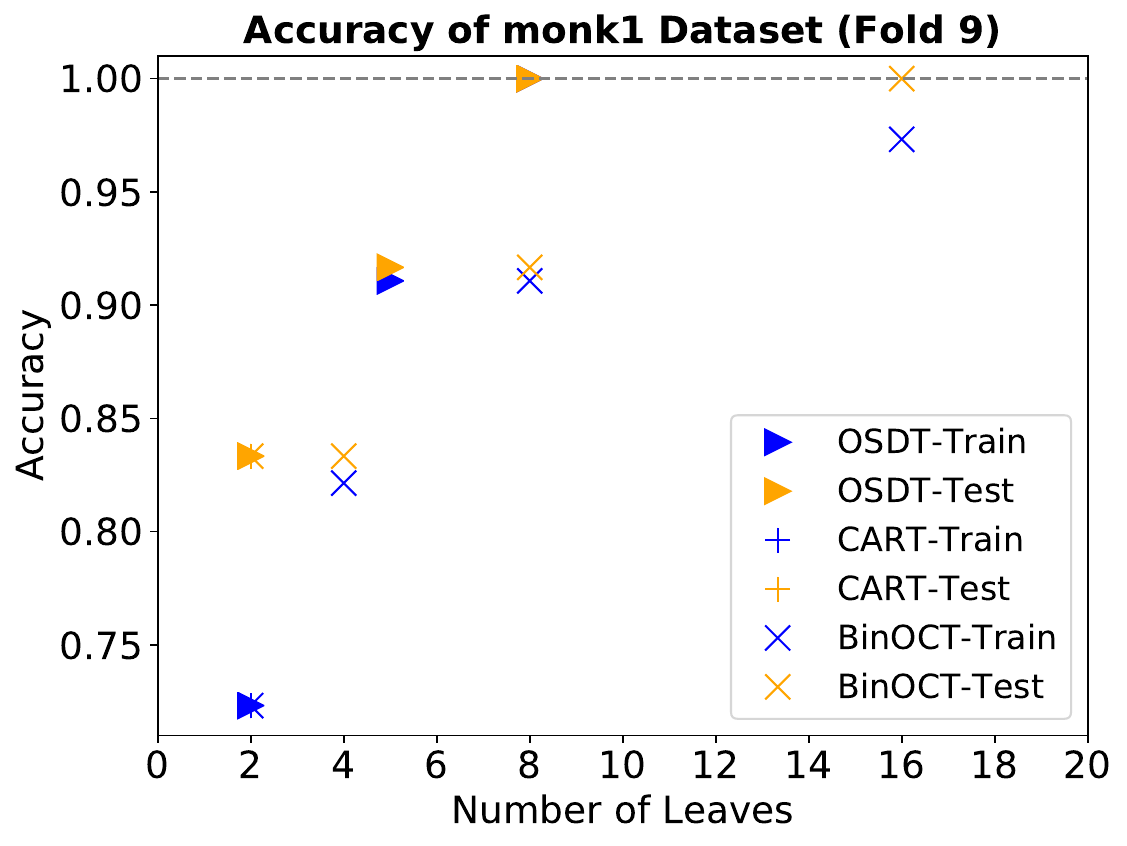}
        \label{fig:monk19}
    \end{subfigure}
    \begin{subfigure}[b]{0.4\textwidth}
        \centering
        \includegraphics[trim={0mm 12mm 0mm 15mm}, width=\textwidth]{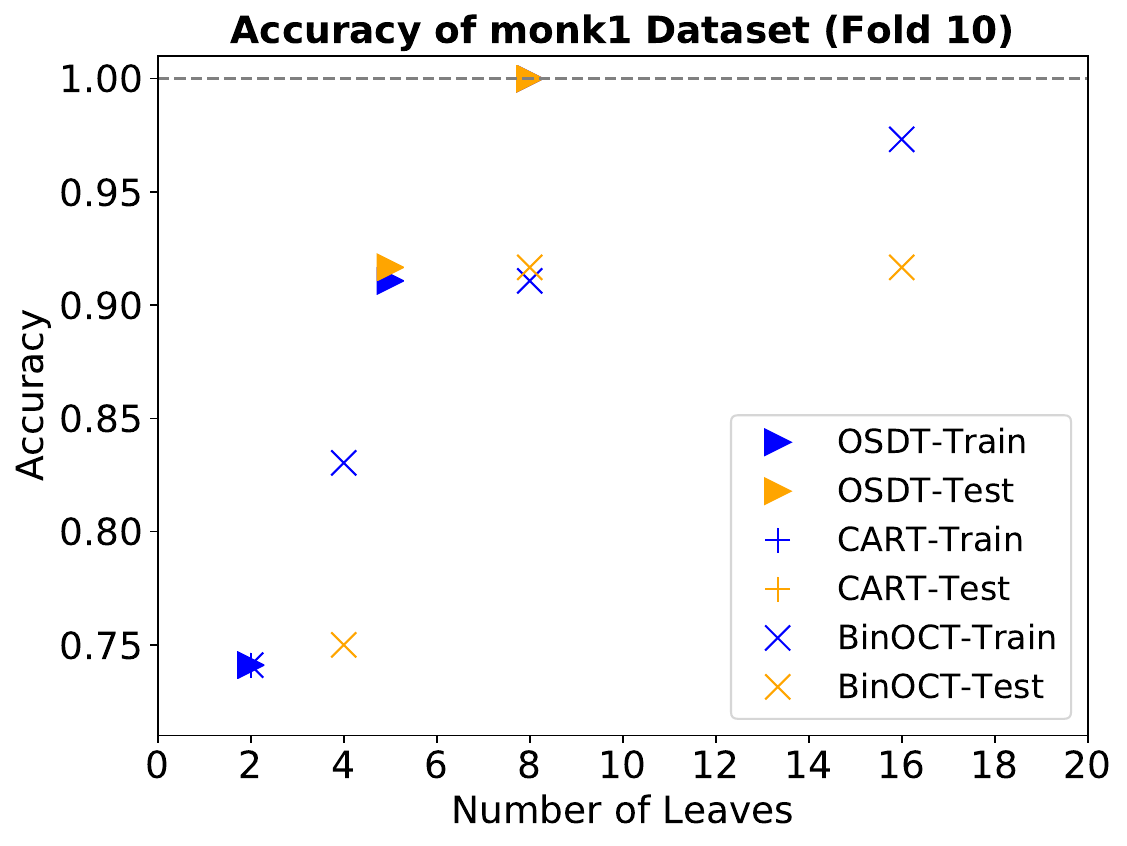}
        \label{fig:monk110}
    \end{subfigure}
    \vskip\baselineskip
    \caption{10-fold cross-validation experiment results of OSDT, CART, BinOCT on Monk1 dataset. Horizontal lines indicate the accuracy of the best OSDT tree in training. %
    }
    \label{fig:cv-monk1}
\end{figure*}

\begin{figure*}[t!]
    \centering
    \begin{subfigure}[b]{0.4\textwidth}
        \centering
        \includegraphics[trim={0mm 12mm 0mm 15mm}, width=\textwidth]{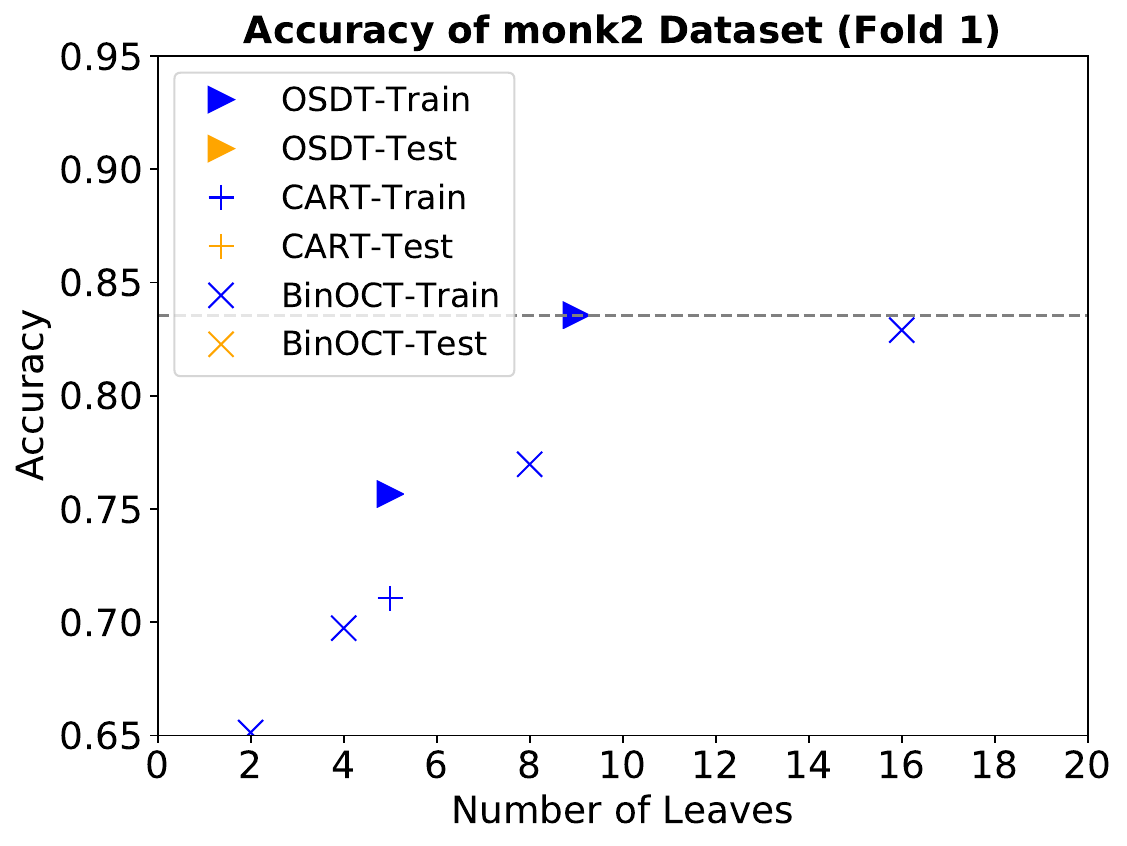}
        \label{fig:monk21}
    \end{subfigure}
    \begin{subfigure}[b]{0.4\textwidth}
        \centering
        \includegraphics[trim={0mm 12mm 0mm 15mm}, width=\textwidth]{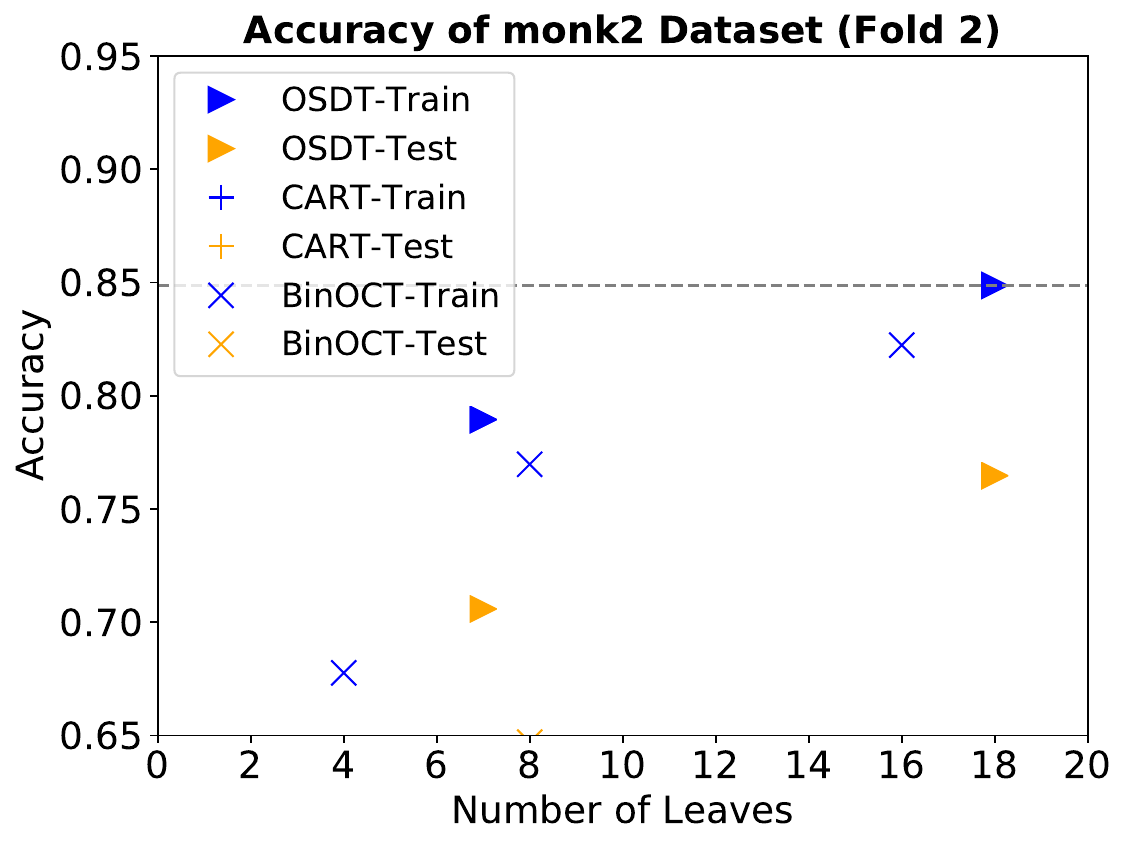}
        \label{fig:monk22}
    \end{subfigure}
    \vskip\baselineskip
    \begin{subfigure}[b]{0.4\textwidth}
        \centering
        \includegraphics[trim={0mm 12mm 0mm 15mm}, width=\textwidth]{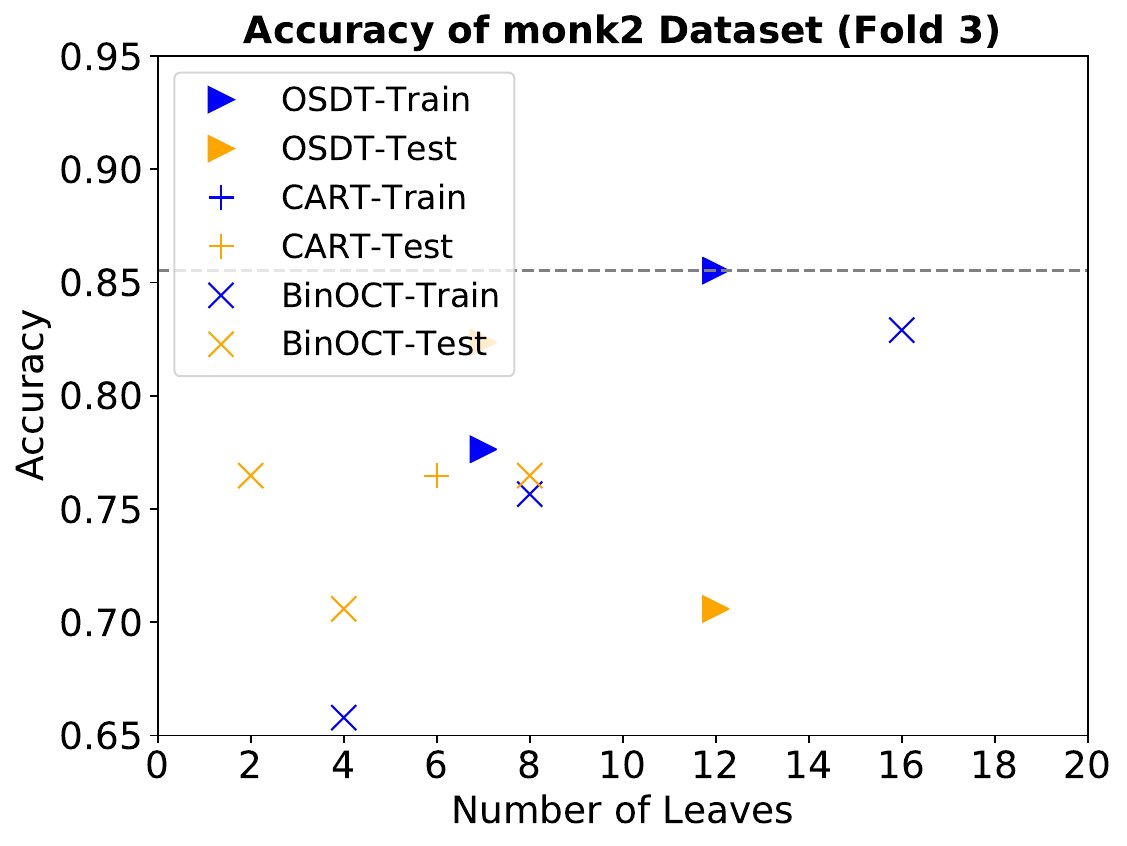}
        \label{fig:monk23}
    \end{subfigure}
    \begin{subfigure}[b]{0.4\textwidth}
        \centering
        \includegraphics[trim={0mm 12mm 0mm 15mm}, width=\textwidth]{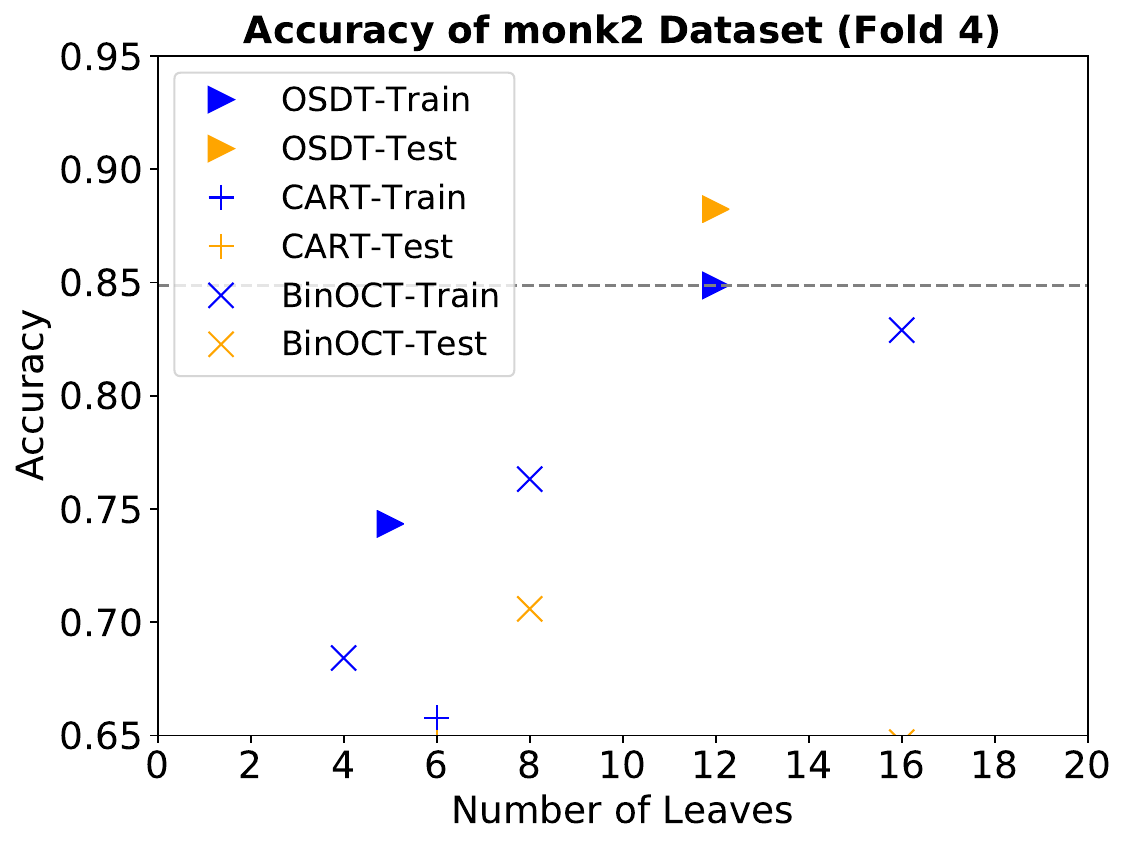}
        \label{fig:monk24}
    \end{subfigure}
    \vskip\baselineskip
    \begin{subfigure}[b]{0.4\textwidth}
        \centering
        \includegraphics[trim={0mm 12mm 0mm 15mm}, width=\textwidth]{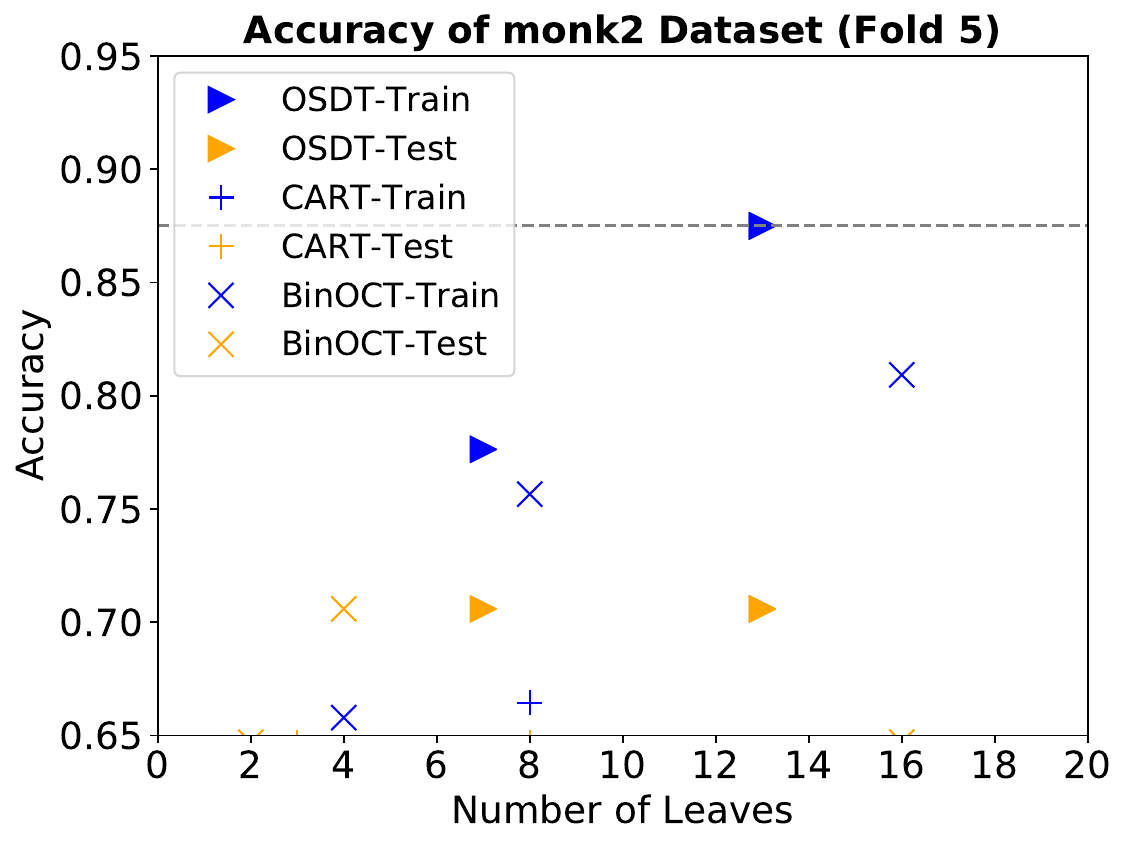}
        \label{fig:monk25}
    \end{subfigure}
    \begin{subfigure}[b]{0.4\textwidth}
        \centering
        \includegraphics[trim={0mm 12mm 0mm 15mm}, width=\textwidth]{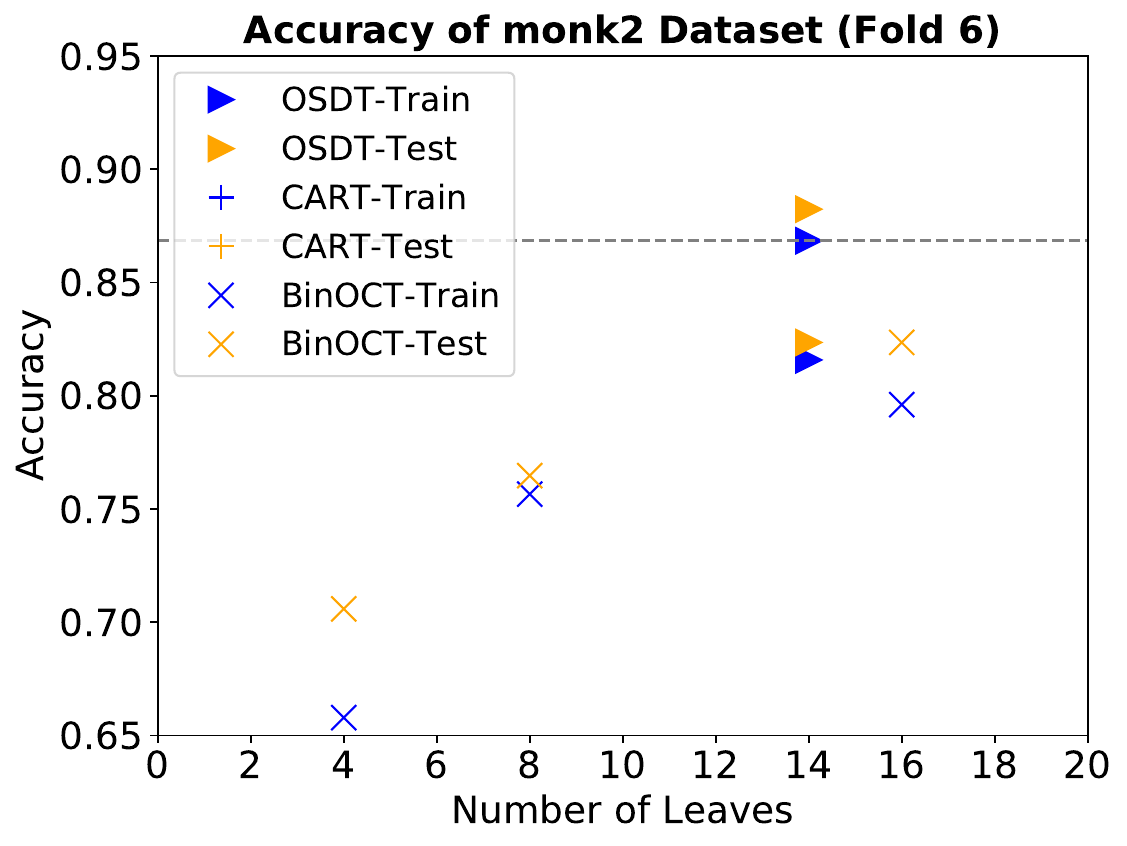}
        \label{fig:monk26}
    \end{subfigure}
    \vskip\baselineskip
    \begin{subfigure}[b]{0.4\textwidth}
        \centering
        \includegraphics[trim={0mm 12mm 0mm 15mm}, width=\textwidth]{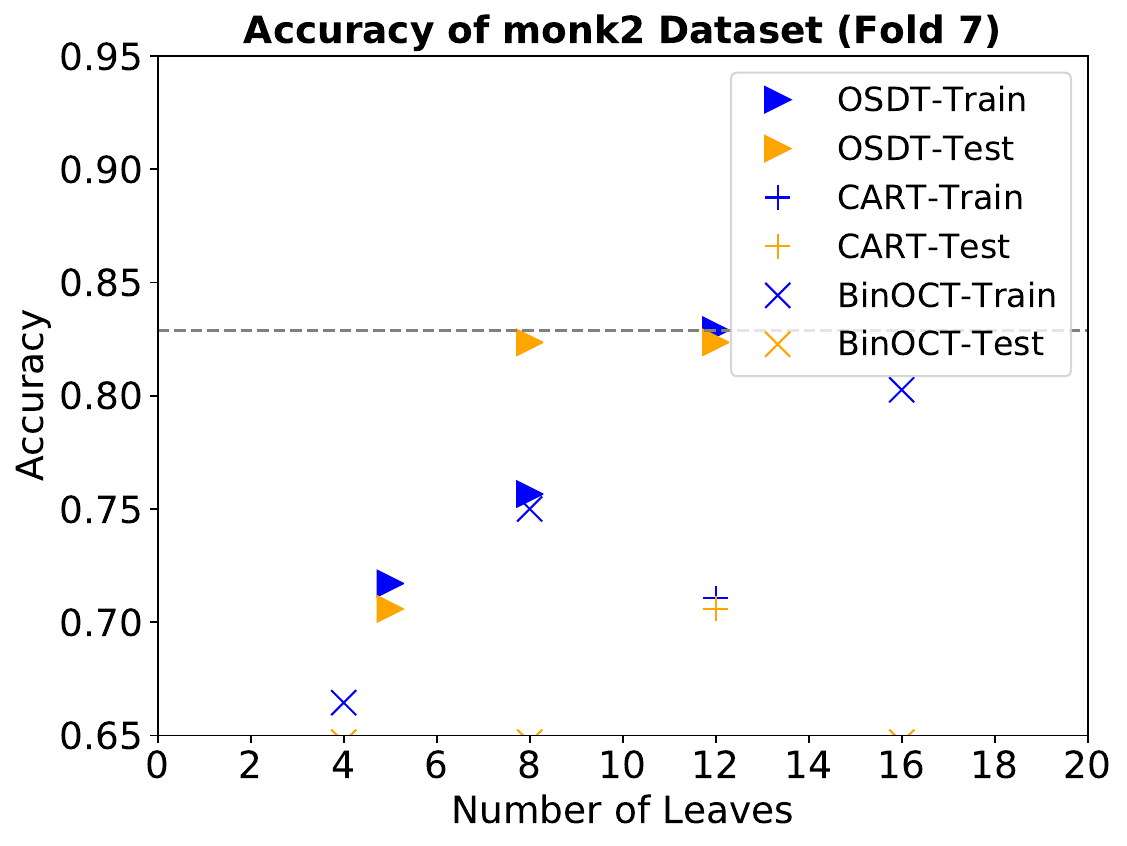}
        \label{fig:monk27}
    \end{subfigure}
    \begin{subfigure}[b]{0.4\textwidth}
        \centering
        \includegraphics[trim={0mm 12mm 0mm 15mm}, width=\textwidth]{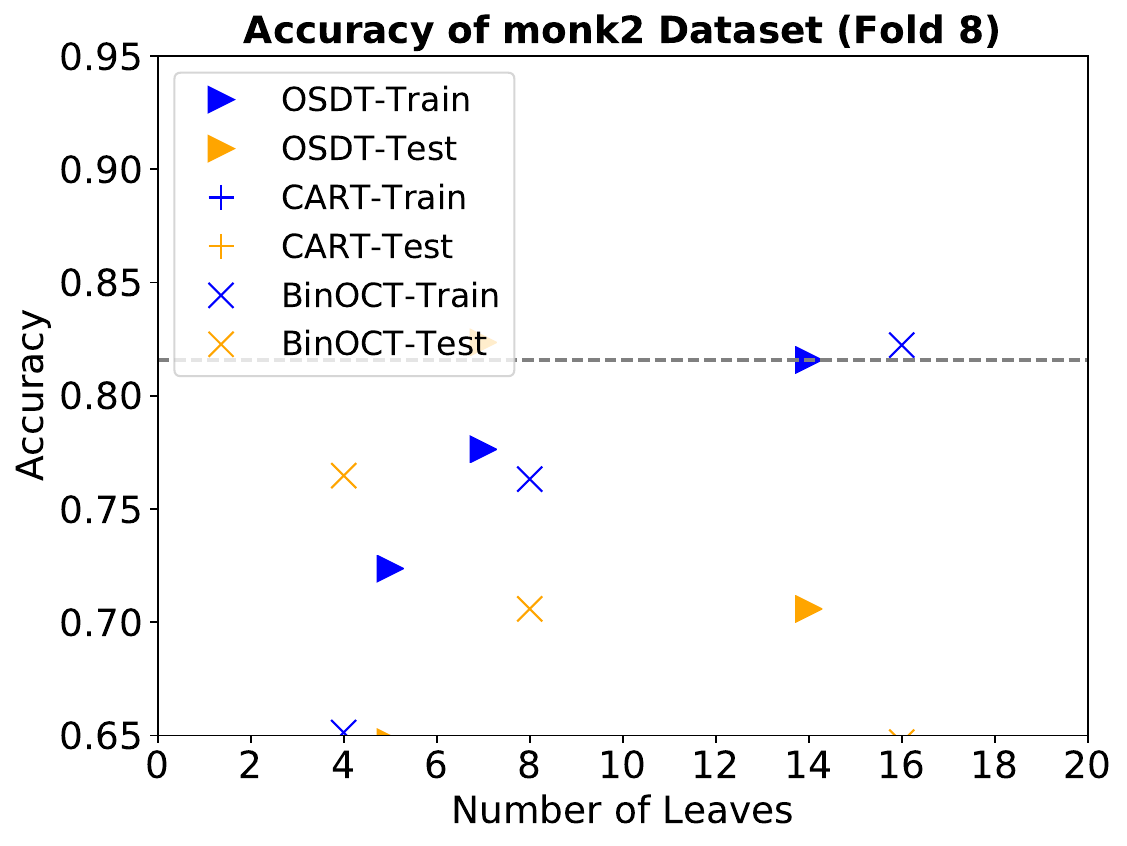}
        \label{fig:monk28}
    \end{subfigure}
    \vskip\baselineskip
    \begin{subfigure}[b]{0.4\textwidth}
        \centering
        \includegraphics[trim={0mm 12mm 0mm 15mm}, width=\textwidth]{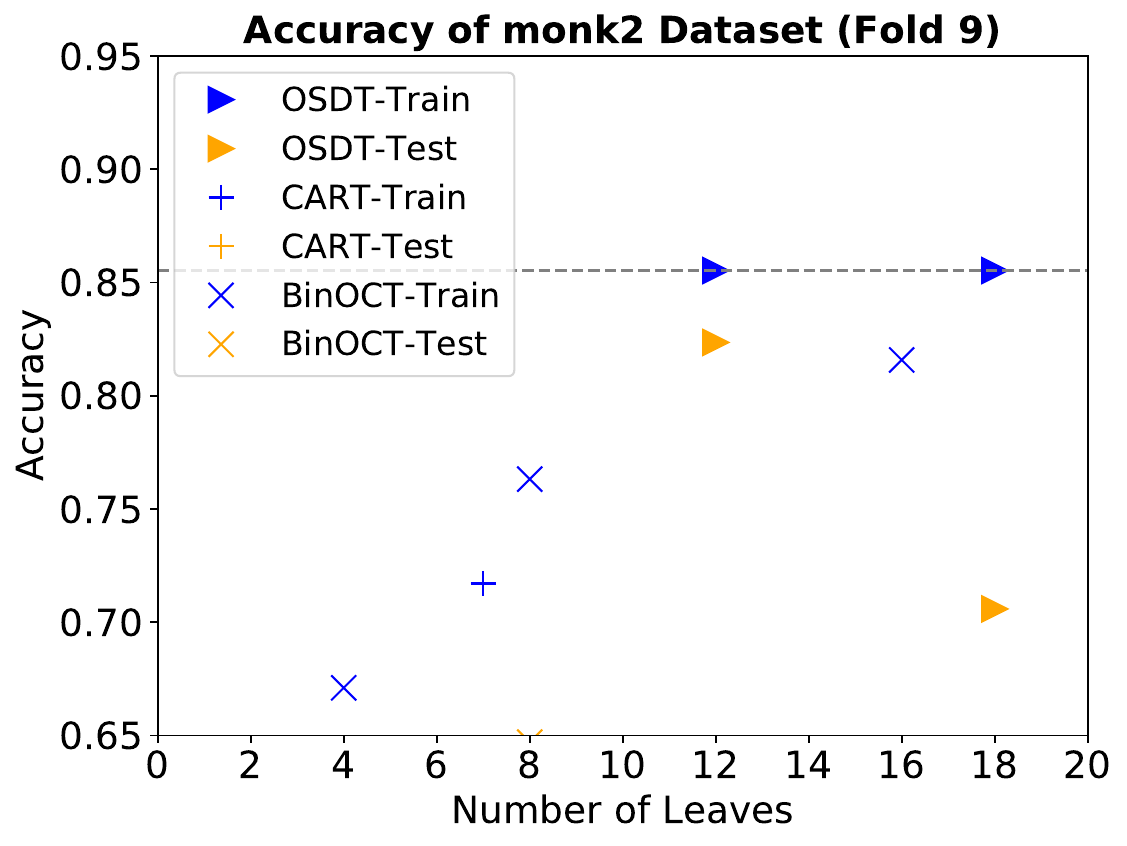}
        \label{fig:monk29}
    \end{subfigure}
    \begin{subfigure}[b]{0.4\textwidth}
        \centering
        \includegraphics[trim={0mm 12mm 0mm 15mm}, width=\textwidth]{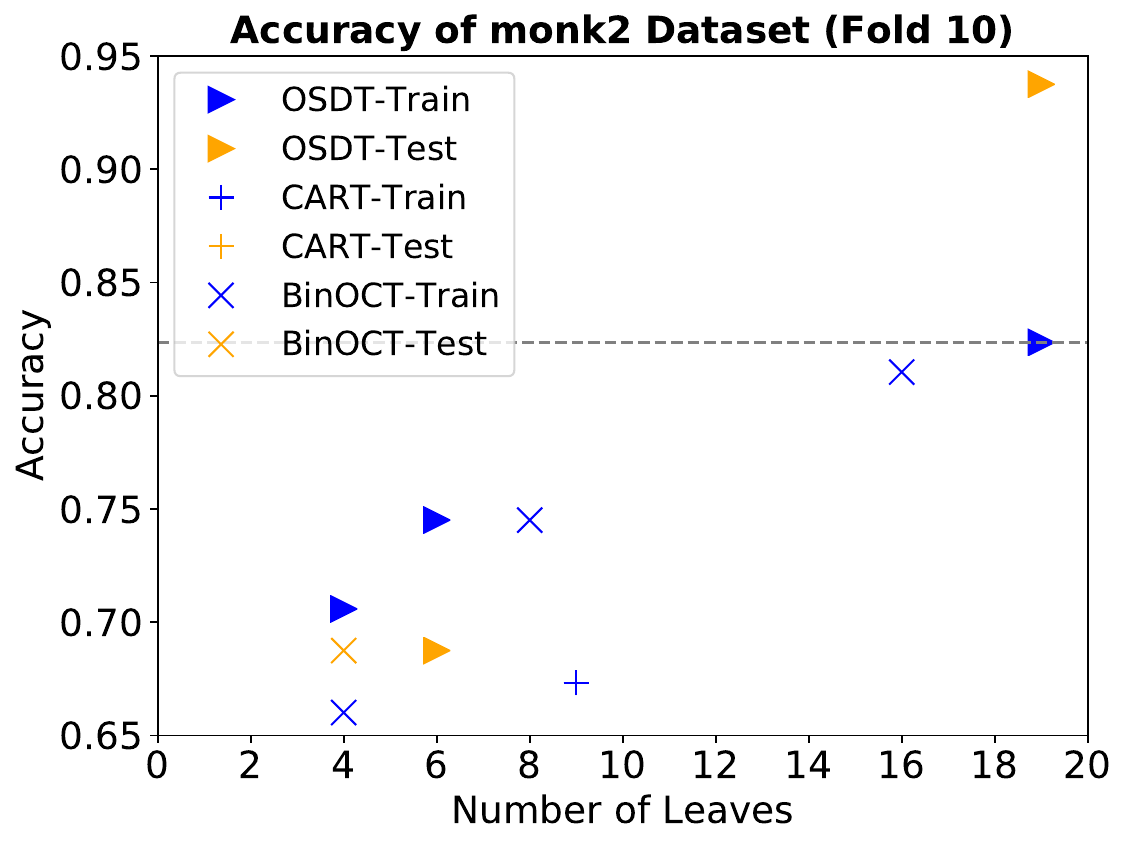}
        \label{fig:monk210}
    \end{subfigure}
    \vskip\baselineskip
    \caption{10-fold cross-validation experiment results of OSDT, CART, BinOCT on Monk2 dataset. Horizontal lines indicate the accuracy of the best OSDT tree in training. %
    }
    \label{fig:cv-monk2}
\end{figure*}

\begin{figure*}[t!]
    \centering
    \begin{subfigure}[b]{0.4\textwidth}
        \centering
        \includegraphics[trim={0mm 12mm 0mm 15mm}, width=\textwidth]{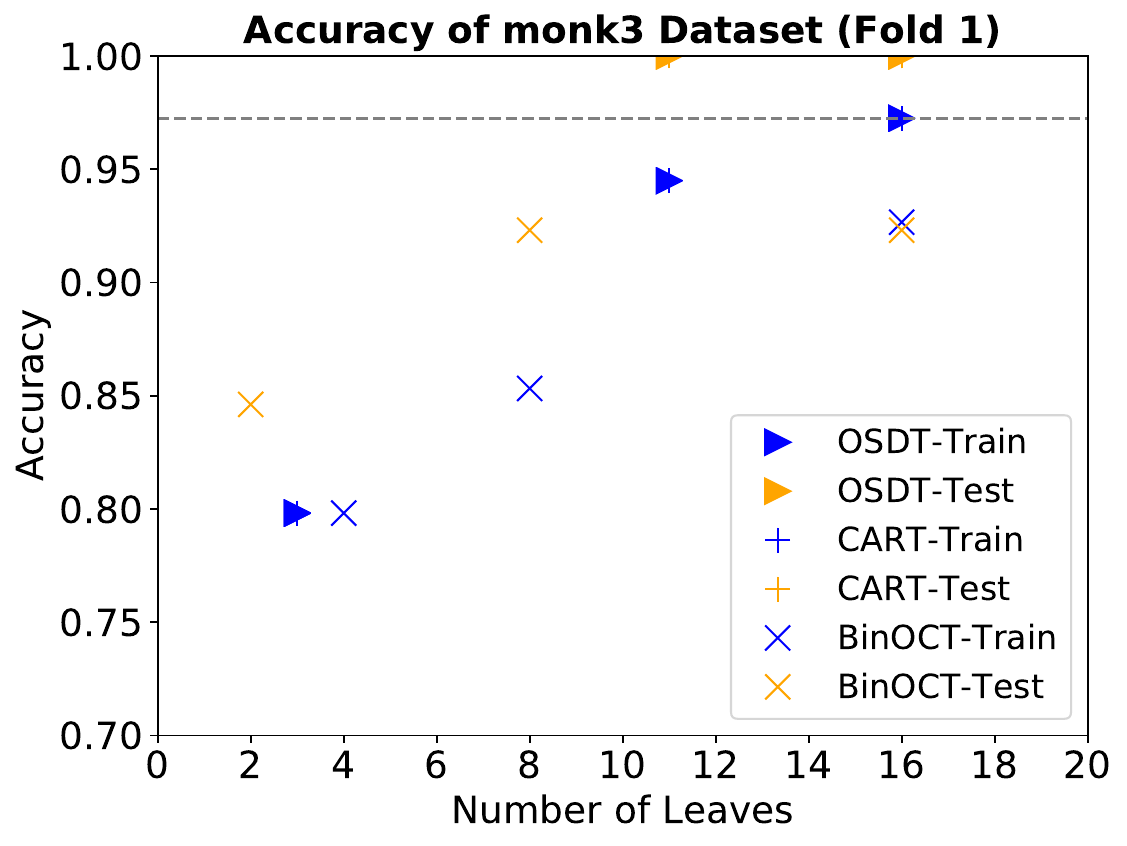}
        \label{fig:monk31}
    \end{subfigure}
    \begin{subfigure}[b]{0.4\textwidth}
        \centering
        \includegraphics[trim={0mm 12mm 0mm 15mm}, width=\textwidth]{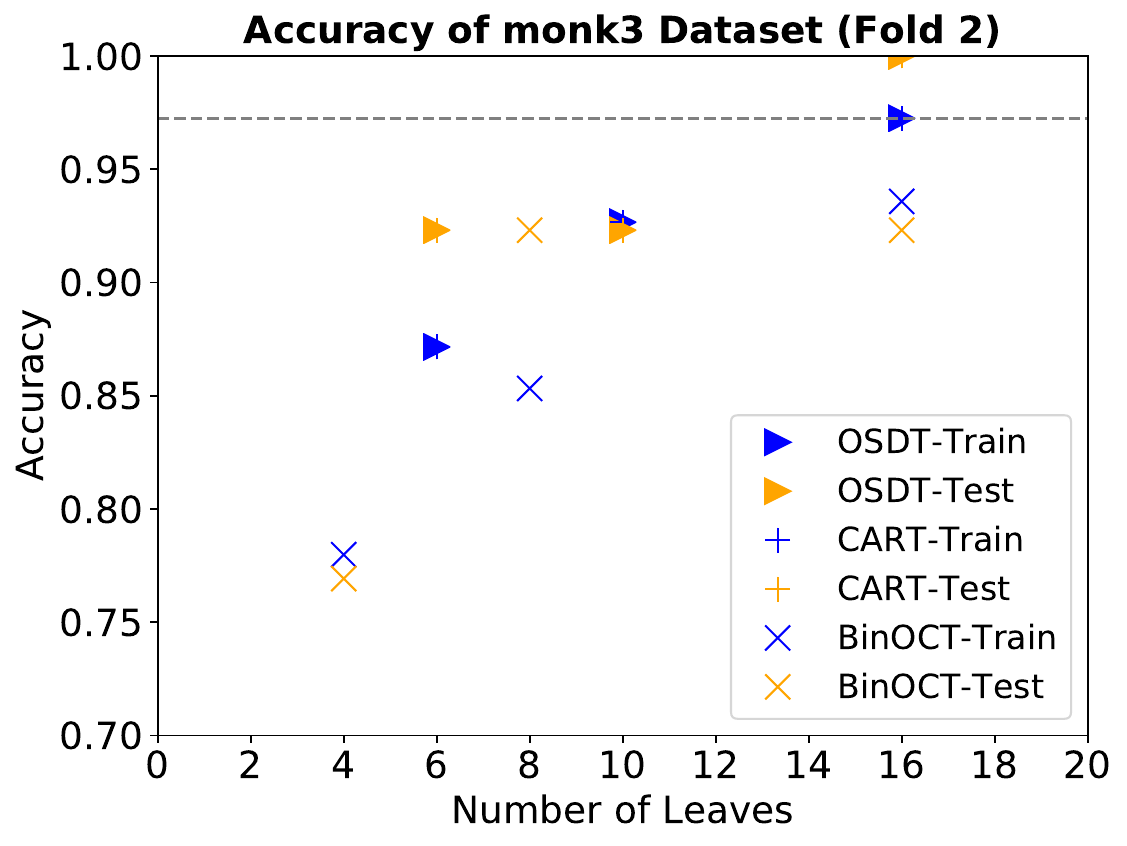}
        \label{fig:monk32}
    \end{subfigure}
    \vskip\baselineskip
    \begin{subfigure}[b]{0.4\textwidth}
        \centering
        \includegraphics[trim={0mm 12mm 0mm 15mm}, width=\textwidth]{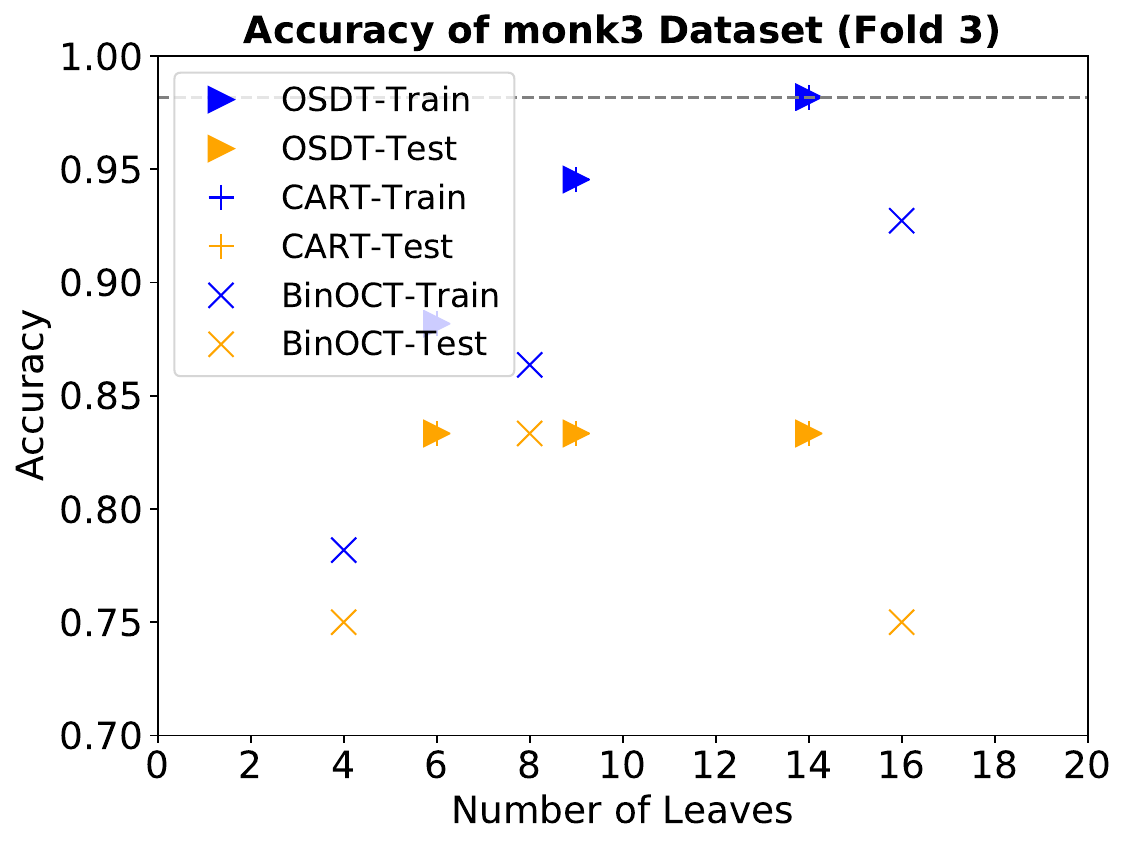}
        \label{fig:monk33}
    \end{subfigure}
    \begin{subfigure}[b]{0.4\textwidth}
        \centering
        \includegraphics[trim={0mm 12mm 0mm 15mm}, width=\textwidth]{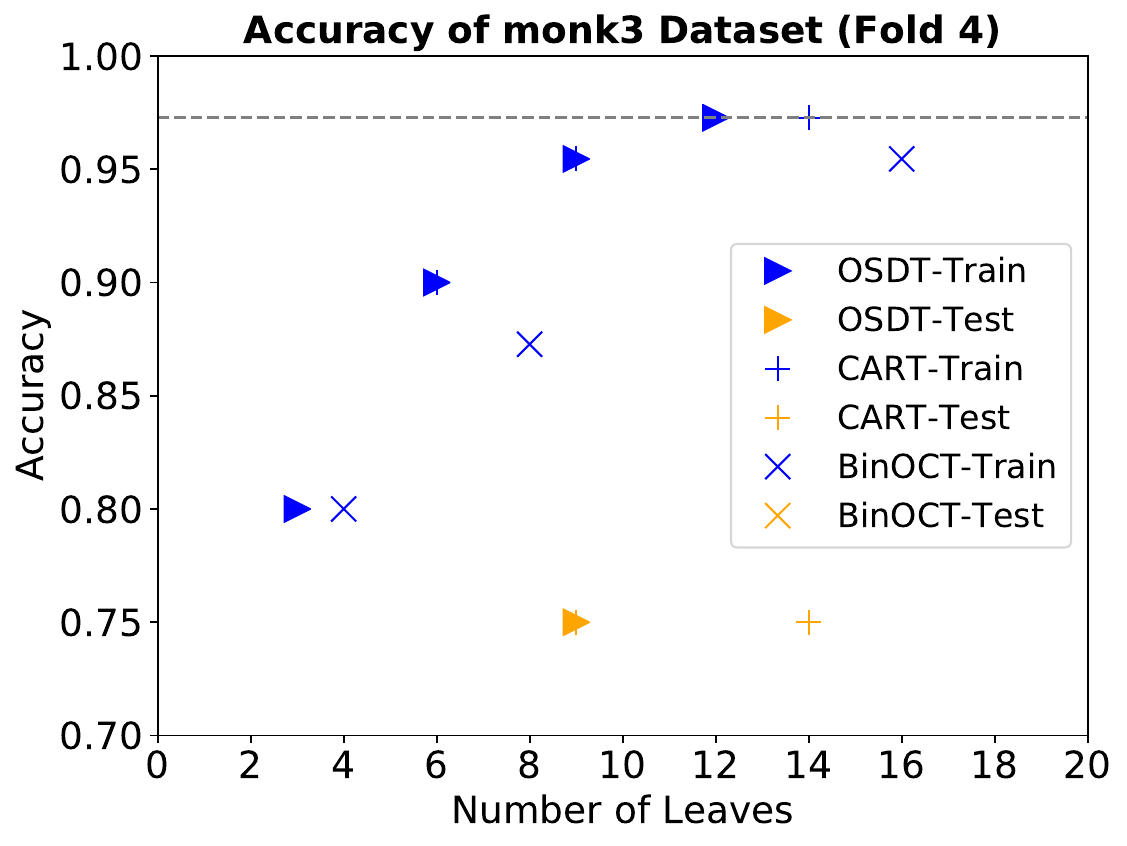}
        \label{fig:monk34}
    \end{subfigure}
    \vskip\baselineskip
    \begin{subfigure}[b]{0.4\textwidth}
        \centering
        \includegraphics[trim={0mm 12mm 0mm 15mm}, width=\textwidth]{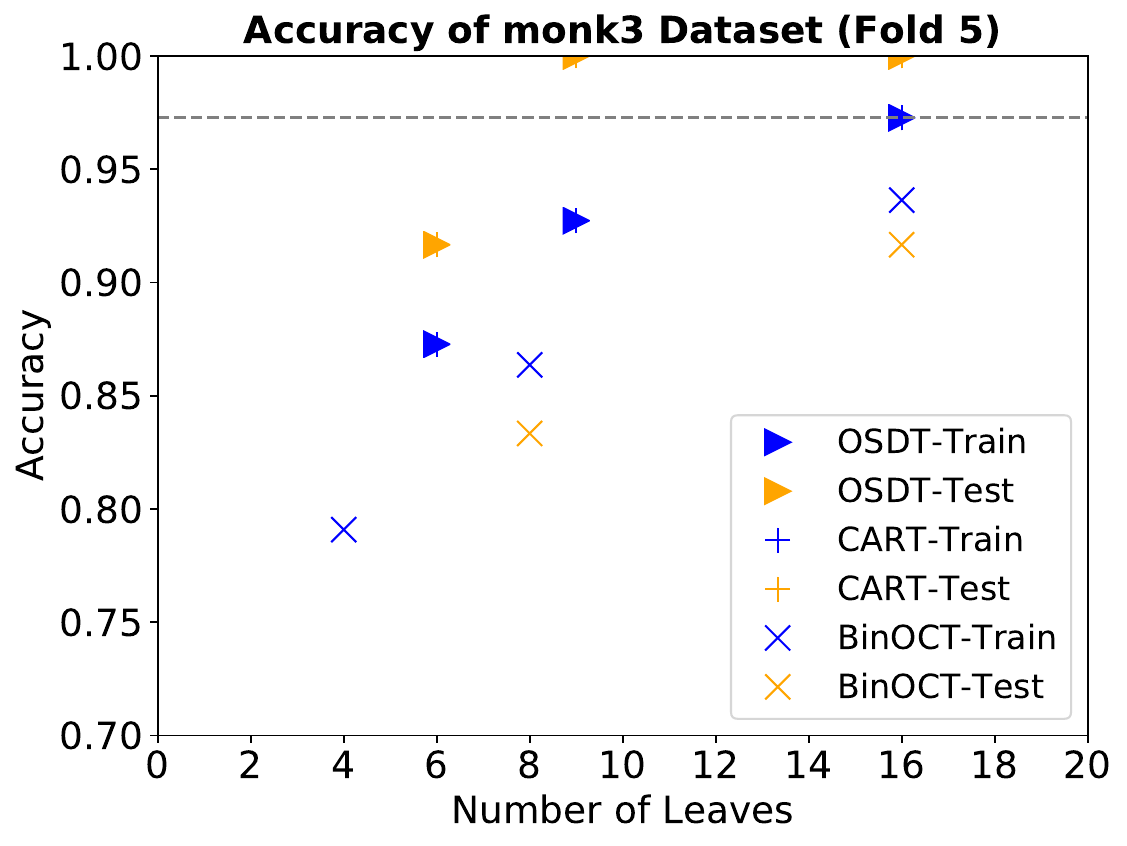}
        \label{fig:monk35}
    \end{subfigure}
    \begin{subfigure}[b]{0.4\textwidth}
        \centering
        \includegraphics[trim={0mm 12mm 0mm 15mm}, width=\textwidth]{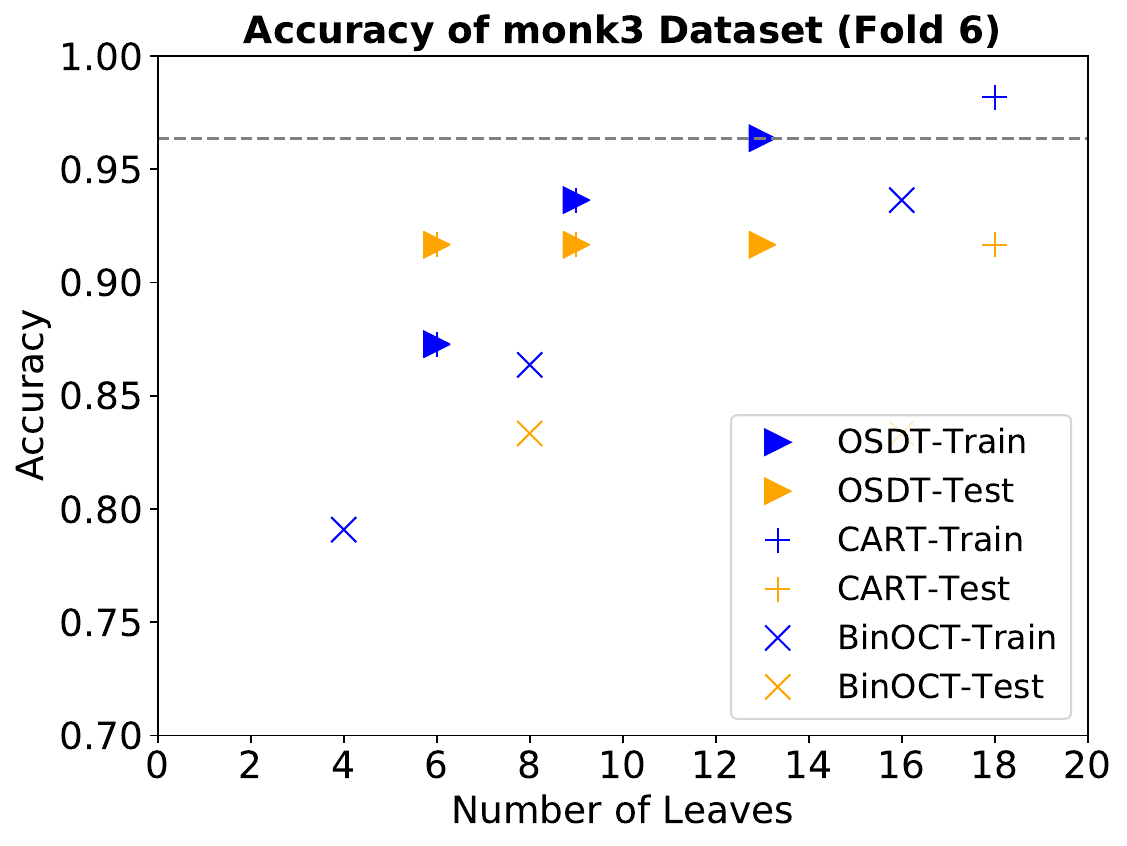}
        \label{fig:monk36}
    \end{subfigure}
    \vskip\baselineskip
    \begin{subfigure}[b]{0.4\textwidth}
        \centering
        \includegraphics[trim={0mm 12mm 0mm 15mm}, width=\textwidth]{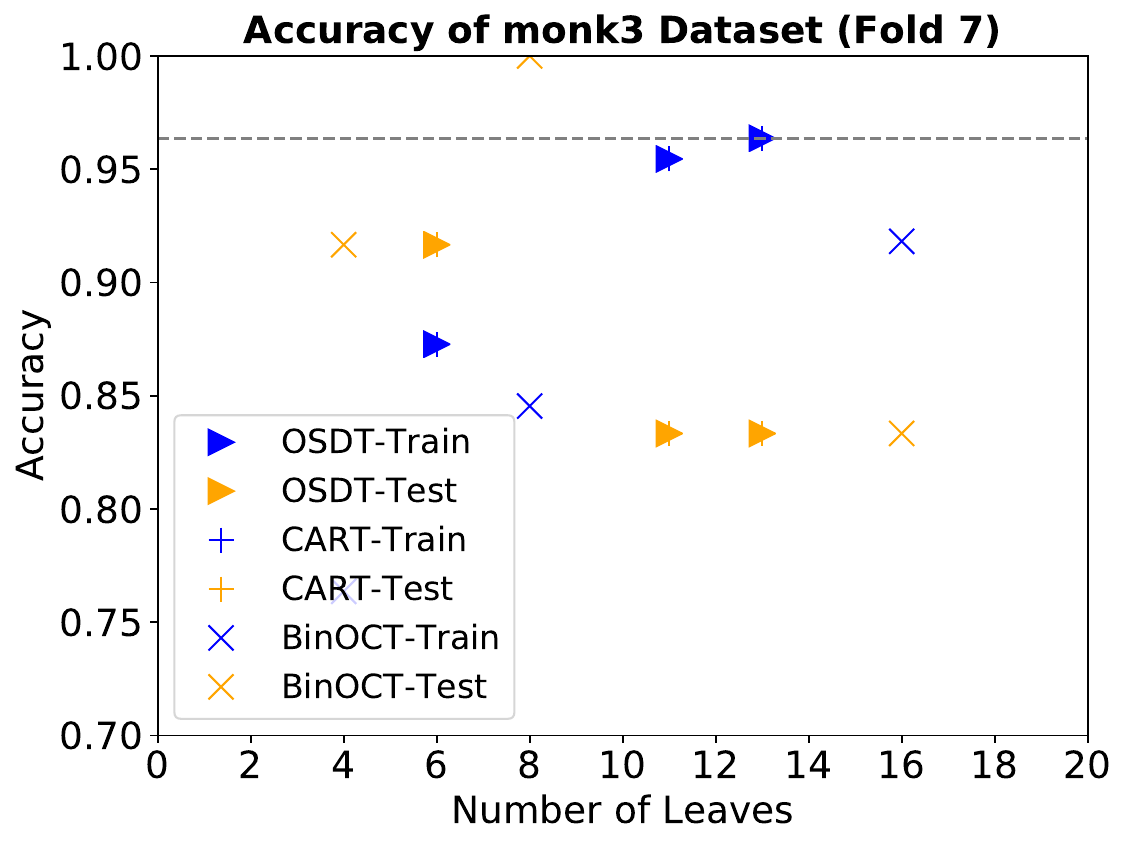}
        \label{fig:monk37}
    \end{subfigure}
    \begin{subfigure}[b]{0.4\textwidth}
        \centering
        \includegraphics[trim={0mm 12mm 0mm 15mm}, width=\textwidth]{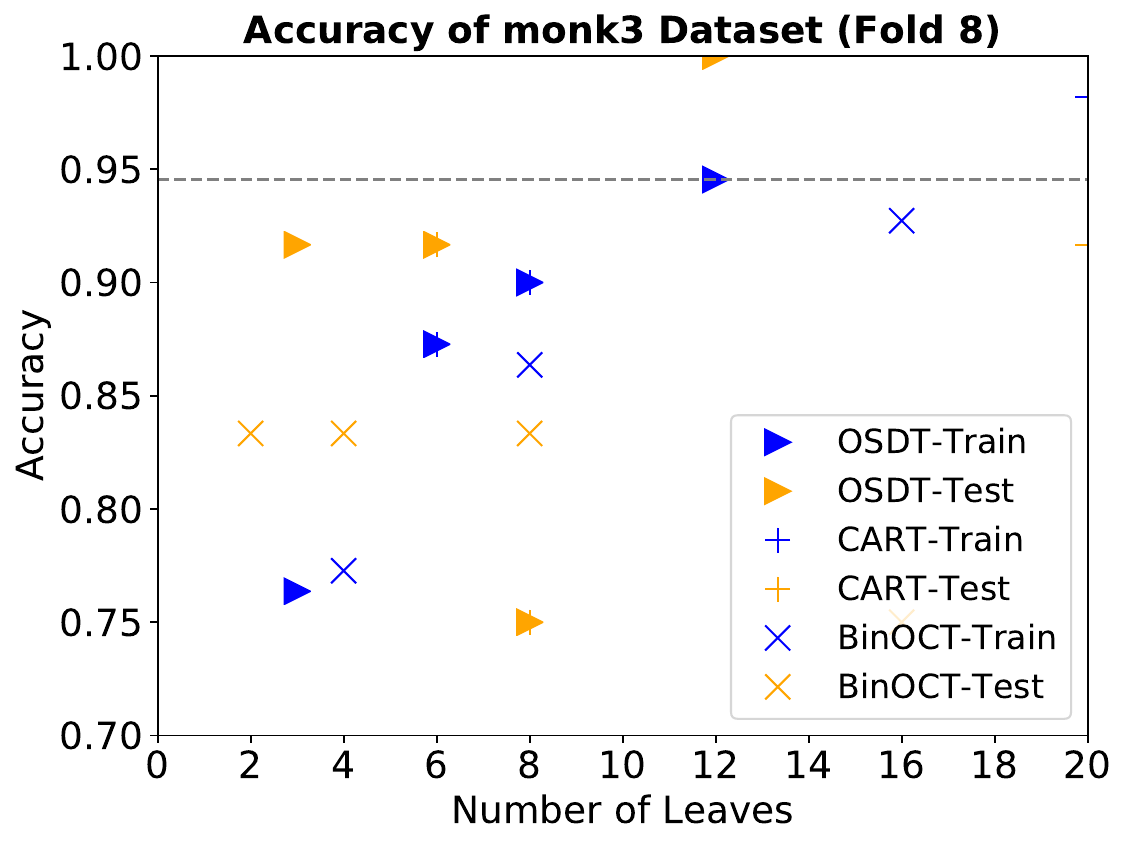}
        \label{fig:monk38}
    \end{subfigure}
    \vskip\baselineskip
    \begin{subfigure}[b]{0.4\textwidth}
        \centering
        \includegraphics[trim={0mm 12mm 0mm 15mm}, width=\textwidth]{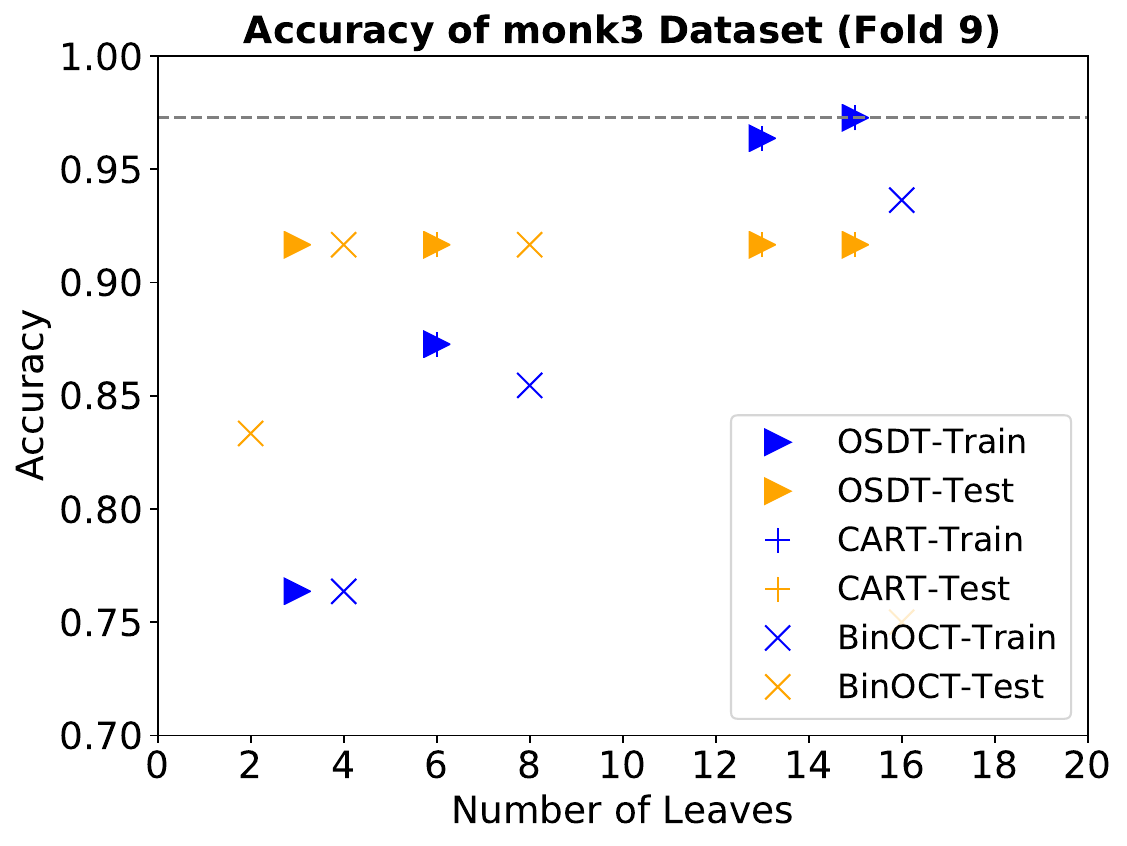}
        \label{fig:monk39}
    \end{subfigure}
    \begin{subfigure}[b]{0.4\textwidth}
        \centering
        \includegraphics[trim={0mm 12mm 0mm 15mm}, width=\textwidth]{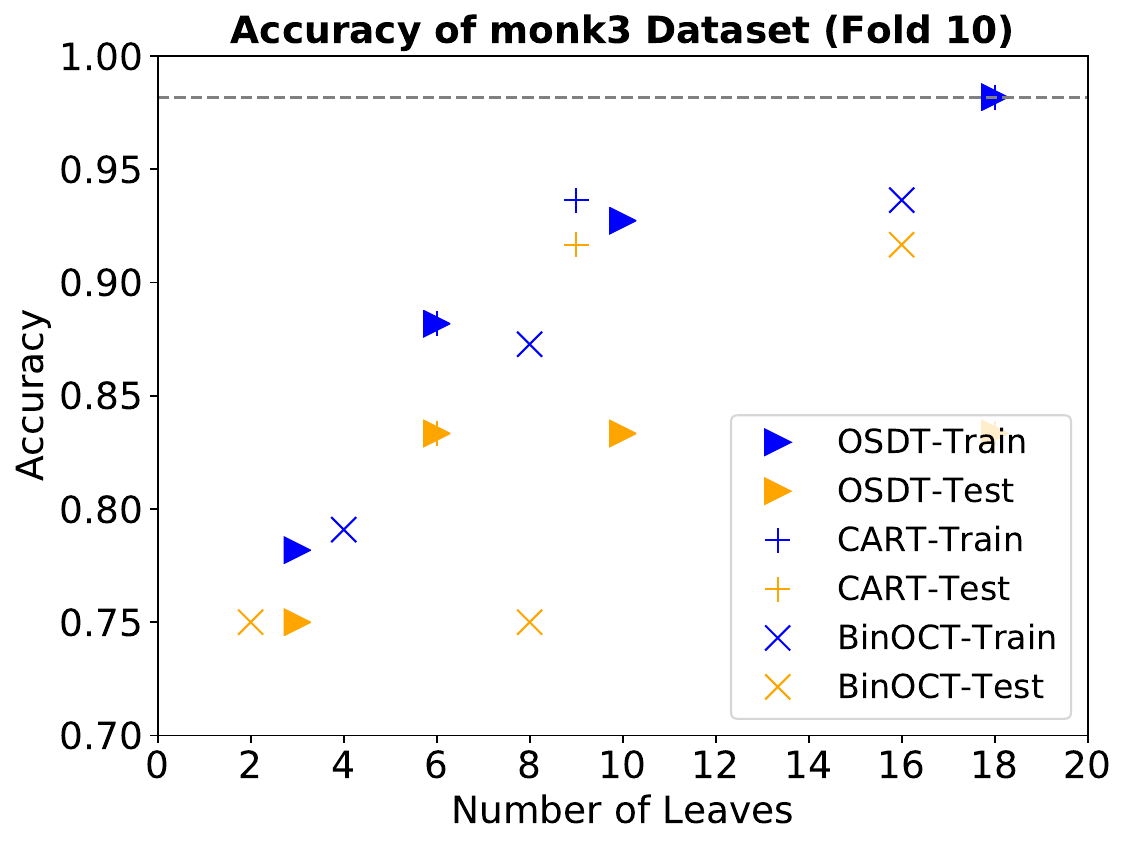}
        \label{fig:monk310}
    \end{subfigure}
    \vskip\baselineskip
    \caption{10-fold cross-validation experiment results of OSDT, CART, BinOCT on Monk3 dataset. Horizontal lines indicate the accuracy of the best OSDT tree in training. %
    }
    \label{fig:cv-monk3}
\end{figure*}

\end{document}